%% file: main.tex
\pdfoutput=1
\documentclass[11pt]{article}
\usepackage[OT1]{fontenc}
\usepackage[utf8]{inputenc}
\usepackage{algorithm}
\usepackage{amsfonts}
\usepackage{amsmath}
\usepackage{amssymb}
\usepackage{amsthm}
\usepackage{bm}
\usepackage{bbm}
\usepackage{booktabs}
\usepackage{caption}
\usepackage{extarrows}
\usepackage{float} 
\usepackage{graphicx}
\usepackage{inconsolata}
\usepackage{listings}
\usepackage{makecell}
\usepackage{mathrsfs}
\usepackage{multicol}
\usepackage{multirow}
\usepackage{newclude}
\usepackage{subfigure}
\usepackage{setspace}
\usepackage[nolabel]{showlabels}
\usepackage{threeparttable}
\usepackage{smile}
\usepackage{subcaption}
\usepackage{tikz}
\usetikzlibrary{positioning,arrows.meta}

\tikzset{
    dagvar/.style={
        circle,
        draw,
        minimum size=8mm,
        inner sep=0pt
    },
    dagarrow/.style={
        -{Latex[length=2mm]},
        semithick
    },
    daglabel/.style={
        font=\itshape
    }
}
\tikzset{
    dagsel/.style={
        rectangle,
        draw,
        semithick,
        minimum size=7mm,
        inner sep=1pt
    }
}
\tikzset{
    daglatent/.style={
        circle,
        draw,
        densely dotted,
        semithick,
        minimum size=9mm,
        inner sep=0pt
    }
}

\graphicspath{{figs/}}

\setitemize[1]{itemsep=0pt,partopsep=0pt,parsep=\parskip,topsep=0pt}
\DeclareMathAlphabet{\mathpzc}{OT1}{pzc}{m}{it}
\newcommand{\E}{\mathbb{E}}
\renewcommand{\P}{\mathbb{P}}
\renewcommand{\d}{\mathrm{d}}

\renewcommand{\proofname}{\textit{Proof}}

\title{\huge\bf Transportable Offline Reinforcement Learning in Presence of Unmeasured Confounding}
\author{
Junyi Liao$^1$,\ \ 
Hangqi Ren$^2$,\ \ 
Zhuoran Yang$^3$\ \ 
~\\
$^{1}$\small\it Department of Electrical and Computer Engineering, Duke University \\
$^{2}$\small\it Department of Electrical and Computer Engineering, Vanderbilt University \\
$^{3}$\small\it Department of Statistics and Data Science, Yale University
\\
\small\sf junyi.liao@duke.edu, hangqi.ren@vanderbilt.edu, zhuoran.yang@yale.edu
}
\date{}
\begin{document}
\maketitle
\begin{abstract}
Offline reinforcement learning seeks to learn decision policies from previously collected data, but deployment environments may differ from the domains in which the data were collected, and logged variables may omit confounders that affected both historical actions and outcomes. We study offline policy learning under the joint challenges of domain discrepancy and unmeasured confounding, where multiple heterogeneous source datasets provide complementary information about a target domain. We develop a causally adjusted transportability framework combining transportability theory with instrumental variables, negative controls, and proxies to identify target-domain causal quantities. For confounded contextual bandits, identification reduces to systems of bridge-function integral equations, enabling minimax estimation from finite samples. Confidence sets for the identified quantities are then used for pessimistic policy optimization, yielding CAT-CCB. We extend the framework to episodic confounded Markov decision processes, where complementary sources identify target reward and transition mechanisms under structural and completeness conditions. Under Gaussian dynamics and linear function approximation, this yields CAT-CMDP. We establish high-probability finite-sample suboptimality guarantees that quantify the effects of approximation, estimation, source-to-target shift, offline coverage, and sample size. For bounded finite-dimensional linear classes, the contextual-bandit guarantee recovers the standard $\widetilde{\mathcal O}(N^{-1/2})$ rate. Numerical experiments support the effectiveness of the framework for learning target-domain policies from heterogeneous confounded data. Together, the results provide a principled approach when no single source dataset suffices for causal identification and policy optimization.
\end{abstract}

\include*{chap1}
\include*{chap2}
\include*{chap3}
\include*{chap4}
\include*{chap5}
\include*{chap6}
\include*{chap7}
\include*{chap8}

\include*{chap9}

\newpage
\bibliographystyle{ims}
\bibliography{reference}

\newpage 
\appendix
\include*{append1}
\include*{append2}
\include*{append3}
\include*{append4}
\include*{append5}
\input{appendix_scm.tex}
\end{document}

%% file: chap1.tex
\section{Introduction}
Offline reinforcement learning (RL) seeks to learn decision policies from previously collected data, without requiring further interaction with an environment that may be costly, slow, or unsafe to explore (\citealp{lange2012batch}; \citealp{levine2020offline}; \citealp{prudencio2023survey}). Its potential is especially compelling in applications such as healthcare (\citealp{raghu2018behaviour}; \citealp{gottesman2019guidelines}) and robotics (\citealp{kalashnikov2018scalable}; \citealp{fu2021benchmarks}), where large observational datasets may be available even though prospective experimentation is limited. However, the environment in which an offline policy will ultimately be deployed often differs from those in which the available data were collected, potentially limiting the direct transfer of learned policies (\citealp{levine2020offline,dulac2021challenges}). Moreover, the recorded variables may omit factors that affected both historical actions and their outcomes, inducing spurious associations between actions and outcomes. Together, these issues motivate our study of offline policy learning under the dual challenges of cross-domain heterogeneity and unmeasured confounding (\citealp{bareinboim2016causal,kallus2018confounding,wang2021provably}). 

\paragraph{Key Challenges.} The transition from theoretical potential to practical deployment is currently hindered by three interconnected challenges: domain discrepancy between source and target environments, unmeasured confounding, and insufficient data coverage.

First, the data-collection environments (the \emph{source domains}) may differ structurally from the environment in which the learned policy will be applied (the \emph{target domain}). Such domain discrepancy can arise from simplified experimental or simulated environments and intrinsic differences among populations, institutions, or physical systems (\citealp{taylor2009transfer, levine2020offline, dulac2021challenges}). For example, a simulator or controlled experiment may omit mechanisms that become relevant during deployment (\citealp{tobin2017domain, peng2018sim}). Because these differences can alter the mechanisms governing rewards or state transitions, directly pooling heterogeneous data or deploying a source-optimal policy need not produce reliable target-domain decisions (\citealp{schulam2017reliable, subbaswamy2019preventing}).

Second, recording every variable relevant to historical decisions and outcomes (\citealp{dulac2021challenges}) in an offline dataset can be expensive or even impossible. An unmeasured confounder is a latent factor that affects both the action selected by the logging policy and the subsequent reward or transition (\citealp{pearl_2009, vdw13confounder}). In autonomous driving, for instance, unrecorded road or weather conditions may affect both a driver's braking decision and the probability of an accident, while imperfect sensing further obscures the operative state (\citealp{amodei2016concrete, michelmore2018evaluating}). Consequently, observational conditional distributions need not coincide with the interventional distributions required for policy evaluation. Methods that do not adjust for this distinction may attribute outcome differences to actions when they are partly induced by latent common causes (\citealp{bareinboim2015bandits, lattimore2016causal}). 

Third, because an offline dataset is fixed, it may provide limited coverage of the state--action pairs relevant to a candidate target policy (\citealp{lange2012batch, levine2020offline, wang2020statistical}). Data generated by particular experimental designs or behavior policies typically concentrate on only a subset of possible decisions, creating a mismatch between the observed distribution and the visitation distribution of a desirable policy (\citealp{fujimoto2019off, kumar2020conservative}). Estimation error can then be amplified when policy optimization favors poorly supported actions. Pessimistic offline RL addresses this statistical difficulty by optimizing a lower-confidence assessment of policy value rather than an unsupported point estimate (\citealp{pmlr-v139-jin21e}).

The three challenges commonly occur together in applications:
\begin{itemize}
\item In \emph{autonomous control and robotics}, policies are frequently trained in simulators or laboratories that omit rare disturbances, hardware imperfections, or complex human--environment interactions (\citealp{jakobi1995noise, peng2018sim}). Logged actions and outcomes may therefore be affected by latent factors, while deployment dynamics differ from those represented in the training data (\citealp{de2019causal}).

\item In \emph{operations research and resource allocation}, historical pricing, inventory, or routing decisions can reflect unobserved market conditions or managerial judgment (\citealp{besbes2012blind, athey2017state}). Data collected across regions or time periods may also encode different mechanisms, so naive pooling can distort estimated intervention effects.

\item In \emph{biomedicine}, trial populations may differ from target patients in socioeconomic, genetic, or clinical characteristics (\citealp{rothwell2005external,ackerman2021generalizing}). Observational records may also omit relevant clinical context, so policies learned without adequate adjustment may reproduce treatment-withdrawal patterns associated with comfort-care transitions and yield unsafe recommen dations (\citealp{ji2021trajectory,ren2026unmasking}). Reliable policy learning may therefore require both confounding adjustment and transport across populations.

\item In \emph{social science and economics}, policy evaluation often draws on multiple studies or jurisdictions with different institutions and latent socioeconomic factors (\citealt{deaton2018understanding, vivalt2020much}). Reliable policy design consequently depends on understanding which causal mechanisms can be transported to the setting of interest.
\end{itemize}

Addressing these challenges requires separating causal mechanisms that remain invariant across domains from those that change, while also accounting for finite-sample uncertainty. We use structural causal models (SCMs) and causal directed acyclic graphs (DAGs) to describe the relevant data-generating mechanisms and latent confounding structure (\citealp{pearl_2009}). We then draw on transportability theory, which gives graphical and algebraic criteria for determining whether a target causal quantity can be expressed using information available from heterogeneous source domains (\citealp{Pearl_Bareinboim_2011, Lee_Correa_Bareinboim_2020}). Instrumental and proxy-variable techniques provide additional identifying information when observational actions are confounded. Finally, we combine these identification arguments with pessimistic policy optimization (\citealp{pmlr-v139-jin21e}) so that uncertainty caused by finite samples and limited coverage is propagated into the learned policy.

\paragraph{Our Contributions.}
We develop a framework for transportable offline policy learning from heterogeneous source datasets in the presence of unmeasured confounding. Its central contribution is to identify and estimate target-domain causal quantities by combining complementary information across source domains. We then propagate finite-sample uncertainty in these quantities into policy learning through data-dependent confidence sets and pessimistic optimization. Our main contributions are:

\begin{itemize}
    \item \textit{Causal identification across heterogeneous domains:}
    We combine transportability theory with instrumental-variable, negative-control, and proxy-variable arguments to identify target-domain causal quantities. For confounded contextual bandits, we characterize target causal rewards through integral equation systems of bridge functions; for confounded MDPs, we extend the framework to identify target reward and transition mechanisms.

    \item \textit{Minimax estimation and uncertainty-aware offline policy optimization:}
    We estimate the bridge functions from finite data using minimax objectives, which transform conditional moment constraints into empirical unconditional saddle-point problems. Building on the standard pessimism principle in offline RL, we integrate pessimistic optimization into the causal transportability framework, which optimize worst-case policy values over the corresponding data-dependent uncertainty sets.

    \item \textit{Finite-sample guarantees:}
    We establish high-probability suboptimality bounds for the resulting policies, characterizing the effects of function approximation, statistical estimation, source-to-target distribution shift, offline coverage, and sample size.

    \item \textit{Numerical validation:}
    We conduct numerical experiments to evaluate causal estimation and target-domain policy learning from heterogeneous and confounded offline data. The results support the effectiveness of the overall framework.
\end{itemize}

Overall, the main methodological advance lies in combining causal transportability and confounding adjustment across complementary source domains, with pessimism serving as a downstream mechanism for incorporating statistical uncertainty into policy selection. 

\subsection{Related Works}
{\noindent\bf Structural causal models.} The application of structural equations and graph models traces back to some early work of econometrics (\citealp{wright1928tariff}; \citealp{strolz1960}; \citealp{goldberger1972structural}) and socilogy (\citealp{duncan1966}; \citealp{duane1974}), which traditionally assumes parametric or linear functional forms. To express the interrelation and control of variables in a economic system, a symbolic engine called \textit{do-calculus} (\citealp{judea1995}) for inference on causal directed acyclic graphs, which gives rise to a series of graph algorithms (\citealp{tian2002}; \citealp{shipster2006}; \citealp{Pearl_Bareinboim_2011}; \citealp{textor2012adjustment}). In further research, the structural causal models (\citealp{pearl_2009,Angrist2009,BPSM21}; \citealp{hm2023causal}) are developed, which form the basis for our discussion in this paper.

For adjustment of unmeasured confounding (\citealp{pearl_2009}; \citealp{vdw13confounder}), a large body of works in causal inference literature leverage side observations like instrumental variables (\citealp{angrist1996iv,nonparamivreg,Baiocchi2014,singh2020kernel}) and proxy variables (\citealp{miao2018identifying}; \citealp{miao2020confounding}; \citealp{tchetgen2020introduction,kallus2022causal,cui2023semiparametric}), which play a crucial role in identification of causal quantities associated with interventional distributions in structural causal models.

\vspace{0.2cm}
{\noindent\bf Transportability.} A central goal of causal inference is to extrapolate causal knowledge across domains, a problem known as transportability (\citealp{Pearl_Bareinboim_2011}) or external validity (\citealp{pearl2022external}). Our framework builds on work that develops graphical and algebraic criteria for determining whether and how target causal quantities can be identified from data collected in one or more source domains (\citealp{Pearl_Bareinboim_2011}; \citealp{lee_ztrans}; \citealp{Bareinboim_Pearl_2013}; \citealp{pmlr-v31-bareinboim13a}; \citealp{bepj2014}; \citealp{Lee_Correa_Bareinboim_2020}; \citealp{softtrans2020}). Among these works, most relevant to our setting are meta-transportability (\citealp{pmlr-v31-bareinboim13a}) and general transportability (\citealp{Lee_Correa_Bareinboim_2020}), both of which address identification from multiple source domains.

Recent work connects transportability with adjustment for unmeasured confounding and causal data fusion. \citet{li2025identification} identify and multiply robustly estimate target-population causal effects using instruments available only in an auxiliary population, whereas \citet{iglesias2025transferring} establish multi-domain identification using proxies for a latent confounder. In bandit settings, \citet{deng2025transfer} study latent contextual bandits under covariate shift, while \citet{park2025transportability} use heterogeneous observational and experimental data to accelerate online structural causal bandits. These works focus on static effect estimation or online learning, whereas our framework instead combines multi-source causal identification with pessimistic offline policy optimization.
\vspace{0.2cm}

{\noindent\bf Offline reinforcement learning and pessimism.} Offline RL seeks to learn policies from static historical data without further interaction with the environment (\citealp{levine2020offline}). Our work adopts the pessimism principle (\citealp{liu2020provably,pmlr-v139-jin21e}; \citealp{xie2023bellmanconsistent}; \citealp{uehara2023pessimistic}), which addresses distribution shift between the data-generating policy and the policies being evaluated or optimized (\citealp{wang2020statistical}; \citealp{levine2020offline}). 

A growing literature studies policy learning and evaluation under unmeasured confounding (\citealp{lu22prior}; \citealp{fu2022offline}; \citealp{chen2022instrumental}; \citealp{chen2023unified}; \citealp{lu2023pessimism,liao2021instrumental}) and across heterogeneous offline datasets (\citealp{mao2024stylized,gong2025provably,carranza2025robust}). Within the first strand, \citet{shao2024decision} develop an IV-based double-machine-learning method for single-stage policy learning, \citet{wang2024pessimistic} combine mediator-based identification with pessimistic offline RL, and \citet{chen2025quantile} study quantile-optimal policy learning using instrumental variables, negative controls, and pessimism. Among approaches to heterogeneous data, \citet{gong2025provably} derive causal bounds from ambiguity sets and use them to accelerate online bandit learning, while \citet{carranza2025robust} minimize worst-case regret over mixtures of observational source distributions without identifying the causal mechanisms of a specified target domain.

\subsection{Preliminaries}
\paragraph{Structural causal models}(\citealp{pearl_2009}).  A structural causal model (SCM) is a quadruple $\mathcal{M}=(\mathbf{U},\mathbf{V},\mathcal{F},p)$, where
\begin{itemize}
\item $\mathbf{U}$ is a set of exogenous variables that are determined by factors outside the model. 
\item $\mathbf{V} = \lbrace V_1, \cdots , V_n\rbrace$ is a set of endogenous variables determined by factors in the model.
\item $\mathcal{F}$ is a set of functions $\lbrace f_1, \cdots, f_n\rbrace$ such that each $f_i$ is a mapping from  (the respective domains of) $U_i\cup Pa_i$ to $V_i,$ where $U_i\subseteq\mathbf{U}$ is some subset of $U$ whose elements are related to the variation of $V_i,$ $Pa_i\subseteq V\backslash V_i$ includes the parents of $V_i$ in the underlying causal DAG, and entire set of $F$ forms a mapping from $U$ to $V.$ In other words, $f_i$ assigns a value to the corresponding $V_i\in\mathbf{V},\ v_i\gets f(u_i,pa_i).$
\item $p$ is a probability distribution defined over the domain of $\mathbf{U}$, which creates variation in the endogenous variables $\mathbf{V}$.
\end{itemize}

Every SCM $\mathcal{M}$ gives rise to a directed acyclic graph (DAG) $\mathcal{G}(\mathcal{M}) = (\mathbf{V},\mathbf{E})$ (denoted as $\mathcal{G}$ for simplicity). Nodes in $\mathcal{G}$ correspond to endogenous variables in $\mathbf{V}$, and directed edges $\mathbf{E}$ emit from the set of parent nodes $Pa_i$ towards $V_i.$ This causal DAG specifies the causal mechanism according to which the data are generated.

Let $\mathcal{G}=(\mathbf{V},\mathbf{E})$ be a causal graph (DAG), and $\mathbf{A}$ be a subset of $\mathbf{V}$. Then we use $\mathcal{G}_{\overline{\mathbf{A}}}$ to denote the causal diagram induced by intervening on variables $\mathbf{A}$, i.e., the edges pointing to nodes in $\mathbf{A}$ are removed from $\mathcal{G}$. Similarly, let $\mathcal{G}_{\underline{\mathbf{A}}}$ denote the causal diagram resulting from removing edges that are emitted by nodes in $\mathbf{A}$.

\paragraph{Symbolic engine: do-calculus} (\citealp{judea1995}). We denote by $p(y|x,\mathrm{do}(z))$ the interventional distribution of $Y$ conditional $Z$ after intervening $Z=z$. Let $X, Y, Z$ and $W$ be arbitrary disjoint sets of nodes in $\mathcal{G}$. We use $p$ to stand for probability distribution. Do-calculus states the following three rules. To establish identifiability of a query $Q$, one needs to repeatedly apply the rules of do-calculus to $Q$ until the final expression no longer contains a do-operator.
\begin{itemize}
    \item Rule 1 (Insertion/deletion of observations).
  \begin{equation*}
      p(y\vert \mathrm{do}(x), z, w) = p(y\vert\mathrm{do}(x),w),\ \ \text{if}\ Y\perp Z\vert X,W\ \text{in}\ {\mathcal{G}_{\overline{X}}};
  \end{equation*}
  \item Rule 2 (Action/observation exchange).
  \begin{equation*}
  p(y\vert \mathrm{do}(x),\mathrm{do}(z), w) = p(y\vert\mathrm{do}(x),z, w),\ \ \text{if}\ Y\perp Z\vert X,W\ \text{in}\ {\mathcal{G}_{\overline{X},\underline{Z}}};\end{equation*}

  \item Rule 3 (Insertion/deletion of actions).
  \begin{equation*}
  p(y\vert \mathrm{do}(x),\mathrm{do}(z), w) = p(y\vert\mathrm{do}(x), w),\ \ \text{if}\ Y\perp Z\vert X,W\ \text{in}\ {\mathcal{G}_{\overline{X},\overline{Z(W)}}},\end{equation*}
  
where $Z(W)$ denotes the set of $Z$-nodes that are not ancestors of any $W$-nodes in $\mathcal{G}_{\overline{X}}$.
\end{itemize}

\paragraph{Selection diagram} (\citealp{Lee_Correa_Bareinboim_2020}). Suppose that both $\mathcal{M}^*$ and $\{\mathcal{M}^{(i)}\}_{i=1}^n$ conform to a causal graph $\mathcal{G}_0=(\mathbf{V},\mathbf{E})$. Then for any $i\in[n]$, we denote by $\Delta^{(i)}$ a set of variables such that for any $V\in\Delta^{(i)}$, there might exist a discrepancy: either the structural equations $f^{(i)}_V\neq f^*_V$ or $p^{(i)}(\mathbf{u}_V)\neq p^*(\mathbf{u}_V)$. Moreover, let $\mathbf{C}=\{C_V\,|\,V\in\bigcup_{i=1}^n\Delta^{(i)}\}$ be selection variables. Then $\mathcal{G}_0$ can be augmented. A selection diagram is defined as a graph $\mathcal{G}=(\mathbf{V}\cup\mathbf{C},\mathbf{E}\cup\{C_V\to V\}_{C_V\in \mathbf{C}})$.

We denote $\mathcal{M}^{(i)}$-specific variables by $\mathbf{C}^{(i)}=\{C_V\}_{V\in\Delta^{(i)}}$, and the remainder by $\mathbf{C}^{(-i)}=\mathbf{C}\backslash\mathbf{C}^{(i)}$. Then the range of a selection variable $C_V$ is $\{*\}\cup\{i|V\in\Delta^{(i)}\}$. Consequently, the selection diagram $\mathcal{G}$ can be viewed as a unifying SCM where the selection variables works like switches specifying the SCM of interest:
\begin{equation}
	p(\mathbf{y}|\mathrm{do}(\mathbf{a}),\mathbf{x},\mathbf{C}^{(i)}=i,\mathbf{C}^{(-i)}=*) = p^{(i)}(\mathbf{y}|\mathrm{do}(\mathbf{a}),\mathbf{x}).
\end{equation}

\paragraph{S-admissibility} (\citealp{Pearl_Bareinboim_2011}). A set of variables $\mathbf{Z}$ is said to be an S-admissible set between $\mathcal{M}^*$ and $\mathcal{M}^{(i)}$ with respect to the causal effect of $\mathbf{A}$ on $\mathbf{Y},$ if
\begin{equation*}
\left(\mathbf{C}^{(i)}\indep\mathbf{Y}|\mathbf{Z}\right)_{\mathcal{G}_{\overline{\mathbf{A}}}},
\end{equation*}
where $\mathcal{G}_{\overline{\mathbf{A}}}$ is induced by removing all arrows towards nodes in $\mathbf{A}$ from the selection diagram $\mathcal{G}$. Moreover, we have the direct transportability:
\begin{equation}
p^*(\mathbf{y}|\mathrm{do}(\mathbf{a}),\mathbf{w}) = p^{(i)}(\mathbf{y}|\mathrm{do}(\mathbf{a}),\mathbf{w}).\label{dteq}
\end{equation}
\begin{proof}
By equation (\ref{dteq}) and do-calculus rule 1, we have
\begin{align*}
	p^*(\mathbf{y}|\mathrm{do}(\mathbf{a}),\mathbf{w}) 
	&= p(\mathbf{y}|\mathrm{do}(\mathbf{a}),\mathbf{x},\mathbf{C}^{(i)}=*,\mathbf{C}^{(-i)}=*)\\
	&= p(\mathbf{y}|\mathrm{do}(\mathbf{a}),\mathbf{x},\mathbf{C}^{(i)}=i,\mathbf{C}^{(-i)}=*)
	= p^{(i)}(\mathbf{y}|\mathrm{do}(\mathbf{a}),\mathbf{w}),
\end{align*}
which concludes the proof.
\end{proof}

\paragraph{Rademacher complexity and critical radius.} Suppose we have a function class $\mathcal{F}:\mathcal{X}\to\mathbb{R}$ that is uniformly bounded by $b$, i.e. $\forall f\in\mathcal{F},\Vert f\Vert_\infty \leq b$. The localized empirical Rademacher complexity of a function class $\mathcal{F}$ based on a sample $\{x_i\}_{i=1}^n$ is defined as
\begin{equation}
	\mathcal{R}(\delta;\mathcal{F},x_{1:n}) := \E\left[\sup_{f\in\mathcal{F}:\Vert f\Vert_{P_n}\leq\delta}\frac{1}{n}\sum_{i=1}^n\epsilon_if(x_i)\right],\label{radcomp}
\end{equation}
where $\{\epsilon_i\}_{i=1}^n$ are independent Rademacher variables, i.e. $\mathbb{P}(\epsilon_i=1) = \mathbb{P}(\epsilon_i=-1) = 1/2$, and $\Vert\cdot\Vert_{P_n}$ is the $L_2$ norm with respect to empirical distribution $P_n:=\frac{1}{n}\sum_{i=1}^n \delta_{x_i}$. The empirical critical radius of $\mathcal{F}$ on dataset $\{x_i\}_{i=1}^n$ is defined as the minimum positive solution to $\mathcal{R}(\delta;\mathcal{F},x_{1:n})\leq\delta^2/b$:
\begin{equation}
	\widehat{\delta}_n := \min\left\{\delta > 0:\mathcal{R}(\delta;\mathcal{F},x_{1:n})\leq\frac{\delta^2}{b}\right\}.\label{criticalradius}
\end{equation}
\paragraph{Notations.} We use capital letters like $X,A,Y$ to denote random variables, and use the corresponding calligraphic letters like $\mathcal{A},\mathcal{X},\mathcal{Y}$ to denote their range.  Suppose $\mathcal{A}$ is a measurable space, we let $\Delta(\mathcal{A})$ denote the family of probability distributions over $\mathcal{A}$. We write $X\indep Y$ if random variables $X$ and $Y$ are independent, and we write $X\indep Y\,|\,Z$ if they are conditionally independent given variable $Z$. For any $H\in\mathbb{N}$, we use $[H]$ to denote consecutive integers from $1$ to $H$, i.e. $[H] = \{1,2,\cdots,H\}$. We denote by $L^2(\mathcal{X},\mu)$ by the $L^2$-space (i.e. the square-integrable functions) defined on a measure space $(\mathcal{X},\mu)$, and we omit $\mu$ if there is no ambiguity. For any function $f$, we let $\mathcal{O}(f)$ denote the functions that increase as fast as $f$, i.e. there exists constants $C_1,C_2>0$ such that $C_1f\leq \mathcal{O}(f)\leq C_2f$. We let $\widetilde{O}(f)$ denote $\mathcal{O}(f\cdot\mathrm{poly}(\log f))$.

\subsection{Roadmap}
In \S\hyperref[sec:2]{2}, we discuss the impact of domain discrepancy in unconfounded contextual bandits and the possibility of leverage it with transportability, and we also formalize the problem setting in presence of unmeasured confounders. In \S\hyperref[sec:3]{3}, we discuss how to use transportability to identify causal quantities introduced in \S\hyperref[sec:2]{2} in some typical contextual bandits. In \S\hyperref[sec:4]{4}, we discuss the estimation of conditional moments, and propose our algorithm CAT-CCB to solve the offline policy learning problem in confounded contextual bandits. In \S\hyperref[sec:5]{5}, we expand the setting of domain discrepancy to Markov decision processes (MDPs) in presence of unobserved confounders, and discuss the causal identification in confounded MDPs using instrumental variables. In \S\hyperref[sec:6]{6}, we further propose our CAT-CMDP algorithm to solve the offline policy learning problem discussed in \S\hyperref[sec:5]{5} under Gaussian dynamics. In \S\hyperref[sec:7]{7}, we provide convergence analysis of our CAT algorithms proposed in \S\hyperref[sec:4]{4} and \S\hyperref[sec:6]{6} under regular conditions. Finally, in \S\ref{sec:experiments}, we conduct numerical experiments to demonstrate the effecitveness of our framework for offline policy learning from heterogeneous and confounded data.

%% file: chap2.tex
\section{Contextual Bandits}
\label{sec:2}

We first study the problem of offline policy learning in contextual bandits (\citealp{langford2007epoch}). We clarify the problem setting and the performance metric of contextual bandits in $\S$\ref{sec:2.1}. We discuss the impact of domain discrepancy in contextual bandits in $\S$\ref{sec:2.2}, and we also give some primary approaches to deal with it. In $\S$\ref{sec:2.3}, we further discuss the challenge of policy learning in contextual bandits in presence of both domain discrepancy and unmeasured confounding, and formalize our setting of confounded contextual bandits.

\subsection{Problem Setting}
\label{sec:2.1}
A \textit{contextual bandit} can be viewed as a one-step sequential decision problem in which the agent observes information about the current environment before choosing an action. It naturally extends the classical multi-armed bandit problem by incorporating environmental information, referred to as a \textit{context}, into each round of interaction, so that the reward associated with an action may depend on the observed context.

Concretely, let $\mathcal{X}$ and $\mathcal{A}$ denote the context space and the action space, respectively, and let $\mathcal{Y}\subseteq\mathbb{R}$ denote the reward space. We write $X\in\mathcal{X}$ for the random context, $A\in\mathcal{A}$ for the action, and $Y\in\mathcal{Y}$ for the resulting reward. We assume that $\mathcal{X}$ and $\mathcal{A}$ are measurable spaces, so that probability distributions and policies on these spaces are well defined. A single round of interaction proceeds as follows: 
\begin{itemize} 
\item The environment generates a context $X\in\mathcal{X}$ according to a distribution $p_X(x)$. 
\item After observing $X$, the agent chooses an action $A\in\mathcal{A}$ according to its \textit{policy}. 
\item Conditional on the context $X$ and the chosen action $A$, the environment generates a reward $Y\in\mathcal{Y}$ according to a conditional distribution $p_Y(y\,|\,x,a)$. 
\end{itemize} 
The statistical dependence structure is shown by the directed acyclic graph in Figure \ref{fig:contextual-bandit-dag}. 

\begin{figure}[t]
    \centering
    \begin{tikzpicture}[node distance=1.7cm and 1.8cm]
        \node[dagvar] (X) {$X$};
        \node[dagvar, below left=of X]  (A) {$A$};
        \node[dagvar, below right=of X] (Y) {$Y$};

        \draw[dagarrow]
            (X) -- node[left, daglabel] {policy\,\, } (A);
        \draw[dagarrow] (X) -- (Y);
        \draw[dagarrow] (A) -- (Y);

        \node[daglabel, right=4pt of X] {context};
        \node[daglabel, below=3pt of A] {action};
        \node[daglabel, below=3pt of Y] {reward};
    \end{tikzpicture}

    \caption{The statistical DAG for a contextual bandit.}
    \label{fig:contextual-bandit-dag}
\end{figure}

In the interaction above, the ``policy'' describes an agent's preference over actions conditional on the observed context. Formally, a policy $\pi:\mathcal{X}\to\Delta(\mathcal{A})$ assigns to every context $x\in\mathcal{X}$ a probability distribution $\pi(\cdot\,|\,x)$ over the action space $\mathcal{A}$. Thus, conditional on observing $X=x$, the action is sampled according to $A\sim\pi(\cdot\,|\,x)$. A deterministic policy is a special case in which $\pi(\cdot\,|\,x)$ is supported on a single action for every $x\in\mathcal{X}$. 

We use $p_X(x)$ to denote the probability density or probability mass function of the context $X$, and write $p(x)$ when there is no ambiguity. Similarly, $p(y\,|\,x,a)$ denotes the conditional distribution of the reward given $X=x$ and $A=a$. Throughout the paper, we use densities or probability mass functions with respect to appropriate reference measures and write integrals for notational convenience; when a variable is discrete, the corresponding integral is understood as a sum. Under a policy $\pi$, the interaction above induces the joint distribution 
\begin{equation*} 
p^\pi(x,a,y) = p(x)\,\pi(a\,|\,x)\,p(y\,|\,x,a), \qquad (x,a,y)\in\mathcal{X}\times\mathcal{A}\times\mathcal{Y}. \end{equation*} 
In the offline setting, the agent cannot interact further with the environment. Instead, it is provided with a fixed dataset \begin{equation*} 
\mathcal{D} = \{(x_t,a_t,y_t)\}_{t=1}^{T}, 
\end{equation*} 
where each tuple $(x_t,a_t,y_t)$ records one round of interaction generated before policy learning. Based only on this offline dataset, our objective is to learn a policy that achieves a large expected reward. 

For any policy $\pi$, we define its \textit{value} as the expected reward under the induced distribution $p^\pi$: 
\begin{equation*} 
v^\pi = \E_{p^\pi}[Y] = \int_{\mathcal{X}\times\mathcal{A}\times\mathcal{Y}} y\,p^\pi(x,a,y) \,\mathrm{d}y\,\mathrm{d}a\,\mathrm{d}x. \end{equation*} 
Denote by $\Pi=[\Delta(\mathcal{A})]^{\mathcal{X}}$ the class of context-dependent policies under consideration. An optimal policy is any policy
\begin{equation*} 
\pi^* \in \underset{\pi\in\Pi}{\mathrm{argmax}}\;v^\pi. \end{equation*} 
Because the policy must be learned from a finite offline dataset, the learned policy need not coincide with $\pi^*$. We therefore evaluate a policy $\pi$ through its suboptimality, 
\begin{equation*} 
\mathrm{SubOpt}(\pi) := v^{\pi^*}-v^\pi. 
\end{equation*} 
A statistically consistent offline policy-learning procedure should produce policies whose suboptimality approaches zero as the amount of informative offline data increases.

\subsection{Domain Discrepancy}
\label{sec:2.2}

In offline policy learning, the data available to the learner are typically collected in advance from one or more environments, which we refer to as \textit{source domains}. The policy, however, may ultimately be deployed in a different environment, referred to as the \textit{target domain}. When the data-generating mechanisms differ across these domains, a policy learned directly from a source dataset need not remain optimal in the target domain. We refer to such differences between the source and target environments as \textit{domain discrepancy}.

To first isolate the effect of domain discrepancy from that of unmeasured confounding, throughout this subsection we consider the standard unconfounded contextual-bandit setting introduced in \S\ref{sec:2.1}. All domains share the same context, action, and reward spaces $\mathcal{X}$, $\mathcal{A}$, and $\mathcal{Y}$, while their underlying probability distributions may differ.

\paragraph{Contextual bandits under domain discrepancy.}
Recall that a contextual bandit $\mathscr{C}$ is characterized by the context distribution $p(x)$ and the conditional reward distribution $p(y\,|\,x,a)$. For a given context $x\in\mathcal{X}$, an optimal policy $\widetilde{\pi}$ places probability only on actions attaining the largest conditional expected reward:
\begin{equation}
    \supp\bigl(\widetilde{\pi}(\cdot|x)\bigr)
    \subseteq
    \argmax_{a\in\mathcal{A}}
    \int_{\mathcal{Y}} y\,p(y|x,a)\,\mathrm{d}y.
    \label{eq:optimal_cb}
\end{equation}
Now suppose that $\mathscr{C}$ is the source domain, while the target domain $\mathscr{C}^*$ is characterized by $p^*(x)$ and $p^*(y\,|\,x,a)$. Equation \eqref{eq:optimal_cb} reveals two qualitatively different types of domain discrepancy:
\begin{itemize}
    \item If only the context distribution changes,
    \begin{equation*}
        p^*(x)\neq p(x),
        \qquad
        p^*(y\,|\,x,a)=p(y\,|\,x,a),
    \end{equation*}
    then under the unrestricted class of context-dependent policies considered in \S\ref{sec:2.1}, the optimal policy remains unchanged, since the optimal action for each context $x$ depends only on $p(y\,|\,x,a)$. However, such a shift may still create a statistical difficulty in the offline setting: if the source dataset provides insufficient coverage of contexts that occur frequently in the target domain, the optimal action at those contexts may be difficult to estimate reliably.

    \item In contrast, suppose that the conditional reward mechanism changes,
    \begin{equation*}
        p^*(y\,|\,x,a)\neq p(y\,|\,x,a).
    \end{equation*}
    In this case, even for the same context $x$, the expected rewards associated with different actions may change substantially across domains, and consequently the optimal policy in the target domain can be very different from that in the source domain. Hence an optimal policy learned in the source domain $\mathscr{C}$ is generally not \textit{transportable} to the target domain $\mathscr{C}^*$ without further structural assumptions.
\end{itemize}

\subsubsection{Contextual Bandits with Side Observations}
The preceding discussion shows that transporting a policy becomes nontrivial when the conditional reward mechanism changes across domains. A useful possibility arises when the reward-generating process admits additional observed variables whose mechanisms differ across domains in a structured manner. In particular, by introducing a side observation $W\in\mathcal{W}$, the conditional reward distribution can be decomposed into multiple components, and different components may be recovered from different source domains.

We consider two representative causal structures, illustrated in Figure~\ref{fig:contextual-bandit-side-observations}. In the first structure, $W$ is not affected by the action and serves as a \textit{negative outcome}. In the second structure, $W$ lies on a causal path from the action to the reward and serves as a \textit{mediator}. These two examples provide simple illustrations of how information from heterogeneous source domains can be combined to recover the target-domain decision rule.

\begin{figure}[t]
    \centering
    \begin{tikzpicture}
        \begin{scope}
            \node[dagvar] (X1) at (0,2.5) {$X$};
            \node[dagvar] (W1) at (3.0,2.5) {$W$};
            \node[dagvar] (A1) at (-1.5,0) {$A$};
            \node[dagvar] (Y1) at (1.5,0) {$Y$};

            \draw[dagarrow] (X1) -- (W1);
            \draw[dagarrow] (X1) -- (A1);
            \draw[dagarrow] (X1) -- (Y1);
            \draw[dagarrow] (W1) -- (Y1);
            \draw[dagarrow] (A1) -- (Y1);

            \node at (0.6,-0.75) {$\mathcal{G}_1$};
        \end{scope}

        \begin{scope}[xshift=7.2cm]
            \node[dagvar] (X2) at (0,2.5) {$X$};
            \node[dagvar] (W2) at (0,0.80) {$W$};
            \node[dagvar] (A2) at (-1.8,0) {$A$};
            \node[dagvar] (Y2) at (1.8,0) {$Y$};

            \draw[dagarrow] (X2) -- (W2);
            \draw[dagarrow] (X2) -- (A2);
            \draw[dagarrow] (X2) -- (Y2);
            \draw[dagarrow] (A2) -- (W2);
            \draw[dagarrow] (W2) -- (Y2);
            \draw[dagarrow] (A2) -- (Y2);

            \node at (0,-0.65) {$\mathcal{G}_2$};
        \end{scope}
    \end{tikzpicture}

    \caption{The statistical DAGs for contextual bandits with side
    observations. The left diagram describes a contextual bandit with a negative outcome $W$ that is not influenced by the action $A$. The right diagram describes a contextual bandit with a mediator $W$ lying on a causal path from $A$ to the reward $Y$.}
    \label{fig:contextual-bandit-side-observations}
\end{figure}

\paragraph{Negative outcome.}
We first consider the left diagram in Figure~\ref{fig:contextual-bandit-side-observations}. Here $W$ may depend on the context $X$, but it is not affected by the action $A$; equivalently, $A\perp\!\!\!\perp W\,|\, X$. Under a policy $\pi:\mathcal{X}\to\Delta(\mathcal{A})$, the joint distribution of $(X,A,W,Y)$ factorizes as
\begin{equation}
    p^\pi(x,a,w,y)
    =
    p(x)\pi(a|x)p(w|x)p(y|x,a,w).
    \label{eq:cb_neg_factorization}
\end{equation}
Therefore, for each $x\in\mathcal{X}$, an optimal policy $\widetilde{\pi}$ satisfies
\begin{equation}
    \supp\bigl(\widetilde{\pi}(\cdot|x)\bigr)
    \subseteq
    \argmax_{a\in\mathcal{A}}
    \int_{\mathcal{W}\times\mathcal{Y}}
    y\,p(y|x,a,w)p(w|x)\,\mathrm{d}w\,\mathrm{d}y.
    \label{eq:cb_neg_optimal}
\end{equation}

Now suppose that we have two source domains $\mathscr{C}^{(1)}$ and $\mathscr{C}^{(2)}$, with corresponding distributions $p^{(1)}$ and $p^{(2)}$, and a target domain $\mathscr{C}^*$ with distribution $p^*$. Consider the following complementary discrepancies:
\begin{equation}
\begin{cases}
p^{(1)}(w|x)\neq p^*(w|x),
&
p^{(1)}(y|x,a,w)=p^*(y|x,a,w),
\\
p^{(2)}(w|x)=p^*(w|x),
&
p^{(2)}(y|x,a,w)\neq p^*(y|x,a,w).
\end{cases}
\label{eq:cb_neg_discrepancy}
\end{equation}
Thus, source domain $\mathscr{C}^{(1)}$ provides the target conditional reward mechanism given $(X,A,W)$, while source domain $\mathscr{C}^{(2)}$ provides the target conditional distribution of the side observation $W$ given $X$. Neither source domain, by itself, is required to reproduce the full target-domain mechanism.

Combining the invariant components from the two sources, the optimal policy in the target domain satisfies
\begin{align}
    \supp\bigl(\pi^*(\cdot|x)\bigr)
    &\subseteq
    \argmax_{a\in\mathcal{A}}
    \int_{\mathcal{W}\times\mathcal{Y}}
    y\,p^{(1)}(y|x,a,w)p^{(2)}(w|x)
    \,\mathrm{d}w\,\mathrm{d}y
    \nonumber\\
    &=
    \argmax_{a\in\mathcal{A}}
    \E^{(2)}
    \left[
        \E^{(1)}[Y|X=x,A=a,W]
        \,\,|\,dle|\,X=x
    \right],
    \label{eq:cb_neg_transport}
\end{align}
where $\E^{(i)}$ denotes expectation with respect to $p^{(i)}$. This expression illustrates the central idea behind our use of multiple heterogeneous datasets: the target quantity may be recovered by combining different mechanisms that are invariant in different source domains, even when no single source reproduces the entire target domain.

To prepare for the confounded setting studied later and for the minimax estimation procedure in \S\ref{sec:4}, we express the above identification through \textit{bridge functions}. Let
\begin{equation*}
    h_1:\mathcal{W}\times\mathcal{A}\times\mathcal{X}\to\mathbb{R},
    \qquad
    g:\mathcal{A}\times\mathcal{X}\to\mathbb{R},
\end{equation*}
where $h_1(w,a,x):=\E^{(1)}[Y\,|\, W=w,A=a,X=x]$ represents the conditional expected reward in the first source domain and $g:=\E^{(2)}[h_1(W,a,x)\,|\, X=x]$ represents the conditional expected reward in the target domain. They satisfy the integral equation system
\begin{equation}
\begin{cases}
\E^{(1)}
    [h_1(W,A,X)-Y\,|\, W,A,X]=0,
\\
\E^{(2)}
    [g(a,X)-h_1(W,a,X)\,|\, X]=0,
\qquad \forall a\in\mathcal{A}.
\end{cases}
\label{eq:cb_neg_ies}
\end{equation}

We summarize this setting as follows.

\begin{assumption}[Contextual bandits with negative outcome]
\label{assump:cb_neg}
There exists a negative outcome $W\in\mathcal{W}$ such that $A\perp\!\!\!\perp W\,|\, X$ and the joint distribution under any policy factorizes as in \eqref{eq:cb_neg_factorization}. We have two source domains $(\mathscr{C}^{(1)},p^{(1)})$ and $(\mathscr{C}^{(2)},p^{(2)})$ and a target domain $(\mathscr{C}^*,p^*)$ satisfying the discrepancies in \eqref{eq:cb_neg_discrepancy}. Under these conditions, the target conditional expected reward
\begin{equation*}
    g(a,x)=\E^*[Y|A=a,X=x]
\end{equation*}
can be recovered through the integral equation system \eqref{eq:cb_neg_ies}.
\end{assumption}

\paragraph{Mediator.}
We next consider the right diagram in Figure~\ref{fig:contextual-bandit-side-observations}, where the side observation $W$ lies on a causal path from the action $A$ to the reward $Y$. In this case, $W$ is a \textit{mediator}, and its distribution generally depends on both the context and the chosen action. Under a policy $\pi$, the joint distribution factorizes as
\begin{equation}
    p^\pi(x,a,w,y)
    =
    p(x)\pi(a|x)p(w|x,a)p(y|x,a,w).
    \label{eq:cb_med_factorization}
\end{equation}
Accordingly, an optimal policy $\widetilde{\pi}$ satisfies
\begin{equation}
    \supp\bigl(\widetilde{\pi}(\cdot|x)\bigr)
    \subseteq
    \argmax_{a\in\mathcal{A}}
    \int_{\mathcal{W}\times\mathcal{Y}}
    y\,p(y|x,a,w)p(w|x,a)
    \,\mathrm{d}w\,\mathrm{d}y.
    \label{eq:cb_med_optimal}
\end{equation}

Again consider two source domains $\mathscr{C}^{(1)}$ and $\mathscr{C}^{(2)}$ and the target domain $\mathscr{C}^*$. We assume the complementary discrepancies
\begin{equation}
\begin{cases}
p^{(1)}(w\,|\,x,a)\neq p^*(w\,|\,x,a),
&
p^{(1)}(y\,|\,x,a,w)=p^*(y\,|\,x,a,w),
\\
p^{(2)}(w\,|\,x,a)=p^*(w\,|\,x,a),
&
p^{(2)}(y\,|\,x,a,w)\neq p^*(y\,|\,x,a,w).
\end{cases}
\label{eq:cb_med_discrepancy}
\end{equation}
Then the target optimal policy satisfies
\begin{align}
    \supp\bigl(\pi^*(\cdot|x)\bigr)
    &\subseteq
    \argmax_{a\in\mathcal{A}}
    \int_{\mathcal{W}\times\mathcal{Y}}
    y\,p^{(1)}(y|x,a,w)p^{(2)}(w|x,a)
    \,\mathrm{d}w\,\mathrm{d}y
    \nonumber\\
    &=
    \argmax_{a\in\mathcal{A}}
    \E^{(2)}
    \left[
        \E^{(1)}[Y|X=x,A=a,W]
        \,\middle|\,X=x,A=a
    \right].
    \label{eq:cb_med_transport}
\end{align}
Using the same bridge function $h_1$ for the conditional reward and $g$ for the target conditional expected reward, we obtain
\begin{equation}
\begin{cases}
\E^{(1)}
    [h_1(W,A,X)-Y\,|\, W,A,X]=0,
\\
\E^{(2)}
    [g(A,X)-h_1(W,A,X)\,|\, X,A]=0.
\end{cases}
\label{eq:cb_med_ies}
\end{equation}
The second equation differs from that in the negative-outcome setting \eqref{eq:cb_neg_ies} because the distribution of the mediator $W$ is action-dependent. We summarize the mediator setting as follows.

\begin{assumption}[Contextual bandits with mediator]
\label{assump:cb_med}
There exists a mediator $W\in\mathcal{W}$ such that the joint distribution under any policy factorizes as in \eqref{eq:cb_med_factorization}. We have two source domains $(\mathscr{C}^{(1)},p^{(1)})$ and $(\mathscr{C}^{(2)},p^{(2)})$ and a target domain $(\mathscr{C}^*,p^*)$ satisfying the discrepancies in \eqref{eq:cb_med_discrepancy}. Under these conditions, the target conditional expected reward
\begin{equation*}
    g(a,x)=\E^*[Y|A=a,X=x]
\end{equation*}
can be recovered through the integral equation system \eqref{eq:cb_med_ies}.
\end{assumption}

\subsection{Confounding Bias}
\label{sec:2.3}

The discussion in \S\ref{sec:2.2} assumes that the variables relevant to action selection and reward generation are fully observed. In many real-world decision-making problems, however, it can be expensive or even impossible to record every relevant factor. As a result, some variables that influence both the action taken by the agent and the resulting reward may be absent from the offline dataset. We refer to such variables as \textit{unobserved} or \textit{unmeasured confounders} \citep{VanderWeele_Hernan_Robins_2008}. Their presence can induce spurious associations between actions and rewards, so that the observational relationship learned from historical data no longer represents the causal effect of taking an action.

Figure~\ref{fig:confounded-contextual-bandit} gives a simple illustration. In the source domain shown on the left, $U$ is a common cause of the action $A$ and the reward $Y$. The historical agent have may have access to information represented by $U$ when choosing its action, so that its behavior depends on both the observed context $X$ and the confounder $U$. From the perspective of the offline learner, however, $U$ is not recorded, and the available data contain only $(X,A,Y)$. Consequently, conditioning on the observed action $A=a$ also changes the distribution of the latent variable $U$ among the observed samples.

In the target domain, our goal is instead to deploy a policy $\pi:\mathcal{X}\to\Delta(\mathcal{A})$ that chooses actions based only on the observed context $X$. The effect of assigning action $A=a$ is therefore described by the interventional distribution $p(y\,|\, x,\operatorname{do}(a))$, rather than the observational conditional distribution $p(y\,|\, x,a)$. To see the distinction, note that
\begin{align*}
p(y\,|\,x,\mathrm{do}(a)) &= \int_\mathcal{U} p(y\,|\,x,\mathrm{do}(a),u)\,p(u\,|\,x,\mathrm{do}(a))\mathrm{d}u = \int_\mathcal{U} p(y\,|\,x,a,u)\,p(u)\mathrm{d}u,\\
p(y\,|\,x,a) &= \int_\mathcal{U} p(y\,|\,x,a,u)\,p(u\,|\,x,a)\mathrm{d}u = \int_\mathcal{U} p(y\,|\,x,a,u)\,p(u\,|\,a)\mathrm{d}u.
\end{align*}
Because the historical action $A$ is influenced by $U$, we generally have
$p(u\,|\, x,a)\neq p(u\,|\, x)$. Hence $p(y\,|\, x,a)$ can differ substantially from $p(y\,|\, x,\operatorname{do}(a))$, even when both are evaluated at the same context and action.

This distinction is the source of \textit{confounding bias}. In particular, estimating the expected reward of an intervention from the observational conditional mean $\E[Y| X=x,A=a]$ can be misleading. The relevant quantity for policy learning is instead the \textit{causal effect}
\begin{equation*}
    \E[Y| X=x,\operatorname{do}(A=a)],
\end{equation*}
which is generally not identifiable from the observed variables alone without additional causal assumptions or auxiliary information. In the following two subsections, we formalize the observational and interventional processes of our confounded contextual-bandit model and then define the causal quantity that will serve as the target of identification and policy optimization.

\begin{figure}[t]
    \centering
    \begin{tikzpicture}
        \begin{scope}
            \node[dagvar] (X1) at (0,1.5) {$X$};
            \node[dagvar] (A1) at (-2.0,0) {$A$};
            \node[dagvar] (Y1) at (2.0,0) {$Y$};
            \node[dagvar] (U1) at (0,-1.5) {$U$};

            \draw[dagarrow] (X1) -- (A1);
            \draw[dagarrow] (X1) -- (Y1);
            \draw[dagarrow] (A1) -- (Y1);

            \draw[dagarrow,dashed] (U1) -- (A1);
            \draw[dagarrow,dashed] (U1) -- (Y1);

            \node[daglabel, right=4pt of X1] {context};
            \node[daglabel, right=5pt of U1] {common cause};
        \end{scope}

        \begin{scope}[xshift=7cm]
            \node[dagvar] (X2) at (0.5,1.5) {$X$};
            \node[dagvar] (A2) at (-2.0,0) {$A$};
            \node[dagvar] (Y2) at (2.0,0) {$Y$};
            \node[dagvar,dashed] (U2) at (0,-1.5) {$U$};

            \draw[dagarrow] (X2) -- (Y2);
            \draw[dagarrow] (A2) -- (Y2);
            \draw[dagarrow,dashed] (U2) -- (Y2);

            \node[daglabel] (policy) at (-2.0,1.5)
                {policy\,/\,intervention};
            \draw[dagarrow] (policy.south) -- (A2.north);
            \node[daglabel, right=5pt of U2] {(unobserved)};
        \end{scope}
    \end{tikzpicture}

    \caption{A toy model for illustration of confounded contextual bandits.}
    \label{fig:confounded-contextual-bandit}
\end{figure}

\subsubsection{Observational and Interventional Processes}

We formalize the confounded contextual bandit (CCB) framework, which is inspired by \cite{chen2023unified}, with several modifications tailored to our multi-domain and identification setting. 

In a confounded contextual bandit, each round is described by a collection of random variables $(U,O,X,A,M,Y)$,
where $U\in\mathcal{U}$ collects the unobserved confounders, $O\in\mathcal{O}$ denotes the side observations, $X\in\mathcal{X}$ is the observed context, $A\in\mathcal{A}$ is the action or treatment, $M\in\mathcal{M}$ is a causal mediator between $A$ and $Y$ when such a mediator is present, and $Y\in\mathcal{Y}\subseteq\mathbb{R}$ is the resulting reward.  Among these variables, $X$ is the information available to the policy we ultimately wish to deploy, while $O$ contains auxiliary observations that may be available in the source datasets and will later be used for causal identification.

A key distinction in our setting is between the \textit{observational process}, which generates the offline datasets in the source domains, and the \textit{interventional process}, which describes deployment of the learned policy in the target domain. Figure~\ref{fig:ccb-observational-interventional} illustrates these two processes. In the observational process, the historical action may depend on unobserved confounders and auxiliary observations. In the interventional process, this historical action mechanism is replaced by a policy that depends only on the observed context $X$.

\tikzset{
    dagconf/.style={
        <->,
        dashed,
        semithick
    }
}
\begin{figure}[t]
    \centering
    \begin{tikzpicture}[>={Latex[length=2mm]}]

        \begin{scope}
            \node[dagvar] (X1) at (0,3.46) {$X$};
            \node[dagvar] (O1) at (0,1.5) {$O$};
            \node[dagvar] (A1) at (-2.4,0) {$A$};
            \node[dagvar] (Y1) at (2.4,0) {$Y$};
            \node[dagvar] (M1) at (0,-0.95) {$M$};

\draw[dagconf]
    (X1.west) to[bend right=45] (A1.105);

\draw[dagconf]
    (X1.east) to[bend left=45] (Y1.75);

\draw[dagconf]
    (X1.300) to[bend left=30] (O1.60);

\draw[dagconf]
    (O1.195) to[bend right=28] (A1.45);

\draw[dagconf]
    (O1.345) to[bend left=28] (Y1.135);

\draw[dagconf]
    (A1.280) to[bend right=54] (Y1.260);

            \draw[dagarrow] (X1) -- (O1);
            \draw[dagarrow] (X1.210) -- (A1.75);
            \draw[dagarrow] (X1.330) -- (Y1.105);

            \draw[dagarrow]
                (X1.240)
                to[bend right=30]
                (M1.120);

            \draw[dagarrow] (O1) -- (A1.15);
            \draw[dagarrow] (O1) -- (Y1.165);
            \draw[dagarrow] (A1) -- (Y1);
            \draw[dagarrow] (A1) -- (M1);
            \draw[dagarrow] (M1) -- (Y1);

            \node at (0,-2.4) {\Large $\mathcal{G}$};
        \end{scope}

        \begin{scope}[xshift=7cm]
            \node[dagvar] (X2) at (0,3.46) {$X$};
            \node[dagvar] (O2) at (0,1.5) {$O$};
            \node[dagvar] (A2) at (-2.4,0) {$A$};
            \node[dagvar] (Y2) at (2.4,0) {$Y$};
            \node[dagvar] (M2) at (0,-0.95) {$M$};

            \node[
                dagvar,
                minimum size=10mm,
                inner sep=1pt
            ] (UAX) at (-2.4,3.46) {$U_{AX}$};

            \node[
                dagvar,
                minimum size=10mm,
                inner sep=1pt
            ] (UAO) at (-2.4,1.5) {$U_{AO}$};

            \node[
                dagvar,
                minimum size=10mm,
                inner sep=1pt
            ] (UAY) at (2.4,-1.55) {$U_{AY}$};

            \draw[dagarrow,dashed] (UAX) -- (X2);
            \draw[dagarrow,dashed] (UAO) -- (O2);
            \draw[dagarrow,dashed] (UAY) -- (Y2);

\draw[dagconf]
    (X2.300) to[bend left=30] (O2.60);

\draw[dagconf]
    (X2.east) to[bend left=45] (Y2.75);

\draw[dagconf]
    (O2.345) to[bend left=28] (Y2.135);

            \draw[dagarrow] (X2) -- (O2);
            \draw[dagarrow] (X2.330) -- (Y2.105);

            \draw[dagarrow]
                (X2.240)
                to[bend right=30]
                (M2.120);

            \draw[dagarrow] (O2) -- (Y2.165);
            \draw[dagarrow] (A2) -- (Y2);
            \draw[dagarrow] (A2) -- (M2);
            \draw[dagarrow] (M2) -- (Y2);

            \node at (0,-2.4) {{\Large $\mathcal{G}_{\bar A}$}};
        \end{scope}
    \end{tikzpicture}

    \caption{A causal diagram describing the observational and interventional processes in a CCB. The left diagram depicts the observational process, where a dashed bidirectional arc represents an unobserved common cause or confounder between two variables. The right diagram is induced by intervening on $A$ in the left diagram.}
    \label{fig:ccb-observational-interventional}
\end{figure}

\paragraph{Observational process.} We use the bracketed superscript $(i)$ to index a source domain $\mathscr{C}^{(i)}$. The observational process in $\mathscr{C}^{(i)}$ describes how the corresponding offline dataset is generated. In each round:
\begin{itemize}
    \item The environment first generates $(U,O,X)$ according to the joint distribution
    $p^{(i)}(u,o,x)$.

    \item Given $(U,O,X)$, the historical agent chooses an action according to the observational policy
    \[
        A\sim p^{(i)}(\cdot\,|\,U,O,X).
    \]
    In particular, the action may depend on the unobserved confounder $U$. After $U$ is omitted from the recorded data, the observational relationship between $A$ and $Y$ can be confounded.

    \item After the action is taken, the mediator $M$, when present, is generated from $p^{(i)}(m|u,o,x,a)$, followed by the reward $Y\sim p^{(i)}(\cdot|u,o,x,a,m)$.
\end{itemize}
Accordingly, the joint distribution of one observational round factorizes as
\begin{equation}
    p^{(i)}(u,o,x,a,m,y)
    =
    p^{(i)}(u,o,x)\,
    p^{(i)}(a|u,o,x)\,
    p^{(i)}(m|u,o,x,a)\,
    p^{(i)}(y|u,o,x,a,m).
    \label{eq:ccb_observational}
\end{equation}
After $T$ rounds, the learner observes an offline dataset
\begin{equation*}
    \mathcal{D}^{(i)}
    =
    \{(o_t,x_t,a_t,m_t,y_t)\}_{t=1}^{T},
\end{equation*}
whereas the corresponding confounders $\{u_t\}_{t=1}^{T}$ are missing. 

\paragraph{Interventional process.}
The interventional process describes deployment in the target domain $\mathscr{C}^*$. Here the historical action mechanism is replaced by an \textit{interventional} policy $\pi:\mathcal{X}\to\Delta(\mathcal{A})$, so that the action is generated according to $A\sim\pi(\cdot\,|\,X)$ and therefore depends only on the observed context $X$. Importantly, the confounders $U$ and side observations $O$ are not inputs to the deployed policy, and the auxiliary variables $O$ and $M$ need not be observed during deployment.

The remaining variables evolve according to the target-domain causal mechanisms. In particular, conditional on $X=x$ and the assigned action $A=a$,
\begin{equation}
    p^*(u,o,m,y\,|\,x,a)
    =
    p^*(u,o\,|\,x)\,
    p^*(m\,|\,u,o,x,a)\,
    p^*(y\,|\,u,o,x,a,m).\label{eq:ccb_interventional_act}
\end{equation}
Hence a policy $\pi$ induces the joint distribution
\begin{equation}
    p^\pi(u,o,x,a,m,y)
    =
    p^*(x)\,
    p^*(u,o\,|\,x)\,
    \pi(a\,|\,x)\,
    p^*(m\,|\,u,o,x,a)\,
    p^*(y\,|\,u,o,x,a,m).
    \label{eq:ccb_interventional}
\end{equation}
As is discussed in \S\ref{sec:2.1}, our policy-learning objective is defined with respect to this interventional process in the target domain. Specifically, the value of $\pi$ is
\begin{equation}
    v^\pi
    :=
    \E_{p^\pi}[Y]
    =
    \int_{\mathcal{U}\times\mathcal{O}\times\mathcal{X}
    \times\mathcal{A}\times\mathcal{M}\times\mathcal{Y}}
    y\,p^\pi(u,o,x,a,m,y)
    \,\mathrm{d}u\,\mathrm{d}o\,\mathrm{d}x\,\mathrm{d}a\,\mathrm{d}m\,\mathrm{d}y.
    \label{eq:ccb_policy_value}
\end{equation}
The distinction between \eqref{eq:ccb_observational} and \eqref{eq:ccb_interventional} is essential: the offline datasets are generated under confounded observational action mechanisms $p^{(i)}(a|u,o,x)$, whereas the policy to be evaluated and optimized acts according to $\pi(a|x)$ in the target domain.

Here we provide two representative instances of this general CCB formulation, which are studied in detail in \S\ref{sec:3}.

\begin{example}
In the \textit{confounded contextual bandit with negative outcome} (CCB-Neg), shown in Figure~\ref{fig:ccb-negative-outcome}, the side observations $O$ include the instrumental variables $Z_1,Z_2$ and a negative outcome $W$, while no mediator is present. On each dashed bidirected arcs, there exists a confounder variable as part of $U$. The observational action mechanism takes the form $p(a\,|\,u_a,x,z_1)$, where $U_A$ denotes the components of $U$ associated with the action.  
\end{example}

\begin{example}
In the \textit{confounded contextual bandit with mediator} (CCB-Med), shown in Figure~\ref{fig:ccb-negative-outcome-iv}, the side observations include $Z_1,Z_2$, while the mediator $M$ is instantiated by a variable $W$ lying causally between $A$ and $Y$. On each dashed bidirected arcs, there exists a confounder variable as part of $U$. The observational policy is specified by a conditional distribution $p(a\,|\,u_a, x, z_1)$, where $U_A$ denotes the confounders associated with $A$.
\end{example}

\subsubsection{Conditional Average Treatment Effect}

The value $v^\pi$ in \eqref{eq:ccb_policy_value} is defined through the full interventional distribution involving the latent confounders and auxiliary variables. For policy learning, however, it is sufficient to identify a lower-dimensional causal quantity: the \textit{conditional average treatment effect} (CATE) of the action on the reward in the target domain. For each $(a,x)\in\mathcal{A}\times\mathcal{X}$, define
\begin{equation}
    g^*(a,x)
    :=
    \E^*\!\left[
        Y
        \,\middle|\,
        X=x,\operatorname{do}(A=a)
    \right]
    =
    \int_{\mathcal{Y}}
    y\,p^*\!\left(
        y
        \mid
        x,\operatorname{do}(a)
    \right)\mathrm{d}y.
    \label{eq:target_cate}
\end{equation}
Unlike the observational conditional mean $\E^*[Y|X=x,A=a]$, the quantity $g^*(a,x)$ measures the expected reward when the action is externally assigned to $a$, thereby removing the dependence of the action on latent confounders. Under the target-domain causal model in \eqref{eq:ccb_interventional_act}, the same quantity can be written explicitly as
\begin{equation*}
\begin{aligned}
g^*(a,x&)=
\int_{\mathcal{U}\times\mathcal{O}\times\mathcal{M}\times\mathcal{Y}}
y\,p^*(u,o\,|\,\operatorname{do}(a),x)\,p^*(m\,|\,u,o,x,\operatorname{do}(a))p^*(y\,|\,u,o,x,\operatorname{do}(a),m)\,\mathrm{d}u\,\mathrm{d}o\,\mathrm{d}m\,\mathrm{d}y\\
&=\int_{\mathcal{U}\times\mathcal{O}\times\mathcal{M}\times\mathcal{Y}}y\,p^*(u,o\,|\,x)\,p^*(m\,|\,u,o,x,a)p^*(y\,|\,u,o,x,a,m)\,\mathrm{d}u\,\mathrm{d}o\,\mathrm{d}m\,\mathrm{d}y.
\end{aligned}
\end{equation*}
where the second equality follows from $(U,O\indep A\,|\,X)$ in $\mathcal{G}_{\overline{A}}$, $(M\indep A\,|\,U,O,X)$ in $\mathcal{G}_{\underline{A}}$, and $(Y\indep A\,|\,U,O,X,M)$ in $\mathcal{G}_{\underline{A}}$. Thus, $g^*(a,x)$ averages the reward over the latent and auxiliary variables according to their target-domain distribution after action $a$ is assigned.

Most importantly, the value of any policy $\pi$ can be expressed entirely through CATE $g^*$. Indeed,
\begin{equation}
\begin{aligned}
    \E_{p^\pi}[g^*(A,X)]
    &=
    \int_{\mathcal{X}\times\mathcal{A}}
    g^*(a,x)\,
    p^*(x)\pi(a|x)
    \,\mathrm{d}x\,\mathrm{d}a
    \\
    &=
    \int_{\mathcal{U}\times\mathcal{O}\times\mathcal{X}
    \times\mathcal{A}\times\mathcal{M}\times\mathcal{Y}}
    y\,
    p^*(x)\,
    p^*(u,o|x)\,
    \pi(a|x)\,
    p^*(m|u,o,x,a)\\
    &\hspace{5cm}
    \times p^*(y|u,o,x,a,m)\,\mathrm{d}u\,\mathrm{d}o\,\mathrm{d}x\,\mathrm{d}a\,\mathrm{d}m\,\mathrm{d}y
    \\
    &=
    \E_{p^\pi}[Y]
    =
    v^\pi.
\end{aligned}\label{eq:cate_policy_value}
\end{equation}
Therefore,
\begin{equation*}
    v^\pi=\E_{p^\pi}[g^*(A,X)].
\end{equation*}
This identity reduces the policy-learning problem to identifying the target-domain CATE $g^*$. Once $g^*(a,x)$ is known, the value of any context-dependent policy can be evaluated without explicitly recovering the unobserved confounders.

The central difficulty is that $g^*$ is an interventional quantity, whereas the available datasets are observational and are collected from heterogeneous source domains. In the presence of both unmeasured confounding \citep{VanderWeele_Hernan_Robins_2008} and domain discrepancy \citep{Lee_Correa_Bareinboim_2020}, $g^*$ therefore cannot in general be obtained by ordinary regression on any single source dataset. In the next section, we develop identification strategies that combine transportability with instrumental- and proxy-variable methods to recover $g^*$ from multiple heterogeneous observational datasets.

%% file: chap3.tex
\section{Causal Identification of CATE}
\label{sec:3}
In \S\ref{sec:2}, we showed that offline policy learning in a confounded contextual bandit can be reduced to identifying the target-domain CATE $g^*(a,x)$. We now study how this causal quantity can be recovered from multiple observational datasets collected from heterogeneous source domains. The identification problem has two intertwined challenges: 
\begin{itemize}
    \item First, due to domain discrepancy, the target causal effect may not be directly transportable from any single source domain; instead, different components of the target causal mechanism may need to be assembled from different sources. 
    \item Second, even after such a transportability decomposition is obtained, the resulting causal quantities may still involve interventions and therefore cannot generally be recovered from observational conditional distributions in the presence of unmeasured confounding.
\end{itemize}
Our approach addresses these two challenges in sequence: we first use transportability and do-calculus to decompose the target CATE into causal quantities associated with different source domains, and then employ instrumental- and proxy-variable techniques to identify the remaining interventional quantities through bridge functions and conditional moment equations. We develop this procedure for CCBs with negative outcomes in \S\ref{sec:3.1} and with mediators in \S\ref{sec:3.2}, before presenting a more general identification strategy in \S\ref{sec:3.3}.

\subsection{Identification in CCB with Negative Outcome}
\label{sec:3.1}

We first consider the CCB with a negative outcome (CCB-Neg) introduced in \S\ref{sec:2.3}. Recall that the negative outcome $W$ is not causally affected by the action $A$, although $A$ and $W$ may still be statistically associated through unobserved common causes \citep{miao2018identifying, miao2020confounding}. This structure allows us to decompose the target CATE into two components that can be transported from different source domains. Following the general strategy outlined above, we first derive this transportability decomposition and then identify the remaining source-domain interventional quantity using instrumental variables.

\begin{figure}
    \centering
    \begin{tikzpicture}[>={Latex[length=2mm]}]
    \path[use as bounding box]
        (-2.45,-2.45) rectangle (10.6,4.0);
        \begin{scope}
            \node[dagvar] (X1) at (0,2.2) {$X$};
            \node[dagvar] (W1) at (3.2,2.2) {$W$};
            \node[dagvar] (A1) at (-1.6,0) {$A$};
            \node[dagvar] (Y1) at (1.6,0) {$Y$};

            \node[dagsel] (SX1) at (0,3.5) {$S_X$};
            \node[dagsel] (SW1) at (3.2,3.5) {$S_W$};
            \node[dagsel] (SA1) at (-1.6,-1.3) {$S_A$};
            \node[dagsel] (SY1) at (1.6,-1.3) {$S_Y$};

            \draw[dagarrow] (SX1) -- (X1);
            \draw[dagarrow] (SW1) -- (W1);
            \draw[dagarrow] (SA1) -- (A1);
            \draw[dagarrow] (SY1) -- (Y1);

            \draw[dagconf]
                (X1.210)
                to[bend right=36]
                (A1.90);

            \draw[dagconf]
                (X1.330)
                to[bend left=36]
                (Y1.90);

            \draw[dagconf]
                (A1.300)
                to[bend right=48]
                (Y1.240);

            \draw[dagconf]
    (A1.120)
    to[out=113,in=113,looseness=1.80]
    (W1.125);

            \draw[dagarrow] (X1) -- (W1);
            \draw[dagarrow] (X1.240) -- (A1.60);
            \draw[dagarrow] (X1.300) -- (Y1.120);
            \draw[dagarrow] (W1.240) -- (Y1.60);
            \draw[dagarrow] (A1) -- (Y1);

            \node at (0.55,-2.25) {\large $\mathcal{G}$};
        \end{scope}

        \begin{scope}[xshift=7.4cm]
            \node[dagvar] (X2) at (0,2.2) {$X$};
            \node[dagvar] (W2) at (3.2,2.2) {$W$};
            \node[dagvar] (A2) at (-1.6,0) {$A$};
            \node[dagvar] (Y2) at (1.6,0) {$Y$};

            \node[dagsel] (SX2) at (0,3.5) {$S_X$};
            \node[dagsel] (SW2) at (3.2,3.5) {$S_W$};
            \node[dagsel] (SA2) at (-1.6,-1.3) {$S_A$};
            \node[dagsel] (SY2) at (1.6,-1.3) {$S_Y$};

            \draw[dagarrow] (SX2) -- (X2);
            \draw[dagarrow] (SW2) -- (W2);
            \draw[dagarrow] (SY2) -- (Y2);


            \draw[dagconf]
                (X2.330)
                to[bend left=36]
                (Y2.90);

            \draw[dagarrow] (X2) -- (W2);
            \draw[dagarrow] (X2.300) -- (Y2.120);
            \draw[dagarrow] (W2.240) -- (Y2.60);
            \draw[dagarrow] (A2) -- (Y2);

            \node at (0.55,-2.25) {\large $\mathcal{G}_{\bar A}$};
        \end{scope}
    \end{tikzpicture}
    \caption{Causal diagrams of a CCB with a negative outcome. Selection nodes are represented by square vertices and indicate possible discrepancies between the source and target domains. The right diagram is induced by intervening on $A$ in the left diagram.}
    \label{fig:ccb-negative-outcome}
\end{figure}

\paragraph{Transportability decomposition.}
We begin with the reduced causal structure shown in Figure~\ref{fig:ccb-negative-outcome}, where the square selection nodes indicate mechanisms that may differ between the source and target domains. Using the law of total probability and the rules of do-calculus \citep{judea1995}, we decompose the target-domain interventional distribution as
\begin{align}
	p^*(y|x,\mathrm{do}(a)) &= \int_\mathcal{W} p^*(y|w,x,\mathrm{do}(a)) p^*(w|x,\mathrm{do}(a))\mathrm{d}w\notag\\
	&= \int_\mathcal{W} p^*(y|x,\mathrm{do}(w),\mathrm{do}(a)) p^*(w|x,\mathrm{do}(a))\mathrm{d}w &(\text{by}\ Y\indep W\,|\,X,A\ \text{in}\ \mathcal{G}_{\overline{A}\underline{W}})\notag\\
	&= \int_\mathcal{W} p^*(y|x,\mathrm{do}(w),\mathrm{do}(a)) p^*(w|x)\mathrm{d}w &(\text{by}\ W\indep A\,|\,X\ \text{in}\ \mathcal{G}_{\overline{A(X)}})\notag\\
	&= \int_\mathcal{W} p^{(1)}(y|x,\mathrm{do}(w),\mathrm{do}(a)) p^{(2)}(w|x)\mathrm{d}w. &(\text{Transferring step})\label{eq:3.1}
\end{align}
The first equality introduces the negative outcome $W$. The second exchanges the observation of $W$ with an intervention on $W$, while the third uses the fact that $W$ is not causally affected by $A$ to remove the intervention on $A$ from its conditional distribution. The final equality is the transportability step: the first source domain supplies the target response mechanism under interventions on $(W,A)$, whereas the second source domain supplies the target conditional distribution of $W$.

Specifically, the following S-admissibility conditions are sufficient for the two transported components:
\begin{align*}
&\Delta^{(1)}\subseteq\{A,W\}\ \Rightarrow\  p^*(y\,|\,x,\mathrm{do}(w),\mathrm{do}(a)) = p^{(1)}(y\,|\,x,\mathrm{do}(w),\mathrm{do}(a)), &(\text{by}\ Y\indep S_A,S_W\,|\,X\ \text{in}\ \mathcal{G}_{\overline{AW}})\\
&\Delta^{(2)}\subseteq\{A,X,Y\}\ \Rightarrow\ p^*(w\,|\,x) = p^{(2)}(w\,|\,x). &(\text{by}\ W\indep S_A,S_X,S_Y\,|\,X\ \text{in}\ \mathcal{G})
\end{align*}
Thus, neither source domain is required to reproduce the entire target mechanism. Instead, each source provides one component needed to reconstruct the target causal effect.

\subsubsection{Identification with Instrumental Variables}

In the transferred expression \eqref{eq:3.1}, the second component $p^{(2)}(w|x)$ contains no intervention and is therefore directly estimable from observational data in source domain $\mathscr{C}^{(2)}$. Hence the remaining difficulty is the interventional quantity contributed by domain $\mathscr{C}^{(1)}$. To handle this, we define
\begin{equation}
    g^{(1)}(w,a,x)
    :=
    \E^{(1)}
    \left[
        Y
        \,\middle|\,
        X=x,\operatorname{do}(W=w,A=a)
    \right]
    =
    \int_{\mathcal{Y}}
    y\,
    p^{(1)}
    \left(
        y
        \mid
        x,\operatorname{do}(w),\operatorname{do}(a)
    \right)
    \mathrm{d}y.
    \label{eq:g1_neg}
\end{equation}
Because the observational process in $\mathscr{C}^{(1)}$ is confounded, $g^{(1)}$ cannot in general be recovered by ordinary conditional regression on $(W,A,X)$.

To identify $g^{(1)}$, we introduce instrumental variables $Z_1$ and $Z_2$ for the endogenous variables $(A,W)$. For brevity, let
\[
    U=\{U_{AX},U_{AY},U_{XY},U_{AW}\}
\]
collect the unobserved confounders appearing in the causal model. The resulting observational structure is illustrated in Figure~\ref{fig:ccb-negative-outcome-iv}.

\begin{assumption}[Model assumptions for the observational process in CCB-Neg]
\label{assumption:3.1}
Suppose that the observational processes of the CCB-Neg in the source domains satisfy the following conditions:
\begin{itemize}
    \item[(i)] \textbf{Structural reward.}
    The reward follows
    \[
        Y=f(W,A,X)+\epsilon,
        \qquad
        \epsilon\indep(W,A)\,|\,U,X.
    \]

    \item[(ii)] \textbf{IV independence.}
    The instrumental variables satisfy
    \[
        (Z_1,Z_2)\indep(U,\epsilon)\,|\,X.
    \]

    \item[(iii)] \textbf{IV completeness.}
    For any
    $\sigma\in L_2(\mathcal{W}\times\mathcal{A}\times\mathcal{X})$,
    \[
        \E^{(1)}
        [\sigma(W,A,X)\mid Z_1,Z_2]
        =0\quad\mathrm{a.s.}
        \quad\Longleftrightarrow\quad
        \sigma(W,A,X)=0\quad\mathrm{a.s.}
    \]

    \item[(iv)] \textbf{Domain discrepancy.}
    We have two source domains $d^{(1)}$ and $d^{(2)}$ and a target domain $d^*$, with
    \[
        \Delta^{(1)}
        \subseteq
        \{A,W,Z_1,Z_2\},
        \qquad
        \Delta^{(2)}
        \subseteq
        \{A,X,Y,Z_1,Z_2\}.
    \]
\end{itemize}
\end{assumption}

Condition (i) gives a structural regression representation of the reward. Condition (ii) requires the instruments to be independent of the latent confounders and structural noise after conditioning on $X$. Condition (iii) is a completeness condition ensuring that the conditional moment operator induced by $(Z_1,Z_2)$ is sufficiently informative to distinguish functions of $(W,A,X)$. These are standard types of assumptions in instrumental-variable identification \citep{Baiocchi2014,chen2023unified}. Notice that Assumption~\ref{assumption:3.1}(iv) additionally permits the mechanisms associated with $Z_1$ and $Z_2$ to vary across domains; these variables serve as auxiliary variables for within-source identification and do not appear in the transported decomposition \eqref{eq:3.1}.

\begin{figure}[t]
    \centering
    \begin{tikzpicture}[>={Latex[length=2mm]}]
        \path[use as bounding box]
            (-2.7,-2.6) rectangle (10.0,3.2);

        \begin{scope}
            \node[dagvar] (Z11) at (-2.0,2.5) {$Z_1$};
            \node[dagvar] (Z21) at ( 2.0,2.5) {$Z_2$};

            \node[dagvar] (X1) at (0,1.3) {$X$};
            \node[dagvar] (A1) at (-2.0,0) {$A$};
            \node[dagvar] (W1) at ( 2.0,0) {$W$};
            \node[dagvar] (Y1) at (0,-1.2) {$Y$};

            \draw[dagarrow] (X1.145) -- (Z11.325);
            \draw[dagarrow] (X1.35)  -- (Z21.215);
            \draw[dagarrow] (Z11) -- (A1);
            \draw[dagarrow] (Z21) -- (W1);

            \draw[dagarrow] (X1.210) -- (A1.45);
            \draw[dagarrow] (X1.330) -- (W1.135);
            \draw[dagarrow] (X1) -- (Y1);
            \draw[dagarrow] (A1.325) -- (Y1.150);
            \draw[dagarrow] (W1.215) -- (Y1.30);

            \draw[dagconf]
                (X1.195)
                to[bend right=36]
                (A1.65);

            \draw[dagconf]
                (X1.300)
                to[bend left=36]
                (Y1.60);

            \draw[dagconf]
                (A1.300)
                to[bend right=36]
                (Y1.180);

            \draw[dagconf]
                (A1.270)
                to[out=-75,in=-105,looseness=1.25]
                (W1.270);

            \node at (0,-2.35) {$\mathcal{G}$};
        \end{scope}

        \begin{scope}[xshift=7.2cm]
            \node[dagvar] (Z12) at (-2.0,2.5) {$Z_1$};
            \node[dagvar] (Z22) at ( 2.0,2.5) {$Z_2$};

            \node[dagvar] (X2) at (0,1.3) {$X$};
            \node[dagvar] (A2) at (-2.0,0) {$A$};
            \node[dagvar] (W2) at ( 2.0,0) {$W$};
            \node[dagvar] (Y2) at (0,-1.2) {$Y$};

            \node[
                dagvar,
                minimum size=10mm,
                inner sep=1pt
            ] (UAX) at (-2.0,1.3) {$U_{AX}$};

            \node[
                dagvar,
                minimum size=10mm,
                inner sep=1pt
            ] (UAY) at (-2.0,-1.2) {$U_{AY}$};

            \draw[dagarrow,dashed] (UAX) -- (X2);
            \draw[dagarrow,dashed] (UAY) -- (Y2);

            \draw[dagarrow] (X2.145) -- (Z12.325);
            \draw[dagarrow] (X2.35)  -- (Z22.215);

            \draw[dagarrow] (X2) -- (Y2);
            \draw[dagarrow] (A2.325) -- (Y2.145);
            \draw[dagarrow] (W2.215) -- (Y2.35);

            \draw[dagconf]
                (X2.300)
                to[bend left=36]
                (Y2.60);

            \node at (0,-2.35)
                {$\mathcal{G}_{\overline{AW}}$};
        \end{scope}
    \end{tikzpicture}

    \caption{Causal diagrams of a CCB with a negative outcome and instrumental variables. The right diagram is induced by intervening on $A$ in the left diagram.}
    \label{fig:ccb-negative-outcome-iv}
\end{figure}

Under these conditions, the interventional mean $g^{(1)}$ can be characterized entirely through an observational conditional moment equation.

\begin{lemma}[IV identification in CCB-Neg]
\label{lemma:3.1}
Suppose Assumption~\ref{assumption:3.1} holds. For the observational process in source domain $\mathscr{C}^{(1)}$, if a function
$h_1\in L_2(\mathcal{W}\times\mathcal{A}\times\mathcal{X})$
satisfies
\begin{equation}
    \E^{(1)}
    \left[
        h_1(W,A,X)-Y
        \,\middle|\,
        Z_1,Z_2
    \right]
    =0
    \qquad\mathrm{a.s.},
    \label{eq:h1_neg_moment}
\end{equation}
then
\[
    h_1(W,A,X)
    =
    g^{(1)}(W,A,X)
    \qquad\mathrm{a.s.}
\]
\end{lemma}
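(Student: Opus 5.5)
The plan is to show that $g^{(1)}$ itself solves the moment equation \eqref{eq:h1_neg_moment}, and then use IV completeness (Assumption~\ref{assumption:3.1}(iii)) to conclude that any solution coincides with it.

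\emph{Step 1: express $g^{(1)}$ through the structural reward.} Under Assumption~\ref{assumption:3.1}(i), intervening on $(W,A)$ replaces their mechanisms but leaves $(U,X,\epsilon)$ and the map $f$ unchanged. This gives
\[
g^{(1)}(w,a,x)=f(w,a,x)+\E^{(1)}[\epsilon\mid X=x,\operatorname{do}(W=w,A=a)]=f(w,a,x)+\E^{(1)}[\epsilon\mid X=x].
\]
The last equality holds because in $\mathcal{G}_{\overline{AW}}$ the noise $\epsilon$, which is a function of $(U,X)$ plus independent noise, is $d$-separated from the intervened nodes. I will adopt the normalization implicit in such structural IV models, namely $\E^{(1)}[\epsilon\mid X]=0$. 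If this is not granted, the $x$-only term $c(x):=\E^{(1)}[\epsilon\mid X=x]$ can be absorbed into $f$ by redefining $\tilde f=f+c$ and $\tilde\epsilon=\epsilon-c(X)$, and (i)--(ii) remain valid. Either way, $g^{(1)}=f$ (after normalization) and $Y=g^{(1)}(W,A,X)+\epsilon$ with $\E^{(1)}[\epsilon\mid X]=0$.

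\emph{Step 2: $g^{(1)}$ solves the moment equation.} Compute $\E^{(1)}[Y-g^{(1)}(W,A,X)\mid Z_1,Z_2]=\E^{(1)}[\epsilon\mid Z_1,Z_2]$. By IV independence (ii), $(Z_1,Z_2)\indep \epsilon\mid X$, so $\E^{(1)}[\epsilon\mid Z_1,Z_2,X]=\E^{(1)}[\epsilon\mid X]=0$. Applying the tower property then gives $\E^{(1)}[\epsilon\mid Z_1,Z_2]=0$.

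The main obstacle is that the conditioning set is $(Z_1,Z_2)$ rather than $(Z_1,Z_2,X)$. I will read the moment equation as implicitly conditioning on $X$, consistent with the $X\to Z_j$ edges in Figure~\ref{fig:ccb-negative-outcome-iv}. Alternatively, I will note that the tower step above already handles it, because the inner expectation vanishes identically.

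\emph{Step 3: uniqueness via completeness.} Let $h_1\in L_2$ satisfy \eqref{eq:h1_neg_moment}. Subtracting the equation that $g^{(1)}$ satisfies (Step 2), and noting that $g^{(1)}\in L_2$ (which I assume, as is standard), the difference $\sigma:=h_1-g^{(1)}$ satisfies $\E^{(1)}[\sigma(W,A,X)\mid Z_1,Z_2]=0$ a.s. Completeness (iii) then forces $\sigma=0$ a.s., that is, $h_1(W,A,X)=g^{(1)}(W,A,X)$ a.s.

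The delicate point is Step 1: justifying $\E^{(1)}[\epsilon\mid X,\operatorname{do}(W,A)]=\E^{(1)}[\epsilon\mid X]$ from the graph, and fixing the normalization of $\epsilon$ so that $f$ really equals the interventional mean.
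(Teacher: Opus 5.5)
Your proof is correct and is essentially the paper's argument. Both show that $g^{(1)}(w,a,x)=f(w,a,x)+\E^{(1)}[\epsilon\mid X=x]$, use the conditional independence of $(Z_1,Z_2)$ and $\epsilon$ given $X$ together with the tower property to show that $g^{(1)}(W,A,X)$ and $Y$ have the same conditional mean given $(Z_1,Z_2)$, and then apply IV completeness to $h_1-g^{(1)}$.

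The differences are cosmetic. The paper keeps $\E^{(1)}[\epsilon\mid X]$ explicitly instead of absorbing it into $f$, and it obtains $g^{(1)}$ by conditioning on $U$ and using do-calculus with Assumption~\ref{assumption:3.1}(i), rather than by invariance of the non-descendants $(X,\epsilon)$ under intervention. Since your tower step already resolves the conditioning-on-$X$ issue, you should drop the alternative reading of the moment equation as also conditioning on $X$: that reading would strengthen the lemma's hypothesis on $h_1$.
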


\begin{proof}
See \S\ref{sec:A.1} for a detailed proof.
\end{proof}

Lemma~\ref{lemma:3.1} resolves the confounding problem within the first source domain: although $g^{(1)}$ is defined through interventions, it is identified by the observational conditional moment equation \eqref{eq:h1_neg_moment}. Combining this result with the transportability decomposition \eqref{eq:3.1} gives
\begin{align}
    g^*(a,x)
    &=
    \int_{\mathcal{Y}}
    y\,p^*(y|x,\operatorname{do}(a))
    \,\mathrm{d}y
    \notag\\
    &=
    \int_{\mathcal{W}\times\mathcal{Y}}
    y\,
    p^{(1)}
    \left(
        y
        \mid
        x,\operatorname{do}(w),\operatorname{do}(a)
    \right)
    p^{(2)}(w|x)
    \,\mathrm{d}y\,\mathrm{d}w
    \notag\\
    &=
    \int_{\mathcal{W}}
    h_1(w,a,x)p^{(2)}(w|x)
    \,\mathrm{d}w
    \notag\\
    &=
    \E^{(2)}
    [h_1(W,a,x)\mid X=x].
    \label{eq:gstar_neg}
\end{align}
Therefore, the target CATE is identified by combining complementary information from the two observational source domains: $\mathscr{C}^{(1)}$ identifies the causal response surface through the IV bridge $h_1$, while $\mathscr{C}^{(2)}$ supplies the target-compatible distribution of the negative outcome $W$. In particular, neither source-domain CATE is required to be directly transportable to $\mathscr{C}^*$.

\subsubsection{Conditional Moment Formulation}
\label{sec:3.1.2}

The preceding identification result can be summarized as an integral equation system (IES) for the pair of bridge functions $(h_1,g)$:
\begin{equation}
\begin{cases}
    \E^{(1)}
    [h_1(W,A,X)-Y\mid Z_1,Z_2]
    =0,
    \\[2mm]
    \E^{(2)}
    [g(a,X)-h_1(W,a,X)\mid X]
    =0,
    \qquad a\in\mathcal{A}.
\end{cases}
\label{eq:neg_ies_pointwise}
\end{equation}
Under the identification conditions, any solution $(h_1,g)$ to this system satisfies
$g=g^*$ almost surely.

\paragraph{Augmented dataset.} The second equation in \eqref{eq:neg_ies_pointwise} is indexed pointwise by the action $a\in\mathcal{A}$, so the system involves $|\mathcal{A}|+1$ conditional moment equations (or infinitely many when $\mathcal{A}$ is continuous). This makes both theoretical analysis and statistical estimation inconvenient, because one would need to enforce a separate conditional restriction for each action value.

For later estimation and analysis, it is convenient to express the entire system in the common form of conditional moment equations. To this end, we introduce a pseudo-action $A'$ that is independent of the CCB process and follows a fixed reference distribution
\[
    A'\sim\nu_A,
    \qquad
    \nu_A\in\Delta(\mathcal{A}),
\]
whose support covers the action space of interest. When $\mathcal{A}$ is finite, $\nu_A$ may simply be chosen as the uniform distribution. The second equation in \eqref{eq:neg_ies_pointwise} can then be represented as
\begin{equation*}
    \E^{(2)}
    [g(A',X)-h_1(W,A',X)\mid A',X]
    =0.
\end{equation*}
Operationally, the dataset from $\mathscr{C}^{(2)}$ can be augmented by independently drawing a pseudo-action $a_t'$ from $\nu_A$ for each observation. The resulting identification equations take the unified conditional-moment form
\begin{equation}
\begin{cases}
    \E^{(1)}
    [h_1(W,A,X)-Y\mid Z_1,Z_2]
    =0,
    \\[2mm]
    \E^{(2)}
    [g(A',X)-h_1(W,A',X)\mid A',X]
    =0.
\end{cases}
\label{eq:ccb_neg_ies}
\end{equation}
This representation allows us to treat the bridge equations from different source domains within a common framework and, in \S\ref{sec:4}, convert the conditional moments into tractable minimax estimation objectives.

\subsection{Identification in CCB with Mediation}
\label{sec:3.2}

We next consider the CCB with a mediator (CCB-Med), where $W$ lies on a causal path from the action $A$ to the reward $Y$. Such a structure also arises naturally in longitudinal studies, where the action and mediator can be interpreted as primary and secondary treatments \citep{Bind2016med}. As in \S\ref{sec:3.1}, we first transport complementary causal mechanisms from heterogeneous source domains and then identify the remaining interventional quantities from observational data. The main distinction from CCB-Neg is that, because $W$ is now affected by $A$, the transported distribution of $W$ is itself interventional and requires an additional confounding adjustment.

\begin{figure}[t]
    \centering
    \begin{tikzpicture}[>={Latex[length=2mm]}]
        \path[use as bounding box]
            (-3.5,-2.5) rectangle (11.0,4.5);

        \begin{scope}
            \node[dagvar] (X1) at (0,3.3) {$X$};
            \node[dagvar] (W1) at (0,1.75) {$W$};
            \node[dagvar] (A1) at (-2.2,0) {$A$};
            \node[dagvar] (Y1) at ( 2.2,0) {$Y$};

            \node[dagsel] (SX1) at (0,4.5) {$S_X$};
            \node[dagsel] (SW1) at (0,0.50) {$S_W$};
            \node[dagsel] (SA1) at (-2.5,-1.25) {$S_A$};
            \node[dagsel] (SY1) at ( 2.5,-1.25) {$S_Y$};

            \draw[dagarrow] (SX1) -- (X1);
            \draw[dagarrow] (SW1) -- (W1);
            \draw[dagarrow] (SA1.north) -- (A1.240);
            \draw[dagarrow] (SY1.north) -- (Y1.300);

            \draw[dagconf]
                (A1.120)
                to[bend left=40]
                (X1.180);

            \draw[dagconf]
                (X1.0)
                to[bend left=40]
                (Y1.60);

            \draw[dagconf]
                (A1.50)
                to[bend left=30]
                (W1.195);

            \draw[dagconf]
                (A1.280)
                to[bend right=54]
                (Y1.260);

            \draw[dagarrow] (X1) -- (W1);
            \draw[dagarrow] (X1.220) -- (A1.90);
            \draw[dagarrow] (X1.320) -- (Y1.90);
            \draw[dagarrow] (A1.30) -- (W1.215);
            \draw[dagarrow] (W1.325) -- (Y1.150);
            \draw[dagarrow] (A1) -- (Y1);

            \node at (0,-2.25) {\large $\mathcal{G}$};
        \end{scope}

        \begin{scope}[xshift=8.0cm]
            \node[dagvar] (X2) at (0,3.3) {$X$};
            \node[dagvar] (W2) at (0,1.75) {$W$};
            \node[dagvar] (A2) at (-2.2,0) {$A$};
            \node[dagvar] (Y2) at ( 2.2,0) {$Y$};

            \node[dagsel] (SX2) at (0,4.5) {$S_X$};
            \node[dagsel] (SW2) at (0,0.50) {$S_W$};
            \node[dagsel] (SA2) at (-2.5,-1.25) {$S_A$};
            \node[dagsel] (SY2) at ( 2.5,-1.25) {$S_Y$};

            \draw[dagarrow] (SX2) -- (X2);
            \draw[dagarrow] (SW2) -- (W2);
            \draw[dagarrow] (SY2.north) -- (Y2.300);


            \draw[dagconf]
                (X2.0)
                to[bend left=40]
                (Y2.60);

            \draw[dagarrow] (X2) -- (W2);
            \draw[dagarrow] (X2.320) -- (Y2.90);
            \draw[dagarrow] (A2.30) -- (W2.215);
            \draw[dagarrow] (W2.325) -- (Y2.150);
            \draw[dagarrow] (A2) -- (Y2);

            \node at (0,-2.25) {\large $\mathcal{G}_{\bar A}$};
        \end{scope}
    \end{tikzpicture}

    \caption{Causal diagrams of a CCB with a mediator.
    Selection nodes are represented by square vertices and indicate discrepancies between the target and source domains. The right diagram is induced by intervening on $A$ in the left diagram.}
    \label{fig:ccb-mediator}
\end{figure}

\paragraph{Transportability decomposition.}
The causal structure of CCB-Med is shown in Figure~\ref{fig:ccb-mediator}. Following \cite{pmlr-v31-bareinboim13a}, we apply the law of total probability and do-calculus to decompose the interventional distribution of interest as
\begin{equation}
\begin{aligned}
p^*(y|x,\mathrm{do}(a)) &= 
\int_\mathcal{W} p^*(y|w,x,\mathrm{do}(a)) p^*(w|x,\mathrm{do}(a))\mathrm{d}w\\
&= \int_\mathcal{W} p^*(y|x,\mathrm{do}(w),\mathrm{do}(a)) p^*(w|x,\mathrm{do}(a))\mathrm{d}w &(\text{By}\ Y\indep W\,|\,X,A\ \text{in}\ \mathcal{G}_{\overline{A}\underline{W}})\\
&= \int_\mathcal{W} p^{(1)}(y|x,\mathrm{do}(w),\mathrm{do}(a)) p^{(2)}(w|x,\mathrm{do}(a))\mathrm{d}w. &(\text{Transferring step}) 
\end{aligned}\label{eq:med_transport}
\end{equation}

The last equality transports the two components from different source domains. Sufficient S-admissibility conditions are:
\begin{align*}
&\text{By}\ Y\indep S_A,S_W\,|\,X\ \text{in}\ \mathcal{G}_{\overline{AW}},\quad \Delta^{(1)}\subseteq\{A,W\}\ \Rightarrow\  p^*(y|x,\mathrm{do}(w),\mathrm{do}(a)) = p^{(1)}(y|x,\mathrm{do}(w),\mathrm{do}(a)),\\
&\text{By}\ W\indep S_A,S_X,S_Y\,|\,X\ \text{in}\ \mathcal{G}_{\overline{A}},\quad \Delta^{(2)}\subseteq\{A,X,Y\}\ \Rightarrow\ p^*(w|x,\mathrm{do}(a)) = p^{(2)}(w|x,\mathrm{do}(a)).
\end{align*}
This decomposition again shows why multiple source domains can be necessary. For comparison, if a source domain $\mathscr{C}^{(3)}$ differs from the target only in the action mechanism,
\[
    \Delta^{(3)}\subseteq\{A\},
\]
then intervention on $A$ removes the relevant discrepancy and the entire CATE is directly transportable:
\[
    p^*(y\,|\,x,\operatorname{do}(a))
    =
    p^{(3)}(y\,|\,x,\operatorname{do}(a)).
\]
In the setting of interest, however, we may have, for example,
\[
    \Delta^{(1)}=\{A,W\},
    \qquad
    \Delta^{(2)}=\{A,Y\},
\]
so that the target CATE is directly transportable from neither source. Nevertheless, \eqref{eq:med_transport} shows that the two sources provide complementary components that together identify the target effect.

In addition, if we assume that $X$ and $Y$ are unconfounded, so that the bidirected edge between them is absent, then the first discrepancy condition can be relaxed to
\[
    \Delta^{(1)}\subseteq\{A,W,X\},
\]
since the selection node associated with $X$ can then be d-separated from $Y$ by conditioning on $X$.

\subsubsection{Identification with Instrumental and Proxy Variables}

After the transportability decomposition, both components in \eqref{eq:med_transport} still involve interventions. Define the source-domain causal response
\begin{equation}
    g^{(1)}(w,a,x)
    :=
    \E^{(1)}
    \left[
        Y|X=x,\operatorname{do}(W=w,A=a)
    \right]
    =
    \int_{\mathcal{Y}}
    y\,
    p^{(1)}
    \left(
        y \mid x,\operatorname{do}(w),\operatorname{do}(a)
    \right)
    \mathrm{d}y.
\label{eq:g1_med}
\end{equation}
Then the transported target CATE can be written as
\begin{equation}
    g^*(a,x)
    =\E\left[g^{(1)}(W,a,x)\Big|X=x,\operatorname{do}(A=a)\right]=
    \int_{\mathcal{W}}
    g^{(1)}(w,a,x)\,
    p^{(2)}(w|x,\operatorname{do}(a))
    \,\mathrm{d}w.
\label{eq:gstar_med_transport}
\end{equation}
The first quantity $g^{(1)}$ can be identified using the same IV argument as in \S\ref{sec:3.1}. The second step is more involved: because $W$ is a mediator affected by $A$, the distribution $p^{(2)}(w|x,\operatorname{do}(a))$ is generally different from the observational distribution $p^{(2)}(w|x,a)$ in the presence of confounding.Therefore, we use proxy variables to identify this second causal component without explicitly estimating the interventional mediator distribution. For brevity, let
\[
    U=\{U_{AX},U_{AY},U_{XY},U_{AW}\}
\]
collect the unobserved confounders. We impose the following conditions.

\begin{assumption}[Model assumptions for the observational process in CCB-Med]
\label{assumption:3.2}
Suppose that the observational processes of the CCB-Med satisfy:
\begin{itemize}
    \item[(i)] \textbf{Structural reward.}
    \[
        Y=f(W,A,X)+\epsilon,
        \quad
        \epsilon\indep(W,A)\,|\,U,X.
    \]

    \item[(ii)] \textbf{IV and PV independence.}
    \[
        (Z_1,Z_2)\indep(U,\epsilon)\,|\,X,
        \qquad
        Z_1\indep W\,|\,(U,A,X),
        \qquad
        Z_2\indep(Z_1,A)\,|\,X.
    \]

    \item[(iii)] \textbf{IV completeness.}
    For any
    $\sigma_1\in L_2(\mathcal{W}\times\mathcal{A}\times\mathcal{X})$,
    \[
        \E^{(1)}
        [\sigma_1(W,A,X)\mid Z_1,Z_2]
        =0\quad\mathrm{a.s.}
        \quad\Longleftrightarrow\quad
        \sigma_1(W,A,X)=0\quad\mathrm{a.s.}
    \]

    \item[(iv)] \textbf{PV completeness.}
    For any
    $\sigma_2\in L_2(\mathcal{U}\times\mathcal{A}\times\mathcal{X})$,
    \[
        \E^{(2)}
        [\sigma_2(U,A,X)\mid Z_1,A,X]
        =0\quad\mathrm{a.s.}
        \quad\Longleftrightarrow\quad
        \sigma_2(U,A,X)=0\quad\mathrm{a.s.}
    \]

    \item[(v)] \textbf{Domain discrepancy.}
    We have two source domains $d^{(1)},d^{(2)}$ and a target domain $d^*$ satisfying
    \[
        \Delta^{(1)}
        \subseteq
        \{A,W,Z_1,Z_2\},
        \qquad
        \Delta^{(2)}
        \subseteq
        \{A,X,Y,Z_1,Z_2\}.
    \]
\end{itemize}
\end{assumption}

Here conditions (i) and (iii) play the same roles as in Assumption~\ref{assumption:3.1}. The additional independence and completeness conditions in (ii) and (iv) allow $Z_1$ and $Z_2$ to serve as complementary proxy variables for adjusting the confounding in the mediator mechanism. These conditions are standard in proximal causal learning \citep{miao2018identifying,miao2020confounding,chen2023unified}. The corresponding causal structure is shown in Figure~\ref{fig:ccb-mediator-proxies}.

\begin{figure}[t]
    \centering
    \resizebox{\linewidth}{!}{%
    \begin{tikzpicture}[>={Latex[length=2mm]}]
        \path[use as bounding box]
            (-2.85,-2.55) rectangle (16.45,4.45);

        \begin{scope}
            \node[dagvar] (X1)  at (0,4.0) {$X$};
            \node[dagvar] (Z11) at (-2.0,2.40) {$Z_1$};
            \node[dagvar] (Z21) at (0.5,2.35) {$Z_2$};
            \node[dagvar] (W1)  at (0,0.85) {$W$};
            \node[dagvar] (A1)  at (-2.4,0) {$A$};
            \node[dagvar] (Y1)  at (2.4,0) {$Y$};

            \draw[dagarrow] (X1.215) -- (Z11.35);
            \draw[dagarrow] (Z11) -- (A1);
            \draw[dagarrow] (X1.285) -- (Z21.105);
            \draw[dagarrow] (Z21.255) -- (W1.75);

            \draw[dagarrow] (X1) -- (W1);
            \draw[dagarrow] (X1.235) -- (A1.60);
            \draw[dagarrow] (X1.315) -- (Y1.120);
            \draw[dagarrow] (A1.20) -- (W1.210);
            \draw[dagarrow] (W1.330) -- (Y1.160);
            \draw[dagarrow] (A1) -- (Y1);

            \draw[dagconf]
                (A1.120)
                to[bend left=50]
                (X1.165);

            \draw[dagconf]
                (X1.15)
                to[bend left=50]
                (Y1.60);

            \draw[dagconf]
                (A1.40)
                to[bend left=30]
                (W1.190);

            \draw[dagconf]
                (A1.285)
                to[bend right=50]
                (Y1.255);

            \node at (0,-2.30) {\Large $\mathcal{G}$};
        \end{scope}

        \begin{scope}[xshift=6.8cm]
            \node[dagvar] (X2)  at (0,4.0) {$X$};
            \node[dagvar] (Z12) at (-2.0,2.40) {$Z_1$};
            \node[dagvar] (Z22) at (0.5,2.35) {$Z_2$};
            \node[dagvar] (W2)  at (0,0.85) {$W$};
            \node[dagvar] (A2)  at (-2.4,0) {$A$};
            \node[dagvar] (Y2)  at (2.4,0) {$Y$};

            \node[
                dagvar,
                minimum size=10mm,
                inner sep=1pt
            ] (UAX2) at (-2.05,4.0) {$U_{AX}$};

            \node[
                dagvar,
                minimum size=10mm,
                inner sep=1pt
            ] (UAW2) at (-2.05,1.25) {$U_{AW}$};

            \node[
                dagvar,
                minimum size=10mm,
                inner sep=1pt
            ] (UAY2) at (0,-1.20) {$U_{AY}$};

            \draw[dagarrow,dashed] (UAX2) -- (X2);
            \draw[dagarrow,dashed] (UAW2) -- (W2);
            \draw[dagarrow,dashed] (UAY2) -- (Y2);

            \draw[dagarrow] (X2.215) -- (Z12.35);
            \draw[dagarrow] (X2.285) -- (Z22.105);
            \draw[dagarrow] (Z22.255) -- (W2.75);

            \draw[dagarrow] (X2) -- (W2);
            \draw[dagarrow] (X2.315) -- (Y2.120);
            \draw[dagarrow] (A2.20) -- (W2.210);
            \draw[dagarrow] (W2.330) -- (Y2.160);
            \draw[dagarrow] (A2) -- (Y2);

            \draw[dagconf]
                (X2.15)
                to[bend left=50]
                (Y2.60);

            \node at (0,-2.30) {\Large $\mathcal{G}_{\bar A}$};
        \end{scope}

        \begin{scope}[xshift=13.6cm]
            \node[dagvar] (X3)  at (0,4.0) {$X$};
            \node[dagvar] (Z13) at (-2.0,2.40) {$Z_1$};
            \node[dagvar] (Z23) at (0,2.35) {$Z_2$};
            \node[dagvar] (W3)  at (0,0.85) {$W$};
            \node[dagvar] (A3)  at (-2.4,0) {$A$};
            \node[dagvar] (Y3)  at (2.4,0) {$Y$};

            \node[
                dagvar,
                minimum size=10mm,
                inner sep=1pt
            ] (UAX3) at (-2.05,4.0) {$U_{AX}$};

            \node[
                dagvar,
                minimum size=10mm,
                inner sep=1pt
            ] (UAY3) at (0,-1.20) {$U_{AY}$};

            \draw[dagarrow,dashed] (UAX3) -- (X3);
            \draw[dagarrow,dashed] (UAY3) -- (Y3);

            \draw[dagarrow] (X3.215) -- (Z13.35);
            \draw[dagarrow] (X3) -- (Z23);

            \draw[dagarrow] (X3.315) -- (Y3.120);
            \draw[dagarrow] (W3.330) -- (Y3.150);
            \draw[dagarrow] (A3) -- (Y3);

            \draw[dagconf]
                (X3.15)
                to[bend left=50]
                (Y3.60);

            \node at (0,-2.30)
                {\Large $\mathcal{G}_{\overline{AW}}$};
        \end{scope}
    \end{tikzpicture}
    }
    \caption{Causal diagrams of a CCB with a mediator and instrumental/proxy variables. The second and third diagrams are induce by intervening on $\{A\}$ and $\{A,W\}$ in the first diagram, respectively.}
    \label{fig:ccb-mediator-proxies}
\end{figure}

The resulting identification can be expressed through two bridge functions.

\begin{lemma}[IV/PV identification in CCB-Med]
\label{lemma:3.2}
Suppose Assumption~\ref{assumption:3.2} holds. For the observational processes in source domains $\mathscr{C}^{(1)}$ and $\mathscr{C}^{(2)}$, let functions $h_1\in L_2(\mathcal{W}\times\mathcal{A}\times\mathcal{X})$ and $h_2\in L_2(\mathcal{Z}_2\times\mathcal{A}\times\mathcal{X})$
satisfy
\begin{align}
    &\E^{(1)}[h_1(W,A,X)-Y\mid Z_1,Z_2]=0
    \qquad\mathrm{a.s.}, \label{eq:med_h1}
    \\
    &\E^{(2)}[h_2(Z_2,A,X)-h_1(W,A,X)\mid Z_1,A,X]=0
    \qquad\mathrm{a.s.}, \label{eq:med_h2}
\end{align}
where $\E^{(i)}$ denotes the expectation with respect to $p^{(i)}$, $i=1,2$. Then
\begin{align}
    &g^{(1)}(w,a,x)=h_1(w,a,x)
    \qquad\mathrm{a.s.},
    \label{eq:med_g1_identification}
    \\
    &g^*(a,x)=\E^{(2)}
    [h_2(Z_2,a,x)\mid X=x]
    \qquad\mathrm{a.s.}
    \label{eq:med_gstar_identification}
\end{align}
\end{lemma}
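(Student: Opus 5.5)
The argument has two parts: an IV step that handles $h_1$, and a proximal step that handles $h_2$.

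\emph{The IV step.} The first claim \eqref{eq:med_g1_identification} follows the argument of Lemma~\ref{lemma:3.1}. Under Assumption~\ref{assumption:3.2}(i), intervening on $(W,A)$ gives
\[
g^{(1)}(w,a,x)=f(w,a,x)+\E^{(1)}[\epsilon\mid X=x].
\]
By (ii), $(Z_1,Z_2)\indep(U,\epsilon)\mid X$, so $\E^{(1)}[\epsilon\mid Z_1,Z_2]=\E^{(1)}[\epsilon\mid X]$. Here I read $X$ as being measurable with respect to, or implicitly conditioned alongside, the instruments, as Lemma~\ref{lemma:3.1} does. Consequently $g^{(1)}(W,A,X)$ itself solves \eqref{eq:med_h1}. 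Subtracting this from the equation satisfied by $h_1$ gives
\[
\E^{(1)}[h_1-g^{(1)}\mid Z_1,Z_2]=0,
\]
and IV completeness (iii) then forces $h_1=g^{(1)}$ almost surely.

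\emph{Reduction to a latent-variable representation.} For \eqref{eq:med_gstar_identification}, I start from \eqref{eq:gstar_med_transport}:
\[
g^*(a,x)=\int g^{(1)}(w,a,x)\,p^{(2)}(w\mid x,\mathrm{do}(a))\,\mathrm{d}w .
\]
I then write the interventional mediator law by adjusting for the latent confounder:
\[
p^{(2)}(w\mid x,\mathrm{do}(a))=\int p^{(2)}(w\mid u,a,x)\,p^{(2)}(u\mid x)\,\mathrm{d}u .
\]
This holds because in $\mathcal{G}_{\overline A}$ the variable $U$ is not affected by $A$, and $W$ depends on $A$ only through its structural equation given $(U,X)$. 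Define
\[
\bar h_1(u,a,x):=\E^{(2)}[h_1(W,a,x)\mid U=u,A=a,X=x],
\]
which equals $\E^{(2)}[h_1(W,A,X)\mid U,A,X]$ evaluated at $A=a$. This gives $g^*(a,x)=\E^{(2)}[\bar h_1(U,a,x)\mid X=x]$.

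\emph{The proximal step.} Here $Z_1$ acts as the action-side proxy and $Z_2$ as the outcome-side proxy. The goal is to show that any solution $h_2$ of \eqref{eq:med_h2} satisfies
\[
\E^{(2)}[h_2(Z_2,A,X)\mid U,A,X]=\bar h_1(U,A,X).
\]
\begin{itemize}
\item Condition \eqref{eq:med_h2} first conditions on $(U,Z_1,A,X)$ and then averages over $U$ given $(Z_1,A,X)$. The independence $Z_1\indep W\mid(U,A,X)$ from (ii) turns the inner conditional expectation of $h_1$ into $\bar h_1(U,A,X)$.
\item For $h_2$, I need $Z_2\indep Z_1\mid(U,A,X)$ so that $\E[h_2(Z_2,A,X)\mid U,Z_1,A,X]=\E[h_2\mid U,A,X]$. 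I would derive this from (ii), $Z_2\indep(Z_1,A)\mid X$ together with $Z_2\indep U\mid X$, plus the graph of Figure~\ref{fig:ccb-mediator-proxies}, where $Z_2$ has no path to $Z_1$ or $U$ other than through $X$.
\item Then \eqref{eq:med_h2} becomes
\[
\E^{(2)}\big[\E[h_2\mid U,A,X]-\bar h_1(U,A,X)\,\big|\,Z_1,A,X\big]=0,
\]
and PV completeness (iv) yields the desired equality almost surely.
\end{itemize}

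\emph{Conclusion.} Because $Z_2\indep(U,A)\mid X$, we have $\E[h_2(Z_2,a,x)\mid U,A=a,X=x]=\E^{(2)}[h_2(Z_2,a,x)\mid X=x]$, which does not depend on $U$. Averaging over $p^{(2)}(u\mid x)$ then gives
\[
g^*(a,x)=\E^{(2)}[h_2(Z_2,a,x)\mid X=x].
\]
One caveat: the completeness step only delivers equality on the observational support of $(U,A,X)$. Passing to every $a$ requires a positivity or overlap condition, which I would state explicitly.

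The main obstacle is justifying the conditional independences $Z_2\indep Z_1\mid(U,A,X)$ and $Z_2\indep U\mid(A,X)$ from the stated assumptions. I expect to need the d-separation structure of the graph in addition to (ii). I would try to read these independences off $\mathcal{G}$ first, and fall back on strengthening (ii) slightly if they cannot be obtained that way.
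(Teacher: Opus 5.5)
Your proposal is correct and follows essentially the same route as the paper's proof. The paper likewise obtains $h_1=g^{(1)}$ by the IV argument of Lemma~\ref{lemma:3.1}, writes $p^{(2)}(w\mid x,\mathrm{do}(a))=\int p^{(2)}(w\mid u,a,x)\,p^{(2)}(u\mid x)\,\mathrm{d}u$, uses $Z_1\indep(Z_2,W)\mid(U,A,X)$ with PV completeness to get $\E^{(2)}[h_2\mid U,A,X]=\E^{(2)}[h_1\mid U,A,X]$, and then marginalizes over $Z_2$; it invokes $Z_2\indep A\mid(U,X)$ where you invoke $Z_2\indep(U,A)\mid X$, and it simply asserts these independences rather than deriving them from (ii), so reading them off the graph as you plan is consistent with it. One small correction: in the IV step you do not need $X$ to be measurable with respect to the instruments, because the tower property gives $\E^{(1)}[\epsilon\mid Z_1,Z_2]=\E^{(1)}\bigl[\E^{(1)}[\epsilon\mid X]\mid Z_1,Z_2\bigr]$, which already shows that $g^{(1)}(W,A,X)$ solves \eqref{eq:med_h1}.
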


\begin{proof}
See \S\ref{sec:A.2} for a detailed proof.
\end{proof}

The first bridge $h_1$ identifies the causal response $g^{(1)}$ in source domain $\mathscr{C}^{(1)}$, exactly as in the CCB-Neg setting. The second bridge $h_2$ performs the additional proximal adjustment required in domain $\mathscr{C}^{(2)}$: rather than directly estimating the confounded interventional distribution $p^{(2)}(w|x,\operatorname{do}(a))$, it converts the transported expectation in \eqref{eq:gstar_med_transport} into an observational conditional moment equation. Marginalizing $h_2$ over $Z_2$ then yields the target CATE through \eqref{eq:med_gstar_identification}.

\subsubsection{Conditional Moment Formulation}
\label{sec:3.2.2}

Combining the identification results above, the target CATE $g^*$ can be characterized by the following integral equation system:
\begin{equation}
\begin{cases}
    \E^{(1)}
    [h_1(W,A,X)-Y\mid Z_1,Z_2]
    =0,
    \\[2mm]
    \E^{(2)}
    [h_2(Z_2,A,X)-h_1(W,A,X)\mid Z_1,A,X]
    =0,
    \\[2mm]
    \E^{(2)}
    [g(A',X)-h_2(Z_2,A',X)\mid A',X]
    =0.
\end{cases}
\label{eq:ccb_med_ies}
\end{equation}
Here, as in \S\ref{sec:3.1.2}, $A'$ is an auxiliary variable that is independent of the CCB process and follows the reference distribution
\[
    A'\sim\nu_A,
\]
whose support covers the action space of interest. The pseudo-action converts the pointwise characterization
\[
    g^*(a,x)
    =
    \E^{(2)}[h_2(Z_2,a,x)\mid X=x]
\]
into a single conditional moment restriction over $(A^\prime,X)$. Thus, \eqref{eq:ccb_med_ies} places all three bridge equations in the same conditional-moment form, preparing them for the unified minimax estimation framework developed in \S\ref{sec:4}.

\subsection{Identification in General CCBs}
\label{sec:3.3}

The preceding subsections develop identification results for two representative CCB structures, with negative outcomes and mediators, respectively. In both cases, the same general principle emerges: transportability is first used to decompose the target CATE into causal quantities associated with heterogeneous source domains, and instrumental- or proxy-variable methods are then used to recover the remaining interventional quantities from observational data. This raises a natural question: \textit{Can the same strategy be extended beyond these particular causal structures?}

The general transportability framework of \cite{Lee_Correa_Bareinboim_2020} provides a systematic way to determine whether a causal effect can be synthesized from heterogeneous domains when a collection of observational and interventional distributions is available. In the offline setting considered here, however, the available datasets are observational, and the required source-domain interventional distributions may themselves be confounded. We therefore combine general transportability with IV/PV identification. Before describing this strategy, we first abstract the bridge equations encountered above into a unified conditional-moment formulation.

\subsubsection{A Unified Form for the IES}
\label{sec:3.3.1}

The unconfounded examples in \S\ref{sec:2.2} and the confounded models studied in \S\ref{sec:3.1}--\S\ref{sec:3.2} all characterize the target conditional reward or CATE through an integral equation system consisting of conditional moment restrictions; see, for example, \eqref{eq:cb_neg_ies}, \eqref{eq:cb_med_ies}, \eqref{eq:ccb_neg_ies}, and \eqref{eq:ccb_med_ies}. Although the number of equations, the bridge functions involved, and their causal interpretations vary across these settings, the resulting IESs share a common mathematical structure. We now introduce a unified formulation that encompasses these different identification systems.

Suppose that an identification procedure produces $K$ conditional moment equations. Let $h=(h_1,\ldots,h_{K-1},g)$ denote the collection of bridge functions, where the final component $g$ represents the target conditional reward or CATE. For each component, let $\mathbf{X}_k$ collect the random variables on which the corresponding bridge function depends, and write
\[
    h(\mathbf{X})
    :=
    \bigl(
        h_1(\mathbf{X}_1),
        \ldots,
        h_{K-1}(\mathbf{X}_{K-1}),
        g(\mathbf{X}_K)
    \bigr),
\]
where $\mathbf{X}$ denotes the variables appearing as arguments of the bridge functions. The $k$-th equation can then be written as
\begin{equation}
    \E^{(i_k)}
    \left[
        \alpha_k\bigl(h(\mathbf{X}),\mathbf{Y}_k\bigr)
        \,\middle|\,
        \mathbf{Z}_k
    \right]=0,
    \qquad
    k\in[K],
    \label{eq:3.16}
\end{equation}
where:
\begin{itemize}
    \item $K$ is the total number of conditional moment equations;
    
    \item $\mathbf{Y}_k$ collects any additional observed variables appearing in the $k$-th moment restriction and may be empty when no such variable is required;
    
    \item $\alpha_k$ is the residual map associated with the $k$-th equation. For the bridge systems considered in this paper, $\alpha_k$ is linear in the relevant bridge-function evaluations;
    
    \item $\mathbf{Z}_k$ denotes the collection of variables conditioned on in the $k$-th equation;
    
    \item $i_k$ specifies the source domain supplying the $k$-th moment restriction, so that $\E^{(i_k)}$ is taken with respect to the distribution in domain $\mathscr{C}^{(i_k)}$.
\end{itemize}

For example, in the second bridge equation of the CCB-Med model in \S\ref{sec:3.2}, the residual is
\[
    \alpha_2
    =
    h_2(Z_2,A,X)-h_1(W,A,X),
\]
the conditioning variables are $\mathbf{Z}_2=(Z_1,A,X)$, and the expectation is taken in source domain $\mathscr{C}^{(2)}$.

For each $k\in[K]$, we write $\mu_k$ for the joint distribution of the variables involved in the $k$-th moment restriction under domain $\mathscr{C}^{(i_k)}$. Correspondingly, let $\mathcal{D}_k$ denote the projection of the source dataset $\mathcal{D}^{(i_k)}$ onto the variables required by that equation, together with any pseudo-variables introduced through the augmentation procedures in \S\ref{sec:3.1.2} and \S\ref{sec:3.2.2}. Thus, the empirical observations in $\mathcal{D}_k$ are distributed according to $\mu_k$.

We now associate each conditional moment equation with an operator $\mathcal{T}_k$ defined by
\begin{equation*}
    (\mathcal{T}_k h)(\cdot)
    :=
    \E_{\mu_k}
    \left[
        \alpha_k\bigl(h(\mathbf{X}),\mathbf{Y}_k\bigr)
        \,\middle|\,
        \mathbf{Z}_k=\cdot
    \right].
\end{equation*}
Collecting these operators gives $\mathcal{T}h
    :=
    \bigl(
        \mathcal{T}_1h,\ldots,\mathcal{T}_Kh
    \bigr),$
so that the entire IES can be written compactly as
\begin{equation}
    \mathcal{T}h(\mathbf{Z})=0,
    \label{eq:3.18}
\end{equation}
where $\mathbf{Z}$ denotes the collection of variables conditioned on in the IES. This operator representation separates the causal origin of the bridge equations from their statistical estimation. Once causal identification provides an IES of the form \eqref{eq:3.18}, the remaining problem is to learn a collection of bridge functions satisfying these conditional moment restrictions from finite offline datasets. Direct estimation of conditional moments can be difficult, particularly when the conditioning variables are high-dimensional or continuous. In \S\ref{sec:4}, we therefore transform \eqref{eq:3.18} into an unconditional minimax objective that admits empirical estimation and uncertainty quantification.

\subsubsection{Identification Strategy for General CCBs}
\label{sec:3.3.2}

We now describe how the identification procedure illustrated in \S\ref{sec:3.1} and \S\ref{sec:3.2} can be extended to more general CCB structures. \cite{Lee_Correa_Bareinboim_2020} provide the sound and complete algorithms \textsc{gTR} and \textsc{gTRu} for solving general transportability problems across heterogeneous domains with limited experiments. Informally, soundness guarantees that any returned transport formula is valid, while completeness guarantees that a transport formula is returned whenever the corresponding causal query is transportable given the available information.

In our offline setting, however, the interventional distributions required as inputs to the transportability algorithms are not directly observed. In the presence of unmeasured confounding, these quantities may first need to be identified from observational data using auxiliary variables such as instrumental variables, proxy variables, or negative controls \citep{Baiocchi2014, miao2018identifying, miao2020confounding}. We therefore combine these identification tools with general transportability as follows:

\begin{itemize}
\item \textbf{Reduce the CCB and construct available experiments.}
Identify suitable instrumental or proxy variables in the CCB. Remove the corresponding auxiliary variables, such as instrumental variables and negative controls, from the causal diagram, and add the interventional expressions identified through them to the set of available experiments.

\item \textbf{Specify domain discrepancies.}
On the reduced CCB diagram, specify the domain discrepancies $\{\Delta^{(i)}\}$ and construct the corresponding selection diagram.

\item \textbf{Apply general transportability algorithm.}
Run \textsc{gTR} \citep{Lee_Correa_Bareinboim_2020} using the reduced causal diagram, the domain discrepancies $\{\Delta^{(i)}\}$, and the available experiment set. If the target CATE is transportable from the source domains, the algorithm returns a transported decomposition; otherwise, it throws a \textsc{fail}.

\item \textbf{Construct the bridge-function IES.}
Recover the interventional expressions appearing in the transported decomposition using the IV/PV relations identified in the first step, and formulate the resulting bridge functions as an IES of the form discussed in \S\ref{sec:3.3.1}.

\end{itemize}

This procedure separates two conceptually distinct sources of difficulty. General transportability determines \emph{which causal mechanisms must be combined across domains}, while IV/PV identification determines \emph{how the required causal mechanisms can be recovered from confounded observational data within each domain}. The resulting bridge-function IES provides the interface between causal identification and the statistical estimation framework developed in the next section.

%% file: chap4.tex
\section{CAT Algorithm}
\label{sec:4}

In \S\ref{sec:3}, we reduced the identification of the target CATE to solving an integral equation system of conditional moment restrictions. In this section, we turn these population-level identification results into an offline policy learning procedure. We first develop a minimax estimator for the bridge functions in \S\ref{sec:4.1}. We then incorporate pessimism \citep{pmlr-v139-jin21e, uehara2023pessimistic, lu2023pessimism, chen2023unified} to account for statistical uncertainty and insufficient data coverage, which leads to our \textit{Causal-Adjusted Transportable} (CAT) policy learning algorithm in \S\ref{sec:4.2}.

\subsection{Bridge Function Estimation}
\label{sec:4.1}

Following \S\ref{sec:3.3.1}, we let
\[
    \mathcal{H}
    =
    \mathcal{H}_1\times\cdots\times\mathcal{H}_{K-1}\times\mathcal{G}
\]
denote the hypothesis class for the bridge-function vector
$h=(h_1,\ldots,h_{K-1},g)$. Recall that causal identification yields an IES
$\mathcal{T}h=0$. A natural population criterion for measuring violation of these conditional moment restrictions is the projected root-mean-square error (projected RMSE; see \S2 of \citealp{Dikkala2020}):
\begin{equation}
    \Vert\mathcal{T}h\Vert_{\mu,2}
    :=
    \sqrt{
        \sum_{k=1}^K
        \Vert\mathcal{T}_kh\Vert_{\mu_k,2}^2
    }
    =
    \sqrt{
        \sum_{k=1}^K
        \E_{\mu_k}
        \left[
            \left(
                \E_{\mu_k}
                [\alpha_k(h(\mathbf{X}),\mathbf{Y}_k)\mid\mathbf{Z}_k]
            \right)^2
        \right]
    },
    \label{eq:4.1}
\end{equation}
where $\mu=(\mu_1,\ldots,\mu_K)$. Hence the IES can be solved at the population level by searching for an $h\in\mathcal{H}$ with small projected RMSE.

The conditional expectations in \eqref{eq:4.1}, however, are difficult to estimate directly from finite data. Following the minimax approach for conditional moment models \citep{Dikkala2020,kallus2022causal}, we introduce a test function
$\theta_k:\mathcal{Z}_k\to\mathbb{R}$ from a dual function class $\Theta_k$ for each equation. Using Fenchel duality \citep{borwein2006convex} and the interchangeability principle \citep{pmlr-v54-dai17a}, we define the population minimax loss
\begin{equation}
    \mathcal{L}(h)
    :=\sum_{k=1}^K\mathcal{L}_k(h),
    \qquad
    \mathcal{L}_k(h)
    :=\sup_{\theta_k\in\Theta_k}
    \left\{
        \E_{\mu_k} [\alpha_k(h(\mathbf{X}),\mathbf{Y}_k)\theta_k(\mathbf{Z}_k)]
        -
        \frac{1}{2}\Vert\theta_k\Vert_{\mu_k,2}^2
    \right\}.
    \label{eq:4.2}
\end{equation}
We assume $\Theta_k$ is star-shaped around zero, so $\mathcal{L}_k(h)\geq 0$. Moreover, when the dual class is sufficiently rich so that $\mathcal{T}_kh\in\Theta_k$, the maximizer in \eqref{eq:4.2} is $\theta_k=\mathcal{T}_kh$, and therefore
\[
    \mathcal{L}(h)
    =
    \frac{1}{2}\Vert\mathcal{T}h\Vert_{\mu,2}^2.
\]
Therefore, minimizing the population minimax loss is equivalent to minimizing the projected conditional-moment error.

Replacing population expectations with their empirical counterparts gives 
\begin{equation} 
\mathcal{L}_{\mathcal{D}}(h) := \sum_{k=1}^K\mathcal{L}_{\mathcal{D},k}(h), \qquad \mathcal{L}_{\mathcal{D},k}(h) := \sup_{\theta_k\in\Theta_k} \left\{ \E_{\mathcal{D}_k} [\alpha_k(h(\mathbf{X}),\mathbf{Y}_k)\theta_k(\mathbf{Z}_k)] - \frac{1}{2}\Vert\theta_k\Vert_{\mathcal{D}_k,2}^2 \right\}, \label{eq:4.3} 
\end{equation} 
where $\mathcal{D}=\{\mathcal{D}^{(i)}\}_{i=1}^n$ denotes the collection of heterogeneous datasets, and each $\mathcal{D}_k$ is the dataset associated with the $k$-th conditional moment equation as defined in \S\ref{sec:3.3.1}. Here, $\E_{\mathcal{D}_k}$ denotes expectation under the empirical distribution induced by $\mathcal{D}_k$. The empirical loss $\mathcal{L}_{\mathcal{D}}$ therefore provides a finite-sample criterion for estimating the bridge functions, and in particular the target CATE $g^*$, from heterogeneous source datasets.

\subsection{Pessimistic Policy Learning}
\label{sec:4.2}

In regions with insufficient offline data coverage, estimates obtained by minimizing the empirical loss can be highly uncertain. Directly optimizing a policy using the corresponding estimated value may then exploit estimation errors in poorly covered regions and lead to poor performance. To account for this uncertainty, we adopt the pessimism principle commonly used in offline reinforcement learning \citep{pmlr-v139-jin21e,uehara2023pessimistic,lu2023pessimism,chen2023unified}: rather than relying on a single point estimate, we construct a confidence set of statistically plausible bridge functions and optimize the policy against the worst-case value within this set.

Regarding the empirical loss \eqref{eq:4.3} defined in \S\ref{sec:4.1}, for a threshold $e_{\mathcal{D}}>0$, we define the confidence set
\begin{equation}
    \mathrm{CI}_{\mathcal{H},\mathcal{D}}(e_{\mathcal{D}})
    :=
    \left\{
        h\in\mathcal{H}:
        \mathcal{L}_{\mathcal{D}}(h)
        \leq
        \inf_{h'\in\mathcal{H}}
        \mathcal{L}_{\mathcal{D}}(h')
        +
        e_{\mathcal{D}}
    \right\}.
    \label{eq:4.4}
\end{equation}
The threshold $e_{\mathcal{D}}$ accounts for finite-sample estimation error, so that, with an appropriate choice, the true bridge functions (or their best-in-class approximation under misspecification) belong to the confidence set with high probability; this property is formalized in \S\ref{sec:7}. 

Correspondingly, since the final component of $h$ is the target CATE, we define its induced confidence set as 
\begin{equation}
    \mathrm{CI}_{\mathcal{D}}(e_{\mathcal{D}})
    :=
    \left\{
        g\in\mathcal{G}:
        \exists h\in
        \mathrm{CI}_{\mathcal{H},\mathcal{D}}(e_{\mathcal{D}})
        \ \text{s.t.}\ 
        g=h^{(K)}
    \right\}.
    \label{eq:4.5}
\end{equation}
Thus, $\mathrm{CI}_{\mathcal{D}}(e_{\mathcal{D}})$ represents the collection of target CATEs that remain statistically plausible given the offline data.

Recall from \eqref{eq:cate_policy_value} that the value of a policy is determined by the target CATE. For any candidate CATE $g:\mathcal{A}\times\mathcal{X}\to\mathbb{R}$, define the corresponding policy value as
\begin{equation}
    v(g,\pi)
    :=
    \E_{p^\pi}[g(A,X)].
    \label{eq:4.6}
\end{equation}
When $g=g^*$, this quantity equals the true target-domain value $v^\pi$.

We then learn a policy by maximizing its pessimistic value over the CATE confidence set:
\begin{equation}
    \widehat{\pi}
    :=
    \underset{\pi\in\Pi}{\argmax}
    \;
    \min_{g\in\mathrm{CI}_{\mathcal{D}}(e_{\mathcal{D}})}
    v(g,\pi).
    \label{eq:4.7}
\end{equation}
For each policy $\pi$, the inner minimization evaluates its worst-case value over all statistically plausible CATEs, thereby providing a conservative estimate of its performance. The outer maximization then selects the policy with the largest such pessimistic value. This optimization favors policies that perform well even under plausible adverse realizations of the target CATE, thereby reducing sensitivity to statistical uncertainty in poorly covered regions. We summarize the complete procedure in Algorithm~\ref{alg:1}.

\begin{algorithm}
    \caption{Causal-Adjusted Transportable Policy Learning for CCB (CAT-CCB)}
    \label{alg:1}
    \begin{algorithmic}[1]
        \Require offline datasets
        $\mathcal{D}=\{\mathcal{D}^{(i)}\}_{i=1}^n= \{\{(o_t^{(i)},x_t^{(i)},a_t^{(i)},m_t^{(i)},y_t^{(i)})\}_{t=1}^T,\ i=1,\cdots,n\}$,
        hypothesis class $\mathcal{H}$,
        test function classes
        $\Theta=\Theta_1\times\cdots\times\Theta_K$,
        policy class $\Pi$
        \Ensure estimated policy $\widehat{\pi}$
        
        \State Identify the target CATE through an IES $\mathcal{T}h(\mathbf{Z})=0$ as in \S\ref{sec:3}.
        \State Construct the empirical minimax loss $\mathcal{L}_{\mathcal{D}}$ according to \eqref{eq:4.3}.
        \State Construct the CATE confidence set $\mathrm{CI}_{\mathcal{D}}(e_{\mathcal{D}})$ according to \eqref{eq:4.4}--\eqref{eq:4.5}.
        \State Solve the pessimistic policy optimization problem \eqref{eq:4.7}.
    \end{algorithmic}
\end{algorithm}

For CCB-Neg and CCB-Med, the identification IESs used in the first step are derived in \S\ref{sec:3.1} and \S\ref{sec:3.2}, respectively, while \S\ref{sec:3.3.2} provides an identification strategy for causal identification in more general CCB structures. Once an appropriate IES is constructed, the remaining steps depend only on its unified conditional-moment representation. The finite-sample properties of the resulting confidence set and the suboptimality guarantees for CAT-CCB are established in \S\ref{sec:7}.

The scope of CAT-CCB is not limited to confounded settings. Although the algorithm is developed to accommodate unmeasured confounding, the same estimation and pessimistic policy-learning procedure applies to the unconfounded contextual-bandit settings in \S\ref{sec:2.2}. In these cases, the corresponding IESs \eqref{eq:cb_neg_ies} and \eqref{eq:cb_med_ies} can be obtained more directly without confounding adjustment.

%% file: chap5.tex
\section{Confounded Markov Decision Processes}
\label{sec:5}

The contextual-bandit setting studied in the preceding sections describes a one-step decision problem. We now extend our framework to sequential decision making, where an action can affect not only the immediate reward but also future states and subsequent decisions. In particular, we consider episodic Markov decision processes (MDPs) in the presence of both unmeasured confounding and domain discrepancy. We first introduce the problem setting in \S\ref{sec:5.1}, and then study causal identification across heterogeneous MDPs in \S\ref{sec:5.2}. The resulting identified model will form the basis of the CAT-CMDP algorithm developed in \S\ref{sec:6}.

\subsection{Problem Setting}
\label{sec:5.1}

\paragraph{Episodic MDPs.}
An episodic MDP is specified by a tuple
\[
    \mathcal{M}
    =
    (\mathcal{S},\mathcal{A},H,\mathcal{P},r,\mu_1),
\]
where $\mathcal{S}$ and $\mathcal{A}$ are the state and action spaces, respectively, $H$ is the episode horizon, $\mathcal{P}=\{\mathcal{P}_h\}_{h=1}^H$ is a collection of transition kernels, $r=\{r_h\}_{h=1}^H$ is a collection of bounded expected reward functions, and $\mu_1$ is the distribution of the initial state. Specifically, $\mathcal{P}_h(\cdot\,|\,s,a)$ describes the distribution of the successor state at step $h$ given the current state-action pair $(s,a)$, while
\[
    r_h(s,a)
    :=
    \E[R_h|S_h=s,A_h=a]
    \in[0,1]
\]
denotes the expected immediate reward, where $R_h$ is the realized reward at step $h$. A policy $\pi=\{\pi_h\}_{h=1}^H$ specifies the agent's decision rule throughout an episode, where each $\pi_h:\mathcal{S}\to\Delta(\mathcal{A})$ assigns a distribution over actions conditional on the current state. 

At the beginning of an episode, the environment samples
$S_1\sim\mu_1$. At each step $h\in[H]$, the agent observes the current state $S_h$, chooses $A_h\sim\pi_h(\cdot\,|\,S_h)$, receives the immediate reward $R_h$, and the environment transitions to $S_{h+1}\sim \mathcal{P}_h(\cdot\,|\,S_h,A_h)$. The episode terminates after step $H$.

\paragraph{Policy evaluation.}
We use upper-case letters such as $S_h,A_h,R_h$ for random variables and lower-case letters such as $s_h,a_h,r_h$ for their realizations or, when clear from context, expected reward functions. For a policy $\pi$, define the state-value and state-action-value functions as
\begin{align*}
    V_h^\pi(s_h)
    &:=\E_\pi
    \left[
        \sum_{t=h}^H R_t\,\middle|\,S_h=s_h
    \right],
    \\
    Q_h^\pi(s_h,a_h)
    &:=\E_\pi
    \left[
        \sum_{t=h}^H R_t\,\middle|\,S_h=s_h,A_h=a_h
    \right],
\end{align*}
where $\E_\pi$ denotes expectation under the trajectory distribution induced by $\pi$. We set $V_{H+1}^\pi(s)=0$ for all $ s\in\mathcal{S}$. The expected cumulative reward of a policy is given by
\begin{equation}
    J(\pi):=\int_{\mathcal{S}}
    V_1^\pi(s_1)\,
    \mu_1(\mathrm{d}s_1).
    \label{eq:mdp_value}
\end{equation}
Given a policy class $\Pi$, we define a policy $\pi^*$ to be optimal within $\Pi$ if it satisfies
\begin{equation*}
    \pi^*
    \in
    \argmax_{\pi\in\Pi}J(\pi).
\end{equation*}
To evaluate a learned policy $\pi\in\Pi$, we use its suboptimality
\begin{equation*}
    \mathrm{SubOpt}(\pi):=J(\pi^*)-J(\pi),
\end{equation*}
which measures the gap between the expected cumulative reward achieved by $\pi$ and that of an optimal policy.

\paragraph{Confounded observational data.}
The standard MDP formulation assumes that the observed state contains sufficient information to characterize the decision, reward, and transition mechanisms, so that the value $J(\pi)$ of a given policy can be identified from the corresponding MDP model. In practice, however, it may be difficult or even impossible to encode all relevant factors into the observed state. Some unmeasured variables may influence both the actions taken by the historical agent and the subsequent rewards or state transitions, thereby inducing confounding in the observed action-reward and action-transition relationships. In the presence of such unmeasured confounding, the expected cumulative reward of a policy can no longer be identified from observational data without additional assumptions.

Specifically, following the formulation of confounded MDPs in \cite{fu2022offline}, we consider the observed variables $S_h$, $A_h$, and $R_h$ at each step $h\in[H]$, together with a time-varying latent state variable $U_h\in\mathcal{U}$. Here $S_h$ is the observed state, $A_h$ is the action taken at step $h$, and $R_h$ is the corresponding immediate reward. The latent variable $U_h$ may influence the historical action $A_h$, the immediate reward $R_h$, and the subsequent state $S_{h+1}$, thereby confounding the causal effect of $A_h$ on both the reward and future transition.

The historical data are generated through the following \textit{observational process}. Let $p$ denote the joint distribution of the relevant variables in a confounded MDP. At step $h$, the historical action mechanism is characterized by $p(a_h|u_h,s_h)$, while the reward and future transition distributions are given by $p(r_h|u_h,s_h,a_h)$ and $p(s_{h+1}|u_h,s_h,a_h)$, respectively. Each observed trajectory records
\[
    \{S_h,A_h,R_h\}_{h=1}^H,
\]
while the latent states $\{U_h\}_{h=1}^H$ are unlogged and therefore unavailable to the offline learner. As a result, simply conditioning on the observed state-action pair $(S_h,A_h)$ does not recover the causal reward or transition mechanism under an intervention on $A_h$. This confounding bias ultimately prevents direct identification of the policy value $J(\pi)$ from observational trajectories. Additional causal assumptions or auxiliary variables are therefore required for policy evaluation and learning from such observational data.

\paragraph{Heterogeneous source and target MDPs.}
Our setting further allows the environment in which the policy is deployed to differ from those in which the offline data are collected. We denote the target MDP by $\mathcal{M}^*$ and suppose that offline datasets $\mathcal{D}^{(1)},\cdots,\mathcal{D}^{(n)}$
are collected from heterogeneous source MDPs
$\mathscr{M}^{(1)},\cdots,\mathscr{M}^{(n)}$,
with corresponding data-generating distributions
$p^{(1)},\cdots,p^{(n)}$. Our objective is to learn a policy for the target environment $\mathscr{M}^*$ using these source-domain observational datasets.

Such heterogeneity arises naturally in applications. For example, when learning a policy for controlling a particular influenza strain in a target region, available data may include observations for the same strain in regions with different demographic or environmental characteristics, as well as historical data for related strains in the target region. None of these source environments need share the same optimal policy as the target environment, but they may share different components of the underlying causal dynamics.

While directly applying a policy optimized for any single source MDP can be suboptimal in $\mathscr{M}^*$, structural similarities across the source and target environments create an opportunity to combine complementary information from multiple datasets. In \S\ref{sec:5.2}, we formalize these similarities through domain discrepancies and use transportability together with instrumental-variable identification to recover the target reward and transition mechanisms.

\subsection{Causal Identification in Confounded MDPs}
\label{sec:5.2}

We now introduce the structural assumptions that allow causal information to be transported across heterogeneous MDPs. Similar to the CCB-Neg setting in \S\ref{sec:3.1}, we introduce an auxiliary variable $W_h$ at each decision step, which serves as a negative outcome in standard proxy variable literatures \citep{miao2018identifying, miao2020confounding}. We first specify the causal structure shared across domains, then characterize the domain discrepancies and derive a transportability decomposition, and finally use instrumental variables to identify the remaining interventional quantities from observational data.

\paragraph{Model assumptions.}
For the structural equations and conditional means considered below, we take $\mathcal{S}\subseteq\mathbb{R}^{d_S}$. At each step $h\in[H]$, let $W_h\in\mathcal{W}$ denote an auxiliary state variable that is logged in the source datasets. We impose the following assumptions on the MDPs in all domains. 

\begin{assumption}
\label{assumption:5.1}
For every $h\in[H]$, the following conditions hold:
\begin{itemize}
    \item[(i)] \textbf{Markov property.} For all $h\in[H]$,
    \begin{equation*}
        (S_{h+1},U_{h+1})
        \indep
        \{S_j,U_j,A_j,W_j\}_{1\leq j<h}
        \,\big|\,
        (S_h,U_h,A_h,W_h).
    \end{equation*}

    \item[(ii)] \textbf{Structural equations.}
    The next state $S_{h+1}$ and immediate reward $R_h$ depend only on $(S_h,A_h,W_h,U_h)$. In particular, they admit the regression representations
    \begin{equation}
        S_{h+1}
        =f_h^s(S_h,A_h,W_h)+\epsilon_h^s,
        \qquad
        R_h
        =f_h^r(S_h,A_h,W_h)+\epsilon_h^r,
        \label{eq:5.6}
    \end{equation}
    where
    \[
        (\epsilon_h^s,\epsilon_h^r)
        \indep
        (A_h,W_h)
        \,\big|\,
        (S_h,U_h).
    \]

    \item[(iii)] \textbf{Negative outcome.} For all $h\in[H]$,
    \begin{equation*}
        W_h
        \indep
        (A_h,U_h)
        \,\big|\,
        S_h.
    \end{equation*}
\end{itemize}
\end{assumption}

Assumption~\ref{assumption:5.1}(i) imposes a Markov structure on the observed $\{S_h\}_{h=1}^H$ and latent state processes $\{U_h\}_{h=1}^H$, as commonly assumed in confounded MDPs \citep{fu2022offline,lu2023pessimism,liao2021instrumental}. The structural equations in (ii) provide regression representations for the reward and next state, analogous to the structural reward assumptions used for CCBs in \S\ref{sec:3}. Condition (iii) states that, conditional on the current state $S_h$, the negative outcome $W_h$ is unaffected by the action and is independent of the latent confounder $U_h$.

We next specify how the source MDPs may differ from the target environment.

\begin{assumption}[MDP discrepancies]
\label{assumption:5.2}
We have two source MDPs $\mathscr{M}^{(1)},\mathscr{M}^{(2)}$ and a target MDP $\mathscr{M}^*$. Their domain discrepancies satisfy
\begin{equation*}
    \Delta^{(1)}
    \subseteq
    \{A_h,W_h\}_{h=1}^H,
    \qquad
    \Delta^{(2)}
    \subseteq
    \{A_h,S_h,R_h\}_{h=1}^H.
\end{equation*}
The corresponding selection diagram is illustrated in Figure~\ref{fig:confounded-mdp-selection}.
\end{assumption}

\begin{figure}[t]
    \centering
    \begin{tikzpicture}[>={Latex[length=2mm]}]

        \path[use as bounding box]
            (-4.55,-3.05) rectangle (4.55,3.00);

        \node[dagvar] (Sm1) at (-5,0) {$S_{h-1}$};
        \node[dagvar] (S0)  at ( 0,0) {$S_h$};
        \node[dagvar] (Sp1) at ( 5,0) {$S_{h+1}$};

        \node[dagvar] (Am1) at (-2.5,1.25) {$A_{h-1}$};
        \node[dagvar] (A0)  at ( 2.5,1.25) {$A_h$};

        \node[dagvar] (Wm1) at (-2.5,-1.25) {$W_{h-1}$};
        \node[dagvar] (W0)  at ( 2.5,-1.25) {$W_h$};

        \node[daglatent] (Um1) at (-0.40,2.10) {$U_{h-1}$};
        \node[daglatent] (U0)  at ( 4.60,2.10) {$U_h$};

        \node[dagsel] (CAm1) at (-2.5,2.65)
            {${C}_{A_{h-1}}$};

        \node[dagsel] (CA0) at (2.5,2.65)
            {${C}_{A_h}$};

        \node[dagsel] (CSm1) at (-5,-1.70)
            {${C}_{S_{h-1}}$};

        \node[dagsel] (CS0) at (0,-1.70)
            {${C}_{S_h}$};

        \node[dagsel] (CSp1) at (5,-1.70)
            {${C}_{S_{h+1}}$};

        \node[dagsel] (CWm1) at (-2.5,-2.60)
            {${C}_{W_{h-1}}$};

        \node[dagsel] (CW0) at (2.5,-2.60)
            {${C}_{W_h}$};

        \draw[dagarrow] (Sm1) -- (Am1);
        \draw[dagarrow] (Am1) -- (S0);

        \draw[dagarrow] (Sm1) -- (S0);

        \draw[dagarrow] (Sm1) -- (Wm1);
        \draw[dagarrow] (Wm1) -- (S0);

        \draw[dagarrow] (S0) -- (A0);
        \draw[dagarrow] (A0) -- (Sp1);

        \draw[dagarrow] (S0) -- (Sp1);

        \draw[dagarrow] (S0) -- (W0);
        \draw[dagarrow] (W0) -- (Sp1);

        \draw[dagarrow,dashed]
            (Um1.198) -- (Am1.15);

        \draw[dagarrow,dashed]
            (Um1) -- (S0);

        \draw[dagarrow,dashed]
            (U0.198) -- (A0.15);

        \draw[dagarrow,dashed]
            (U0) -- (Sp1);

        \draw[dagarrow] (CAm1) -- (Am1);
        \draw[dagarrow] (CA0)  -- (A0);

        \draw[dagarrow] (CSm1) -- (Sm1);
        \draw[dagarrow] (CS0)  -- (S0);
        \draw[dagarrow] (CSp1) -- (Sp1);

        \draw[dagarrow] (CWm1) -- (Wm1);
        \draw[dagarrow] (CW0)  -- (W0);

    \end{tikzpicture}

    \caption{Selection diagram $\mathcal{G}$ of the confounded MDP model. Square vertices labeled by ${C}$ denote selection nodes representing possible discrepancies between the source and target domains.}
    \label{fig:confounded-mdp-selection}
\end{figure}

Under Assumption~\ref{assumption:5.2}, the target interventional transition distribution $p^*(s_{h+1}\,|\,s_h,\operatorname{do}(a_h))$ is directly transportable from neither source domain in general. In $\mathscr{M}^{(1)}$, the mechanism generating $W_h$ may differ from that in the target domain $\mathscr{M}^*$, whereas in $\mathscr{M}^{(2)}$, the mechanism relating $W_h$ to the subsequent state may differ. Consequently, neither source alone necessarily reproduces the complete target transition mechanism.

Nevertheless, the target interventional transition distribution can be recovered by combining complementary mechanisms from the two source domains. Following the $\mu$-transportability framework of \cite{pmlr-v31-bareinboim13a}, we obtain
\begin{align}
    p^*(s_{h+1}|s_h,\operatorname{do}(a_h))
    &=
    \int_{\mathcal{W}}
    p^*(s_{h+1}|s_h,w_h,\operatorname{do}(a_h))
    p^*(w_h|s_h,\operatorname{do}(a_h))
    \,\mathrm{d}w_h
    \notag\\
    &=
    \int_{\mathcal{W}}
    p^*(s_{h+1}|s_h,w_h,\operatorname{do}(a_h))
    p^*(w_h|s_h)
    \,\mathrm{d}w_h
    \notag\\
    &=
    \int_{\mathcal{W}}
    p^{(1)}(s_{h+1}|s_h,w_h,\operatorname{do}(a_h))
    p^{(2)}(w_h|s_h)
    \,\mathrm{d}w_h,
    \label{eq:5.7}
\end{align}
where the second equality follows from
\[
    (W_h\indep A_h\,|\,S_h)_{\mathcal{G}_{\overline{A}_h}},
\]
while the final equality follows from the S-admissibility conditions
\begin{align*}
    \left(
        \{C_{W_j},C_{A_j}\}_{j\in[H]}
        \indep
        S_{h+1}
        \,\middle|\,
        S_h,W_h
    \right)_{\mathcal{G}_{\overline{A}_h}}
    &\Rightarrow
    p^*(s_{h+1}|s_h,w_h,\operatorname{do}(a_h))
    =
    p^{(1)}(s_{h+1}|s_h,w_h,\operatorname{do}(a_h)),
    \\
    \left(
        \{C_{S_j},C_{A_j}\}_{j\in[H]}
        \indep
        W_h
        \,\middle|\,
        S_h
    \right)_{\mathcal{G}_{\overline{A}_h}}
    &\Rightarrow
    p^*(w_h|s_h)
    =
    p^{(2)}(w_h|s_h).
\end{align*}

Thus, $\mathscr{M}^{(1)}$ supplies the target-compatible conditional transition mechanism given $(S_h,A_h,W_h)$, while $\mathscr{M}^{(2)}$ supplies the target-compatible distribution of the negative outcome $W_h$ given $S_h$. The reward mechanism admits an analogous decomposition.

\paragraph{Identification with instrumental variables.}
The remaining term $p^{(1)}(s_{h+1}|s_h,w_h,\operatorname{do}(a_h))$ in \eqref{eq:5.7} is interventional and cannot generally be recovered by ordinary regression because the observational action $A_h$ is confounded by $U_h$. Following \cite{fu2022offline,liao2021instrumental}, we introduce an instrumental variable $Z_h$ at each step $h\in[H]$. The resulting observational causal structure is illustrated in Figure~\ref{fig:confounded-mdp-negative-iv}.

\begin{figure}[t]
    \centering
    \begin{tikzpicture}[>={Latex[length=2mm]}]

        \path[use as bounding box]
            (-4.55,-1.80) rectangle (4.55,2.65);

        \node[dagvar] (Sm1) at (-5,0) {$S_{h-1}$};
        \node[dagvar] (S0)  at ( 0,0) {$S_h$};
        \node[dagvar] (Sp1) at ( 5,0) {$S_{h+1}$};

        \node[dagvar] (Zm1) at (-5,2.5) {$Z_{h-1}$};
        \node[dagvar] (Z0)  at ( 0,2.5) {$Z_h$};

        \node[dagvar] (Am1) at (-2.5,1.25) {$A_{h-1}$};
        \node[dagvar] (A0)  at ( 2.5,1.25) {$A_h$};

        \node[dagvar] (Wm1) at (-2.5,-1.25) {$W_{h-1}$};
        \node[dagvar] (W0)  at ( 2.5,-1.25) {$W_h$};


        \draw[dagarrow] (Sm1) -- (Zm1);
        \draw[dagarrow] (Zm1) -- (Am1.160);

        \draw[dagarrow] (Sm1) -- (Am1);
        \draw[dagarrow] (Am1) -- (S0);

        \draw[dagarrow] (Sm1) -- (S0);

        \draw[dagarrow] (Sm1) -- (Wm1);
        \draw[dagarrow] (Wm1) -- (S0);

        \draw[dagconf]
            (Am1.0)
            to[bend left=45]
            (S0.120);


        \draw[dagarrow] (S0) -- (Z0);
        \draw[dagarrow] (Z0) -- (A0.160);

        \draw[dagarrow] (S0) -- (A0);
        \draw[dagarrow] (A0) -- (Sp1);

        \draw[dagarrow] (S0) -- (Sp1);

        \draw[dagarrow] (S0) -- (W0);
        \draw[dagarrow] (W0) -- (Sp1);

        \draw[dagconf]
            (A0.0)
            to[bend left=45]
            (Sp1.120);

    \end{tikzpicture}

    \caption{Causal graph of the confounded MDP with negative outcomes and instrumental variables in the observational setting. For brevity, latent confounders $\{U_h\}_{h=1}^H$ are represented by dashed bidirected arcs, and the reward variables $\{R_h\}_{h=1}^H$ are omitted.}
    \label{fig:confounded-mdp-negative-iv}
\end{figure}

We impose the following standard IV conditions.

\begin{assumption}
\label{assumption:5.3}
For every $h\in[H]$, the following conditions hold:
\begin{itemize}
    \item[(i)] \textbf{Exclusion restriction.}
    \begin{equation*}
        (S_{h+1},R_h)
        \indep
        Z_h
        \,\big|\,
        (S_h,U_h,A_h,W_h).
    \end{equation*}

    \item[(ii)] \textbf{Latent unconfoundedness.}
    \begin{equation*}
        Z_h\indep U_h\,|\,S_h,
        \qquad
        Z_h\indep(\epsilon_h^s,\epsilon_h^r)\,|\,(U_h,S_h).
    \end{equation*}

    \item[(iii)] \textbf{IV completeness.}
    For any
    $\sigma\in L^2(\mathcal{S}\times\mathcal{A}\times\mathcal{W})$,
    \begin{equation*}
        \E[\sigma(S_h,A_h,W_h)\mid S_h,Z_h]
        =0
        \quad\mathrm{a.s.}
        \quad\Longleftrightarrow\quad
        \sigma(S_h,A_h,W_h)
        =0
        \quad\mathrm{a.s.}
    \end{equation*}
    for each MDP under consideration.
\end{itemize}
\end{assumption}

Conditions (i) and (ii) encode the exclusion and exogeneity requirements for a valid instrument, while the completeness condition in (iii) ensures that the conditional moment equations induced by $Z_h$ uniquely identify the bridge functions, as in standard nonparametric IV methods \citep{Baiocchi2014}.

Together with the transportability decomposition above, these conditions allow the source-domain interventional reward and transition means to be recovered through bridge functions and subsequently transported to the target domain. The following lemma formalizes this identification.

\begin{lemma}[Identification of reward and transition means]
\label{lemma:5.1}
Suppose Assumptions~\ref{assumption:5.1}-\ref{assumption:5.3} hold. If there exist square-integrable functions
\[
    g_h^s:
    \mathcal{S}\times\mathcal{A}\times\mathcal{W}
    \to
    \mathbb{R}^{d_S},
    \qquad
    g_h^r:
    \mathcal{S}\times\mathcal{A}\times\mathcal{W}
    \to
    \mathbb{R},
\]
such that, for every $h\in[H]$,
\begin{align}
    \E^{(1)}
    \left[
        g_h^s(S_h,A_h,W_h)-S_{h+1}
        \,\middle|\,
        S_h,Z_h
    \right]
    &=0
    \qquad\mathrm{a.s.},
    \label{eq:mdp_bridge_s}
    \\
    \E^{(1)}
    \left[
        g_h^r(S_h,A_h,W_h)-R_h
        \,\middle|\,
        S_h,Z_h
    \right]
    &=0
    \qquad\mathrm{a.s.},
    \label{eq:mdp_bridge_r}
\end{align}
where the first equality is interpreted componentwise, then for any
$(s_h,a_h,w_h)\in\mathcal{S}\times\mathcal{A}\times\mathcal{W}$, the following hold for a.s. realizations of the underlying variables:
\begin{align}
    \E^{(1)}
    \left[
        S_{h+1}
        \,\middle|\,
        S_h=s_h,W_h=w_h,
        \operatorname{do}(A_h=a_h)
    \right]
    &=
    g_h^s(s_h,a_h,w_h),
    \label{eq:cmdp_s_bridge1}
    \\
    \E^*
    \left[
        S_{h+1}
        \,\middle|\,
        S_h=s_h,
        \operatorname{do}(A_h=a_h)
    \right]
    &=
    \int_{\mathcal{W}}
    g_h^s(s_h,a_h,w_h)
    p^{(2)}(w_h|s_h)
    \,\mathrm{d}w_h,
    \label{eq:cmdp_s_bridge2}
    \\
    \E^{(1)}
    \left[
        R_h
        \,\middle|\,
        S_h=s_h,W_h=w_h,
        \operatorname{do}(A_h=a_h)
    \right]
    &=
    g_h^r(s_h,a_h,w_h),
    \label{eq:cmdp_r_bridge1}
    \\
    \E^*
    \left[
        R_h
        \,\middle|\,
        S_h=s_h,
        \operatorname{do}(A_h=a_h)
    \right]
    &=
    \int_{\mathcal{W}}
    g_h^r(s_h,a_h,w_h)
    p^{(2)}(w_h|s_h)
    \,\mathrm{d}w_h.
    \label{eq:cmdp_r_bridge2}
\end{align}
\end{lemma}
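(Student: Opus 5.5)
The proof splits into two parts. Within source domain $\mathscr{M}^{(1)}$, the bridge equations \eqref{eq:mdp_bridge_s}--\eqref{eq:mdp_bridge_r} together with IV completeness identify the interventional conditional means \eqref{eq:cmdp_s_bridge1} and \eqref{eq:cmdp_r_bridge1}. These are then transported to the target domain through the decomposition \eqref{eq:5.7}. The transition and reward cases are identical: the reward case is obtained by replacing $(S_{h+1},f_h^s,\epsilon_h^s)$ with $(R_h,f_h^r,\epsilon_h^r)$. The transition case is handled componentwise, so I describe only a scalar outcome. The argument mirrors Lemma~\ref{lemma:3.1}.

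\emph{Step 1 (interventional mean under the structural model).} By Assumption~\ref{assumption:5.1}(ii), intervening on $A_h$ replaces it by $a_h$ in the structural equation. Hence
\[
\E^{(1)}[S_{h+1}\mid S_h=s_h,W_h=w_h,\operatorname{do}(A_h=a_h)] = f_h^s(s_h,a_h,w_h)+\E^{(1)}[\epsilon_h^s\mid S_h=s_h,W_h=w_h,\operatorname{do}(a_h)].
\]
I will show the noise term equals a function $c_h(s_h)$. Write it as $\int \E[\epsilon_h^s\mid u_h,s_h]\,p(u_h\mid s_h,w_h)\,\mathrm{d}u_h$. Two facts justify this. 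First, $\epsilon_h^s\indep(A_h,W_h)\mid(S_h,U_h)$. Second, under $\operatorname{do}(a_h)$ the law of $U_h$ given $(S_h,W_h)$ equals $p(u_h\mid s_h)$, because $W_h\indep U_h\mid S_h$ by Assumption~\ref{assumption:5.1}(iii) and $U_h$ is not a descendant of $A_h$. Absorbing $c_h$ into $f_h^s$ by centering (w.l.o.g. $\E[\epsilon_h^s\mid S_h]=0$), we get $g^{(1)}_h := \E^{(1)}[S_{h+1}\mid s_h,w_h,\operatorname{do}(a_h)] = f_h^s(s_h,a_h,w_h)+c_h(s_h)$.

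\emph{Step 2 (the interventional mean solves the moment equation).} I will show $\E^{(1)}[S_{h+1}-g_h^{(1)}(S_h,A_h,W_h)\mid S_h,Z_h]=\E^{(1)}[\epsilon_h^s-c_h(S_h)\mid S_h,Z_h]=0$. By Assumption~\ref{assumption:5.3}(ii), $Z_h\indep U_h\mid S_h$ and $Z_h\indep\epsilon_h^s\mid(U_h,S_h)$, so $(U_h,\epsilon_h^s)\indep Z_h\mid S_h$. Therefore $\E[\epsilon_h^s\mid S_h,Z_h]=\E[\epsilon_h^s\mid S_h]=c_h(S_h)$. The exclusion restriction (i) ensures $Z_h$ enters only through $A_h$, consistent with the structural form. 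Subtracting this from \eqref{eq:mdp_bridge_s} gives $\E^{(1)}[g_h^s-g_h^{(1)}\mid S_h,Z_h]=0$. Completeness (iii) then yields $g_h^s=g_h^{(1)}$ almost surely, which is \eqref{eq:cmdp_s_bridge1}.

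\emph{Step 3 (transport).} Take expectations of $S_{h+1}$ in \eqref{eq:5.7}. The integrand $\int s_{h+1}\,p^{(1)}(s_{h+1}\mid s_h,w_h,\operatorname{do}(a_h))\,\mathrm{d}s_{h+1}$ equals $g_h^s(s_h,a_h,w_h)$ by Step 2, and Fubini gives \eqref{eq:cmdp_s_bridge2}. The reward decomposition is the same as \eqref{eq:5.7}, with the S-admissibility statements applied to $R_h$ in place of $S_{h+1}$; this gives \eqref{eq:cmdp_r_bridge2}.

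The main obstacle is Step 1: verifying that conditioning on $W_h$ under $\operatorname{do}(A_h)$ does not change the law of $U_h$, so that the confounding bias $c_h$ depends on $s_h$ alone and can be absorbed. This requires the negative-outcome independence (iii) and the Markov property (i), which rules out dependence on the past. The "a.s." qualifier is also needed, since completeness only gives equality on the support of $(S_h,A_h,W_h)$ under $p^{(1)}$.
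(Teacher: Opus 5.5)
Your proposal is correct and follows essentially the same route as the paper's proof in Appendix~A.3. Both arguments:
\begin{itemize}
\item compute the interventional mean as $f_h^s(s_h,a_h,w_h)+\E^{(1)}[\epsilon_h^s\mid S_h=s_h]$, using $W_h\indep U_h\mid S_h$ and $\epsilon_h^s\indep(A_h,W_h)\mid(S_h,U_h)$;
\item use the IV independences in Assumption~\ref{assumption:5.3}(ii) to show $\E^{(1)}[\epsilon_h^s\mid S_h,Z_h]=\E^{(1)}[\epsilon_h^s\mid S_h]$;
\item invoke completeness, and then transport through \eqref{eq:5.7}.
\end{itemize}
The only differences are cosmetic. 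The paper applies completeness first, obtaining $g_h^s=f_h^s+\E^{(1)}[\epsilon_h^s\mid S_h]$, and only then computes the interventional mean; you do these in the reverse order. Your aside about centering $\epsilon_h^s$ without loss of generality is unnecessary, since you carry $c_h$ through the rest of the argument anyway.
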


\begin{proof}
See \S\ref{sec:A.3} for a detailed proof.
\end{proof}

Lemma~\ref{lemma:5.1} identifies the target conditional means of the immediate reward and next state by combining IV adjustment in $\mathscr{M}^{(1)}$ with the transported distribution of $W_h$ from $\mathscr{M}^{(2)}$. Later in \S\ref{sec:6}, we impose additional Gaussian-dynamics assumptions under which these identified quantities determine the target reward functions and transition kernels, which enables pessimistic offline policy learning in the target MDP.

%% file: chap6.tex
\section{CAT Algorithm for CMDPs under Gaussian Dynamics}
\label{sec:6}

In \S\ref{sec:5}, we established identification of the target-domain conditional reward and transition means from heterogeneous observational datasets. To turn these population-level identification results into an offline policy learning procedure, two additional steps are required. First, policy evaluation in an MDP depends on the full transition kernel rather than only its conditional mean. We therefore introduce a Gaussian dynamics model under which the transition kernel is fully determined by the identified mean. Second, the corresponding bridge functions and model parameters must be estimated from finite offline data while accounting for statistical uncertainty. We address these problems through function approximation and confidence-set construction, and then apply the pessimism principle to obtain our CAT-CMDP algorithm.

\subsection{Gaussian Dynamics and Model Specification}
\label{sec:6.1}

When the state space $\mathcal{S}$ is continuous or high-dimensional, the target transition and reward mechanisms cannot generally be estimated pointwise from finite offline data. We therefore introduce a structured function-approximation model. Motivated by the kernelized nonlinear regulator (KNR) framework \citep{kakade2020information,mania2020active}, we assume Gaussian dynamics together with a linear feature representation for the negative-outcome mechanism.

\begin{assumption}[Gaussian dynamics and linear negative-outcome model]
\label{assumption:6.1}
Suppose that
$\mathcal{S}\subseteq\mathbb{R}^{d_{\mathcal{S}}}$ and
$\mathcal{W}\subseteq\mathbb{R}^{d_{\mathcal{W}}}$.
For every $h\in[H]$, the following conditions hold:
\begin{itemize}
    \item[(i)] Given the state $S_h$, the target-compatible negative-outcome mechanism, shared by domains
    $\mathscr{M}^{(2)}$ and $\mathscr{M}^*$ under Assumption~\ref{assumption:5.2}, satisfies
    \begin{equation*}
        W_h\,|\, S_h=s_h\sim\mathcal{N}
        \bigl(
            \mathbf{M}_h^\top\phi(s_h),
            \sigma_w^2\mathbf{I}_{d_{\mathcal W}}
        \bigr),
    \end{equation*}
    where $\phi:\mathcal{S}\to\mathbb{R}^d$ is a known feature map satisfying
    $\Vert\phi(S_h)\Vert_2\leq 1$ almost surely,
    $\sigma_w>0$ is known, and
    $\mathbf{M}_h\in\mathbb{R}^{d\times d_{\mathcal W}}$
    satisfies $\Vert\mathbf{M}_h\Vert_2\leq m$ for some constant $m>0$.

    \item[(ii)] For every $(s_h,a_h)\in\mathcal{S}\times\mathcal{A}$,
    \begin{equation*}
        S_{h+1}
        \mid
        S_h=s_h,\operatorname{do}(A_h=a_h)
        \sim
        \mathcal{N}
        \left(
            m_h^*(s_h,a_h),
            \sigma_s^2\mathbf{I}_{d_{\mathcal S}}
        \right),
    \end{equation*}
    where $\sigma_s>0$ is known and, by \eqref{eq:cmdp_s_bridge2} in Lemma~\ref{lemma:5.1},
    \begin{equation*}
        m_h^*(s_h,a_h)
        :=
        \int_{\mathcal W}
        g_h^s(s_h,a_h,w_h)
        p^{(2)}(w_h|s_h)
        \,\mathrm{d}w_h.
    \end{equation*}
\end{itemize}
\end{assumption}

Assumption~\ref{assumption:6.1} connects the identification result of \S\ref{sec:5.2} to a fully specified target MDP. In particular, the Gaussian assumption determines the entire target transition distribution from the conditional transition mean identified in Lemma~\ref{lemma:5.1}, while the target expected reward is already determined by the corresponding reward bridge.

\paragraph{Bellman equations.} We now combine Lemma~\ref{lemma:5.1} with Assumption~\ref{assumption:6.1} to recover the target reward and transition mechanisms and derive the corresponding Bellman equations. Let
\[
    \varphi_{d_\mathcal{S}}(x)=
    (2\pi)^{-d_{\mathcal S}/2}
    \exp\left(-\frac{\Vert x\Vert_2^2}{2}\right)
\]
denote the density of a $d_{\mathcal S}$-dimensional standard Gaussian distribution. From the perspective of an offline learner, we define the target interventional transition kernel by
\[
    P_h^*(s_{h+1}|s_h,a_h):=p^*(s_{h+1}|s_h,\operatorname{do}(a_h)).
\]
Under Assumption~\ref{assumption:6.1}, it is given by
\begin{equation}
    P_h^*(s_{h+1}|s_h,\operatorname{do}(a_h))
    =\frac{1}{\sigma_s^{d_{\mathcal S}}}
    \varphi_{d_\mathcal{S}}
    \left(
        \frac{s_{h+1}-m_h^*(s_h,a_h)}{\sigma_s}
    \right).
    \label{eq:6.1}
\end{equation}
Similarly, the target expected reward function is
\begin{equation}
    r_h^*(s_h,a_h)
    :=
    \E^*
    \left[
        R_h
        \,\middle|\,
        S_h=s_h,\operatorname{do}(A_h=a_h)
    \right]
    =
    \int_{\mathcal W}
    g_h^r(s_h,a_h,w_h)
    p^{(2)}(w_h|s_h)
    \,\mathrm{d}w_h.
    \label{eq:6.2}
\end{equation}
For any measurable function $f:\mathcal{S}\to\mathbb{R}$, define the target transition operator by
\begin{equation}
    (P_h^*f)(s_h,a_h)
    :=
    \int_{\mathcal S}
    f(s_{h+1})
    P_h^*(s_{h+1}|s_h,a_h)
    \,\mathrm{d}s_{h+1}.
    \label{eq:6.3}
\end{equation}
Thus, $P_h^*f$ defines a real-valued function on
$\mathcal{S}\times\mathcal{A}$.

With the target transition kernels and expected reward functions specified above, the value functions of any policy
$\pi=\{\pi_h\}_{h=1}^H$ satisfy the Bellman equations
\begin{align}
    V_{H+1}^\pi(s)
    &=0,
    \label{eq:6.4}\\
    Q_h^\pi(s_h,a_h)
    &=r_h^*(s_h,a_h)+(P_h^*V_{h+1}^\pi)(s_h,a_h),
    \label{eq:6.5}\\
    V_h^\pi(s_h)
    &=\int_{\mathcal A}Q_h^\pi(s_h,a_h)\pi_h(\mathrm{d}a_h|s_h),
    \qquad h=H,\ldots,1.
    \label{eq:6.6}
\end{align}
The policy value $J(\pi)$ then follows from \eqref{eq:mdp_value}. Hence, under Assumption~\ref{assumption:6.1}, the quantities identified in Lemma~\ref{lemma:5.1} are sufficient to specify the target MDP and evaluate any policy.

\subsection{Estimation and Confidence Sets}
\label{sec:6.2}

Under the model specified in \S\ref{sec:6.1}, the target reward and transition mechanisms are determined by 
\[ 
\{g_h^r,g_h^s,\mathbf{M}_h\}_{h=1}^H, 
\] 
where the bridge functions $g_h^r$ and $g_h^s$ are identified from the observational data in domain $\mathscr{M}^{(1)}$ through the conditional moment equations in Lemma~\ref{lemma:5.1}, while $\mathbf{M}_h$, characterizing the conditional distribution of $W_h$, is estimated from $\mathscr{M}^{(2)}$. We now construct finite-sample estimators and confidence sets for these quantities.

\paragraph{Bridge-function estimation.}
We first consider the reward bridge $g_h^r$. Following the minimax estimation framework in \S\ref{sec:4.1}, define its population conditional-moment error by
\begin{equation*}
    \mathcal{L}_h^r(g)
    :=
    \E^{(1)}
    \left[
        \left(
            \E^{(1)}
            [R_h-g(S_h,A_h,W_h)\mid S_h,Z_h]
        \right)^2
    \right].
\end{equation*}
The inner conditional expectation is generally difficult to estimate directly from finite offline data. As in \S\ref{sec:4.1}, we use a minimax formulation to transform this conditional moment into an unconditional expectation, which admits a natural empirical counterpart. Introducing a test function $\theta:\mathcal{S}\times\mathcal{Z}\to\mathbb{R}$, Fenchel duality gives
\begin{equation*}
    \frac{1}{2}\mathcal{L}_h^r(g)
    =
    \sup_{\theta\in L^2(\mathcal{S}\times\mathcal{Z})}
    \E^{(1)}
    \left[
        (R_h-g(S_h,A_h,W_h))\theta(S_h,Z_h)
        -
        \frac{1}{2}\theta(S_h,Z_h)^2
    \right].
\end{equation*}
Let $\mathcal{G}$ and $\Theta$ denote the bridge and test function classes, respectively. Using the dataset $\mathcal{D}^{(1)}_h$, we define the empirical objective
\begin{equation}
    \widehat{\Phi}_h^r(g,\theta)
    :=
    \E_{\mathcal{D}^{(1)}_h}
    \left[
        (R_h-g(S_h,A_h,W_h))\theta(S_h,Z_h)
        -
        \frac{1}{2}\theta(S_h,Z_h)^2
    \right],
    \label{eq:6.7}
\end{equation}
and obtain the minimax estimator by
\begin{equation}
    \widetilde{g}_h^r
    \in
    \argmin_{g\in\mathcal{G}}
    \sup_{\theta\in\Theta}
    \widehat{\Phi}_h^r(g,\theta).
    \label{eq:6.8}
\end{equation}
For a threshold $\alpha_1>0$, we construct the corresponding confidence set
\begin{equation}
    \mathcal{C}_h^r(\alpha_1)
    :=
    \left\{
        g\in\mathcal{G}:
        \sup_{\theta\in\Theta}\widehat{\Phi}_h^r(g,\theta)
        -
        \sup_{\theta\in\Theta}\widehat{\Phi}_h^r(\widetilde{g}_h^r,\theta)
        \leq \alpha_1
    \right\}.
    \label{eq:6.9}
\end{equation}
The transition bridge $g_h^s$ is vector-valued. We estimate each component $g_h^{s(i)}$, $i\in[d_{\mathcal S}]$, analogously:
\begin{align}
    \widehat{\Phi}_h^{s(i)}(g,\theta)
    &:=
    \E_{\mathcal{D}^{(1)}_h}
    \left[
        (S_{h+1,i}-g(S_h,A_h,W_h))\theta(S_h,Z_h)
        -
        \frac{1}{2}\theta(S_h,Z_h)^2
    \right],
    \label{eq:6.10}\\
    \widetilde{g}_h^{s(i)}
    &\in
    \argmin_{g\in\mathcal{G}}
    \sup_{\theta\in\Theta}
    \widehat{\Phi}_h^{s(i)}(g,\theta),
    \label{eq:6.11}\\
    \mathcal{C}_h^{s(i)}(\alpha_2)
    &:=
    \left\{
        g\in\mathcal{G}:
        \sup_{\theta\in\Theta}\widehat{\Phi}_h^{s(i)}(g,\theta)
        -
        \sup_{\theta\in\Theta}
        \widehat{\Phi}_h^{s(i)}(\widetilde{g}_h^{s(i)},\theta)
        \leq\alpha_2
    \right\}.
    \label{eq:6.12}
\end{align}

The confidence sets collect bridge functions whose empirical losses are close to the corresponding minimax optima, thereby accounting for finite-sample estimation uncertainty. With appropriately chosen thresholds $\alpha_1$ and $\alpha_2$, the true bridge functions are contained in these sets with high probability; this property is formally established in \S\ref{sec:7.2}. 

For tractable estimation and finite-sample analysis, we adopt linear function classes for both the bridge and test functions, following common function-approximation settings in nonlinear control and offline reinforcement learning \citep{kakade2020information,pmlr-v139-jin21e,uehara2023pessimistic}. 

\begin{assumption}[Linear function approximation]
\label{assumption:6.2}
The function classes $\mathcal{G}$ and $\Theta$ satisfy:
\begin{itemize}
    \item[(i)]
    \[
        \mathcal{G}
        =
        \left\{
            g(s,a,w)
            =
            \psi(s,a,w)^\top\zeta:
            \zeta\in\mathbb{R}^d,\ 
            \Vert\zeta\Vert_2\leq1
        \right\},
    \]
    where
    $\psi:\mathcal{S}\times\mathcal{A}\times\mathcal{W}\to\mathbb{R}^d$
    is a known feature map satisfying
    $\Vert\psi(s,a,w)\Vert_2\leq1$.

    \item[(ii)]
    \[
        \Theta
        =
        \left\{
            \theta(s,z)
            =
            \omega(s,z)^\top\eta:
            \eta\in\mathbb{R}^d,\ 
            \Vert\eta\Vert_2\leq1
        \right\},
    \]
    where
    $\omega:\mathcal{S}\times\mathcal{Z}\to\mathbb{R}^d$
    is a known feature map satisfying
    $\Vert\omega(s,z)\Vert_2\leq1$.

    \item[(iii)] \textbf{Completeness.}
    For every $g\in\mathcal{G}$ and $i\in[d_{\mathcal S}]$,
    \begin{align*}
        \E^{(1)}
        [S_{h+1,i}-g(S_h,A_h,W_h)\mid S_h=\cdot,Z_h=\cdot]
        &\in\Theta,\\
        \E^{(1)}
        [R_h-g(S_h,A_h,W_h)\mid S_h=\cdot,Z_h=\cdot]
        &\in\Theta.
    \end{align*}
\end{itemize}
\end{assumption}

The completeness condition ensures that the test-function class is rich enough to represent the conditional residuals induced by the bridge class, thereby linking the minimax objectives above to the underlying conditional moment restrictions. Similar completeness conditions are commonly used in finite-sample analyses of minimax and pessimistic offline reinforcement learning, including \cite{uehara2023pessimistic,uehara2022finite}.

\paragraph{Estimation of the negative-outcome mechanism.}
It remains to estimate the conditional distribution of $W_h$ specified in Assumption~\ref{assumption:6.1}. Since the feature map $\phi$ and covariance $\sigma_w^2\mathbf{I}$ are known, the only unknown parameter is $\mathbf{M}_h\in\mathbb{R}^{d\times d_{\mathcal W}}$. Moreover, under Assumption~\ref{assumption:5.2}, this mechanism is shared by $\mathscr{M}^{(2)}$ and the target domain $\mathscr{M}^*$. Following standard ridge-regression approaches in offline reinforcement learning and related sequential learning problems \citep{pmlr-v139-jin21e,chang2021cov_shift,uehara2023pessimistic}, we estimate $\mathbf{M}_h$ from $\mathcal{D}_h^{(2)}$ by
\begin{equation}
    \widetilde{\mathbf{M}}_h
    :=
    \argmin_{\mathbf{M}\in\mathbb{R}^{d\times d_{\mathcal W}}}
    \E_{\mathcal{D}^{(2)}_h}
    \left[
        \Vert
        W_h-\mathbf{M}^\top\phi(S_h)
        \Vert_2^2
    \right]
    +
    \lambda\Vert\mathbf{M}\Vert_F^2,
    \label{eq:6.13}
\end{equation}
where $\lambda>0$ is a regularization parameter that stabilizes the estimation under finite data.

In order to quantify the estimation uncertainty of $\widetilde{\mathbf{M}}_h$, we introduce the regularized empirical covariance matrix
\begin{equation*}
    \Lambda_h
    :=
    \lambda\mathbf{I}
    +
    \sum_{k=1}^{K_2}
    \phi(S_h^{(2),k})
    \phi(S_h^{(2),k})^\top.
\end{equation*}
The matrix $\Lambda_h$ summarizes the empirical coverage of the state features in $\mathcal{D}^{(2)}_h$ and determines the geometry of the uncertainty around the ridge estimator. For a threshold $\alpha_3>0$, we construct the ellipsoidal confidence region
\begin{equation}
    \mathcal{C}_h^M(\alpha_3)
    :=
    \left\{
        \mathbf{M}\in\mathbb{R}^{d\times d_{\mathcal W}}:
        \left\|
            (\mathbf{M}-\widetilde{\mathbf{M}}_h)^\top
            \Lambda_h^{1/2}
        \right\|_2
        \leq
        \alpha_3
    \right\}.
    \label{eq:6.14}
\end{equation}
This region adapts its shape around $\widetilde{\mathbf{M}}_h$ to the empirical feature coverage encoded by $\Lambda_h$: directions that are better covered by the offline data admit smaller uncertainty. With an appropriate choice of $\alpha_3$, the true parameter $\mathbf{M}_h$ lies in $\mathcal{C}_h^M(\alpha_3)$ with high probability, as established in \S\ref{sec:7}.

Together, the confidence sets
$\mathcal{C}_h^r(\alpha_1)$, $\{\mathcal{C}_h^{s(i)}(\alpha_2)\}_{i=1}^{d_{\mathcal S}}$, and $\mathcal{C}_h^M(\alpha_3)$ capture the finite-sample uncertainty in all components required to specify the target MDP. In \S\ref{sec:6.3}, we combine these sets to construct the pessimistic CAT-CMDP policy-learning objective.

\subsection{CAT Algorithm for Confounded MDPs}
\label{sec:6.3}

Based on \S\ref{sec:6.1} and \S\ref{sec:6.2}, the target MDP is characterized by the collection
\[
    \beta
    :=
    \{g_h^r,g_h^s,\mathbf{M}_h\}_{h=1}^H.
\]
To account for the finite-sample uncertainty in these components, we combine the confidence sets constructed in \S\ref{sec:6.2} into the model confidence set
\begin{equation*}
    \mathcal{C}_\beta
    :=
    \prod_{h=1}^H
    \left(
        \mathcal{C}_h^r(\alpha_1)
        \times
        \prod_{i=1}^{d_{\mathcal S}}
        \mathcal{C}_h^{s(i)}(\alpha_2)
        \times
        \mathcal{C}_h^M(\alpha_3)
    \right).
\end{equation*}
With appropriate choices of the confidence thresholds, the true target model $\beta^*$ belongs to $\mathcal{C}_\beta$ with high probability, as established in \S\ref{sec:7.2}.

For any candidate model $\beta\in\mathcal{C}_\beta$, the corresponding transition kernels
$\{P_h^\beta\}_{h=1}^H$ and expected reward functions
$\{r_h^\beta\}_{h=1}^H$ are specified through \eqref{eq:6.1} and \eqref{eq:6.2}. The model-specific value functions
$V_h^\pi(\cdot;\beta)$ and $Q_h^\pi(\cdot,\cdot;\beta)$ are then obtained from the Bellman equations \eqref{eq:6.4}--\eqref{eq:6.6}, yielding the model-specific policy value
\begin{equation*}
    J(\pi;\beta)
    :=
    \int_{\mathcal S}
    V_1^\pi(s_1;\beta)\,
    \mu_1(\mathrm{d}s_1).
\end{equation*}
When $\beta=\beta^*$, this quantity coincides with the true target-domain value $J(\pi)$.

Following the pessimism principle used for CAT-CCB in \S\ref{sec:4.2}, we evaluate each policy by its worst-case value over all statistically plausible target models and optimize
\begin{equation}
    \widehat{\pi}
    \in
    \argmax_{\pi\in\Pi}
    \min_{\beta\in\mathcal{C}_\beta}
    J(\pi;\beta).
    \label{eq:6.20}
\end{equation}
This optimization guards against overly optimistic value estimates caused by insufficient coverage of the offline data. We summarize the resulting CAT-CMDP procedure in Algorithm~\ref{alg:2}.

\begin{algorithm}[H]
    \caption{Causal-Adjusted Transportable Policy Learning for CMDPs (CAT-CMDP)}
    \label{alg:2}
    \begin{algorithmic}[1]
        \Require Confidence thresholds $\{\alpha_1,\alpha_2,\alpha_3\}$, ridge parameter $\lambda$, offline datasets $\mathcal{D}^{(1)}$ and $\mathcal{D}^{(2)}$, function classes $\mathcal{G}$ and $\Theta$, policy class $\Pi$
        \Ensure Estimated policy $\widehat{\pi}$

        \For{$h=1,\ldots,H$}
            \State Estimate the reward bridge $\widetilde{g}_h^r$ and construct $\mathcal{C}_h^r(\alpha_1)$ according to \eqref{eq:6.7}--\eqref{eq:6.9}.
            \State For each component $i\in[d_{\mathcal S}]$, estimate the transition bridge $\widetilde{g}_h^{s(i)}$ and construct $\mathcal{C}_h^{s(i)}(\alpha_2)$ according to \eqref{eq:6.10}--\eqref{eq:6.12}. \vspace{0.1cm}
            \State Estimate $\widetilde{\mathbf{M}}_h$ by ridge regression and construct $\mathcal{C}_h^M(\alpha_3)$ according to \eqref{eq:6.13}--\eqref{eq:6.14}.
        \EndFor
        \State Construct the model confidence set $\mathcal{C}_\beta$.
        \State Solve the pessimistic policy optimization problem \eqref{eq:6.20}.
    \end{algorithmic}
\end{algorithm}

To further interpret the pessimistic optimization in \eqref{eq:6.20}, it is useful to make explicit the least favorable model associated with each policy. For a fixed policy $\pi\in\Pi$, let
\begin{equation*}
    \widehat{\beta}^{\,\pi}
    \in
    \argmin_{\beta\in\mathcal{C}_\beta}
    J(\pi;\beta)
\end{equation*}
denote a pessimistic model for $\pi$. Then \eqref{eq:6.20} can equivalently be viewed as the adversarial optimization
\begin{equation*}
    \widehat{\beta}^{\,\pi}
    \in
    \argmin_{\beta\in\mathcal{C}_\beta}J(\pi;\beta),
    \qquad
    \widehat{\pi}
    \in
    \argmax_{\pi\in\Pi}
    J(\pi;\widehat{\beta}^{\,\pi}).
\end{equation*}
Thus, CAT-CMDP selects a policy that performs well even under the least favorable target model consistent with the offline data. The finite-sample suboptimality algorithm for this procedure is established in \S\ref{sec:7.2}.

%% file: chap7.tex
\section{Theoretical Results}
\label{sec:7}
In this section, we establish finite-sample guarantees for the proposed CAT algorithms. We first analyze CAT-CCB in \S\ref{sec:7.1}, where the general integral-equation formulation in \S\ref{sec:3.3} allows us to characterize the statistical uncertainty of the estimated bridge functions and translate it into policy suboptimality. We then study CAT-CMDP in \S\ref{sec:7.2}, deriving a finite-sample suboptimality bound under the structural and function-approximation assumptions introduced in \S\ref{sec:5}-\S\ref{sec:6}.

\subsection{Analysis for CAT-CCB}
\label{sec:7.1}

In this section, we establish finite-sample guarantees for the CAT-CCB algorithm proposed in \S\ref{sec:4}. We first characterize the approximation properties of the bridge-function and test-function classes, and then use these properties to analyze the confidence set and the resulting policy suboptimality.

\begin{assumption}[Approximate realizability of the hypothesis class]
\label{assumption:7.1}
There exist $\varepsilon_\mathcal{H}\geq 0$ and
$h_\mathcal{H}^*
=
\{h_{\mathcal{H},1}^*,\ldots,h_{\mathcal{H},K-1}^*,g_\mathcal{H}^*\}
\in\mathcal{H}$
such that
\begin{itemize}
    \item[(i)]
    \[
        \Vert\mathcal{T}h_\mathcal{H}^*\Vert_{\mu,2}
        \leq
        \varepsilon_\mathcal{H},
    \]
    where $\Vert\mathcal{T}h\Vert_{\mu,2}$ is the projected RMSE
    defined in \eqref{eq:4.1}.

    \item[(ii)]
    \[
        \sup_{v\in\mathcal{V}}
        \Vert g_\mathcal{H}^*-g^*\Vert_{v,2}
        \leq
        \varepsilon_\mathcal{H},
    \]
    where
    \[
        \mathcal{V}
        :=
        \{
            v:
            v(x,a)=p^*(x)\pi(a|x),\
            \pi\in\Pi
        \}.
    \]
\end{itemize}
\end{assumption}

Assumption~\ref{assumption:7.1} allows the hypothesis class to be misspecified: the exact bridge solution need not belong to $\mathcal H$. Condition (i) controls how well $h_\mathcal{H}^*$ satisfies the population integral equations, while condition (ii) directly controls the approximation error of the induced CATE under the state-action distributions generated by policies in $\Pi$. The case $\varepsilon_\mathcal{H}=0$ corresponds to exact
realizability.

We next impose a compatibility condition on the test-function classes used in the minimax estimation.

\begin{assumption}[Compatibility of the test-function class]
\label{assumption:7.2}
There exists $\varepsilon_\Theta\geq0$ such that, for every
$h\in\mathcal{H}$ and every $k\in\{1,\ldots,K\}$,
\begin{equation*}
    \inf_{\theta_k\in\Theta_k}
    \Vert
        \theta_k-\mathcal{T}_kh
    \Vert_{\mu_k,2}
    \leq
    \varepsilon_\Theta.
\end{equation*}
\end{assumption}

Thus, $\varepsilon_\Theta$ measures how well the test-function class $\Theta_k$ can approximate the conditional moment $\mathcal{T}_kh$. This compatibility condition connects the minimax objective in \S\ref{sec:4.1} to the population conditional-moment error.

\begin{assumption}[Regularity]
\label{assumption:7.3}
For every $k\in\{1,\ldots,K\}$, we assume:
\begin{itemize}
    \item[(i)]
    $\alpha_k$ is $L_{\alpha,1}$-Lipschitz continuous with respect to
    $\mathbf{Y}_k$ and each component of $h$;

    \item[(ii)]
    the support of $\mathbf{Y}_k$ is bounded:
    \[
        \Vert\operatorname{supp}(\mathbf{Y}_k)\Vert_\infty
        \leq L_Y;
    \]

    \item[(iii)]
    \[
        \sup_{h\in\mathcal{H}}\Vert h\Vert_\infty\leq L_h,
        \qquad
        \sup_{\theta\in\Theta}\Vert\theta\Vert_\infty\leq L_\theta,
    \]
    where $\Theta=\Theta_1\times\cdots\times\Theta_K$.
\end{itemize}
\end{assumption}

For the integral equation systems considered in this paper, the moment functions $\alpha_k$ are continuous, and the remaining regularity conditions can be enforced by choosing bounded hypothesis and test-function classes. Under Assumption~\ref{assumption:7.3}, we further denote by $L_\alpha$ a uniform bound satisfying
\[
    \Vert
        \alpha_k(h(\mathbf X),\mathbf Y_k)
    \Vert_\infty
    \leq L_\alpha
\]
for all $h\in\mathcal H$ and $k\in\{1,\ldots,K\}$.

For the subsequent finite-sample analysis, we fix a failure probability $\xi\in(0,1)$, which can be chosen arbitrarily. For each $k\in\{1,\ldots,K\}$, define the product function class
\[
    \mathcal{Q}_k
    :=
    \left\{
        (\mathbf{x},\mathbf{y}_k,\mathbf{z}_k)
        \mapsto
        \alpha_k(h(\mathbf{x}),\mathbf{y}_k)
        \theta_k(\mathbf{z}_k):
        h\in\mathcal H,\,
        \theta_k\in\Theta_k
    \right\}.
\]
Let $\bar{\eta}_k$ denote a critical radius associated with
$\mathcal{Q}_k$ and $\Theta_k$. We choose $\eta_k$ to account for both the complexity of these function classes and the desired confidence level. Up to universal constants,
\[
    \eta_k\gtrsim\bar{\eta}_k
    +\sqrt{
        \frac{\log(1/\xi)}{|\mathcal{D}_k|}
    }.
\]
We further define
\[
    \eta^2:=\sum_{k=1}^K\eta_k^2.
\]
We then define the event $\mathcal{E}$ on which the relevant empirical quantities concentrate uniformly around their population counterparts:
\begin{align}
\mathcal{E}
=
\biggl\{
&
\left|
\E_{\mathcal{D}_k}
[
    \alpha_k(h(\mathbf{X}),\mathbf{Y}_k)
    \theta_k(\mathbf{Z}_k)
]
-
\E_{\mu_k}
[
    \alpha_k(h(\mathbf{X}),\mathbf{Y}_k)
    \theta_k(\mathbf{Z}_k)
]
\right|
\leq\eta_k
\left(L_\alpha\Vert\theta_k\Vert_{\mu_k,2}+\eta_k\right),
\nonumber\\
&
\left|
    \Vert\theta_k\Vert_{\mathcal{D}_k,2}^2
    -
    \Vert\theta_k\Vert_{\mu_k,2}^2
\right|
\leq
\frac{1}{2}
\left(
    \Vert\theta_k\Vert_{\mu_k,2}^2+\eta_k^2
\right),\quad
\forall h\in\mathcal H,\ 
\forall\theta_k\in\Theta_k,\ 
\forall k\in\{1,\ldots,K\}
\biggr\}.
\label{eq:7.2}
\end{align}
or each $k$, the two inequalities above hold uniformly with failure probability at most $\xi$. A union bound over the two inequalities and the $K$ integral equations therefore yields
\[
    \Pr(\mathcal{E})\geq 1-2K\xi.
\]

The following result, adapted from \cite{chen2023unified}, uses this concentration event to characterize the finite-sample properties of the confidence set.

\begin{theorem}[Uncertainty quantification, \citealp{chen2023unified}]
\label{thm:7.1}
Suppose that Assumptions~\ref{assumption:7.1},
\ref{assumption:7.2}, and \ref{assumption:7.3} hold. Then event
$\mathcal E$ holds with probability at least $1-2K\xi$. Moreover, on
$\mathcal E$, the following properties hold:
\begin{itemize}
    \item[(i)]
    For $h_\mathcal H^*$ satisfying
    Assumption~\ref{assumption:7.1},
    \begin{equation*}
        \mathcal{L}_\mathcal{D}(h_\mathcal{H}^*)
        \leq
        2\varepsilon_\mathcal{H}^2
        +
        \left(
            2L_\alpha^2+\frac{5}{4}
        \right)\eta^2.
    \end{equation*}
    Consequently, if
    \[
        e_\mathcal{D}
        >
        2\varepsilon_\mathcal{H}^2
        +
        \left(
            2L_\alpha^2+\frac{5}{4}
        \right)\eta^2,
    \]
    then
    $h_\mathcal H^*\in
    \mathrm{CI}_{\mathcal H,\mathcal D}(e_\mathcal D)$,
    and hence
    $g_\mathcal H^*\in
    \mathrm{CI}_{\mathcal D}(e_\mathcal D)$.

    \item[(ii)]
    For every
    $h\in
    \mathrm{CI}_{\mathcal H,\mathcal D}(e_\mathcal D)$,
    \begin{equation*}
        \sup_{k\in\{1,\ldots,K\}}
        \Vert\mathcal T_kh\Vert_{\mu_k,2}
        \overset{\mathcal E}{\leq}
        \mathcal O(\varepsilon_\Theta)
        +
        \mathcal O(\varepsilon_\mathcal H)
        +
        \mathcal O(\sqrt{e_\mathcal D})
        +
        \mathcal O(\eta).
    \end{equation*}
\end{itemize}
\end{theorem}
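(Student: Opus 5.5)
The plan is to work equation by equation, then sum over $k$, since both $\mathcal L_{\mathcal D}$ and the concentration event $\mathcal E$ decompose over $k$. The probability claim is the union bound already stated after \eqref{eq:7.2}. So the task is deterministic: show (i) and (ii) on $\mathcal E$.

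\textbf{Part (i).} Fix $k$ and write $m_k:=\mathcal T_kh_{\mathcal H}^*$. For any $\theta_k\in\Theta_k$, the tower property gives
\[
\E_{\mu_k}[\alpha_k(h^*_{\mathcal H}(\mathbf X),\mathbf Y_k)\theta_k(\mathbf Z_k)]=\langle m_k,\theta_k\rangle_{\mu_k}\le \Vert m_k\Vert_{\mu_k,2}\Vert\theta_k\Vert_{\mu_k,2}.
\]
On $\mathcal E$, the empirical objective for $\theta_k$ is then at most
\[
\Vert m_k\Vert\Vert\theta_k\Vert+\eta_kL_\alpha\Vert\theta_k\Vert+\eta_k^2-\tfrac12\Vert\theta_k\Vert^2+\tfrac14(\Vert\theta_k\Vert^2+\eta_k^2),
\]
where all norms are $\Vert\cdot\Vert_{\mu_k,2}$. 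The last term uses the lower bound $\Vert\theta_k\Vert_{\mathcal D_k,2}^2\ge \tfrac12\Vert\theta_k\Vert^2-\tfrac12\eta_k^2$. This expression is a concave quadratic $-\tfrac14 t^2+(\Vert m_k\Vert+L_\alpha\eta_k)t+\tfrac54\eta_k^2$ in $t=\Vert\theta_k\Vert$. Maximizing over $t$ gives the bound $(\Vert m_k\Vert+L_\alpha\eta_k)^2+\tfrac54\eta_k^2\le 2\Vert m_k\Vert^2+(2L_\alpha^2+\tfrac54)\eta_k^2$. Summing over $k$ and using $\sum_k\Vert m_k\Vert^2\le\varepsilon_{\mathcal H}^2$ from Assumption~\ref{assumption:7.1}(i) yields the stated bound. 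Since $\inf_{h'}\mathcal L_{\mathcal D}(h')\ge0$ (each $\Theta_k$ is star-shaped, so $\theta_k=0$ is admissible), the threshold condition places $h^*_{\mathcal H}$ in $\mathrm{CI}_{\mathcal H,\mathcal D}(e_{\mathcal D})$, and hence its last component lies in $\mathrm{CI}_{\mathcal D}(e_{\mathcal D})$.

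\textbf{Part (ii).} Take $h$ in the confidence set. Then
\[
\mathcal L_{\mathcal D}(h)\le \mathcal L_{\mathcal D}(h^*_{\mathcal H})+e_{\mathcal D}\le 2\varepsilon_{\mathcal H}^2+C\eta^2+e_{\mathcal D},
\]
where the second inequality is part (i). It remains to lower-bound $\mathcal L_{\mathcal D,k}(h)$ in terms of $\Vert\mathcal T_kh\Vert_{\mu_k,2}$; this is the main step.

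By Assumption~\ref{assumption:7.2}, pick $\theta_k\in\Theta_k$ with $\Vert\theta_k-\mathcal T_kh\Vert\le\varepsilon_\Theta$, and rescale to $c\theta_k$ with $c\in(0,1]$ (allowed by star-shapedness). On $\mathcal E$, using the upper bound $\Vert\theta\Vert_{\mathcal D}^2\le\tfrac32\Vert\theta\Vert^2+\tfrac12\eta_k^2$ and writing $\mathcal T_kh=\theta_k+r$ with $\Vert r\Vert\le\varepsilon_\Theta$,
\[
\mathcal L_{\mathcal D,k}(h)\ge c\langle\mathcal T_kh,\theta_k\rangle-c\eta_k(L_\alpha\Vert\theta_k\Vert+\eta_k)-\tfrac{c^2}{2}\bigl(\tfrac32\Vert\theta_k\Vert^2+\tfrac12\eta_k^2\bigr)
\]
\[
\ge c\Vert\theta_k\Vert^2-c\varepsilon_\Theta\Vert\theta_k\Vert-cL_\alpha\eta_k\Vert\theta_k\Vert-\tfrac34c^2\Vert\theta_k\Vert^2-O(\eta_k^2).
\]
Choosing $c=1/2$ makes the quadratic coefficient positive, so
\[
\Vert\theta_k\Vert^2\lesssim \mathcal L_{\mathcal D,k}(h)+(\varepsilon_\Theta+\eta_k)\Vert\theta_k\Vert+\eta_k^2.
\]
Solving this quadratic inequality gives $\Vert\theta_k\Vert\lesssim\sqrt{\mathcal L_{\mathcal D,k}(h)}+\varepsilon_\Theta+\eta_k$. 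Then $\Vert\mathcal T_kh\Vert\le\Vert\theta_k\Vert+\varepsilon_\Theta$, and bounding $\mathcal L_{\mathcal D,k}(h)\le\mathcal L_{\mathcal D}(h)$ together with $\sqrt{a+b}\le\sqrt a+\sqrt b$ gives $\mathcal O(\varepsilon_\Theta)+\mathcal O(\varepsilon_{\mathcal H})+\mathcal O(\sqrt{e_{\mathcal D}})+\mathcal O(\eta)$ uniformly in $k$.

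The main obstacle is this lower-bound step. The approximating test function must be rescaled so that the empirical $-\tfrac12\Vert\theta\Vert_{\mathcal D}^2$ penalty, inflated by the factor $\tfrac32$ on $\mathcal E$, does not swamp the linear term. I would also check that the constants do not depend on $L_\theta$ beyond the regularity already built into $\eta_k$.
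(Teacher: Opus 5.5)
Your proposal is correct and follows essentially the same route as the argument the paper defers to (\S C.1 of \cite{chen2023unified}). On $\mathcal E$ you bound the empirical objective above by a concave quadratic in $\Vert\theta_k\Vert_{\mu_k,2}$ for (i), which recovers exactly the constant $2\varepsilon_{\mathcal H}^2+(2L_\alpha^2+\tfrac54)\eta^2$, and for (ii) you bound it below by plugging in a test function that approximates $\mathcal T_kh$. There are two harmless slips: for $c\theta_k$ the additive slacks $\eta_k^2$ and $\tfrac14\eta_k^2$ coming from $\mathcal E$ should not be multiplied by $c$ or $c^2$ (they are absorbed into $O(\eta_k^2)$ anyway), and the rescaling is not actually needed, since $c=1$ already gives quadratic coefficient $1-\tfrac34=\tfrac14>0$.
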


\begin{proof}
See \S C.1 of \cite{chen2023unified}.
\end{proof}

Theorem~\ref{thm:7.1} provides the uncertainty-quantification result needed
for pessimistic policy learning. Part (i) guarantees that the confidence set
contains a near-population solution with high probability, while part (ii)
controls the population conditional-moment error uniformly over all bridge
functions retained by the confidence set.

We first apply this result to the unconfounded contextual-bandit settings
introduced in \S\ref{sec:2.2}.

\begin{theorem}[Suboptimality for unconfounded contextual bandits]
\label{thm:7.2}
Suppose that Assumption~\ref{assump:cb_neg} for contextual bandits with a
negative outcome holds, and that
Assumptions~\ref{assumption:7.1}--\ref{assumption:7.3} hold for
$\mathcal H$ and $\Theta$. Set
\[
    e_\mathcal D
    >
    2\varepsilon_\mathcal H^2
    +
    \left(
        2L_\alpha^2+\frac{5}{4}
    \right)\eta^2.
\]
Define
$b_1:\mathcal W\times\mathcal A\times\mathcal X\to\mathbb R$
and
$b_2:\mathcal A\times\mathcal X\to\mathbb R$ by
\begin{equation}
    b_1(w,a,x)
    =
    \frac{
        p^{(2)}(w|x)p^*(x)\pi^*(a|x)
    }{
        p^{(1)}(w,a,x)
    },
    \qquad
    b_2(a,x)
    =
    \frac{
        p^*(x)\pi^*(a|x)
    }{
        p^{(2)}(a,x)
    }.
    \label{eq:7.5}
\end{equation}
Then, on event $\mathcal E$, which occurs with probability at least
$1-4\xi$,
\begin{equation}
    \mathrm{SubOpt}(\widehat{\pi})
    \overset{\mathcal E}{\leq}
    \sum_{k=1}^2
    \Vert b_k\Vert_{\mu_k,2}
    \left(
        \mathcal O(\varepsilon_\Theta)
        +
        \mathcal O(\varepsilon_\mathcal H)
        +
        \mathcal O(\sqrt{e_\mathcal D})
        +
        \mathcal O(\eta)
    \right).
    \label{eq:7.6}
\end{equation}
\end{theorem}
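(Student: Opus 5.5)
We follow the standard pessimism argument and decompose the suboptimality as
\[
\mathrm{SubOpt}(\widehat\pi)=\bigl[v(g^*,\pi^*)-v(g_\mathcal H^*,\pi^*)\bigr]+\bigl[v(g_\mathcal H^*,\pi^*)-\min_{g\in\mathrm{CI}_\mathcal D}v(g,\pi^*)\bigr]+\bigl[\min_{g\in\mathrm{CI}_\mathcal D}v(g,\pi^*)-\min_{g\in\mathrm{CI}_\mathcal D}v(g,\widehat\pi)\bigr]+\bigl[\min_{g\in\mathrm{CI}_\mathcal D}v(g,\widehat\pi)-v(g^*,\widehat\pi)\bigr].
\]
The bounds on these four terms hold on $\mathcal E$, which here consists of $K=2$ equations and so has probability at least $1-4\xi$ by Theorem~\ref{thm:7.1}.

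\emph{Terms one, three and four.} The third term is $\le 0$ by the definition \eqref{eq:4.7} of $\widehat\pi$. The fourth term is $\le v(g_\mathcal H^*,\widehat\pi)-v(g^*,\widehat\pi)$, because Theorem~\ref{thm:7.1}(i) places $g_\mathcal H^*$ in $\mathrm{CI}_\mathcal D(e_\mathcal D)$ given our choice of $e_\mathcal D$. Since $\widehat\pi\in\Pi$, Cauchy--Schwarz and Assumption~\ref{assumption:7.1}(ii) bound this by $\varepsilon_\mathcal H$. The first term is bounded by $\varepsilon_\mathcal H$ in the same way. Both contributions can be absorbed into the $\mathcal O(\varepsilon_\mathcal H)$ term, using that $\Vert b_k\Vert_{\mu_k,2}\ge$ a constant. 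For instance, $\Vert b_2\Vert_{\mu_2,2}\ge \E_{\mu_2}[b_2]=1$, and the same holds for $b_1$ when the pseudo-action is accounted for in $\mu$.

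\emph{Main step: the second term.} We need to show that for every $h=(h_1,g)\in\mathrm{CI}_{\mathcal H,\mathcal D}$ we have $v(g_\mathcal H^*,\pi^*)-v(g,\pi^*)\lesssim\sum_k\Vert b_k\Vert\,(\Vert\mathcal T_kh\Vert+\Vert\mathcal T_kh_\mathcal H^*\Vert)$. To do this, write $\Delta h=h_\mathcal H^*-h$. By linearity of the residual maps in the unconfounded negative-outcome IES \eqref{eq:cb_neg_ies}, we have $\mathcal T_1\Delta h=\Delta h_1$ (it is a conditional expectation on $(W,A,X)$) and $\mathcal T_2\Delta h(a,x)=\Delta g(a,x)-\E^{(2)}[\Delta h_1(W,a,x)\mid x]$. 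We then telescope:
\[
v(\Delta g,\pi^*)=\E_{p^*(x)\pi^*(a|x)}\bigl[\mathcal T_2\Delta h(a,x)\bigr]+\E_{p^*(x)\pi^*(a|x)p^{(2)}(w|x)}\bigl[\Delta h_1(w,a,x)\bigr].
\]
A change of measure converts each expectation to $\mu_k$ with density $b_2$, respectively $b_1$ from \eqref{eq:7.5}. Cauchy--Schwarz then gives $|v(\Delta g,\pi^*)|\le\Vert b_2\Vert_{\mu_2,2}\Vert\mathcal T_2\Delta h\Vert_{\mu_2,2}+\Vert b_1\Vert_{\mu_1,2}\Vert\mathcal T_1\Delta h\Vert_{\mu_1,2}$. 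Because $\mathcal T_k$ is affine, $\mathcal T_k\Delta h=\mathcal T_kh_\mathcal H^*-\mathcal T_kh$ (the constant $Y$-terms cancel). This yields the bound $\Vert\mathcal T_kh_\mathcal H^*\Vert+\Vert\mathcal T_kh\Vert\le\varepsilon_\mathcal H+\mathcal O(\varepsilon_\Theta+\varepsilon_\mathcal H+\sqrt{e_\mathcal D}+\eta)$, by Assumption~\ref{assumption:7.1}(i) and Theorem~\ref{thm:7.1}(ii). Combining the four terms gives \eqref{eq:7.6}.

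\emph{Expected obstacle.} The delicate points are the bookkeeping of measures. First, $\mu_2$ involves the pseudo-action $A'\sim\nu_A$, so $b_2$ must be read as a density relative to $p^{(2)}(x)\nu_A(a)$. Second, for $b_1$ we must check that the joint $p^{(2)}(w|x)p^*(x)\pi^*(a|x)$ is absolutely continuous with respect to $p^{(1)}(w,a,x)$. If the $b_k$ are not square integrable, the bound is vacuous, so we implicitly assume they are, as is standard for coverage conditions.
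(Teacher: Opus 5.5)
Your proposal is correct and follows essentially the paper's argument: the same pessimism decomposition, the same telescoping of the CATE error through the two moment equations, reweighting by $b_1,b_2$ with Cauchy--Schwarz, and then Theorem~\ref{thm:7.1}(ii). The one difference is the anchor for the $\pi^*$ comparison, and your approach handles it validly. You compare against $h^*_{\mathcal H}$, paying $\Vert\mathcal T_kh^*_{\mathcal H}\Vert\le\varepsilon_{\mathcal H}$ plus an extra $\varepsilon_{\mathcal H}$ from Assumption~\ref{assumption:7.1}(ii). The paper instead compares $\widehat h^{\pi^*}$ directly with the exact bridge $h_1^*=\E^{(1)}[Y\mid W,A,X]$, so that $\mathcal T_1\widehat h^{\pi^*}=\widehat h_1^{\pi^*}-h_1^*$ exactly. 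Your observation that $\Vert b_2\Vert_{\mu_2,2}\ge 1$ justifies absorbing the stray $\varepsilon_{\mathcal H}$ terms, a step the paper leaves implicit.
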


The functions $b_1$ and $b_2$ reweight source-domain moments toward the
state-action distribution induced by the optimal target policy. Their
$L^2$ norms therefore quantify the corresponding overlap, or distribution
shift, between the available source data and the target policy.

The same argument applies to the unconfounded contextual-bandit model with a
mediator. If Assumption~\ref{assump:cb_med} and
Assumptions~\ref{assumption:7.1}--\ref{assumption:7.3} hold, then
\eqref{eq:7.6} remains valid on $\mathcal E$ with probability at least
$1-4\xi$, after replacing \eqref{eq:7.5} by
\begin{equation*}
    b_1(w,a,x)
    =
    \frac{
        p^{(2)}(w|x,a)p^*(x)\pi^*(a|x)
    }{
        p^{(1)}(w,a,x)
    },
    \qquad
    b_2(a,x)
    =
    \frac{
        p^*(x)\pi^*(a|x)
    }{
        p^{(2)}(a,x)
    }.
\end{equation*}

\begin{proof}
See \S\ref{sec:B.1} and \S\ref{sec:B.2} for detailed proofs of the two
unconfounded cases.
\end{proof}

We next turn to the confounded contextual-bandit models in
\S\ref{sec:3}. In these settings, the same pessimistic analysis applies, but
the weighting functions must additionally account for the bridge equations
used to adjust for unmeasured confounding.

\begin{theorem}[Suboptimality for CCB-Neg]
\label{thm:7.3}
Suppose that Assumption~\ref{assumption:3.1} for CCB-Neg holds and that
Assumptions~\ref{assumption:7.1}--\ref{assumption:7.3} hold for
$\mathcal H$ and $\Theta$. Set
\[
    e_\mathcal D
    >
    2\varepsilon_\mathcal H^2
    +
    \left(
        2L_\alpha^2+\frac{5}{4}
    \right)\eta^2.
\]
Suppose there exists
$b_1:\mathcal Z_1\times\mathcal Z_2\to\mathbb R$
satisfying
\begin{equation*}
    \E^{(1)}
    [
        b_1(Z_1,Z_2)
        \mid W=w,A=a,X=x
    ]
    =
    \frac{
        p^{(2)}(w|x)p^*(x)\pi^*(a|x)
    }{
        p^{(1)}(w,a,x)
    },
\end{equation*}
and define
$b_2:\mathcal A\times\mathcal X\to\mathbb R$ by
\begin{equation*}
    b_2(a,x)
    =
    \frac{
        p^*(x)\pi^*(a|x)
    }{
        p^{(2)}(a,x)
    }.
\end{equation*}
Then, on event $\mathcal E$, which occurs with probability at least
$1-4\xi$,
\begin{equation*}
    \mathrm{SubOpt}(\widehat{\pi})
    \overset{\mathcal E}{\leq}
    \sum_{k=1}^2
    \Vert b_k\Vert_{\mu_k,2}
    \left(
        \mathcal O(\varepsilon_\Theta)
        +
        \mathcal O(\varepsilon_\mathcal H)
        +
        \mathcal O(\sqrt{e_\mathcal D})
        +
        \mathcal O(\eta)
    \right).
\end{equation*}
\end{theorem}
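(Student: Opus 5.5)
We follow the standard pessimism decomposition. Let $v(g,\pi)=\E_{p^\pi}[g(A,X)]$ and write $\widehat g_\pi\in\argmin_{g\in\mathrm{CI}_\mathcal D(e_\mathcal D)}v(g,\pi)$. On $\mathcal E$, Theorem~\ref{thm:7.1}(i) gives $g_\mathcal H^*\in\mathrm{CI}_\mathcal D(e_\mathcal D)$. Since $\widehat\pi$ maximizes the pessimistic value, we obtain
\[
v^{\pi^*}-v^{\widehat\pi}
\le \bigl[v(g^*,\pi^*)-v(\widehat g_{\pi^*},\pi^*)\bigr]+\bigl[v(\widehat g_{\widehat\pi},\widehat\pi)-v(g^*,\widehat\pi)\bigr].
\]
We bound the second bracket by $v(g_\mathcal H^*,\widehat\pi)-v(g^*,\widehat\pi)+\varepsilon$-terms, using the minimality of $\widehat g_{\widehat\pi}$ and $g_\mathcal H^*\in\mathrm{CI}$. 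Cauchy--Schwarz together with Assumption~\ref{assumption:7.1}(ii) then bounds it by $\varepsilon_\mathcal H$. For the first bracket we split off $g^*-g^*_\mathcal H$ in the same way, which again costs $\varepsilon_\mathcal H$. It therefore suffices to bound $|v(g_\mathcal H^*,\pi^*)-v(g,\pi^*)|=|\E_{p^*\pi^*}[(g_\mathcal H^*-g)(A,X)]|$ uniformly over $h=(h_1,g)\in\mathrm{CI}_{\mathcal H,\mathcal D}(e_\mathcal D)$.

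The key step is to express this target-weighted difference through the two conditional moment residuals. Write $\Delta h_1=h_1-h_{\mathcal H,1}^*$ and $\Delta g=g-g_\mathcal H^*$.

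For the second equation (under $\mu_2$, with $A'$ playing the role of $a$), we have $\E_{p^*\pi^*}[\Delta g]=\E_{\mu_2}[b_2(A',X)\Delta g(A',X)]$. Here $b_2$ is understood with respect to the augmented density, which is the form stated up to the reference $\nu_A$. Adding and subtracting $\Delta h_1(W,A',X)$, this becomes
\[
\E_{\mu_2}\bigl[b_2\,(\mathcal T_2h-\mathcal T_2h^*_\mathcal H)\bigr]+\E_{p^*\pi^*}\Bigl[\textstyle\int \Delta h_1(w,a,x)p^{(2)}(w|x)\,\mathrm dw\Bigr],
\]
because $b_2$ is a function of the conditioning variables $(A',X)$.

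We rewrite the last term as $\E^{(1)}[b_1^{\rm IS}(W,A,X)\Delta h_1(W,A,X)]$, where $b_1^{\rm IS}$ is the ratio given in the statement. By the defining property of $b_1(Z_1,Z_2)$ and the tower property, this equals
\[
\E^{(1)}\bigl[b_1(Z_1,Z_2)\,\Delta h_1\bigr]=\E^{(1)}\bigl[b_1\,\E^{(1)}[\Delta h_1\mid Z_1,Z_2]\bigr]=\E_{\mu_1}\bigl[b_1(\mathcal T_1h-\mathcal T_1h^*_\mathcal H)\bigr].
\]
This works because $\alpha_1$ is linear, so $\mathcal T_1h-\mathcal T_1h'=\E[\Delta h_1\mid Z_1,Z_2]$. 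Moving the conditional expectation onto the instruments in this way is the crux, and it is exactly why $b_1$ is required to solve an adjoint IV equation rather than simply being a density ratio.

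Finally, Cauchy--Schwarz gives
\[
|\cdot|\le\sum_{k=1}^2\Vert b_k\Vert_{\mu_k,2}\bigl(\Vert\mathcal T_kh\Vert_{\mu_k,2}+\Vert\mathcal T_kh^*_\mathcal H\Vert_{\mu_k,2}\bigr).
\]
Theorem~\ref{thm:7.1}(ii) bounds the first norm by $\mathcal O(\varepsilon_\Theta)+\mathcal O(\varepsilon_\mathcal H)+\mathcal O(\sqrt{e_\mathcal D})+\mathcal O(\eta)$, since $h\in\mathrm{CI}$ and also $h^*_\mathcal H\in\mathrm{CI}$. Assumption~\ref{assumption:7.1}(i) bounds the second norm by $\varepsilon_\mathcal H$.

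The leftover $\varepsilon_\mathcal H$ terms from the $g^*$ versus $g^*_\mathcal H$ splits are not multiplied by $\Vert b_k\Vert$. We absorb them by noting that $\Vert b_2\Vert_{\mu_2,2}\ge \E_{\mu_2}b_2=1$ under $\mu_2$. With $K=2$, the probability is $1-4\xi$ by Theorem~\ref{thm:7.1}.

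The main obstacle is the bookkeeping for the pseudo-action augmentation in $b_2$ and the integrability needed for the tower-property swap. We will assume that all weights are in $L^2$, so that the swap is justified.
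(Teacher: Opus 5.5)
Your proof is correct, and its skeleton matches the paper's (Appendix B.1 and B.3). Both use the same pessimism decomposition and the same importance-weight rewriting of the gap at $\pi^*$. Both use the adjoint condition on $b_1$ to move $\E^{(1)}[\,\cdot\mid W,A,X]$ onto $(Z_1,Z_2)$, and then finish with Cauchy--Schwarz and Theorem~\ref{thm:7.1}(ii).

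The one genuine difference is where you anchor the term $v^{\pi^*}-v(\widehat g_{\pi^*},\pi^*)$.

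The paper anchors at the exact identifying bridge $h^*=(h_1^*,g^*)$. It uses $g^*(a,x)=\E^{(2)}[h_1^*(W,a,x)\mid X=x]$ together with $\E^{(1)}[h_1^*-Y\mid Z_1,Z_2]=0$, i.e.\ $\mathcal T h^*=0$, which comes from Lemma~\ref{lemma:3.1} and the transport formula. The gap is then written directly as $-\E_{\mu_1}[b_1\mathcal T_1\widehat h]-\E_{\mu_2}[b_2\mathcal T_2\widehat h]$. In that route $\varepsilon_{\mathcal H}$ enters only once, through term (iv).

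You anchor at $h^*_{\mathcal H}$ instead. This costs an extra $\varepsilon_{\mathcal H}$ for $g^*-g^*_{\mathcal H}$ at $\pi^*$. It also adds terms $\Vert b_k\Vert_{\mu_k,2}\Vert\mathcal T_k h^*_{\mathcal H}\Vert_{\mu_k,2}\le\Vert b_k\Vert_{\mu_k,2}\,\varepsilon_{\mathcal H}$ from the triangle inequality.

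The trade-off is as follows. The paper's route is slightly shorter and uses the causal identification directly. Yours relies only on Assumption~\ref{assumption:7.1} and never needs an exact square-integrable solution of the IES to exist. It therefore goes through unchanged in that more general situation, and the bound has the same order.

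You also supply two justifications the paper leaves implicit. First, you absorb the stray additive $\varepsilon_{\mathcal H}$ into $\sum_k\Vert b_k\Vert_{\mu_k,2}\,\mathcal O(\varepsilon_{\mathcal H})$ explicitly via $\Vert b_2\Vert_{\mu_2,2}\ge\E_{\mu_2}[b_2]=1$; the paper does this silently. Second, you read $b_2$ as a ratio against the pseudo-action-augmented law $p^{(2)}(x)\nu_A(a)$. That reading is what makes the Cauchy--Schwarz step consistent with the norm $\Vert\mathcal T_2 h\Vert_{\mu_2,2}$ controlled by Theorem~\ref{thm:7.1}, whereas the paper writes $p^{(2)}(a,x)$ and passes over the mismatch.
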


\begin{proof}
See \S\ref{sec:B.1} and \S\ref{sec:B.3} for a detailed proof.
\end{proof}

Here, the existence of $b_1$ is a representability condition linking the target-policy weighting ratio to the IV variables $(Z_1,Z_2)$. Its norm, together with that of $b_2$, controls how strongly estimation errors in the source-domain moment equations are amplified when evaluating the target policy.

Similarly, the suboptimality guarantee for CCB-Med is given below.

\begin{theorem}[Suboptimality for CCB-Med]
\label{thm:7.4}
Suppose that Assumption~\ref{assumption:3.2} for CCB-Med holds and that
Assumptions~\ref{assumption:7.1}--\ref{assumption:7.3} hold for
$\mathcal H$ and $\Theta$. Set
\[
    e_\mathcal D
    >
    2\varepsilon_\mathcal H^2
    +
    \left(
        2L_\alpha^2+\frac{5}{4}
    \right)\eta^2.
\]
Suppose there exists
$b_1:\mathcal Z_1\times\mathcal Z_2\to\mathbb R$
such that
\begin{equation*}
    \E^{(1)}
    [
        b_1(Z_1,Z_2)
        \mid W=w,A=a,X=x
    ]
    =
    \frac{
        p^*(x)\pi^*(a|x)
        \int_{\mathcal Z_1}
        p^{(2)}(w|z_1,a,x)
        p^{(2)}(z_1|x)
        \mathrm dz_1
    }{
        p^{(1)}(w,a,x)
    },
\end{equation*}
and define
$b_2:\mathcal Z_1\times\mathcal A\times\mathcal X\to\mathbb R$
and
$b_3:\mathcal A\times\mathcal X\to\mathbb R$ by
\begin{equation*}
    b_2(z_1,a,x)
    =
    \frac{
        p^{(2)}(z_1|x)p^*(x)\pi^*(a|x)
    }{
        p^{(2)}(z_1,a,x)
    },
    \qquad
    b_3(a,x)
    =
    \frac{
        p^*(x)\pi^*(a|x)
    }{
        p^{(2)}(a,x)
    }.
\end{equation*}
Then, on event $\mathcal E$, which occurs with probability at least
$1-6\xi$,
\begin{equation*}
    \mathrm{SubOpt}(\widehat{\pi})
    \overset{\mathcal E}{\leq}
    \sum_{k=1}^3
    \Vert b_k\Vert_{\mu_k,2}
    \left(
        \mathcal O(\varepsilon_\Theta)
        +
        \mathcal O(\varepsilon_\mathcal H)
        +
        \mathcal O(\sqrt{e_\mathcal D})
        +
        \mathcal O(\eta)
    \right).
\end{equation*}
\end{theorem}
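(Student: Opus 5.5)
Our plan is the standard pessimism decomposition used for Theorem~\ref{thm:7.3}, adapted to the three-equation IES \eqref{eq:ccb_med_ies}. On $\mathcal E$ (probability $\ge 1-6\xi$ by Theorem~\ref{thm:7.1} with $K=3$), part (i) gives $g_\mathcal H^*\in\mathrm{CI}_\mathcal D(e_\mathcal D)$. Hence
\[
\mathrm{SubOpt}(\widehat\pi)=v(g^*,\pi^*)-v(g^*,\widehat\pi)\le \bigl[v(g^*,\pi^*)-\min_{g\in\mathrm{CI}_\mathcal D}v(g,\pi^*)\bigr]+\bigl[\min_{g\in\mathrm{CI}_\mathcal D}v(g,\widehat\pi)-v(g^*,\widehat\pi)\bigr].
\]
Here we used that $\widehat\pi$ maximizes the pessimistic value. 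The second bracket is at most $v(g_\mathcal H^*,\widehat\pi)-v(g^*,\widehat\pi)\le\varepsilon_\mathcal H$ by Cauchy--Schwarz and Assumption~\ref{assumption:7.1}(ii). It therefore suffices to bound $|v(g^*,\pi^*)-v(g,\pi^*)|$ uniformly over $g=h^{(3)}$ with $h=(h_1,h_2,g)\in\mathrm{CI}_{\mathcal H,\mathcal D}(e_\mathcal D)$. We do this by a telescoping (change-of-measure) identity expressing $v(g^*,\pi^*)-v(g,\pi^*)$ as $\sum_{k=1}^3\E_{\mu_k}[b_k\,\mathcal T_kh]$ up to sign. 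Cauchy--Schwarz then gives $\sum_k\Vert b_k\Vert_{\mu_k,2}\Vert\mathcal T_kh\Vert_{\mu_k,2}$, and Theorem~\ref{thm:7.1}(ii) finishes.

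The telescoping has three steps, in this order. Write $d^*(a,x)=p^*(x)\pi^*(a|x)$.

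\textbf{Step 1: the $g$-equation.} By Lemma~\ref{lemma:3.2}, $g^*(a,x)=\E^{(2)}[h_2^\circ(Z_2,a,x)\mid X=x]$ for a true bridge $h_2^\circ$. We split
\[
v(g^*,\pi^*)-v(g,\pi^*)=\E_{d^*}\bigl[\E^{(2)}[h_2^\circ-h_2\mid X]\bigr]+\E_{d^*}\bigl[\E^{(2)}[h_2(Z_2,A',X)-g\mid A',X]\bigr].
\]
The last term equals $-\E_{\mu_3}[b_3\,\mathcal T_3h]$ after changing measure from $d^*$ to the augmented $\mu_3$. This step is where the pseudo-action density enters $b_3$; we treat it as absorbed into the stated ratio, possibly up to a $\nu_A$ factor.

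\textbf{Step 2: the $h_2$-equation.} We use $Z_2\indep(Z_1,A)\mid X$ from Assumption~\ref{assumption:3.2}(ii) to write
\[
\E_{d^*}\E^{(2)}[(h_2^\circ-h_2)(Z_2,A,X)\mid X]=\E\Bigl[\tfrac{p^{(2)}(z_1|x)d^*(a,x)}{p^{(2)}(z_1,a,x)}\,\E^{(2)}[h_2^\circ-h_2\mid Z_1,A,X]\Bigr].
\]
The weight inside is exactly $b_2$. Using the moment equations for $h_2^\circ$ and for $h_2$, the inner conditional expectation becomes $\E^{(2)}[h_1^\circ-h_1\mid Z_1,A,X]-\mathcal T_2h$. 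This produces the term $-\E_{\mu_2}[b_2\mathcal T_2h]$ plus a remainder
\[
\int d^*(a,x)\,p^{(2)}(z_1|x)\,p^{(2)}(w|z_1,a,x)\,(h_1^\circ-h_1)(w,a,x).
\]

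\textbf{Step 3: the $h_1$-equation.} We rewrite the remainder as $\E^{(1)}\bigl[\rho(W,A,X)(h_1^\circ-h_1)\bigr]$, where $\rho$ is the right-hand side of the $b_1$ condition. We then replace $\rho$ by $\E^{(1)}[b_1(Z_1,Z_2)\mid W,A,X]$ and apply the tower property. Since $h_1^\circ$ solves \eqref{eq:med_h1}, this gives $\E^{(1)}\bigl[b_1\,\E^{(1)}[h_1^\circ-h_1\mid Z_1,Z_2]\bigr]=-\E_{\mu_1}[b_1\mathcal T_1h]$.

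The main obstacle is Step 2. The remainder there must be made to match the $p^{(1)}$-denominator form in $b_1$. This requires carefully justifying the density bookkeeping: the independence $Z_2\indep(Z_1,A)\mid X$, and the fact that $g^*$ integrates $h_2^\circ$ only against $p^{(2)}(z_2|x)$ rather than $p^{(2)}(z_2|a,x)$. We would verify this first, checking that $\E^{(2)}[h_2^\circ(Z_2,a,x)\mid X=x]$ is unaffected by conditioning on $(Z_1,A)$. We would also confirm that the true bridges may be replaced by $h_\mathcal H^*$, with an extra $\mathcal O(\varepsilon_\mathcal H)$ error, when they lie outside $\mathcal H$.
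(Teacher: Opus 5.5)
Your proposal is correct and is essentially the paper's proof in \S\ref{sec:B.4}. It uses the same pessimism reduction from \S\ref{sec:B.1} to $v^{\pi^*}-v(\widehat g^{\pi^*},\pi^*)+\varepsilon_{\mathcal H}$, then the same three-stage telescoping through $\mathcal T_3,\mathcal T_2,\mathcal T_1$ ($Z_2\indep(Z_1,A)\mid X$ yields $b_2$; the representer condition plus the tower property yields $b_1$), then Cauchy--Schwarz and Theorem~\ref{thm:7.1}(ii). The $\nu_A$ factor you flag in Step 1 is a genuine imprecision that the paper's proof also glosses over: under the augmented law $\mu_3$ the exact weight is $p^*(x)\pi^*(a|x)/(\nu_A(a)p^{(2)}(x))$, so the stated $b_3$ is right only if $p^{(2)}(a,x)$ is read as the density of $(A',X)$. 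Your final worry is unnecessary, because the telescoping identity uses only the exact population bridges and never requires them to lie in $\mathcal H$.
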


\begin{proof}
See \S\hyperref[sec:B.1]{B.1} and
\S\hyperref[sec:B.4]{B.4} for a detailed proof.
\end{proof}

Theorems~\ref{thm:7.2}--\ref{thm:7.4} reveal a common structure in the finite-sample performance of CAT-CCB. Across both unconfounded and confounded settings, the suboptimality is controlled by the approximation errors $\varepsilon_{\mathcal H}$ and $\varepsilon_\Theta$, the statistical uncertainty characterized by $e_{\mathcal D}$ and $\eta$, and the $L^2$ norms of the weighting functions $b_k$, which quantify the difficulty of transporting source-domain moment errors to the target policy value. Consequently, when the bridge and test-function classes are well specified and the statistical errors vanish with increasing sample size, CAT-CCB approaches the optimal target policy provided these weighting norms remain finite.

In Appendix~\ref{sec:E}, we further specialize the bridge and test functions to bounded finite-dimensional linear classes and derive explicit bounds on their critical radii. In particular, letting $$N:=\min_{k\in[K]}|\mathcal D_k|$$ denote the minimum sample size across the offline datasets, and assuming that the relevant function classes have dimensions of order $d$. Then we obtain $$\eta=\widetilde{\mathcal O}\left(\sqrt{\frac{d}{N}}\right).$$ Under exact realizability and compatibility, and with bounded weighting norms, this yields
$$\mathrm{SubOpt}(\widehat{\pi})=\widetilde{\mathcal O} \left(\sqrt{\frac{d}{N}}\right).$$
Therefore, when the relevant function-class dimensions are treated as fixed, CAT-CCB achieves the standard $\widetilde{\mathcal O}(N^{-1/2})$ statistical rate.

\subsection{Analysis for CAT-CMDP}
\label{sec:7.2}

In this section, we establish the finite-sample suboptimality guarantee for the CAT-CMDP algorithm proposed in \S\ref{sec:6.3}. We first characterize the distribution shift between the offline source datasets and the target trajectories through the following coverage coefficient.

\begin{definition}[Distribution shift coefficient (DSC)]
\label{def:7.1}
The distribution shift coefficient between the distribution induced by a policy $\pi$ in the target MDP $\mathscr{M}^*$ and the marginal visitation distribution in a source MDP $\mathscr{M}^{(i)}$ is defined as
\begin{equation*}
    C_i^{\pi}
    :=
    \sup_{h\in[H]}
    \E^{(i)}
    \left[
        \left(
            \frac{d^\pi_h(S_h,A_h)}
            {p^{(i)}(S_h,A_h)}
        \right)^2
    \right],
\end{equation*}
where $d^\pi=\{d_h^\pi\}_{h=1}^H$ denotes the collection of state-action visitation measures induced by policy $\pi$ in the target MDP $\mathscr{M}^*$.
\end{definition}

The coefficient $C_i^\pi$ measures the mismatch between the state-action visitation distribution induced by $\pi$ in the target MDP and the observational distribution in source domain $\mathscr{M}^{(i)}$. In particular,
\[
    C_i^\pi
    =
    1+
    \sup_{h\in[H]}
    D_{\chi^2}
    \left(
        d_h^\pi
        \,\middle\|\,
        p_h^{(i)}
    \right),
\]
where $D_{\chi^2}$ denotes the Pearson $\chi^2$-divergence. Thus, $C_i^\pi=1$ corresponds to perfect distributional alignment, while a larger value indicates greater mismatch between the target visitation distribution and the available offline data.

\begin{assumption}[Distribution shift]
\label{assumption:7.4}
We impose the following conditions on the distribution shift between the source and target MDPs:
\begin{itemize}
    \item[(i)] \textbf{Partial coverage.}
    The visitation distribution induced by the optimal target policy $\pi^*$ is covered by both source MDPs:
    \[
        C_1^*:=C_1^{\pi^*}<\infty,
        \qquad
        C_2^*:=C_2^{\pi^*}<\infty.
    \]

    \item[(ii)] \textbf{Conditional distribution shift.}
    The conditional distribution of $W_h$ in $\mathscr{M}^{(1)}$ sufficiently covers its target-domain counterpart. Specifically,
    \begin{align*}
        C_w&=\sup_{h\in[H],\,s_h\in\mathcal{S}}
        \E^{(1)}
        \left[\left(
                \frac{p^*(W_h\mid S_h)}{p^{(1)}(W_h\mid S_h)}
            \right)^2\middle|S_h=s_h\right]\\
        &=1+\sup_{h\in[H],\,s\in\mathcal S}
        D_{\chi^2}
        \left(
            p^*(W_h\mid S_h=s)
            \,\middle\|\,
            p^{(1)}(W_h\mid S_h=s)
        \right)<\infty.
    \end{align*}
\end{itemize}
\end{assumption}

Assumption~\ref{assumption:7.4} (i) is a partial-coverage condition similar to those commonly used in offline reinforcement learning \citep{kidambi2021morel,pmlr-v139-jin21e}: it requires coverage only of the optimal target policy rather than uniformly over all policies. Assumption~\ref{assumption:7.4} (ii) controls the additional conditional distribution shift of the negative outcome $W_h$ between $\mathscr{M}^{(1)}$ and the target domain. Under Assumption~\ref{assumption:5.2}, the target mechanism $p^*(W_h\mid S_h)$ is shared with $\mathscr{M}^{(2)}$, so $C_w$ characterizes the discrepancy in this mechanism across the two source domains.

We next study the stability of the conditional moment equations for reward and transition bridge estimation. In source MDP $\mathscr{M}^{(1)}$, let $\rho_h$ be the joint distribution of $(S_h,Z_h,A_h,W_h)$, and let the same notation denote its relevant marginals when there is no ambiguity. Define the ambient $L^2$ spaces
\begin{equation*}
\mathcal{B}_{s,a,w,h}:=L^2(\mathcal{S}\times\mathcal{A}\times\mathcal{W};\rho_h),\quad \mathcal{B}_{s,z,h}:=L^2(\mathcal{S}\times\mathcal{Z};\rho_h),
\end{equation*}
equipped with the corresponding $L^2$ norm $\Vert\cdot\Vert_{\rho_h}$.

Under the linear function approximation in Assumption~\ref{assumption:6.2}, it suffices to consider the linear span of the bridge-function class,
\begin{equation*}
    \mathcal{V}_h
    :=
    \left\{
        \sigma(s,a,w)
        =
        \psi(s,a,w)^\top\zeta:
        \zeta\in\mathbb{R}^d
    \right\}
    \subseteq
    \mathcal{B}_{s,a,w,h}.
\end{equation*}
Since $\mathcal{V}_h$ is finite-dimensional, it is a closed subspace of $\mathcal{B}_{s,a,w,h}$ and is therefore itself a Banach space. We define the conditional moment operator $\mathcal{T}_h:\mathcal{V}_h\to\mathcal{B}_{s,z,h}$ by
\begin{equation*}
    (\mathcal{T}_h\sigma)(s,z)
    :=
    \E^{(1)}
    [\sigma(S_h,A_h,W_h)\mid S_h=s,Z_h=z],
    \qquad
    s\in\mathcal{S},\ z\in\mathcal{Z}.
\end{equation*}
This is the conditional expectation operator underlying the IV moment equations in Lemma~\ref{lemma:5.1}.

\begin{lemma}[Properties of the conditional moment operator]
\label{lemma:7.1}
Under Assumption~\ref{assumption:5.3}, for any $h\in[H]$, the operator $\mathcal{T}_h$ satisfies:
\begin{itemize}
    \item[(i)] \textbf{Linearity.}
    For $\sigma_1,\sigma_2\in\mathcal{V}_h$ and $\alpha,\beta\in\mathbb{R}$,
    \[
        \mathcal{T}_h(\alpha\sigma_1+\beta\sigma_2)
        =\alpha\mathcal{T}_h\sigma_1
        +\beta\mathcal{T}_h\sigma_2.
    \]

    \item[(ii)] \textbf{Boundedness.}
    For any $\sigma\in\mathcal{V}_h$,
    \[
        \Vert\mathcal{T}_h\sigma\Vert_{\rho_h}
        \leq\Vert\sigma\Vert_{\rho_h}.
    \]

    \item[(iii)] \textbf{Injectivity.}
    For any $\sigma_1,\sigma_2\in\mathcal{V}_h$,
    $\mathcal{T}_h\sigma_1=\mathcal{T}_h\sigma_2$ implies
    $\sigma_1=\sigma_2$.
\end{itemize}
\end{lemma}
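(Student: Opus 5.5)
The plan is to verify the three properties directly from the definition of $\mathcal{T}_h$ as a conditional expectation operator under $\rho_h$, using only standard facts about conditional expectations together with the IV completeness condition in Assumption~\ref{assumption:5.3}(iii).

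For \emph{linearity} (i), I would invoke linearity of conditional expectation. For $\sigma_1,\sigma_2\in\mathcal{V}_h$ and $\alpha,\beta\in\mathbb{R}$, the combination $\alpha\sigma_1+\beta\sigma_2=\psi^\top(\alpha\zeta_1+\beta\zeta_2)$ again lies in $\mathcal{V}_h$. Moreover,
\[
\E^{(1)}[\alpha\sigma_1+\beta\sigma_2\mid S_h,Z_h]=\alpha\E^{(1)}[\sigma_1\mid S_h,Z_h]+\beta\E^{(1)}[\sigma_2\mid S_h,Z_h]
\]
almost surely. Hence the identity holds as elements of $\mathcal{B}_{s,z,h}$. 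Integrability is automatic, since $\Vert\psi\Vert_2\le 1$ makes every $\sigma\in\mathcal{V}_h$ bounded.

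For \emph{boundedness} (ii), I would apply the conditional Jensen inequality, $(\E[\sigma\mid S_h,Z_h])^2\le\E[\sigma^2\mid S_h,Z_h]$. Taking expectations under $\rho_h$ and using the tower property gives
\[
\Vert\mathcal{T}_h\sigma\Vert_{\rho_h}^2\le\E_{\rho_h}[\sigma(S_h,A_h,W_h)^2]=\Vert\sigma\Vert_{\rho_h}^2.
\]
The only point to note is that both norms are taken with respect to marginals of the same joint law $\rho_h$ of $(S_h,Z_h,A_h,W_h)$, so the tower property applies without any change of measure.

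For \emph{injectivity} (iii), I would reduce to the kernel of $\mathcal{T}_h$ using (i). If $\mathcal{T}_h\sigma_1=\mathcal{T}_h\sigma_2$, then $\sigma:=\sigma_1-\sigma_2\in\mathcal{V}_h\subseteq L^2(\mathcal{S}\times\mathcal{A}\times\mathcal{W})$ satisfies $\E^{(1)}[\sigma(S_h,A_h,W_h)\mid S_h,Z_h]=0$ almost surely. The IV completeness condition in Assumption~\ref{assumption:5.3}(iii), applied in $\mathscr{M}^{(1)}$, then yields $\sigma=0$ almost surely, that is, $\sigma_1=\sigma_2$ in $\mathcal{B}_{s,a,w,h}$.

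The only subtlety I expect is interpretive rather than technical: injectivity can only hold up to $\rho_h$-null sets, so equality must be read in $L^2(\rho_h)$. I would state this explicitly. Beyond that, the argument is routine, and no step should pose a real obstacle.
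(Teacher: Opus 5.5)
Your proposal is correct and matches the paper's own brief justification: linearity of conditional expectation for (i), conditional Jensen's inequality with the tower property under $\rho_h$ for (ii), and the IV completeness condition of Assumption~\ref{assumption:5.3}(iii), applied in $\mathscr{M}^{(1)}$ to $\sigma_1-\sigma_2\in\mathcal{V}_h$, for (iii). Your remark that equality in (iii) holds only $\rho_h$-almost surely is a useful clarification that the paper leaves implicit.
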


The linearity in Lemma~\ref{lemma:7.1} (i) follows directly from the definition of conditional expectation, while (ii) follows from conditional Jensen's inequality. Property (iii) is a consequence of the IV completeness condition in Assumption~\ref{assumption:5.3} (iii). Thus, $\mathcal{T}_h$ is a bounded and injective linear operator on the bridge-function span. Let
\begin{equation*}
    \mathfrak{R}_h
    :=
    \mathcal{T}_h(\mathcal{V}_h)=\{\mathcal{T}_h\sigma:\sigma\in\mathcal{B}_{s,a,w,h}\}=
    \subseteq
    \mathcal{B}_{s,z,h}
\end{equation*}
denote its range. Since $\mathcal{V}_h$ is finite-dimensional and $\mathcal{T}_h$ is linear, $\mathfrak{R}_h$ is also finite-dimensional and hence closed in $\mathcal{B}_{s,z,h}$. Therefore, $\mathfrak{R}_h$ is a Banach space, and $\mathcal{T}_h:\mathcal{V}_h\to\mathfrak{R}_h$ is a bounded linear bijection. The bounded inverse theorem then yields the following stability property.

\begin{lemma}[Stability of the conditional moment operator]
\label{lemma:7.2}
For each $h\in[H]$, there exists a constant $0<L_h\leq1$ such that
\begin{equation*}
    \Vert\mathcal{T}_h\sigma\Vert_{\rho_h}
    \geq
    L_h\Vert\sigma\Vert_{\rho_h},
    \qquad
    \forall\sigma\in\mathcal{V}_h.
\end{equation*}
Define
\begin{equation*}
    L:=\min_{h\in[H]}L_h>0.
\end{equation*}
\end{lemma}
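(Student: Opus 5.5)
The plan is to prove Lemma~\ref{lemma:7.2} directly, exploiting finite dimensionality rather than citing the bounded inverse theorem abstractly. Fix $h\in[H]$. If $\mathcal{V}_h=\{0\}$ the claim is trivial with $L_h=1$, so assume $\mathcal{V}_h$ is nontrivial and consider the unit sphere
\[
    \mathbb{S}_h:=\{\sigma\in\mathcal{V}_h:\Vert\sigma\Vert_{\rho_h}=1\}.
\]
Any element of $\mathcal{V}_h$ has the form $\psi^\top\zeta$. We quotient out the null directions of $\zeta\mapsto\Vert\psi^\top\zeta\Vert_{\rho_h}$, since functions vanishing $\rho_h$-a.s.\ are identified with zero in $\mathcal{B}_{s,a,w,h}$. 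After this identification $\mathcal{V}_h$ is a finite-dimensional normed space, so $\mathbb{S}_h$ is compact: it is closed and bounded in a finite-dimensional space where all norms are equivalent.

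Next consider the map $\sigma\mapsto\Vert\mathcal{T}_h\sigma\Vert_{\rho_h}$ on $\mathbb{S}_h$. By Lemma~\ref{lemma:7.1}(i)--(ii) it is Lipschitz, since
\[
    \bigl|\Vert\mathcal{T}_h\sigma_1\Vert_{\rho_h}-\Vert\mathcal{T}_h\sigma_2\Vert_{\rho_h}\bigr|
    \leq\Vert\mathcal{T}_h(\sigma_1-\sigma_2)\Vert_{\rho_h}
    \leq\Vert\sigma_1-\sigma_2\Vert_{\rho_h},
\]
and in particular it is continuous. It therefore attains its minimum $L_h$ at some $\sigma_0\in\mathbb{S}_h$. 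If $L_h=0$, then $\mathcal{T}_h\sigma_0=0=\mathcal{T}_h0$, and injectivity (Lemma~\ref{lemma:7.1}(iii), which comes from the IV completeness in Assumption~\ref{assumption:5.3}(iii)) forces $\sigma_0=0$. This contradicts $\Vert\sigma_0\Vert_{\rho_h}=1$, so $L_h>0$.

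The upper bound $L_h\le1$ follows from Lemma~\ref{lemma:7.1}(ii) evaluated at $\sigma_0$. For a general $\sigma\neq0$, homogeneity lets us apply the bound to $\sigma/\Vert\sigma\Vert_{\rho_h}$, which gives $\Vert\mathcal{T}_h\sigma\Vert_{\rho_h}\ge L_h\Vert\sigma\Vert_{\rho_h}$; the case $\sigma=0$ is trivial. Finally, $L=\min_{h\in[H]}L_h$ is a minimum over finitely many positive numbers and is therefore positive.

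The only delicate point is making sure injectivity is applied in the right space. Completeness is stated for functions of $(S_h,A_h,W_h)$ under $\rho_h$, so "$=0$" must be read $\rho_h$-a.s.; this is consistent with working in $\mathcal{B}_{s,a,w,h}$ modulo null functions. With that convention the argument above goes through without change.
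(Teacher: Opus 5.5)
Your proof is correct, but it takes a different route from the paper. The paper regards $\mathcal{T}_h:\mathcal{V}_h\to\mathfrak{R}_h$ as a bounded linear bijection between Banach spaces. It uses finite-dimensionality only to conclude that $\mathcal{V}_h$ and the range $\mathfrak{R}_h$ are closed, and then invokes the bounded inverse theorem, so that $L_h=1/\Vert\mathcal{T}_h^{-1}\Vert$, with $L_h\le 1$ coming from Lemma~\ref{lemma:7.1}(ii).

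You instead use finite-dimensionality to make the unit sphere of $\mathcal{V}_h$ compact. You then minimize the $1$-Lipschitz map $\sigma\mapsto\Vert\mathcal{T}_h\sigma\Vert_{\rho_h}$ over that sphere, and injectivity rules out a zero minimum. The ingredients are the same in both arguments: finite dimension, injectivity from IV completeness, and contractivity. Your version has three advantages:
\begin{itemize}
\item It is elementary and self-contained, and needs neither completeness of the spaces nor closedness of the range. This also sidesteps the paper's slip of writing $\mathfrak{R}_h$ as $\{\mathcal{T}_h\sigma:\sigma\in\mathcal{B}_{s,a,w,h}\}$, a set that in general is not closed.
\item It makes explicit the identification modulo $\rho_h$-null functions, which the paper needs for $\mathcal{T}_h$ to be a bijection but does not state.
\item It identifies the constant concretely as $L_h=\min_{\Vert\sigma\Vert_{\rho_h}=1}\Vert\mathcal{T}_h\sigma\Vert_{\rho_h}$, the smallest singular value of $\mathcal{T}_h$ restricted to $\mathcal{V}_h$.
\end{itemize}
The paper's version is a one-line citation, but it brings in open-mapping machinery that is unnecessary in finite dimensions.
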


The constant $L$ quantifies the stability of recovering bridge functions from their conditional moments within the linear function class. A smaller $L$ corresponds to a less informative conditional moment equation and leads to greater statistical uncertainty.

We are now ready to characterize the finite-sample suboptimality of CAT-CMDP.

\begin{theorem}[Convergence of suboptimality in CAT-CMDP]
\label{thm:7.5}
Set the confidence-set thresholds in Algorithm~\ref{alg:2} as
\begin{align}
    \alpha_1
    &=
    \frac{C_1d}{K_1}
    \log\left(\frac{HdK_1}{\delta}\right),
    \notag\\
    \alpha_2
    &=
    \frac{C_2d}{K_1}
    \log\left(\frac{Hdd_\mathcal{S}K_1}{\delta}\right),
    \notag\\
    \alpha_3
    &=
    C_3
    \sup_{h\in[H]}
    \sqrt{
        m+d_\mathcal{W}
        +
        \mathrm{rank}(\Sigma_h^{(2)})
        \left\{
            \mathrm{rank}(\Sigma_h^{(2)})
            +
            \log\left(\frac{c_2H}{\delta}\right)
        \right\}
        \log(1+K_2)
    },
    \notag
\end{align}
where $C_1,C_2,C_3$ and $c_2$ are universal constants,
$m$ is defined in Assumption~\ref{assumption:6.1}(i), and
\[
    \Sigma_h^{(2)}
    :=
    \E^{(2)}
    [\phi(S_h)\phi(S_h)^\top].
\]
Suppose that Assumptions~\ref{assumption:5.1},
\ref{assumption:5.2},
\ref{assumption:5.3},
\ref{assumption:6.1},
\ref{assumption:6.2}, and
\ref{assumption:7.4} hold.
Then, with probability at least $1-\delta$, the suboptimality of Algorithm~\ref{alg:2} satisfies
\begin{align}
    \mathrm{SubOpt}(\widehat{\pi})
    \leq
    \mathcal{O}
    \left\{
        H^2
        \left(\sqrt{
        \frac{C_1^*C_wdd_\mathcal{S}}{L^2K_1}
        \log\left(\frac{Hdd_\mathcal{S}K_1}{\delta}\right)
        }
        +\sqrt{C_2^*\overline{R}d_\mathcal{S}
        \frac{m+d_\mathcal{W}+\overline{R}\log(1+K_2)}{K_2}}
        \right)
    \right\},
    \label{eq:7.23}
\end{align}
where
\begin{equation*}
    \overline{R}
    :=
    \max_{h\in[H]}
    \mathrm{rank}(\Sigma_h^{(2)})
    \left\{
        \mathrm{rank}(\Sigma_h^{(2)})
        +\log\left(\frac{c_2H}{\delta}\right)
    \right\}.
\end{equation*}
\end{theorem}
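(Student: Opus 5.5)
We follow the standard pessimistic model-based template: establish that the true model lies in the confidence set, decompose the suboptimality, and bound the model-error term along the optimal policy's target visitation via a simulation lemma.

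\textbf{Step 1: the true model is in the confidence set.} We first show that, with probability at least $1-\delta$, the true model $\beta^*=\{g_h^r,g_h^s,\mathbf{M}_h\}_{h=1}^H$ lies in $\mathcal{C}_\beta$. For the bridges, Assumption~\ref{assumption:6.2}(iii) makes the test class complete. The linear classes then give critical radii of order $\sqrt{d\log(K_1/\delta)/K_1}$, as in the linear specialization of Theorem~\ref{thm:7.1}. A union bound over $h\in[H]$ and the $d_{\mathcal S}$ state coordinates then justifies $\alpha_1,\alpha_2$. For $\mathbf{M}_h$, we use a self-normalized martingale bound for vector-valued ridge regression with Gaussian noise $\sigma_w^2\mathbf{I}$. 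We combine it with a bound of $\log\det\Lambda_h$ through the effective rank of $\Sigma_h^{(2)}$ and a union bound over $h$, which yields $\alpha_3$.

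\textbf{Step 2: decomposition.} On this event, pessimism gives
\[
\mathrm{SubOpt}(\widehat\pi)=J(\pi^*)-J(\widehat\pi)\le J(\pi^*;\beta^*)-\min_{\beta\in\mathcal{C}_\beta}J(\pi^*;\beta)=\sup_{\beta\in\mathcal C_\beta}\bigl(J(\pi^*;\beta^*)-J(\pi^*;\beta)\bigr).
\]
The inequality holds because $\min_\beta J(\widehat\pi;\beta)\ge \min_\beta J(\pi^*;\beta)$ and $J(\widehat\pi)\ge\min_\beta J(\widehat\pi;\beta)$. We then apply the simulation lemma under $d^{\pi^*}$:
\[
J(\pi^*;\beta^*)-J(\pi^*;\beta)=\sum_{h}\E_{d_h^{\pi^*}}\bigl[(r_h^*-r_h^\beta)+(P_h^*-P_h^\beta)V_{h+1}^{\pi^*}(\cdot;\beta)\bigr].
\]
Since $\|V_{h+1}\|_\infty\le H$, the transition term is at most $H$ times the TV distance between Gaussians with common covariance. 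That distance is bounded by $\|m_h^*-m_h^\beta\|_2/\sigma_s$, and summing over $h$ produces the $H^2$ factor.

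\textbf{Step 3: bounding mean errors.} Write
\[
m_h^*-m_h^\beta=\int (g_h^s-g_h^{\beta,s})p^*(w|s)\,\mathrm{d}w+\int g_h^{\beta,s}\bigl(p^*(w|s)-p^{\beta}(w|s)\bigr)\mathrm{d}w,
\]
and decompose the reward error the same way.

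\emph{First term.} Cauchy--Schwarz with the density ratio $d^{\pi^*}_h/p^{(1)}$ and the conditional ratio $p^*(w|s)/p^{(1)}(w|s)$ gives the bound $\sqrt{C_1^*C_w}\,\|g-g^\beta\|_{\rho_h}$. Lemma~\ref{lemma:7.2} converts this to $L^{-1}\|\mathcal T_h(g-g^\beta)\|_{\rho_h}$. Because the test class is complete, the projected RMSE of any confidence-set member is controlled by $\sqrt{\alpha_1}+\eta$. Summing over the $d_{\mathcal S}$ coordinates gives the first term of \eqref{eq:7.23}.

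\emph{Second term.} Bound the TV distance between $\mathcal N(\mathbf M_h^\top\phi,\sigma_w^2 I)$ and its perturbation by $\|(\mathbf M_h-\mathbf M)^\top\phi(s)\|/\sigma_w$. This is at most $2\alpha_3\|\phi(s)\|_{\Lambda_h^{-1}}$, using $|g|\le1$. Taking the expectation under $d_h^{\pi^*}$, a change of measure to $p^{(2)}$ with $C_2^*$ and Cauchy--Schwarz gives $\sqrt{C_2^*\,\E^{(2)}\|\phi\|^2_{\Lambda_h^{-1}}}$. A matrix concentration step, $\Lambda_h\succeq \tfrac{K_2}{2}\Sigma_h^{(2)}+\lambda I$, bounds this expectation by roughly $\mathrm{rank}(\Sigma_h^{(2)})/K_2$. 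Combined with $\alpha_3$, this gives the second term, with the $d_{\mathcal S}$ factor entering through the coordinates of $g^s$.

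\textbf{Main obstacle.} The hardest step is the first term of Step 3: passing from the confidence-set loss gap to an $L^2(\rho_h)$ bridge error that survives the double change of measure, from $d^{\pi^*}$ and $p^*(w|s)$ to $\rho_h$. The action also enters through $\pi^*(a|s)$ rather than the observational $A_h$, so the ratio must be taken on $(s,a,w)$ jointly. Since $W_h\indep A_h\mid S_h$, that joint ratio factorizes into $C_1^*$ and $C_w$. I would verify this factorization first and then apply the Theorem~\ref{thm:7.1} argument.
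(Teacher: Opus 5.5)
Your proposal follows the paper's proof essentially step for step. It uses pessimism plus a simulation lemma along $d^{\pi^*}$ with Pinsker for the Gaussian kernels (Lemma~\ref{lemma:C.1}), and the change of measure to $\rho_h$ that factorizes into $C_1^*C_w$ because $W_h\perp\!\!\!\perp A_h\mid S_h$. It then applies Lemma~\ref{lemma:7.2} to pass to the projected RMSE, minimax confidence sets for the bridges (Lemmas~\ref{lemma:C.2}--\ref{lemma:C.3}), and a self-normalized ridge bound for $\mathbf{M}_h$ (Lemma~\ref{lemma:C.4}). Your Step~2, which compares against $\min_{\beta\in\mathcal{C}_\beta}J(\pi^*;\beta)$, is in fact the correct form of \eqref{eq:D.1}; the paper's version uses a single $\widehat\beta=\widehat\beta^{\widehat\pi}$ for both policies.

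\textbf{The step that fails as written} is the matrix concentration claim $\Lambda_h\succeq\frac{K_2}{2}\Sigma_h^{(2)}+\lambda\mathbf{I}$.

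\emph{Why it fails.} The claim is equivalent to $\sum_{k}\phi(S_h^{(2),k})\phi(S_h^{(2),k})^\top\succeq\frac{K_2}{2}\Sigma_h^{(2)}$, so the ridge plays no role. It is false with constant probability in weakly covered directions. Take $d=1$ and let $\phi(S_h)=1$ with probability $1/K_2$ (else $0$). The left side is then $N\sim\mathrm{Bin}(K_2,1/K_2)$, which equals $0<\frac12$ with probability about $e^{-1}$.

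\emph{The repair.} With a constant ridge, use the comparison inequality of Lemmas~\ref{lemma:D.1}--\ref{lemma:D.2}. On an event of probability $1-\delta/6$, which must be intersected into the good event as the paper does in $\mathcal{E}_3$, it gives $\phi(s)^\top\Lambda_h^{-1}\phi(s)\le c_1\{\mathrm{rank}(\Sigma_h^{(2)})+\log(c_2H/\delta)\}\,\phi(s)^\top(K_2\Sigma_h^{(2)}+\lambda\mathbf{I})^{-1}\phi(s)$ uniformly in $s$. This yields $\E^{(2)}\|\phi(S_h)\|^2_{\Lambda_h^{-1}}\le c_1\overline{R}/K_2$.

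\emph{What this changes in your argument.} The extra factor $\mathrm{rank}(\Sigma_h^{(2)})+\log(c_2H/\delta)$ is exactly why $\overline{R}$, rather than the rank, multiplies the second term of \eqref{eq:7.23}. So your ``roughly $\mathrm{rank}(\Sigma_h^{(2)})/K_2$'' is not justified by the step you give. What the available tool delivers is $\overline{R}/K_2$, which is what the stated bound accommodates.

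\emph{Alternative repair.} Apply matrix Bernstein to $(K_2\Sigma_h^{(2)}+\lambda\mathbf{I})^{-1/2}\Lambda_h(K_2\Sigma_h^{(2)}+\lambda\mathbf{I})^{-1/2}$. Once $\lambda\gtrsim\log(dH/\delta)$, this gives $\Lambda_h\succeq\frac12(K_2\Sigma_h^{(2)}+\lambda\mathbf{I})$ with high probability; note the ridge is halved too. The cost is a larger $\sqrt{\lambda}\,\|\mathbf{M}_h\|_2$ contribution to $\alpha_3$.

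With either repair, the rest of your Step~3 goes through and yields \eqref{eq:7.23}.
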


\begin{proof}
See \S\ref{sec:C} for a detailed proof.
\end{proof}

The two terms in \eqref{eq:7.23} correspond to uncertainty from the two source datasets. The first arises from estimating the reward and transition bridges using $\mathcal{D}^{(1)}$ and depends on the coverage coefficient $C_1^*$, the shift in $W_h$ measured by $C_w$, and the stability constant $L$. The second arises from estimating the negative-outcome mechanism using $\mathcal{D}^{(2)}$ and depends on $C_2^*$ and the rank of its feature covariance.

Treating $d,d_\mathcal{S},d_\mathcal{W},H$, the coverage and stability constants, and $\overline{R}$ as fixed, the suboptimality of CAT-CMDP converges to zero at the rate
\[
    \max\left\{
        \frac{1}{\sqrt{K_1}}, \frac{1}{\sqrt{K_2}}
    \right\}
\]
with high probability. Thus, as both source datasets grow, CAT-CMDP consistently approaches the performance of the optimal target policy under the stated assumptions.

For a more interpretable specialization, suppose $K_1=K_2=K$. If the feature covariance associated with
$\phi:\mathbb{R}^{d_{\mathcal S}}\to\mathbb{R}^d$
has low rank, the dependence on the ambient feature dimension can be improved. In particular, if we have
\[
    \mathrm{rank}(\Sigma_h^{(2)})\leq\widetilde{\mathcal{O}}(\sqrt{d}),
    \qquad\forall h\in[H],
\]
then $\overline{R}\leq\widetilde{\mathcal{O}}(d)$, and
\begin{equation*}
    \mathrm{SubOpt}(\widehat{\pi})
    \leq\widetilde{\mathcal{O}}\left(\sqrt{\max\left(
    \frac{C_1^*C_w}{L^2},C_2^*\right)\frac{H^4dd_\mathcal{S}d_\mathcal{W}}{K}}\right),
\end{equation*}
where $\widetilde{\mathcal{O}}$ suppresses logarithmic factors.

%% file: chap8.tex
\section{Numerical Experiments}
\label{sec:experiments}
%
%

We evaluate the confounded contextual bandit and confounded MDP instantiations of our framework on a deliberately transparent discrete structural causal model. In our experiments, finite context and action spaces make the true reward bridge and the target causal mean available in closed form, the empirical minimax loss exactly quadratic, the confidence set exactly an ellipsoid, and the full policy class enumerable. Every reported quantity is therefore measured against ground truth rather than against an estimate, and the mechanism behind each result can be audited directly. A complete specification of the model, with all coefficients and evaluation protocols, is given in Appendices~\S\ref{app:scm}-\ref{app:cmdp}.


\subsection{Confounded Contextual Bandits}\label{sec:exp-ccbneg}

\paragraph{Data-generating process.}
We use a discrete CCB-Neg model specified in \S\ref{sec:3.1} with $K$ context strata $X \in \{1,\dots,K\}$ and $M$ actions $A \in \{0,\dots,M-1\}$, a negative outcome $W$, an action instrument $Z_A \in \mathbb{R}^M$, a negative-outcome instrument $Z_W \in \mathbb{R}$, and an unobserved confounder $U$. For each domain $d \in \{\star,1,2\}$, the structural equations are
\begin{equation*}
\begin{cases}
W = \omega^{(d)}_X + \kappa^{(d)}_W Z_W + \rho_W U + \sigma_W \varepsilon_W, \\[2pt]
A = \displaystyle\argmax_a
\bigl(
\alpha^{(d)}_{a,X}
+\kappa_A Z_{A,a}
+\rho_a U
+\sigma_A\varepsilon_{A,a}
\bigr), \\[2pt]
Y = f^{(d)}(W,A,X)+\rho_YU+\sigma_Y\varepsilon_Y.
\end{cases}
\end{equation*}
Thus, $U$ confounds $A$, $W$, and $Y$, while the action does not cause $W$, making $W$ a valid negative outcome. The instruments are independent of $U$ and of the structural outcome noise. The reward surface is cell-saturated and linear in $W$:
\[
f^{(d)}(w,a,x)
=\mu^{(d)}_x+b^{(d)}_xw+\tau^{(d)}_{x,a}+\lambda^{(d)}_aw,
\]
The two source domains realize the complementary discrepancies of Assumption~\ref{assumption:3.1}:
\begin{itemize}
    \item Source~$1$ shifts the $W$-mechanism and the behavior policy while preserving the reward mechanism, so $\Delta^{(1)}=\{A,W\}$ and $f^{(1)}=f^\star$;
    \item Source~$2$ shifts the reward mechanism and the behavior policy while preserving the $W$-mechanism, so $\Delta^{(2)}=\{A,Y\}$ and $p^{(2)}(w\mid x)=p^\star(w\mid x)$.
\end{itemize}
Neither source alone transports to the target. Their identified components combine through
\begin{equation}
\label{eq:exp-identification}
\begin{cases}
h_1^\star(w,a,x) = f^\star(w,a,x),\\
g^\star(a,x)
=\mathbb{E}^{(2)}
\bigl[
h_1^\star(W,a,x)
\mid X=x
\bigr],
\end{cases}
\end{equation}
where the reward bridge $h_1^\star=f^\star$ is identified from source~$1$ through the instrumental moment restriction, while source~$2$ supplies the target-compatible conditional distribution of $W$. Since
\[
\mathbb{E}^{(2)}[W\mid X=x]=\omega_x^\star,
\]
the target causal mean is available in closed form:
\[
g^\star(a,x)
=
\mu^\star_{x}+\tau^\star_{x,a}
+ (b_x^\star + \lambda^\star_a)\omega_x^\star.
\]
Consequently, the optimal target policy $\pi^\star(x)=\argmax_a g^\star(a,x)$ is also available exactly.

\paragraph{Estimation and policy learning.}
Let $\beta$ collect the cell coefficients of the reward bridge $h_\beta(w,a,x)$, and let $\gamma$ collect the target causal effects $g_\gamma(a,x)$. We estimate $\theta=(\beta,\gamma)$ using two sets of moment restrictions: source~$1$ identifies the reward bridge through the instruments $(Z_A,Z_W)$, while source~$2$ transports the bridge to the target by integrating it against the target-compatible conditional distribution of $W$. The complete integral-equation system, test-function choice, stacked empirical moments, and derivation of closed-form minimax loss are provided in Appendix~\ref{app:ccb-neg-estimation}.

Let $\widehat\theta=(\widehat\beta,\widehat\gamma)$ denote the minimizer of the empirical minimax loss, and define
\[
\widehat Q
=
\widehat J^\top
\widehat\Omega^{-1}
\widehat J,
\]
where $\widehat J$ is the empirical Jacobian of the stacked moment equations and $\widehat\Omega$ is the empirical test-function Gram matrix induced by the quadratic $L^2$ penalty. Their complete expressions are given in \eqref{eq:complete-jacobian-q} and
\eqref{eq:complete-test-gram}, respectively; see
Appendix~\S\ref{app:ccb-neg-estimation}. The matrix $\widehat Q$ is the curvature matrix of the empirical
minimax loss. Because this loss is quadratic, its radius-$r$ loss-level uncertainty set is the ellipsoid
\begin{equation}
\label{eq:joint-loss-ellipsoid-main}
\mathcal C_\theta(r)
=
\left\{
\theta:
\frac{1}{2}
(\theta-\widehat\theta)^\top
\widehat Q
(\theta-\widehat\theta)
\le r
\right\}.
\end{equation}
The plug-in CAT-Greedy policy selects the action maximizing the estimated target causal mean
$\widehat g(a,x)=\phi_g(a,x)^\top\widehat\gamma$. CAT-Pess instead evaluates each policy under the worst-case causal mean in the exact projection of $\mathcal C_\theta(r)$ onto the $\gamma$ coordinates. We use only this single pessimistic variant and treat the radius $r$ as a sensitivity parameter. The primary experimental comparisons therefore focus on transportation across domains and adjustment for latent confounding.

\paragraph{Methods.}
The identification experiment compares the full estimator, \textsc{CAT}, against four one-ingredient ablations: \textsc{Naive-Pool} regresses $Y$ on CATE cell indicators over the pooled sources with no causal adjustment; \textsc{Source1-Only} and \textsc{Source2-Only} apply the instrumental adjustment within a single source and so omit transport; \textsc{Transport-NoIV} performs the transport split but replaces the instrument test functions with cell indicators and so omits deconfounding. 
We report suboptimality $\mathrm{SubOpt} = v^{\pi^\star} - v^{\hat\pi}$ and the CATE root-mean-square error $\mathrm{RMSE} = (\mathbb{E}[(\hat g - g^\star)^2])^{1/2}$, averaged over $60$ to $120$ seeds with error bars of one standard error. The RMSE reference measure draws $X$ from the target context law and weights all $(x, a)$ cells uniformly.

\subsubsection{Identification and the necessity of each component}
\label{sec:exp-identification}

We vary the per-source sample size $n \in \{250, \dots, 16000\}$ and record both metrics for every method. Figure~\ref{fig:identification} shows that \textsc{CAT} is the only method whose CATE error continues to decrease across the tested range, falling from roughly $1.7$ at $n = 250$ to $0.085$ at $n = 16000$, while each ablation settles onto a distinct nonzero floor. These floors are consistent with a bias that does not diminish with sample size: \textsc{Source2-Only} settles near $0.37$, \textsc{Naive-Pool} near $0.31$, \textsc{Transport-NoIV} near $0.30$, and \textsc{Source1-Only} near $0.24$.

Within this controlled data-generating process the paired ablations isolate the empirical contribution of each ingredient. The separation between \textsc{CAT} and \textsc{Transport-NoIV}, which differ only in the first stage, reflects the role of deconfounding. The separation between \textsc{CAT} and \textsc{Source1-Only}, which differ only in whether the $W$-average uses the source with the target-compatible $W$-law, reflects the role of transport. \textsc{Source2-Only} is the least accurate: applying a valid instrumental adjustment inside the source whose reward mechanism is wrong produces a confidently fitted but incorrect structure. These results provide empirical evidence consistent with the identification result of \eqref{eq:exp-identification}; the identification and consistency statements themselves are established by the theory.

\begin{figure}[t]
  \centering
  \begin{minipage}[t]{0.49\linewidth}
    \centering
    \includegraphics[width=\linewidth]{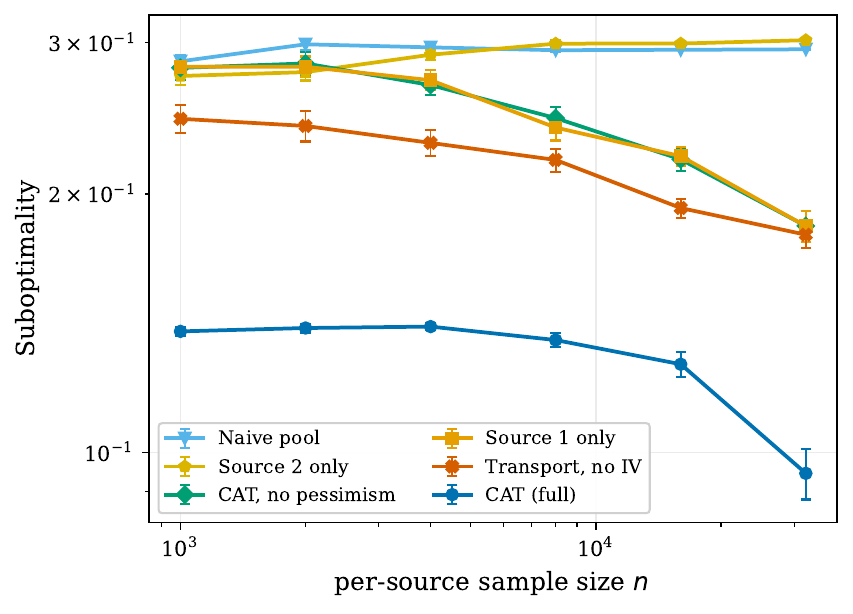}\\
    {\footnotesize (a) Suboptimality}
  \end{minipage}
  \hfill
  \begin{minipage}[t]{0.49\linewidth}
    \centering
    \includegraphics[width=\linewidth]{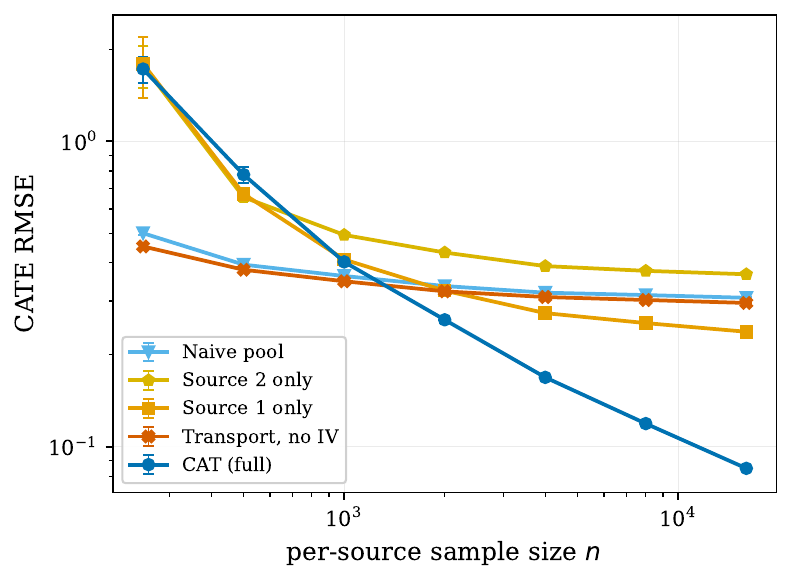}\\
    {\footnotesize (b) CATE RMSE}
  \end{minipage}
  \caption{%
    Identification and component necessity as a function of per-source sample size $n$ (log-log axes). The full estimator \textsc{CAT} decreases across the tested range; each ablation settles onto a bias floor set by the ingredient it omits. Curves show the mean over $60$ seeds and error bars show one standard error.}
  \label{fig:identification}
\end{figure}

\subsubsection{Leading quantities of the bound}
\label{sec:exp-factors}

The suboptimality bound of Theorem~\ref{thm:7.3} factors as a distribution-shift coefficient times a sum of an approximation error, a confidence-set term, and a critical radius, with the instrument informativeness entering the last two. Rather than varying informal proxies, we compute each theoretical quantity numerically and plot the suboptimality of \textsc{CAT} against the quantity itself (Figure~\ref{fig:sensitivity}).

\paragraph{Instrument strength.}
We vary the instrument loading $\kappa_A$ and plot against the smallest singular value $\sigma_{\min}$ of the whitened source-1 moment operator $\mathbb{E}[\psi(Z)\,\phi_h(W, A, X)^\top]$. Suboptimality decreases as $\sigma_{\min}$ grows, which is the multivariate analogue of a vanishing denominator in the scalar instrumental estimator: a less informative instrument shrinks the smallest singular value and amplifies estimation noise along the poorly identified directions.

\paragraph{Misspecification.}
We add a reward component quadratic in $W$ that the cell-linear bridge cannot represent, and plot against the best-in-class projected error $\inf_{h \in \mathcal{H}} \|T h\|_{\mu,2}$ computed at the population level. Suboptimality rises smoothly as this error grows, so the approximation term enters the bound as a gradual offset rather than a discontinuity.

\paragraph{Distribution shift.}
We concentrate the target context law on source-sparse strata and plot against the computed distribution-shift coefficient $\sum_k \|b_k\|_{\mu_k, 2}$, the target-policy reweighting norm over the two sources. Suboptimality grows with the coefficient. We hold the sample size small so that coverage is genuinely stressed; at larger sample sizes the same coefficient multiplies a smaller statistical error, so its contribution to the realized suboptimality becomes correspondingly small rather than disappearing.

\begin{figure}[t]
  \centering
  \includegraphics[width=\linewidth]{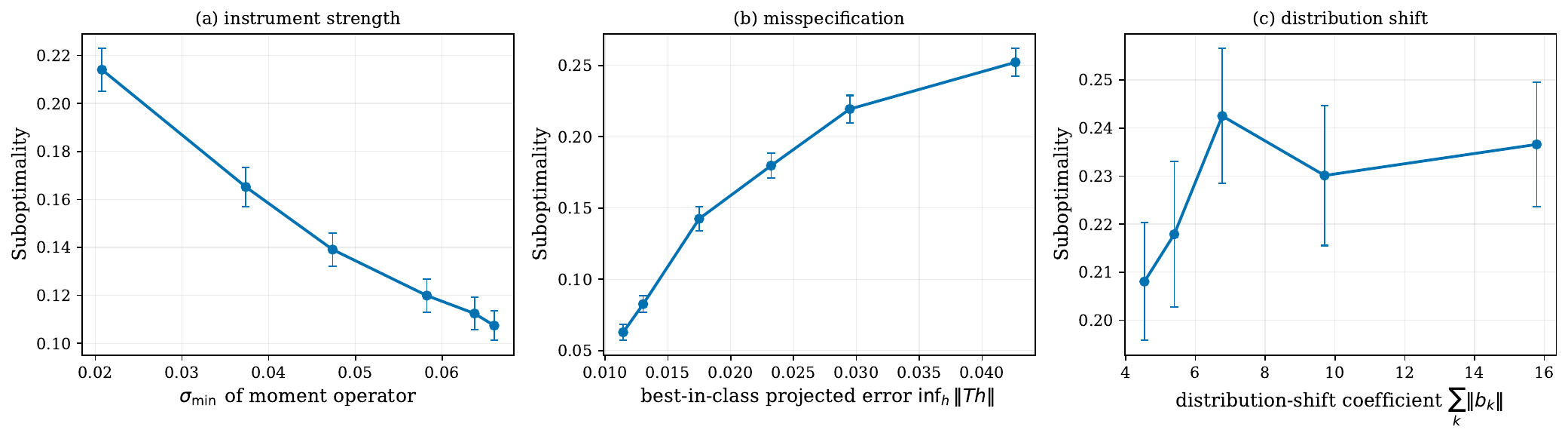}
  \caption{%
    Suboptimality of \textsc{CAT} plotted against the actual bound quantities. (a) Against the smallest singular value of the moment operator; weaker instruments raise the error. (b) Against the best-in-class projected error; misspecification enters as a gradual offset. (c) Against the distribution-shift coefficient $\sum_k\|b_k\|$. Error bars show one standard error over seeds.}
  \label{fig:sensitivity}
\end{figure}

\subsection{Confounded MDP}
\label{sec:exp-cmdp}

The CCB-Neg experiments above test the single-stage instantiation of the framework. The sequential methods of Section~\ref{sec:5} raise two questions the bandit model cannot address: whether the transport-plus-deconfounding construction still identifies the target value once it must be applied at every stage, and whether the confounding bias of the baselines compounds with the horizon. We therefore repeat the identification and pessimism analyses on a discrete confounded MDP, built to the same standard of auditability as the bandit model. The full specification is given in Appendix~\ref{app:cmdp}.

\paragraph{Model.}
We use a finite confounded MDP with $|\mathcal{S}| = 4$ states, $|\mathcal{A}| = 4$ actions, and horizon $H = 5$. At each stage an unobserved confounder $U$ enters the behaviour policy, the reward, and the transition, so a direct regression of reward or next state on the realized state-action pair is biased. A negative-outcome proxy $W$ and instruments $(Z_A, Z_W)$ play the same structural roles as in the bandit model, now at each stage. Two source domains realize the complementary discrepancies of Assumption~\ref{assumption:3.1}: source~$1$ shares the reward and transition mechanisms with the target but shifts the proxy mechanism and the behaviour policy, and source~$2$ shares the proxy mechanism but shifts the reward and transition mechanisms. The target reward means, the target transition kernel, and hence the optimal value are computed in closed form by marginalizing the structural equations over $(W, U)$, so suboptimality is measured against the exact optimum.

\paragraph{Estimator.}
At each stage the reward bridge and the transition bridge are identified in source~$1$ through the instrumental moment restriction and transported to the target proxy law supplied by source~$2$, exactly as in \eqref{eq:exp-identification}. The resulting stage-wise reward means $\hat R_h$ and transition kernels $\hat P_h$ are combined by backward induction to produce a policy, denoted \textsc{CAT-CMDP}. Because the structural mechanisms are stationary across stages, we pool the moment equations over the $H$ stages when forming the bridge, which cuts the per-stage variance by a factor of order $H$; this is the primary estimator. We compare against the same ablation family as the bandit case: \textsc{Transport-NoIV} performs the transport split without the instrumental deconfounding, \textsc{Source1-Only} omits transport, and \textsc{Naive-Pool} regresses on the pooled realized state-action pairs with no causal adjustment.

\paragraph{Results.}
Figure~\ref{fig:cmdp-identification} varies the per-source trajectory count $n$ and records suboptimality. \textsc{CAT-CMDP} is again the only method whose suboptimality decreases across the tested range, from $0.119$ at $n = 1000$ to $0.005$ at $n = 32000$, while each ablation settles onto a bias floor: \textsc{Transport-NoIV} near $0.055$, \textsc{Source1-Only} near $0.02$ to $0.04$, and \textsc{Naive-Pool} near $0.44$. The separation reproduces the bandit conclusions in the sequential setting: deconfounding is what separates \textsc{CAT-CMDP} from \textsc{Transport-NoIV}, and transport is what separates it from \textsc{Source1-Only}. 

\begin{figure}[t]
  \centering
  \includegraphics[width=0.6\linewidth]{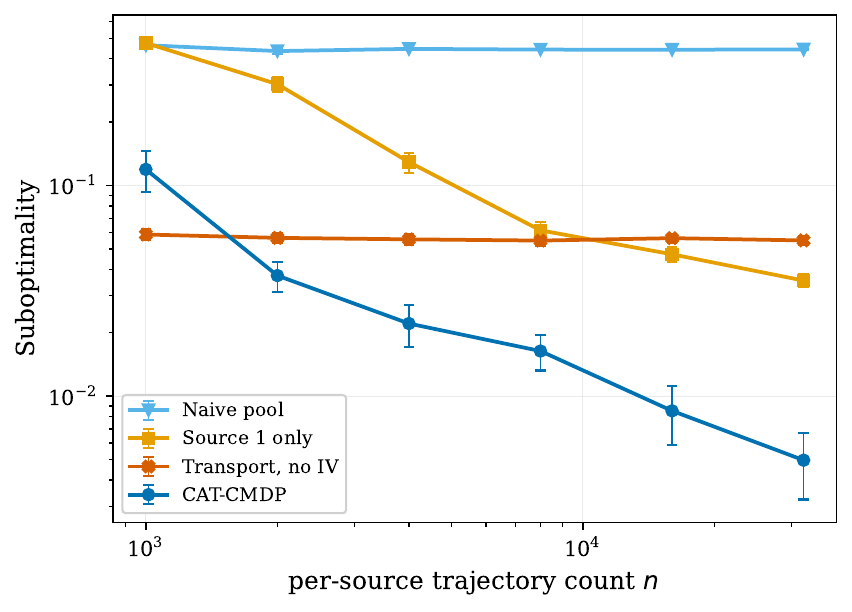}
  \caption{%
    Confounded MDP. (a) Suboptimality against per-source trajectory count $n$ (log-log axes): \textsc{CAT-CMDP} decreases while each ablation settles onto a bias floor. (b) Suboptimality against episode horizon $H$ at fixed sample size (log-scale $y$-axis): \textsc{CAT-CMDP} stays flat while the confounding bias of the baselines compounds with the horizon. Curves show the mean over $40$ seeds and error bars show one standard error.}
  \label{fig:cmdp-identification}
\end{figure}

%% file: chap9.tex
\section{Conclusion}

In this paper, we develop a framework for transportable offline policy learning under domain discrepancy and unmeasured confounding.  Our approach combines causal transportability with bridge-function identification to recover target-domain causal quantities from multiple heterogeneous source datasets. For contextual bandits, we formulate the resulting identification problems as systems of conditional moment equations and develop the CAT-CCB algorithm, which estimates the bridge functions through minimax optimization and learns a policy pessimistically over a confidence set of plausible CATEs. We further extend this framework to confounded episodic MDPs, where CAT-CMDP combines bridge-function estimation with model estimation to construct confidence sets for the target reward and transition mechanisms and optimizes the policy against the least favorable plausible target model.

Our theoretical analysis establishes finite-sample guarantees that characterize the effects of function approximation, statistical estimation, source-target distribution shift, offline coverage, and auxiliary-variable informativeness. Numerical experiments support the effectiveness of the overall framework. These results show how complementary source domains can support causal identification and target-domain policy learning under explicit structural, completeness, and coverage assumptions.

%% file: append1.tex
\section{Proofs for CATE Identification}
\subsection{Proof of Lemma \ref{lemma:3.1}}
\label{sec:A.1}
\begin{proof} Let $\mathbf{Z}=\{Z_1,Z_2\}$. For the structural reward $Y$, we have
\begin{align*}
\E^{(1)}[Y|\mathbf{Z}]
&= \E^{(1)}\left[\E^{(1)}[Y|X,\mathbf{Z}]|\mathbf{Z}\right]\\
&= \E^{(1)}\left[\E^{(1)}[f(W,A,X) + \epsilon|X,\mathbf{Z}]|\mathbf{Z}\right]\\
&= \E^{(1)}\left[f(W,A,X) + \E^{(1)}\left[\E^{(1)}[\epsilon|X,\mathbf{U},\mathbf{Z}]|X,\mathbf{Z}\right]|\mathbf{Z}\right]\\
&= \E^{(1)}\left[f(W,A,X) + \E^{(1)}\left[\E^{(1)}[\epsilon|X,\mathbf{U}]|X,\mathbf{Z}\right]|\mathbf{Z}\right] &(\text{By}\ \mathbf{Z}\indep\epsilon\,|\,X,\mathbf{U})\\
&= \E^{(1)}\left[f(W,A,X) + \E^{(1)}\left[\epsilon|X\right]|\mathbf{Z}\right], &(\text{By}\ \mathbf{Z}\indep \mathbf{U}\,|\,X)
\end{align*}
then we have
\begin{equation}
\E^{(1)}[h_1(W,A,X)|\mathbf{Z}] = \E^{(1)}\left[f(W,A,X) + \E^{(1)}\left[\epsilon|X\right]|\mathbf{Z}\right]. \label{eq:A.2}
\end{equation}
For the CATE $g^{(1)}$, it holds
\begin{align}
g^{(1)}(w,a,x) &= \int_\mathcal{Y} yp^{(1)}\left(y|\mathrm{do}(w),\mathrm{do}(a),x\right)\mathrm{d}y\notag\\
&= \int_{\mathcal{Y}\times\mathcal{U}} yp^{(1)}(y|\mathrm{do}(w),\mathrm{do}(a),\mathbf{u},x)p^{(1)}(\mathbf{u}|\mathrm{do}(w),\mathrm{do}(a),x)\mathrm{d}y\mathrm{d}\mathbf{u} &(\text{By the law of total probability})\notag\\
&= \int_{\mathcal{Y}\times\mathcal{U}} yp^{(1)}(y|\mathbf{u},w,a,x)p^{(1)}(\mathbf{u}|x,\mathrm{do}(w),\mathrm{do}(a))\mathrm{d}y\mathrm{d}\mathbf{u} &(\text{By}\ Y\indep (A,W)\,|\,X,\mathbf{U}\ \text{in}\ \mathcal{G}_{\underline{AW}})\notag\\
&= \int_{\mathcal{Y}\times\mathcal{U}} yp^{(1)}(y|\mathbf{u},w,a,x)p^{(1)}(\mathbf{u}|x)\mathrm{d}y\mathrm{d}\mathbf{u} &(\text{By}\ \mathbf{U}\indep (A,W)\,|\,X\ \text{in}\ \mathcal{G}_{\overline{AW}})\notag\\
&= \E^{(1)}\left[\E^{(1)}[Y|\mathbf{U},W=w,A=a,X]|X=x\right]\notag\\
&= f(w,a,x) + \E^{(1)}\left[\E^{(1)}[\epsilon|\mathbf{U},W=w,A=a,X]|X=x\right]\notag\\
&= f(w,a,x) + \E^{(1)}\left[\E^{(1)}[\epsilon|\mathbf{U},X]|X=x\right] &(\text{By}\ \epsilon\indep (W,A)\,|\,\mathbf{U},X)\notag\\
&= f(w,a,x) + \E^{(1)}[\epsilon|X=x].\label{eq:A.3}
\end{align}
Then
\begin{equation}
\E^{(1)}[g^{(1)}(W,A,X)|\mathbf{Z}] = \E^{(1)}\left[f(W,A,X) + \E^{(1)}\left[\epsilon|X\right]|\mathbf{Z}\right]. \label{eq:A.4}
\end{equation}
Combining (\ref{eq:A.2}) and (\ref{eq:A.4}) yields that
$\E^{(1)}[g^{(1)}(W,A,X) - h_1(W,A,X)|\mathbf{Z}] \overset{\text{a.s.}}{=} 0.$ Following the IV completeness in Assumption \hyperref[assumption:3.1]{3.1}, we have
\begin{equation*}
h_1(W,A,X)=g^{(1)}(W,A,X)\ \ \mathrm{a.s.},
\end{equation*}
which concludes the proof.
\end{proof}

\subsection{Proof of Lemma \ref{lemma:3.2}}
\label{sec:A.2}
\begin{proof} Analogous to the proof of Lemma \hyperref[lemma:3.1]{3.1} we can show that $g^{(1)}(W,A,X)\overset{\mathrm{a.s.}}{=}h_1(W,A,X)$. Then with the transferring result, we have for the CATE $g^*$ that
\begin{align}
g^*(a,x) &= \int_{\mathcal{Y}} yp^*(y|\mathrm{do}(a),x)\mathrm{d}y\notag\\
&= \int_{\mathcal{Y}\times\mathcal{W}} yp^{(1)}(y|x,\mathrm{do}(w),\mathrm{do}(a)) p^{(2)}(w|x,\mathrm{do}(a))\mathrm{d}w\mathrm{d}y\notag\\
&= \int_{\mathcal{W}} h_1(w,a,x)p^{(2)}(w|x,\mathrm{do}(a))\mathrm{d}w\notag\\
&= \int_{\mathcal{W}\times\mathcal{U}} h_1(w,a,x)p^{(2)}(w|\mathbf{u},x,\mathrm{do}(a))p^{(2)}(\mathbf{u}|x,\mathrm{do}(a))\mathrm{d}\mathbf{u}\mathrm{d}w &(\text{By the law of total probability})\notag\\
&= \int_{\mathcal{W}\times\mathcal{U}} h_1(w,a,x)p^{(2)}(w|\mathbf{u},x,a)p^{(2)}(\mathbf{u}|x,\mathrm{do}(a))\mathrm{d}\mathbf{u}\mathrm{d}w &(\text{By}\ W\indep A\,|\,X,\mathbf{U}\ \text{in}\ \mathcal{G}_{\underline{A}})\notag\\
&=\int_{\mathcal{W}\times\mathcal{U}} h_1(w,a,x)p^{(2)}(w|\mathbf{u},x,a)p^{(2)}(\mathbf{u}|x)\mathrm{d}\mathbf{u}\mathrm{d}w. &(\text{By}\ \mathbf{U}\indep A\,|\,X\ \text{in}\ \mathcal{G}_{\overline{A}}) \label{eq:A.6}
\end{align}
Now we consider the moment equation \eqref{eq:med_h2}:
\begin{align*}
0&=\E^{(2)}[h_2(Z_2,A,X) - h_1(W,A,X)|Z_1,A,X]\\ &= \E^{(2)}\left[\E^{(2)}[h_2(Z_2,A,X) - h_1(W,A,X)|\mathbf{U},Z_1,A,X]|Z_1,A,X\right]\\
&= \E^{(2)}\left[\E^{(2)}[h_2(Z_2,A,X) -h_1(W,A,X)|\mathbf{U},X,A]|Z_1,A,X\right] &(\text{By}\ Z_1\indep (Z_2,W)\,|\,\mathbf{U},X,A)
\end{align*}
From the PV completeness in Assumption \hyperref[assumption:3.2]{3.2}, we have
\begin{equation}
\E^{(2)}[h_2(Z_2,A,X)|\mathbf{U},X,A] = \E^{(2)}[h_1(W,A,X)|\mathbf{U},X,A]\ \ \mathrm{a.s.}.\label{eq:A.8}
\end{equation}
Then
\begin{align}
\E^{(2)}[h_2(Z_2,a,x)|X=x] &= \int_{\mathcal{Z}_2} h_2(z_2,a,x)p^{(2)}(z_2|x)\mathrm{d}z_2\notag\\
&= \int_{\mathcal{Z}_2\times\mathcal{U}} h_2(z_2,a,x)p^{(2)}(z_2|\mathbf{u},x)p^{(2)}(\mathbf{u}|x)\mathrm{d}\mathbf{u}\mathrm{d}z_2\notag\\
&= \int_{\mathcal{Z}_2\times\mathcal{U}} h_2(z_2,a,x)p^{(2)}(z_2|\mathbf{u},x,a)p^{(2)}(\mathbf{u}|x)\mathrm{d}\mathbf{u}\mathrm{d}z_2 &(\text{By}\ Z_2\indep A\,|\,\mathbf{U},X)\notag\\
&\overset{\text{(\ref{eq:A.8})}}{=}\int_{\mathcal{W}\times\mathcal{U}} h_1(w,a,x)p^{(2)}(w|\mathbf{u},x,a)p^{(2)}(\mathbf{u}|x)\mathrm{d}\mathbf{u}\mathrm{d}w.\label{eq:A.9}
\end{align}
Combining (\ref{eq:A.6}) and (\ref{eq:A.9}), we have $\E^{(2)}[h_2(Z_2,a,x)|X=x] = g^*(a,x)$, which concludes the proof of Lemma \hyperref[lemma:3.2]{3.2}.
\end{proof}

\subsection{Proof of Lemma \ref{lemma:5.1}}
\label{sec:A.3}
\begin{proof}
We only show the proof of \eqref{eq:cmdp_s_bridge1} and \eqref{eq:cmdp_s_bridge2}. The proof of \eqref{eq:cmdp_r_bridge1} and \eqref{eq:cmdp_r_bridge2} is parallel. According to the structural equation \eqref{eq:5.6} in Assumption \ref{assumption:5.1}, we have
\begin{align*}
\E^{(1)}[S_{h+1}|S_h,Z_h] &= \E^{(1)}[f_h^s(S_h,A_h,W_h) + \epsilon_s|S_h,Z_h]\\
&= \E^{(1)}\left[f_h^s(S_h,A_h,W_h) + \E^{(1)}[\epsilon_s|U_h,S_h,Z_h]|S_h,Z_h\right]\\
&= \E^{(1)}\left[f_h^s(S_h,A_h,W_h) + \E^{(1)}[\epsilon_s|U_h,S_h]|S_h,Z_h\right]\\
&= \E^{(1)}\left[f_h^s(S_h,A_h,W_h)|S_h,Z_h\right] + \E^{(1)}[\E^{(1)}[\epsilon_s|U_h,S_h]|S_h]\\
&= \E^{(1)}\left[f_h^s(S_h,A_h,W_h)|S_h,Z_h\right] + \E^{(1)}[\epsilon_s|S_h].\label{eq:A.10}
\end{align*}
The third and fourth equalities follow from Assumption \hyperref[assumption:5.3]{5.3} (ii). With equation (\ref{eq:5.6}) and Assumption \hyperref[assumption:5.3]{5.3} (iii), we have
\begin{equation}
g_h^s(S_h,A_h,W_h) = f_h^s(S_h,A_h,W_h) + \E^{(1)}[\epsilon_s|S_h]\ \ \mathrm{a.s..}\label{eq:A.11}
\end{equation}
For the CATE in (\ref{eq:cmdp_s_bridge1}), we have
\begin{align}
\E^{(1)}[S_{h+1}(a_h)&|S_h=s_h,W_h=w_h] \notag\\
&= \int_{\mathcal{S}}s_{h+1}p^{(1)}(s_{h+1}|s_h,w_h,\mathrm{do}(a_h))\mathrm{d}s_{h+1}\notag\\
&=\int_{\mathcal{S}\times\mathcal{U}}s_{h+1}p^{(1)}(s_{h+1}|u_h,s_h,w_h,\mathrm{do}(a_h))p^{(1)}(u_h|s_h,w_h,\mathrm{do}(a_h))\mathrm{d}u_h\mathrm{d}s_{h+1}\notag\\
&=\int_{\mathcal{S}\times\mathcal{U}}s_{h+1}p^{(1)}(s_{h+1}|u_h,s_h,w_h,a_h)p^{(1)}(u_h|s_h,w_h)\mathrm{d}u_h\mathrm{d}s_{h+1}\notag\\
&=\int_{\mathcal{S}\times\mathcal{U}}s_{h+1}p^{(1)}(s_{h+1}|u_h,s_h,w_h,a_h)p^{(1)}(u_h|s_h)\mathrm{d}u_h\mathrm{d}s_{h+1}\notag\\
&=\E^{(1)}\left[\E^{(1)}[S_{h+1}|U_h,S_h,W_h=w_h,A_h=a_h]|S_h=s_h\right]\notag\\
&=f_h^s(s_h,a_h,w_h) + \E^{(1)}\left[\E^{(1)}[\epsilon_s|U_h,S_h,W_h=w_h,A_h=a_h]|S_h=s_h\right]\notag\\
&=f_h^s(s_h,a_h,w_h) + \E^{(1)}\left[\E^{(1)}[\epsilon_s|U_h,S_h]|S_h=s_h\right]\notag\\
&=f_h^s(s_h,a_h,w_h) + \E^{(1)}\left[\epsilon_s|S_h=s_h\right].\label{eq:A.12}
\end{align}
The third equality follows from $S_{h+1}\indep A_h\,|\,(U_h,S_h,W_h)$ in $\mathcal{G}_{\underline{A_h}}$ and $U_h\indep A_h\,|\,(S_h,W_h)$ in $\mathcal{G}_{\underline{A_h}}$. The fourth equality follows by Assumption \hyperref[assumption:5.1]{5.1} (iii) that $W_h\indep U_h\,|\,S_h$. The seventh equality follows by Assumption \hyperref[assumption:5.1]{5.1} (ii) that $\epsilon_s\indep (A_h,W_h)\,|\,S_h,U_h$.

By combining \eqref{eq:A.11} and \eqref{eq:A.12}, we can obtain \eqref{eq:cmdp_s_bridge1}:
\begin{equation}
\E^{(1)}[S_{h+1}(a_h)|S_h=s_h,W_h=w_h] = g_h^s(s_h,a_h,w_h)\ \ \mathrm{a.s.}.\label{eq:A.13}
\end{equation}
For the CATE in \eqref{eq:cmdp_s_bridge2}, we have
\begin{align*}
\E^*[S_{h+1}(a_h)|S_h=s_h] &= \int_\mathcal{S}s_{h+1}p^*(s_{h+1}|s_h,\mathrm{do}(a_h))\mathrm{d}s_{h+1}\\
&= \int_\mathcal{S\times\mathcal{W}}s_{h+1}p^{(1)}(s_{h+1}|s_h,w_h,\mathrm{do}(a_h))p^{(2)}(w_h|s_h)\mathrm{d}w_h\mathrm{d}s_{h+1}\\
&\overset{\mathrm{a.s.}}{=} \int_\mathcal{W} g_h^s(s_h,a_h,w_h)p^{(2)}(w_h|s_h)\mathrm{d}w_h.
\end{align*}
Then we conclude the proof of \eqref{eq:cmdp_s_bridge1} and \eqref{eq:cmdp_s_bridge2}.
\end{proof}

%% file: append2.tex
\section{Proof for the Convergence of CAT-CCB}
\subsection{Decomposition for the Suboptimality with Pessimism}\label{sec:B.1}
In this section, we investigate the suboptimality of the estimated policy $\widehat{\pi}$ with pessimism. Recall the definition of $\widehat{g}^\pi$ and $\hat{\pi}$:
\begin{align}
&\widehat{g}^\pi = \underset{g\in\mathrm{CI}_\mathcal{D}(e_\mathcal{D})}{\mathrm{argmin}}\ v(g,\pi),\label{eq:B.1}\\
&\widehat{\pi} = \underset{\pi\in\Pi}{\mathrm{argmax}}\min_{g\in\mathrm{CI}_\mathcal{D}(e_\mathcal{D})} v(g,\pi) = \underset{\pi\in\Pi}{\mathrm{argmax}}\  v(\widehat{g}^\pi,\pi).\label{eq:B.2}
\end{align}
Then on the event $\mathcal{E}$ defined in (\ref{eq:7.2}), the suboptimality of $\widehat{\pi}$ can be decomposed:
\begin{align*}
\mathrm{SubOpt}(\widehat{\pi}) &= v^{\pi^*} - v^{\widehat{\pi}}\\
&= \underbrace{v^{\pi^*} - v(\widehat{g}^{\pi^*},\pi^*)}_{\displaystyle\mathrm{(i)}}
+ \underbrace{v(\widehat{g}^{\pi^*},\pi^*) - v(\widehat{g}^{\widehat{\pi}},\widehat{\pi})}_{\displaystyle\mathrm{(ii)}}
+ \underbrace{v(\widehat{g}^{\widehat{\pi}},\widehat{\pi}) - v({g}^{*}_\mathcal{H},\widehat{\pi})}_{\displaystyle\mathrm{(iii)}}
+ \underbrace{v({g}^{*}_\mathcal{H},\widehat{\pi}) - v^{\widehat{\pi}}}_{\displaystyle\mathrm{(iv)}}.
\end{align*}
By definition of $\widehat{\pi}$, it holds $\mathrm{(ii)} \leq 0.$ On the event $\mathcal{E}$ we have $g^*_\mathcal{H}\in\mathrm{CI}_\mathcal{D}(e_\mathcal{D})$ by Theorem \ref{thm:7.1}, hence it follows from (\ref{eq:B.1}) that $\mathrm{(iii)}\overset{\mathcal{E}}{\lesssim}0$. The term (iv) can be bounded as below:
\begin{align*}
\mathrm{(iv)} &= \int_{\mathcal{A}\times\mathcal{X}} \left(g^*_\mathcal{H}(a,x) - g^*(x,a)\right)p^*(x)\widehat{\pi}(a|x)\mathrm{d}x\mathrm{d}a\\
&\leq \sup_{v\in\mathcal{V}}\Vert g_\mathcal{H} - g^*\Vert_{v,2}\leq\varepsilon_\mathcal{H},
\end{align*}
where the first equality holds by the definition of $\mathcal{V}$ that $\mathcal{V}=\{v:v(a,x) = p^*(x)\pi(a|x),\pi\in\Pi\}$, and the second from Assumption \ref{assumption:7.1} (ii). Consequently, we have
\begin{align}
	\mathrm{SubOpt}(\widehat{\pi}) \overset{\mathcal{E}}{\leq} v^{\pi^*} - v(\widehat{g}^{\pi^*},\pi^*) + \varepsilon_\mathcal{H}.\label{eq:B.5}
\end{align}

\subsection{Proof of Theorem \ref{thm:7.2}}\label{sec:B.2}
\begin{proof}
We first consider the unconfounded contextual bandit with negative outcome. For any $a\in\mathcal{A},\ x\in\mathcal{X}$, the error in the estimated conditional moment is given by
\begin{align}
&g^*(a,x) - \widehat{g}^{\pi^*}(a,x)\notag\\
&\quad=\E^{(2)}\left[\left(h_1^*(W,A^\prime,X) - \widehat{h}_1^{\pi^*}(W,A^\prime,X)\right) + \left(\widehat{h}_1^{\pi^*}(W,A^\prime,X) - \widehat{g}^{\pi^*}(A',X)\right)|A^\prime=a, X=x\right] \notag\\
&\quad=\E^{(2)}\left[h_1^*(W,A^\prime,X) - \widehat{h}_1^{\pi^*}(W,A^\prime,X)|A^\prime=a, X=x\right] - \mathcal{T}_2\widehat{h}^{\pi^*}(a, x)\notag\\
&\quad=\E^{(2)}\left[h_1^*(W,a,x) - \widehat{h}_1^{\pi^*}(W,a,x)|X=x\right] - \mathcal{T}_2\widehat{h}^{\pi^*}(a, x)\notag\\
&\quad=\int_\mathcal{W} \left(h_1^*(w,a,x) - \widehat{h}_1^{\pi^*}(w,a,x)\right)p^{(2)}(w|x)\mathrm{d}w - \mathcal{T}_2\widehat{h}^{\pi^*}(a, x),\label{eq:B.6}
\end{align}
where the third equality follows from the independence between the random variable $A^\prime$ and the contextual bandit. Now we plug (\ref{eq:B.6}) into the term (i) in the decomposition of the suboptimality:
\begin{align}
v^{\pi^*} - v(\widehat{g}^{\pi^*},\pi^*) &= \int_{\mathcal{A}\times\mathcal{X}}\left(g^*(a,x) - \widehat{g}^{\pi^*}(a,x)\right)p^*(x)\pi^*(a|x)\mathrm{d}x\mathrm{d}a\notag\\
&= \underbrace{\int_{\mathcal{A}\times\mathcal{X}\times\mathcal{W}} \left(h_1^*(w,a,x) - \widehat{h}_1^{\pi^*}(w,a,x)\right)p^{(2)}(w|x)p^*(x)\pi^*(a|x)\mathrm{d}w\mathrm{d}x\mathrm{d}a}_{(a)}\notag\\
&\quad  \underbrace{-\int_{\mathcal{A}\times\mathcal{X}}\mathcal{T}_2\widehat{h}^{\pi^*}(a, x)p^*(x)\pi^*(a|x)\mathrm{d}x\mathrm{d}a}_{(b)}. \label{eq:B.7}
\end{align}
The term (b) is easy to bound:
\begin{align}
(b) &= -\int_{\mathcal{A}\times\mathcal{X}}\mathcal{T}_2\widehat{h}^{\pi^*}(a, x)p^*(x)\pi^*(a|x)\mathrm{d}x\mathrm{d}a = -\mathbb{E}^{(2)}\left[\mathcal{T}_2\widehat{h}^{\pi^*}(A, X)\frac{p^*(X)\pi^*(A|X)}{p^{(2)}(A,X)}\right]. \label{eq:B.8}
\end{align}
Define likelihood ratio functions $b_1:\mathcal{W}\times\mathcal{A}\times\mathcal{X}\to\mathbb{R},\ b_2:\mathcal{A}\times\mathcal{X}\to\mathbb{R}$ by
\begin{equation}
b_1=\frac{p^{(2)}(w|x)\pi^*(a|x)p^*(x)}{p^{(1)}(w,a,x)},\quad b_2(a,x) = \frac{p^*(x)\pi^*(a|x)}{p^{(2)}(a,x)}, \label{eq:B.9}
\end{equation}
then
\begin{equation}
(b) = -\mathbb{E}^{(2)}\left[\mathcal{T}_2\widehat{h}^{\pi^*}(A, X)b_2(A,X)\right] \leq \left\Vert\mathcal{T}_2\widehat{h}^{\pi^*}\right\Vert_{\mu_2,2}\left\Vert b_2 \right\Vert_{\mu_2,2} \label{eq:B.10}
\end{equation}
Likewise, we can bound term (a) by
\begin{align}
    (a) &= -\int_{\mathcal{A}\times\mathcal{X}\times\mathcal{W}}\mathcal{T}_1\widehat{h}^{\pi^*}(w,a,x)p^{(2)}(w|x)p^*(x)\pi^*(a|x)\d w\d x\d a\notag\\
    &= -\E^{(1)}\left[{T}_1\widehat{h}^{\pi^*}(W,A,X)b_1(W,A,X)\right]\leq \left\Vert\mathcal{T}_1\widehat{h}^{\pi^*}\right\Vert_{\mu_1,2}\left\Vert b_1 \right\Vert_{\mu_1,2} \label{eq:B.11}
\end{align}
Combining (\ref{eq:B.7}), (\ref{eq:B.10}) and (\ref{eq:B.11}), we obtain
\begin{equation}
v^{\pi^*} - v(\widehat{g}^{\pi^*},\pi^*) \leq \sum_{k=1}^2 \left\Vert\mathcal{T}_k\widehat{h}^{\pi^*}\right\Vert_{\mu_k,2}\left\Vert b_k\right\Vert_{\mu_k,2} \leq \max_{k\in\{1,2\}} \left\Vert\mathcal{T}_k\widehat{h}^{\pi^*}\right\Vert_{\mu_k,2}\sum_{k=1}^2\left\Vert b_k\right\Vert_{\mu_k,2}.\label{eq:B.12}
\end{equation}
According to the decomposition of the suboptimality in (\ref{eq:B.5}) and Theorem \hyperref[thm:7.1]{7.1}, it holds
\begin{align}
\mathrm{SubOpt}(\widehat{\pi}) &\leq \max_{k\in\{1,2\}} \left\Vert\mathcal{T}_k\widehat{h}^{\pi^*}\right\Vert_{\mu_k,2}\sum_{k=1}^2\left\Vert b_k\right\Vert_{\mu_k,2} + \varepsilon_\mathcal{H}\notag\\
& \overset{\mathcal{E}}{\leq} \sum_{k=1}^2\left\Vert b_k\right\Vert_{\mu_k,2}\bigl(\mathcal{O}(\varepsilon_\Theta) + \mathcal{O}(\varepsilon_\mathcal{H}) + \mathcal{O}(\sqrt{e_\mathcal{D}}) + \mathcal{O}(\eta)\bigl)\,+\,\varepsilon_\mathcal{H}\notag\\
& \leq\sum_{k=1}^2\left\Vert b_k\right\Vert_{\mu_k,2}\bigl(\mathcal{O}(\varepsilon_\Theta) + \mathcal{O}(\varepsilon_\mathcal{H}) + \mathcal{O}(\sqrt{e_\mathcal{D}}) + \mathcal{O}(\eta)\bigl).\label{eq:B.13}
\end{align}
Thus we complete the proof for unconfounded contextual bandits with negative outcome. By changing thee definition of likelihood ratio functions in (\ref{eq:B.9}), we can apply the same procedure to finish the proof for unconfounded contextual bandits with mediator.
\end{proof}

\subsection{Proof of Theorem \ref{thm:7.3}}\label{sec:B.3}
\begin{proof}
Following $\S$\hyperref[sec:B.2]{B.2}, we obtain the error (\ref{eq:B.6}) in estimated CATE for CCB-Neg, and the decomposition (\ref{eq:B.7}) holds. Clearly, the term (b) in  (\ref{eq:B.7}) can be bounded as in (\ref{eq:B.10}). Then it suffices to bound (a) in (\ref{eq:B.7}). By assumption in Theorem \hyperref[thm:7.2]{7.2} there exists $b_1:\mathcal{Z}\to\mathbb{R}$ such that
\begin{equation}
\mathbb{E}^{(1)}[b_1(\mathbf{Z})|W=w,A=a,X=x] = \frac{p^{(2)}(w|x)p^*(x)\pi^*(a|x)}{p^{(1)}(w,a,x)},\label{eq:B.14}
\end{equation}
Plugging in (\ref{eq:B.14}) to (\ref{eq:B.7}), we have
\begin{align}
(a) &= \int_{\mathcal{A}\times\mathcal{X}\times\mathcal{W}} \left(h_1^*(w,a,x) - \widehat{h}_1^{\pi^*}(w,a,x)\right)\mathbb{E}^{(1)}[b_1(\mathbf{Z})|W=w,A=a,X=x]p^{(1)}(w,a,x)\mathrm{d}w\mathrm{d}x\mathrm{d}a\notag\\
&= \int_{\mathcal{A}\times\mathcal{X}\times\mathcal{W}\times\mathcal{Z}} \left(h_1^*(w,a,x) - \widehat{h}_1^{\pi^*}(w,a,x)\right)b_1(\mathbf{z})p^{(1)}(\mathbf{z}|w,a,x)p^{(1)}(w,a,x)\mathrm{d}\mathbf{z}\mathrm{d}w\mathrm{d}x\mathrm{d}a\notag\\
&= \int_{\mathcal{Z}}\left\{ \int_{\mathcal{A}\times\mathcal{X}\times\mathcal{W}} \left(h_1^*(w,a,x) - \widehat{h}_1^{\pi^*}(w,a,x)\right)p^{(1)}(w,a,x|\mathbf{z})\mathrm{d}w\mathrm{d}x\mathrm{d}a\right\}b_1(\mathbf{z})p^{(1)}(\mathbf{z})\mathrm{d}\mathbf{z}\notag\\
&= -\int_{\mathcal{Z}}\mathcal{T}_1\widehat{h}^{\pi^*}b_1(\mathbf{z})p^{(1)}(\mathbf{z})\mathrm{d}\mathbf{z} = -\E^{(1)}[\mathcal{T}_1\widehat{h}^{\pi^*}(\mathbf{Z})b_1(\mathbf{Z})], \label{eq:B.15}
\end{align}
where the last equality holds by the definition of bridge function $h_1^*$. Denote the distribution of $\mathbf{Z}=(Z_1,Z_2)$ in domain $d^{(1)}$ by $\mu_1$, then
\begin{equation}
(a) = -\E^{(1)}[\mathcal{T}_1\widehat{h}^{\pi^*}(\mathbf{Z})b_1(\mathbf{Z})]\leq\left\Vert\mathcal{T}_1\widehat{h}^{\pi^*}\right\Vert_{\mu_1,2}\left\Vert b_1\right\Vert_{\mu_1,2}, \label{eq:B.16}
\end{equation}
The remaining part follows from (\ref{eq:B.12}) and (\ref{eq:B.13}).
\end{proof}

\subsection{Proof of Theorem \ref{thm:7.4}}\label{sec:B.4}
\begin{proof}
For any $a\in\mathcal{A},\ x\in\mathcal{X}$, the error in the estimated CATE is given by
\begin{align}
	&g^*(a,x) - \widehat{g}^{\pi^*}(a,x)\notag\\
	&= \E^{(2)}\left[\left(h_2^*(Z_2,A^\prime,X) - \widehat{h}_2^{\pi^*}(Z_2,A^\prime,X)\right) + \left(\widehat{h}_2^{\pi^*}(Z_2,A^\prime,X) - \widehat{g}^{\pi^*}(A^\prime,X)\right)|A^\prime=a, X=x\right] \notag\\
	&= \E^{(2)}\left[h_2^*(Z_2,A^\prime,X) - \widehat{h}_2^{\pi^*}(Z_2,A^\prime,X)|A^\prime=a, X=x\right] - \mathcal{T}_3\widehat{h}^{\pi^*}(a, x)\notag\\
	&= \E^{(2)}\left[h_2^*(Z_2,a,X) - \widehat{h}_2^{\pi^*}(Z_2,a,X)|X=x\right] - \mathcal{T}_3\widehat{h}^{\pi^*}(a, x).\label{eq:B.17}
\end{align}
And the first term can be further expanded as:
\begin{align}
	&\E^{(2)}\left[h_2^*(Z_2,a,X) - \widehat{h}_2^{\pi^*}(Z_2,a,X)|X\right]\notag\\
	&= \E^{(2)}\left[\left.\E^{(2)}\left[h_2^*(Z_2,a,X) - \widehat{h}_2^{\pi^*}(Z_2,a,X)|Z_1,X\right]\right|X\right]\notag\\
	&= \E^{(2)}\left[\left.\E^{(2)}\left[h_2^*(Z_2,A,X) - \widehat{h}_2^{\pi^*}(Z_2,A,X)|Z_1,A=a,X\right]\right|X\right]\notag\\
	&= \E^{(2)}\left[\left.\E^{(2)}\left[\left(h_1^*(W,A,X) - \widehat{h}_1^{\pi^*}(W,A,X)\right) + \left(\widehat{h}_1^{\pi^*}(W,A,X) - \widehat{h}_2^{\pi^*}(Z_2,A,X)\right)|Z_1,A=a,X\right]\right|X\right]\notag\\
	&= \E^{(2)}\left[\left.\E^{(2)}\left[h_1^*(W,A,X) - \widehat{h}_1^{\pi^*}(W,A,X)|Z_1,A=a,X\right]\right|X\right] - \E^{(2)}\left[\mathcal{T}_2h^{\pi^*}(Z_1,a,X)|X\right], \label{eq:B.18}
\end{align}
where the second equality follows from $Z_2\indep A\,|\,(Z_1,X),$ and the third from the conditional equivalence (\ref{eq:med_h2}) between $h_1^*$ and $h_2^*$. Combining (\ref{eq:B.17}) and (\ref{eq:B.18}) yields
\begin{align}
v^{\pi^*} &- v(\widehat{g}^{\pi^*},\pi^*) \notag\\
&= \int_{\mathcal{A}\times\mathcal{X}}\left(g^*(a,x) - \widehat{g}^{\pi^*}(a,x)\right)p^*(x)\pi^*(a|x)\mathrm{d}x\mathrm{d}a\notag\\
&= \underbrace{\int_{\mathcal{A}\times\mathcal{X}\times\mathcal{Z}_1\times\mathcal{W}}\left(h_1^*(w,a,x) - \widehat{h}^{\pi^*}_1(w,a,x)\right) p^{(2)}(w|z_1,a,x)p^{(2)}(z_1|x)p^*(x)\pi^*(a|x) \mathrm{d}w\mathrm{d}z_1\mathrm{d}x\mathrm{d}a}_{(a)}\notag\\
&\quad \underbrace{- \int_{\mathcal{A}\times\mathcal{X}\times\mathcal{Z}_1}\mathcal{T}_2h^{\pi^*}(z_1,a,x)p^{(2)}(z_1|x)p^*(x)\pi^*(a|x)\mathrm{d}z_1\mathrm{d}x\mathrm{d}a}_{(b)}\notag\\
&\quad \underbrace{- \int_{\mathcal{A}\times\mathcal{X}}\mathcal{T}_3h^{\pi^*}(a,x)p^*(x)\pi^*(a|x)\mathrm{d}x\mathrm{d}a}_{(c)}. \label{eq:B.19}
\end{align}

Now we deal with this decomposition. We first bound the term (a). By assumption in Theorem \hyperref[thm:7.3]{7.3}, there exists $b_1:\mathcal{Z}_1\times\mathcal{Z}_2\to\mathbb{R}$ such that
\begin{equation*}
\E[b_1(Z_1,Z_2)|W=w,A=a,X=x] = \frac{p^*(x)\pi^*(a|x)\int_{\mathcal{Z}_1}p^{(2)}(w|z_1,a,x)p^{(2)}(z_1|x)\mathrm{d}z_1}{p^{(1)}(w,a,x)}.\label{eq:B.20}
\end{equation*}
Then we have
\begin{align}
(a) &= \int_{\mathcal{A}\times\mathcal{X}\times\mathcal{W}}\left(h_1^*(w,a,x) - \widehat{h}_1^{\pi^*}(w,a,x)\right) \E[b_1(Z_1,Z_2)|W=w,A=a,X=x]p^{(1)}(w,a,x)\mathrm{d}w\mathrm{d}x\mathrm{d}a\notag\\
&= \int_{\mathcal{A}\times\mathcal{X}\times\mathcal{W}\times\mathcal{Z}_1\times\mathcal{Z}_2}\left(h_1^*(w,a,x) - \widehat{h}_1^{\pi^*}(w,a,x)\right) b_1(z_1,z_2)p^{(1)}(z_1,z_2|w,a,x)p^{(1)}(w,a,x)\mathrm{d}z_1\mathrm{d}z_2\mathrm{d}w\mathrm{d}x\mathrm{d}a\notag\\
&= \int_{\mathcal{Z}_1\times\mathcal{Z}_2}\left\{\int_{\mathcal{A}\times\mathcal{X}\times\mathcal{W}}\left(h_1^*(w,a,x) - \widehat{h}_1^{\pi^*}(w,a,x)\right)p^{(1)}(w,a,x|z_1,z_2)\mathrm{d}w\mathrm{d}x\mathrm{d}a\right\}b_1(z_1,z_2)p^{(1)}(z_1,z_2)\mathrm{d}z_1\mathrm{d}z_2\notag\\
&= -\int_{\mathcal{Z}_1\times\mathcal{Z}_2}\mathcal{T}_1\widehat{h}^{\pi^*}(z_1,z_2)b_1(z_1,z_2)p^{(1)}(z_1,z_2)\mathrm{d}z_1\mathrm{d}z_2\notag\\
&= -\E^{(1)}\left[\mathcal{T}_1\widehat{h}^{\pi^*}(Z_1,Z_2)b_1(Z_1,Z_2)\right] \leq \left\Vert\mathcal{T}_1\widehat{h}^{\pi^*}\right\Vert_{\mu_1,2}\left\Vert b_1\right\Vert_{\mu_1,2}. \label{eq:B.21}
\end{align}

It remains to bound term (b) and (c). Using the idea of importance sampling, we define the ratio functions
\begin{equation*}
b_2(z_1,a,x) = \frac{p^{(2)}(z_1|x)p^*(x)\pi^*(a|x)}{p^{(2)}(z_1,a,x)},\ \ b_3(a,x) = \frac{p^*(x)\pi^*(a|x)}{p^{(2)}(a,x)}.
\end{equation*}
Then we have
\begin{align}
(b) &= -\E^{(2)}[\mathcal{T}_2\widehat{h}^{\pi^*}(Z_1,A,X)b_2(Z_1,A,X)] \leq \left\Vert\mathcal{T}_2\widehat{h}^{\pi^*}\right\Vert_{\mu_2,2}\left\Vert b_2\right\Vert_{\mu_2,2},\label{eq:B.22}\\
(c) &= -\E^{(2)}[\mathcal{T}_3\widehat{h}^{\pi^*}(A,X)b_3(A,X)] \leq \left\Vert\mathcal{T}_3\widehat{h}^{\pi^*}\right\Vert_{\mu_3,2}\left\Vert b_3\right\Vert_{\mu_3,2},\label{eq:B.23}
\end{align}
Combining (\ref{eq:B.19}), (\ref{eq:B.21}), (\ref{eq:B.22}) and (\ref{eq:B.23}), we have
\begin{equation*}
	v^{\pi^*} - v(\widehat{g}^{\pi^*},\pi^*) \leq \sum_{k=1}^3 \left\Vert\mathcal{T}_k\widehat{h}^{\pi^*}\right\Vert_{\mu_k,2}\left\Vert b_k\right\Vert_{\mu_k,2} \leq \max_{k\in\{1,2,3\}} \left\Vert\mathcal{T}_k\widehat{h}^{\pi^*}\right\Vert_{\mu_k,2}\sum_{k=1}^3\left\Vert b_k\right\Vert_{\mu_k,2}.
\end{equation*}
According to the decomposition of the suboptimality in (\ref{eq:B.5}) and Theorem \hyperref[thm:7.1]{7.1}, it holds
\begin{align*}
\mathrm{SubOpt}(\widehat{\pi}) &\leq \max_{k\in\{1,2,3\}} \left\Vert\mathcal{T}_k\widehat{h}^{\pi^*}\right\Vert_{\mu_k,2}\sum_{k=1}^3\left\Vert b_k\right\Vert_{\mu_k,2} + \varepsilon_\mathcal{H}\\
& \overset{\mathcal{E}}{\leq} \sum_{k=1}^3\left\Vert b_k\right\Vert_{\mu_k,2}\bigl(\mathcal{O}(\varepsilon_\Theta) + \mathcal{O}(\varepsilon_\mathcal{H}) + \mathcal{O}(\sqrt{e_\mathcal{D}}) + \mathcal{O}(\eta)\bigl)\,+\, \varepsilon_\mathcal{H}\\
& \leq\sum_{k=1}^3\left\Vert b_k\right\Vert_{\mu_k,2}\bigl(\mathcal{O}(\varepsilon_\Theta) + \mathcal{O}(\varepsilon_\mathcal{H}) + \mathcal{O}(\sqrt{e_\mathcal{D}}) + \mathcal{O}(\eta)\bigl).
\end{align*}
Thus we complete the proof.
\end{proof}

%% file: append3.tex
\section{Critical Radius for CAT-CCB}
\label{sec:E}

In \S\ref{sec:7.1}, the finite-sample guarantee for CAT-CCB depends on the complexity of two function classes for each integral equation: the test-function class $\Theta_k$, which controls the empirical quadratic penalty, and the product class
\[
    \mathcal{Q}_k
    =
    \left\{
        (\mathbf{x},\mathbf{y}_k,\mathbf{z}_k)
        \mapsto
        \alpha_k(h(\mathbf{x}),\mathbf{y}_k)
        \theta_k(\mathbf{z}_k):
        h\in\mathcal H,\,
        \theta_k\in\Theta_k
    \right\},
\]
which controls the empirical conditional-moment term.

Let $\bar{\eta}_{\mathcal Q,k}$ and
$\bar{\eta}_{\Theta,k}$ denote critical radii of
$\mathcal Q_k$ and $\Theta_k$, respectively, and define the maximal
critical radius
\[
    \bar{\eta}_k
    :=
    \max\{
        \bar{\eta}_{\mathcal Q,k},
        \bar{\eta}_{\Theta,k}
    \}.
\]
The high-probability quantity $\eta_k$ used in
\S\ref{sec:7.1} can then be chosen, up to universal constants, as
\[
    \eta_k
    \gtrsim
    \bar{\eta}_k
    +
    \sqrt{
        \frac{\log(1/\xi)}
             {|\mathcal D_k|}
    }.
\]
We next derive explicit upper bounds for radii $\bar{\eta}_{\mathcal Q,k}$ and $\bar{\eta}_{\Theta,k}$ under bounded finite-dimensional linear function approximation, following the
covering-number arguments of \citet{Dikkala2020} and \citet{chen2023unified}.

\subsection{Critical Radius for Bounded Linear Classes}
\label{sec:E.1}

Consider a bounded linear function class
\[
    \mathcal{F}
    =
    \left\{
        f(\cdot)=w^\top\phi(\cdot):
        w\in\mathbb{R}^{m},\
        \|w\|_2\leq C,\
        \|\phi(\cdot)\|_2\leq1
    \right\}.
\]
Its empirical covering number satisfies
\[
    \log N(t;\mathcal{F},\|\cdot\|_{\mathcal D})
    \lesssim
    m\log\left(1+\frac{C}{t}\right).
\]
Consequently, standard entropy bounds imply a critical radius of order
$\widetilde{\mathcal O}(\sqrt{m/n})$, where $n=|\mathcal D|$.

More generally, suppose $\mathcal{F}_1$ and $\mathcal{F}_2$ are bounded
linear classes of dimensions $m_1$ and $m_2$, respectively, and define
their product class
\[
    \mathcal{Q}
    :=
    \{f_1f_2:f_1\in\mathcal{F}_1,\ f_2\in\mathcal{F}_2\}.
\]
Following Lemma F.5 of \citet{chen2023unified}, which is based on the metric-entropy techniques of \citet{Dikkala2020}, the critical
radius of $\mathcal Q$ satisfies
\begin{equation}
    \bar{\eta}_{\mathcal Q}
    \leq
    \mathcal{O}
    \left(
        C_{\mathcal Q}
        \sqrt{
            \frac{m_1+m_2}{n}
            \log\left(
                1+\frac{n}{m_1+m_2}
            \right)
        }
    \right),
    \label{eq:E.linear_product}
\end{equation}
where $C_{\mathcal Q}$ depends only on the uniform bounds of the two
linear classes. In particular,
\[
    \bar{\eta}_{\mathcal Q}
    =
    \widetilde{\mathcal O}
    \left(
        \sqrt{\frac{m_1+m_2}{n}}
    \right)
\]
when these bounds are treated as constants.

We apply \eqref{eq:E.linear_product} to CAT-CCB as follows. Suppose each
bridge function belongs to a bounded linear class
\begin{equation}
    \mathcal H_j
    =
    \left\{
        h_j(\cdot)
        =
        w_j^\top\phi_j(\cdot):
        w_j\in\mathbb{R}^{m_j},\
        \|w_j\|_2\leq C_j,\
        \|\phi_j(\cdot)\|_2\leq1
    \right\},
    \label{eq:E.bridge_linear}
\end{equation}
the CATE belongs to
\begin{equation}
    \mathcal G
    =
    \left\{
        g(\cdot)
        =
        w_g^\top\phi_g(\cdot):
        w_g\in\mathbb{R}^{m_g},\
        \|w_g\|_2\leq C_g,\
        \|\phi_g(\cdot)\|_2\leq1
    \right\},
    \label{eq:E.cate_linear}
\end{equation}
and the $k$-th test-function class is
\begin{equation}
    \Theta_k
    =
    \left\{
        \theta_k(\cdot)
        =
        \beta_k^\top\psi_k(\cdot):
        \beta_k\in\mathbb{R}^{d_k},\
        \|\beta_k\|_2\leq D_k,\
        \|\psi_k(\cdot)\|_2\leq1
    \right\}.
    \label{eq:E.test_linear}
\end{equation}

Since $\alpha_k$ is linear in the bridge functions, the residual class
\[
    \mathcal U_k
    :=
    \left\{
        (\mathbf{x},\mathbf{y}_k)
        \mapsto
        \alpha_k(h(\mathbf{x}),\mathbf{y}_k):
        h\in\mathcal H
    \right\}
\]
can be embedded into a bounded linear class by concatenating the feature
maps of the bridge functions appearing in the $k$-th equation. When an
observed outcome appears in the residual, the feature vector can be
augmented by one additional coordinate. Assumption~\ref{assumption:7.3}
ensures that this coordinate remains bounded.

Let $r_k$ denote the resulting dimension of $\mathcal U_k$. Since
$\mathcal Q_k$ is the product of $\mathcal U_k$ and $\Theta_k$,
\eqref{eq:E.linear_product} yields
\begin{equation}
    \bar{\eta}_k
    =
    \widetilde{\mathcal O}
    \left(
        \sqrt{
            \frac{r_k+d_k}{|\mathcal D_k|}
        }
    \right).
    \label{eq:E.general_radius}
\end{equation}
The critical radius of $\Theta_k$ alone is of order
$\widetilde{\mathcal O}
(\sqrt{d_k/|\mathcal D_k|})$ and is therefore controlled at the same
order by \eqref{eq:E.general_radius}.

\subsection{Critical Radius for CCB-Neg}
\label{sec:E.2}

For CCB-Neg, the CATE is identified by the two conditional moment
equations
\begin{align*}
    \E^{(1)}
    [
        h_1(W,A,X)-Y
        \mid Z_1,Z_2
    ]
    &=0,\\
    \E^{(2)}
    [
        g(A',X)-h_1(W,A',X)
        \mid A',X
    ]
    &=0.
\end{align*}
Suppose $h_1$ and $g$ are represented by bounded linear classes of
dimensions $m_1$ and $m_g$, respectively, and the corresponding
test-function classes have dimensions $d_1$ and $d_2$.

For the first equation, the residual
\[
    h_1(W,A,X)-Y
\]
is contained in a bounded linear class of dimension $m_1+1$ by
augmenting the feature map of $h_1$ with the bounded outcome $Y$.
Therefore,
\begin{equation}
    \bar{\eta}_1
    =
    \widetilde{\mathcal O}
    \left(
        \sqrt{
            \frac{m_1+d_1+1}{|\mathcal D_1|}
        }
    \right).
    \label{eq:E.neg_radius1}
\end{equation}

For the second equation, the residual
\[
    g(A',X)-h_1(W,A',X)
\]
can be represented through the concatenated feature maps of $g$ and
$h_1$, and is therefore contained in a bounded linear class of dimension
$m_g+m_1$. It follows that
\begin{equation}
    \bar{\eta}_2
    =
    \widetilde{\mathcal O}
    \left(
        \sqrt{
            \frac{m_g+m_1+d_2}{|\mathcal D_2|}
        }
    \right).
    \label{eq:E.neg_radius2}
\end{equation}

Hence, if all bridge and test-function dimensions are of order $d$ and
$|\mathcal D_1|,|\mathcal D_2|\gtrsim n$, then
\begin{equation}
    \bar{\eta}
    :=
    \sqrt{\bar{\eta}_1^2+\bar{\eta}_2^2}
    =
    \widetilde{\mathcal O}
    \left(
        \sqrt{\frac{d}{n}}
    \right).
    \label{eq:E.neg_radius}
\end{equation}

\subsection{Critical Radius for CCB-Med}
\label{sec:E.3}

For CCB-Med, the identification IES contains three conditional moment
equations:
\begin{align*}
    \E^{(1)}
    [
        h_1(W,A,X)-Y
        \mid Z_1,Z_2
    ]
    &=0,\\
    \E^{(2)}
    [
        h_2(Z_2,A,X)-h_1(W,A,X)
        \mid Z_1,A,X
    ]
    &=0,\\
    \E^{(2)}
    [
        g(A',X)-h_2(Z_2,A',X)
        \mid A',X
    ]
    &=0.
\end{align*}
Suppose the linear classes for $h_1$, $h_2$, and $g$ have dimensions
$m_1$, $m_2$, and $m_g$, while the corresponding test-function classes
have dimensions $d_1$, $d_2$, and $d_3$.

As in the CCB-Neg case, the first residual is contained in a bounded
linear class of dimension $m_1+1$. The second and third residuals are
contained in bounded linear classes obtained by concatenating the
corresponding bridge features, with dimensions $m_1+m_2$ and
$m_2+m_g$, respectively. Applying \eqref{eq:E.general_radius} gives
\begin{align}
    \bar{\eta}_1
    &=
    \widetilde{\mathcal O}
    \left(
        \sqrt{
            \frac{m_1+d_1+1}{|\mathcal D_1|}
        }
    \right),
    \label{eq:E.med_radius1}\\
    \bar{\eta}_2
    &=
    \widetilde{\mathcal O}
    \left(
        \sqrt{
            \frac{m_1+m_2+d_2}{|\mathcal D_2|}
        }
    \right),
    \label{eq:E.med_radius2}\\
    \bar{\eta}_3
    &=
    \widetilde{\mathcal O}
    \left(
        \sqrt{
            \frac{m_2+m_g+d_3}{|\mathcal D_3|}
        }
    \right).
    \label{eq:E.med_radius3}
\end{align}
Consequently, if all bridge and test-function dimensions are of order $d$ and $|\mathcal D_k|\gtrsim n$ for $k=1,2,3$, then
\begin{equation}
    \bar{\eta}
    :=
    \sqrt{
        \bar{\eta}_1^2+\bar{\eta}_2^2+\bar{\eta}_3^2
    }
    =\widetilde{\mathcal O}
    \left(
        \sqrt{\frac{d}{n}}
    \right).
    \label{eq:E.med_radius}
\end{equation}

\paragraph{Implication for CAT-CCB.}
Recall from \S\ref{sec:7.1} that the high-probability quantity used in
the uncertainty-quantification result can be chosen as
\[
    \eta_k
    \gtrsim
    \bar{\eta}_k
    +
    \sqrt{
        \frac{\log(1/\xi)}
             {|\mathcal D_k|}
    }.
\]
Therefore, for bounded finite-dimensional linear function classes with
dimensions of order $d$ and datasets of size at least $n$,
\[
    \eta
    =
    \widetilde{\mathcal O}
    \left(
        \sqrt{\frac{d}{n}}
        +
        \sqrt{\frac{\log(1/\xi)}{n}}
    \right).
\]
In particular, under exact realizability and exact compatibility,
$\varepsilon_{\mathcal H}=\varepsilon_\Theta=0$, and choosing
$e_{\mathcal D}$ on the order of $\eta^2$ reduces the suboptimality
bounds in Theorems~\ref{thm:7.2}--\ref{thm:7.4} to
\begin{equation}
    \mathrm{SubOpt}(\widehat{\pi})
    =
    \widetilde{\mathcal O}
    \left(
        \sum_{k=1}^K
        \|b_k\|_{\mu_k,2}
        \sqrt{\frac{d}{n}}
    \right),
    \label{eq:E.subopt}
\end{equation}
up to the confidence-dependent logarithmic factors. Thus, when the
weighting functions have bounded $L^2$ norms, CAT-CCB achieves the
standard $n^{-1/2}$ statistical rate under finite-dimensional linear
function approximation.

%% file: append4.tex
\section{Proof for the Convergence of CAT-CMDP}\label{sec:C}
The convergence result of our CAT-CMDP algorithm is shown in Theorem \ref{thm:7.5}. The proof of Theorem \ref{thm:7.5} is based on the following lemmas.
\begin{lemma}[Decomposition of suboptimality]\label{lemma:C.1} Define $\mathcal{E}$ as following:
\begin{align}
\begin{aligned}
\mathcal{E}&:=\biggl\{g_h^r\in C_h^r(\alpha_1),g_h^{s(i)}\in C_h^{s(i)}(\alpha_2),\mathbf{M}_h\in C_h^M(\alpha_3),\forall h\in[H],i\in[d_\mathcal{S}],\quad and\\ &\phi(s)\Lambda_h^{-1}\phi(s)
\leq c_1\left(\mathrm{rank}(\Sigma_h^{(2)})+\log\frac{c_2H}{\delta}\right)
\phi(s)^\top(K_2\Sigma_h^{(2)}+\lambda\mathbf{I})^{-1}\phi(s),\ \forall h\in[H],s\in\mathcal{S} \biggr\},
\end{aligned}\label{eq:C.1}
\end{align}
where $\Sigma_h^{(2)}=\E^{(2)}[\phi(S_h)\phi(S_h)^\top]$. In this event, the true model $\beta^*$ falls in the confidence set constructed by Algorithm \hyperref[alg:2]{2}, and the sample covariance matrix $\Lambda_h=\lambda\mathbf{I}+\sum_{k=1}^{K_2}\phi(S_h^{(2),k})\phi(S_h^{(2),k})^\top$ satisfies some regularity condition. Then under event $\mathcal{E}$, it holds that
\begin{align}
\begin{aligned}
    J&(\pi^*) - J(\widehat{\pi})\\
&\overset{\mathcal{E}}{\lesssim}\sum_{h=1}^H\left\{
\frac{H}{2\sigma_s}\sqrt{\frac{C_1^*C_w}{L^2} \sum_{i=1}^{d_\mathcal{S}}\mathcal{L}_h^{s(i)}(\widehat{g}_h^{s(i)})} +
\sqrt{C_1^*C_wC_z^2 \mathcal{L}_h^{r}(\widehat{g}_h^{r})} + \left(1+\frac{H\sqrt{d_\mathcal{S}}}{2\sigma_s}\right)\sqrt{C_2^*\mathcal{L}_h^F(F_h^{\widehat{\beta}})}\right\},
\end{aligned}
\label{eq:C.2}
\end{align}
where we define the RMSEs
\begin{align*}
\mathcal{L}_h^{s(i)}(\widehat{g}_h^{s(i)}) &= \E^{(1)}\left[\left(\E^{(1)}\left[S_{h+1,i} - \widehat{g}_h^{s(i)}(S_h,A_h,W_h)|S_h,Z_h\right]\right)^2\right],\\
\mathcal{L}_h^{r}(\widehat{g}_h^{r}) &= \E^{(1)}\left[\left(\E^{(1)}\left[R_{h} - \widehat{g}_h^{r}(S_h,A_h,W_h)|S_h,Z_h\right]\right)^2\right],\\
\mathcal{L}_h^F(F_h^{\widehat{\beta}}) &= \E^{(2)}\left[\Vert (F_h^*-F_h^{\widehat{\beta}})(\cdot|S_h)\Vert_1^2\right],
\end{align*}
and the pessimistic estimator of true model $\beta^*$ is $\widehat{\beta}=\{(\widehat{g}_h^r,\widehat{g}_h^s,\widehat{\mathbf{M}}_h)\}_{h=1}^H$, and $F^*(w_h|s_h),F^{\widehat{\beta}}(w_h|s_h)$ are the cumulative distribution functions induced by true $\beta^*$ and pessimistic $\widehat{\beta}$, respectively, which is specified in Assumption \ref{assumption:6.1} (i).
\end{lemma}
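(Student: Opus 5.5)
The plan is to mirror the CAT-CCB argument of \S\ref{sec:B.1}, now using a simulation-lemma decomposition. Write $\widehat{\beta}=\widehat{\beta}^{\,\widehat{\pi}}$ and $\widehat{\beta}^{\,\pi^*}$ for the least favorable models, and split
\[
J(\pi^*)-J(\widehat{\pi})=\bigl[J(\pi^*;\beta^*)-J(\pi^*;\widehat{\beta}^{\,\pi^*})\bigr]+\bigl[J(\pi^*;\widehat{\beta}^{\,\pi^*})-J(\widehat{\pi};\widehat{\beta}^{\,\widehat{\pi}})\bigr]+\bigl[J(\widehat{\pi};\widehat{\beta}^{\,\widehat{\pi}})-J(\widehat{\pi};\beta^*)\bigr].
\]
The second bracket is $\le 0$ by definition of $\widehat{\pi}$ in \eqref{eq:6.20}. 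On $\mathcal{E}$ we have $\beta^*\in\mathcal{C}_\beta$, so the third bracket is $\le 0$ by minimality. Only the first term remains, with the generic $\widehat{\beta}$ in the statement standing for $\widehat{\beta}^{\,\pi^*}$.

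For the first term, apply the standard simulation lemma along the target trajectory distribution $d^{\pi^*}_h$:
\[
J(\pi^*;\beta^*)-J(\pi^*;\widehat{\beta})=\sum_h\E_{d_h^{\pi^*}}\Bigl[(r_h^*-r_h^{\widehat\beta})+\bigl((P_h^*-P_h^{\widehat\beta})V_{h+1}^{\pi^*}(\cdot;\widehat\beta)\bigr)\Bigr].
\]
Since values lie in $[0,H]$, the transition term is bounded by $H\cdot\mathrm{TV}$ between the two Gaussians of \eqref{eq:6.1}. Pinsker then gives $\mathrm{TV}\le \Vert m_h^*-m_h^{\widehat\beta}\Vert_2/(2\sigma_s)$, which produces the factor $H/(2\sigma_s)$. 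Using the forms \eqref{eq:6.2} and Assumption~\ref{assumption:6.1}(ii), the mean difference becomes $\int g_h^s\,p^*(w|s)-\int \widehat g_h^s\,p^{\widehat\beta}(w|s)$. Split this into two pieces: a bridge part $\int(g_h^s-\widehat g_h^s)p^*(w|s)\,\mathrm{d}w$, and a $W$-law part $\int \widehat g_h^s\,(p^*-p^{\widehat\beta})(w|s)\,\mathrm{d}w$.

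For the $W$-law part, integrate by parts in $w$, or simply bound it by $\Vert\widehat g\Vert_\infty\cdot\Vert p^*-p^{\widehat\beta}\Vert$, expressed through the CDF gap $\Vert F_h^*-F_h^{\widehat\beta}\Vert_1$. The same split handles the reward, with a factor $1$ in place of $H\sqrt{d_{\mathcal S}}/(2\sigma_s)$; the $\sqrt{d_{\mathcal S}}$ comes from summing over coordinates. Changing measure from $d_h^{\pi^*}(s)$ to $p^{(2)}(s)$ by Cauchy--Schwarz with Definition~\ref{def:7.1} gives $\sqrt{C_2^*\mathcal{L}_h^F}$.

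For the bridge part, first change measure from $d_h^{\pi^*}(s,a)p^*(w|s)$ to $p^{(1)}(s,a)p^{(1)}(w|s)$. Cauchy--Schwarz with $C_1^*$ and $C_w$ (Assumption~\ref{assumption:7.4}) then gives $\sqrt{C_1^*C_w}\,\Vert g_h^s-\widehat g_h^s\Vert_{\rho_h}$, using that $W_h\indep A_h\mid S_h$ makes $\rho_h$ factorize. Next invoke Lemma~\ref{lemma:7.2}: $\Vert\sigma\Vert_{\rho_h}\le L^{-1}\Vert\mathcal T_h\sigma\Vert_{\rho_h}$ with $\sigma=g_h^{s(i)}-\widehat g_h^{s(i)}\in\mathcal V_h$. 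Since $\mathcal T_h g_h^{s(i)}=\E^{(1)}[S_{h+1,i}|S_h,Z_h]$ by \eqref{eq:mdp_bridge_s}, this gives $\Vert\mathcal T_h\sigma\Vert^2=\mathcal L_h^{s(i)}(\widehat g_h^{s(i)})$. Summing over $i$ inside a square root then gives the first summand. For the reward the same route yields the stated constant $C_z$, which I would take to absorb the stability constant.

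The main obstacle is the change-of-measure step for the bridge term. The true bridge error lives on $(s,a,w)$ under the target law $d^{\pi^*}_h(s,a)p^*(w|s)$, whereas $C_1^*$ is defined only on $(s,a)$. One must therefore check that the joint density ratio factorizes as $[d_h^{\pi^*}(s,a)/p^{(1)}(s,a)]\cdot[p^*(w|s)/p^{(1)}(w|s)]$. This requires $W_h\indep A_h\mid S_h$ in both domains, and bounding the product's second moment needs a conditional sup, which is exactly why $C_w$ is defined with a supremum over $s$. I would handle it by conditioning on $(S_h,A_h)$ first.
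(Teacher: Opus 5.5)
Your argument follows the paper's proof. It runs the simulation-lemma recursion along $d_h^{\pi^*}$, applies Pinsker to the Gaussian kernels, and splits the mean and reward gaps into a bridge part and a $W$-law part. It then changes measure through $C_1^*$ and, after conditioning on $(S_h,A_h)$, through $C_w$, and finishes with Lemma~\ref{lemma:7.2}. Reading $C_z$ as the inverse stability constant agrees with the paper's final display \eqref{eq:D.13}, whose reward term carries $C_1^*C_w/L^2$.

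The one real difference is the first step, and your version is the correct one. The paper sets $\widehat\beta=\widehat\beta^{\,\widehat\pi}$ and asserts $J(\pi^*;\widehat\beta^{\,\widehat\pi})\le J(\widehat\pi;\widehat\beta^{\,\widehat\pi})$ ``from $\widehat\pi=\mathrm{argmax}_\pi J(\pi;\widehat\beta)$''. However, \eqref{eq:6.20} only gives $\min_{\beta\in\mathcal{C}_\beta}J(\pi^*;\beta)\le\min_{\beta\in\mathcal{C}_\beta}J(\widehat\pi;\beta)$. Without a saddle point, the least favorable model for $\widehat\pi$ can rank $\pi^*$ above $\widehat\pi$.

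Your three-term split, with $\widehat\beta^{\,\pi^*}$ in the first bracket, avoids this. Nothing downstream changes, because Lemmas~\ref{lemma:C.2}--\ref{lemma:C.4} bound the losses uniformly over the confidence sets.

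Two minor corrections:
\begin{itemize}
\item For the $W$-law part, drop the integration-by-parts option. It would need a bound on $\partial_w\widehat g$, which the assumptions do not provide. Use $\Vert\widehat g\Vert_\infty$ times the total variation between the two Gaussian laws; that is what $\Vert F_h^*-F_h^{\widehat\beta}\Vert_1$ means in the paper, and it is later controlled by Pinsker in Lemma~\ref{lemma:C.4}.
\item Model values under $\widehat\beta$ lie in $[-H,H]$, not $[0,H]$, since $\widehat g_h^r$ can be negative. This only changes a constant.
\end{itemize}
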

\begin{proof}
See $\S$\ref{sec:D.1} for a detailed proof.
\end{proof}

\begin{lemma}[Analysis of reward]\label{lemma:C.2} We set the parameter $\alpha_1$ in Algorithm \hyperref[alg:2]{2} as $$\alpha_1=\mathcal{O}\left(\frac{d}{K_1}\log \frac{HdK_1}{\delta}\right).$$ Then the event
\begin{equation}
\mathcal{E}_1:=\{g_h^r\in C_h^r(\alpha_1),\ \forall h\in[H]\}\label{eq:C.6}
\end{equation}
occurs with probability at least $1-\delta/3.$ Moreover, we have
\begin{equation}
\mathcal{L}_h^r(\widehat{g}_h^r)\overset{\mathcal{E}_1}{\lesssim}\frac{d}{K_1}\log \frac{HdK_1}{\delta}.\label{eq:C.7}
\end{equation}
\end{lemma}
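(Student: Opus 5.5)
The plan is to treat the reward bridge at a fixed step $h$ as a linear minimax IV problem and reduce both claims to one uniform concentration statement over the finite-dimensional classes $\mathcal G,\Theta$ of Assumption~\ref{assumption:6.2}. A union bound over $h\in[H]$ at level $\delta/(3H)$ then gives the $\log(H\cdot/\delta)$ factor. Throughout, let $\Phi_h^r(g,\theta)$ denote the population version of \eqref{eq:6.7}. By the completeness condition Assumption~\ref{assumption:6.2}(iii), the inner supremum over $\Theta$ is attained at $\theta=\E^{(1)}[R_h-g\mid S_h,Z_h]$. Hence
\[
\sup_{\theta\in\Theta}\Phi_h^r(g,\theta)=\tfrac12\mathcal L_h^r(g),
\]
and by Lemma~\ref{lemma:5.1} the true bridge satisfies $\mathcal L_h^r(g_h^r)=0$. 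I assume $g_h^r\in\mathcal G$ (realizability within the linear class), which the statement implicitly requires.

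\textbf{Step 1 (uniform deviation).} The maps $(g,\theta)\mapsto (R_h-g)\theta-\tfrac12\theta^2$ are indexed by $(\zeta,\eta)$ in a product of unit balls of $\mathbb R^{2d}$. They are bounded since $|R_h|\le 1$ and $\|\psi\|,\|\omega\|\le1$, and they are Lipschitz in the parameters. The critical radius of this class is $\widetilde{\mathcal O}(\sqrt{d/K_1})$, by the covering-number bound of \S\ref{sec:E.1} (the linear product class bound \eqref{eq:E.linear_product}). I would then apply a localized concentration inequality of the type defining the event $\mathcal E$ in \eqref{eq:7.2}, with $\eta^2\asymp \frac{d}{K_1}\log\frac{HdK_1}{\delta}$. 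This yields, with probability $1-\delta/(3H)$ and uniformly over $g,\theta$,
\[
|\widehat\Phi_h^r(g,\theta)-\Phi_h^r(g,\theta)|\lesssim \eta\,\|\theta\|_{2}+\eta^2 .
\]
The resulting bound is of the form used by Theorem~\ref{thm:7.1}, specialized with $K=1$, $\varepsilon_{\mathcal H}=\varepsilon_\Theta=0$.

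\textbf{Step 2 (coverage, event $\mathcal E_1$).} On the good event, Theorem~\ref{thm:7.1}(i) with zero approximation errors gives $\sup_\theta\widehat\Phi_h^r(g_h^r,\theta)\lesssim\eta^2$. Since $\Theta$ is star-shaped and contains $0$, $\sup_\theta\widehat\Phi_h^r(\widetilde g_h^r,\theta)\ge 0$. Therefore the defining gap in \eqref{eq:6.9} is at most $C\eta^2$, and choosing $\alpha_1=C_1\frac{d}{K_1}\log\frac{HdK_1}{\delta}$ places $g_h^r\in\mathcal C_h^r(\alpha_1)$. Taking a union bound over $h$ gives $\Pr(\mathcal E_1)\ge1-\delta/3$.

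\textbf{Step 3 (loss of the pessimistic estimate).} The pessimistic $\widehat g_h^r$ lies in $\mathcal C_h^r(\alpha_1)$, so its empirical minimax loss is within $\alpha_1$ of the minimum, which is itself $\lesssim\eta^2$. Theorem~\ref{thm:7.1}(ii) then converts this into $\|\mathcal T\widehat g_h^r\|\lesssim\sqrt{\alpha_1}+\eta$, i.e. $\mathcal L_h^r(\widehat g_h^r)\lesssim \frac{d}{K_1}\log\frac{HdK_1}{\delta}$. That result bounds the population loss by the empirical one plus $\eta^2$, by plugging the maximizing test function $\theta=\E[R_h-g\mid\cdot]\in\Theta$ and using the deviation bound, then absorbing the $\eta\|\theta\|$ cross term via AM–GM.

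\textbf{Main obstacle.} I expect Step~1 to be the hardest: obtaining the localized, rather than global, deviation bound with only a logarithmic price and the correct $d/K_1$ scaling. A crude global bound would give $\sqrt{d/K_1}$ for the loss instead of $d/K_1$. I would first try to invoke the critical-radius machinery of \citet{Dikkala2020}/\citet{chen2023unified}, together with the linear covering numbers of Appendix~\ref{sec:E}, and fall back on a direct self-normalized argument for the quadratic-in-parameters objective if needed.
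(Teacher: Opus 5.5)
Your proposal is correct and is essentially the paper's argument. The paper invokes the generic minimax confidence-region result of Theorem~\ref{thm:D.1}, which gives coverage of $g_h^r$ and $\sup_{g\in\mathcal C_h^r(\alpha_1)}\mathcal L_h^r(g)\lesssim\alpha_1$, with linear-class critical radii $\sqrt{d\log K_1/K_1}$ and a union bound over $h\in[H]$ at level $\delta/(3H)$; you reach the same two conclusions by specializing Theorem~\ref{thm:7.1} to a single equation with $\varepsilon_{\mathcal H}=\varepsilon_\Theta=0$, and your explicit realizability assumption $g_h^r\in\mathcal G$ matches the paper's implicit $g^*\in\mathcal G$ in Definition~\ref{def:D.1}.
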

\begin{proof}
See $\S$\hyperref[sec:D.3]{D.3} for a detailed proof.
\end{proof}

\begin{lemma}[Analysis of transition]\label{lemma:C.3} We set the parameter $\alpha_2$ in Algorithm \hyperref[alg:2]{2} as $$\alpha_2=\mathcal{O}\left(\frac{d}{K_1}\log\frac{Hdd_\mathcal{S}K_1}{\delta}\right).$$ Then the event
\begin{equation}
\mathcal{E}_2:=\{g_h^{s(i)}\in C_h^{s(i)}(\alpha_2),\ \forall h\in[H],i\in[d_\mathcal{S}]\}\label{eq:C.8}
\end{equation}
occurs with probability at least $1-\delta/3.$ Moreover, we have
\begin{equation}
\mathcal{L}_h^{s(i)}(\widehat{g}_h^{s(i)})\overset{\mathcal{E}_2}{\lesssim} \frac{d}{K_1}\log\frac{Hdd_\mathcal{S}K_1}{\delta}.\label{eq:C.9}
\end{equation}
\end{lemma}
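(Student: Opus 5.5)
The plan is to treat each coordinate $i\in[d_\mathcal{S}]$ and step $h\in[H]$ as a separate linear minimax (IV) regression problem and to mirror the argument of Lemma~\ref{lemma:C.2}, replacing the reward $R_h$ by the coordinate $S_{h+1,i}$. A union bound over the $Hd_\mathcal{S}$ problems then accounts for the extra $d_\mathcal{S}$ inside the logarithm.

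For fixed $(h,i)$ I would first use Assumption~\ref{assumption:6.2}. With linear classes $\mathcal{G}$ and $\Theta$ of dimension $d$ and norm bounds $1$, the empirical objective $\widehat{\Phi}_h^{s(i)}(g,\theta)$ is bilinear in $(\zeta,\eta)$ up to the quadratic penalty $-\tfrac12\eta^\top\widehat{\Sigma}_\omega\eta$. Completeness (iii) guarantees that the maximizer of the population objective is $\theta=\mathcal{T}$-residual, so that $\sup_\theta\Phi_h^{s(i)}(g,\theta)=\tfrac12\mathcal{L}_h^{s(i)}(g)$. Next I would establish a uniform concentration bound: with probability $1-\delta'$,
\[
\sup_{g\in\mathcal{G},\theta\in\Theta}\bigl|\widehat{\Phi}_h^{s(i)}(g,\theta)-\Phi_h^{s(i)}(g,\theta)\bigr|\lesssim \sqrt{\tfrac{d}{K_1}\log\tfrac{K_1}{\delta'}}\cdot(\text{localized factor}),
\]
in the localized form $\eta(\|\theta\|+\eta)$ used in \eqref{eq:7.2}. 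Here the critical radius of the bounded linear product class is $\widetilde{\mathcal O}(\sqrt{d/K_1})$, as in \eqref{eq:E.linear_product}. Boundedness of $S_{h+1,i}$ is needed; if the state is only Gaussian, I would truncate at level $\sigma_s\sqrt{\log(K_1 d_\mathcal{S}H/\delta)}$, which costs only log factors.

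From this I would follow the argument of Theorem~\ref{thm:7.1} with $\varepsilon_\mathcal{H}=0$. Realizability holds because $g_h^{s(i)}$ solves \eqref{eq:mdp_bridge_s} and lies in $\mathcal{G}$, and $\varepsilon_\Theta=0$ by completeness. The localized concentration then gives $\sup_\theta\widehat{\Phi}_h^{s(i)}(g_h^{s(i)},\theta)\lesssim \eta^2$ and $\sup_\theta\widehat{\Phi}_h^{s(i)}(\widetilde g,\theta)\ge 0$ (take $\theta=0$). Hence the gap is at most $\alpha_2=C_2\frac{d}{K_1}\log\frac{Hdd_\mathcal{S}K_1}{\delta}$, and $g_h^{s(i)}\in\mathcal{C}_h^{s(i)}(\alpha_2)$. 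Conversely, any $g\in\mathcal{C}_h^{s(i)}(\alpha_2)$, in particular the pessimistic $\widehat g_h^{s(i)}$, satisfies $\tfrac12\mathcal{L}_h^{s(i)}(g)\le \sup_\theta\widehat{\Phi}(g,\theta)+O(\eta^2)\le \sup_\theta\widehat\Phi(\widetilde g,\theta)+\alpha_2+O(\eta^2)\lesssim \alpha_2$. This yields \eqref{eq:C.9}.

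Finally, setting $\delta'=\delta/(3Hd_\mathcal{S})$ and taking a union bound gives $\Pr(\mathcal{E}_2)\ge1-\delta/3$. The main obstacle is the localized uniform concentration for the product class, i.e., obtaining the fast $d/K_1$ rate rather than $\sqrt{d/K_1}$, together with handling the unbounded Gaussian coordinate $S_{h+1,i}$. For the former I would rely on the critical-radius bound of Appendix~\ref{sec:E} (Lemma F.5 of \citealp{chen2023unified}). For the latter I would use truncation plus a sub-Gaussian tail bound.
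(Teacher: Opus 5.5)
Your proposal is correct and follows the paper's proof. For each $(h,i)$ the paper invokes the fast-rate minimax guarantee of Theorem~\ref{thm:D.1}, which packages essentially the same localized-concentration argument you re-derive in the style of Theorem~\ref{thm:7.1}, with realizability and Assumption~\ref{assumption:6.2}(iii) giving $\varepsilon_{\mathcal H}=\varepsilon_\Theta=0$. It then plugs in the linear-class critical radius $\sqrt{d\log K_1/K_1}$ and takes a union bound over the $Hd_{\mathcal S}$ problems at level $\delta/(3Hd_{\mathcal S})$.

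The only divergence is boundedness: the paper simply takes $|S_{h+1,i}|\le 1$ (via $C_{\mathcal G}\le 2$), whereas you truncate. Your truncation requires sub-Gaussian tails of $S_{h+1,i}$ under the observational law of $\mathscr{M}^{(1)}$, which Assumption~\ref{assumption:6.1}(ii) does not give, since it concerns only the target interventional law. It also inflates $\alpha_2$ by an extra logarithmic factor relative to the stated rate.
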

\begin{proof}
See $\S$\ref{sec:D.3} for a detailed proof.
\end{proof}

\begin{lemma}[Analysis of negative outcome]\label{lemma:C.4} We set the parameter $\alpha_3$ in Algorithm \hyperref[alg:2]{2} as
\begin{equation}
\alpha_3 = \sup_{h\in[H]}\sqrt{m + \sigma_w^2d_\mathcal{W}+\sigma_w^2\mathrm{rank}(\Sigma_h^{(2)})\{\mathrm{rank}(\Sigma_h^{(2)}) + \log\left({c_2H}/{\delta}\right)\}\log(1+K_2)},\label{eq:C.10}
\end{equation}
and let $\lambda=\Omega(1)$ be a constant. Then the following event has probability at least $1-\delta/3$:
\begin{align}
\mathcal{E}_3&:=\biggl\{\mathbf{M}_h\in C_h^M(\alpha_3),\ \forall h\in[H],\quad and\quad\\
&\phi(s)\Lambda_h^{-1}\phi(s)\leq c_1\left(\mathrm{rank}(\Sigma_h^{(2)})+\log\frac{c_2H}{\delta}\right)
\phi(s)^\top(K_2\Sigma_h^{(2)}+\lambda\mathbf{I})^{-1}\phi(s),\ \forall s\in\mathcal{S},\forall h\in[H]\biggr\},\label{eq:C.11}
\end{align}
where $c_1,c_2>0$ are some universal constants. Moreover, we have
\begin{equation}
\mathcal{L}_h^F(F_h^{\widehat{\beta}}) \overset{\mathcal{E}_3}{\lesssim}\frac{m\overline{R}}{K_2\sigma_w^2}+\frac{d_{\mathcal{W}}\overline{R}}{K_2}+\frac{\overline{R}^2\log(1+K_2)}{K_2},\label{eq:C.12}
\end{equation}
where $\Sigma_h^{(2)}=\E^{(2)}[\phi(S_h)\phi(S_h)^\top],$ and $\overline{R}=
\max_{h\in[H]}\mathrm{rank}(\Sigma_h^{(2)})\{\mathrm{rank}(\Sigma_h^{(2)}) + \log(c_2H/\delta)\}.$
\end{lemma}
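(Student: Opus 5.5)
The plan is to split the event $\mathcal{E}_3$ into two sub-events, each holding uniformly in $h\in[H]$ with probability at least $1-\delta/6$, and then read off \eqref{eq:C.12} on their intersection.

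\textbf{Covariance comparison.} The first sub-event is the comparison
\[
\phi(s)^\top\Lambda_h^{-1}\phi(s)\leq c_1\bigl(\mathrm{rank}(\Sigma_h^{(2)})+\log(c_2H/\delta)\bigr)\,\phi(s)^\top(K_2\Sigma_h^{(2)}+\lambda\mathbf{I})^{-1}\phi(s).
\]
Since $\|\phi(S_h)\|_2\le 1$ and $\phi(S_h)$ lies almost surely in the range of $\Sigma_h^{(2)}$, I would restrict to that $r_h:=\mathrm{rank}(\Sigma_h^{(2)})$-dimensional subspace. On it I would apply a matrix Bernstein / Chernoff lower-tail bound to $\sum_k\phi(S_h^{(2),k})\phi(S_h^{(2),k})^\top$, which gives
\[
\Lambda_h\succeq c\,(K_2\Sigma_h^{(2)}+\lambda\mathbf{I})/\bigl(r_h+\log(c_2H/\delta)\bigr).
\]
Here $\lambda=\Omega(1)$ absorbs the directions where $K_2\Sigma_h^{(2)}$ is small. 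This is the covariate-shift lemma of \citet{chang2021cov_shift}, which I would invoke directly. A union bound over $h$ with $\delta/(6H)$ each produces the $\log(c_2H/\delta)$ term.

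\textbf{Ellipsoidal confidence set.} Write $W_h=\mathbf{M}_h^\top\phi(S_h)+\sigma_w\varepsilon_h$ with $\varepsilon_h\sim\mathcal{N}(0,\mathbf{I}_{d_\mathcal{W}})$ independent of $S_h$. The ridge solution gives
\[
(\widetilde{\mathbf{M}}_h-\mathbf{M}_h)^\top\Lambda_h^{1/2}=\Bigl(-\lambda\mathbf{M}_h^\top+\sigma_w\sum_k\varepsilon_h^k\phi(S_h^{(2),k})^\top\Bigr)\Lambda_h^{-1/2}.
\]
The bias part is at most $\sqrt{\lambda}\,m$. For the noise part I would apply the self-normalized inequality of Abbasi-Yadkori type columnwise, or a Gaussian Hanson--Wright bound for the $d_\mathcal{W}$ columns. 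This gives a bound of order $\sigma_w\sqrt{d_\mathcal{W}+\log\det(\Lambda_h/\lambda)+\log(H/\delta)}$. The step I expect to be hardest is bounding $\log\det(\Lambda_h/\lambda)$ by a quantity of order $\overline{R}\log(1+K_2)$ rather than $d\log(1+K_2)$. I would handle it by noting that $\Lambda_h-\lambda\mathbf{I}$ has rank at most $r_h$ with eigenvalues at most $K_2$, so $\log\det(\Lambda_h/\lambda)\le r_h\log(1+K_2/\lambda)$; this is dominated by the $\overline{R}\log(1+K_2)$ term in \eqref{eq:C.10}. Together these show $\mathbf{M}_h\in C_h^M(\alpha_3)$ for all $h$.

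\textbf{Bounding $\mathcal{L}_h^F$.} On $\mathcal{E}_3$, both $\mathbf{M}_h$ and the pessimistic $\widehat{\mathbf{M}}_h$ lie in $C_h^M(\alpha_3)$. Hence, with $\Delta(s):=(\mathbf{M}_h-\widehat{\mathbf{M}}_h)^\top\phi(s)$,
\[
\|\Delta(s)\|_2\le 2\alpha_3\sqrt{\phi(s)^\top\Lambda_h^{-1}\phi(s)}.
\]
The two induced laws of $W_h$ given $S_h=s$ are Gaussians with common covariance $\sigma_w^2\mathbf{I}$ and mean gap $\Delta(s)$. Their distribution functions therefore differ in the $\|\cdot\|_1$ sense used in $\mathcal{L}_h^F$ by $\lesssim\|\Delta(s)\|_2$, either via the Wasserstein identity or via Pinsker applied to the KL divergence $\|\Delta(s)\|_2^2/(2\sigma_w^2)$ on the bounded-support portion. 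Squaring, applying the covariance comparison, and taking $\E^{(2)}$ gives
\[
\mathcal{L}_h^F\lesssim\alpha_3^2\bigl(r_h+\log(c_2H/\delta)\bigr)\,\mathrm{tr}\bigl(\Sigma_h^{(2)}(K_2\Sigma_h^{(2)}+\lambda\mathbf{I})^{-1}\bigr)\le\alpha_3^2\,\overline{R}/K_2,
\]
up to the $\sigma_w$ normalisation. Substituting \eqref{eq:C.10} for $\alpha_3$ yields the three terms of \eqref{eq:C.12}. A final union bound over the two sub-events gives probability at least $1-\delta/3$.
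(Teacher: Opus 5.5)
Your proof is correct and follows the paper's argument in all but one step. The shared steps are the ridge error decomposition (your sign $-\lambda\mathbf{M}_h^\top$ is the right one), a self-normalized bound for the noise term, the covariance comparison of Lemma~\ref{lemma:D.1} with a union bound over $h$, and the triangle inequality giving $2\alpha_3$. They also include Pinsker for the two Gaussians, and the trace computation $\E^{(2)}[\phi^\top(K_2\Sigma_h^{(2)}+\lambda\mathbf{I})^{-1}\phi]\le r_h/K_2$, which together with the comparison is exactly Lemma~\ref{lemma:D.2}.

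The one genuinely different step is the log-determinant. The paper bounds it through \eqref{eq:D.22}, i.e.\ by $c_1r_h(r_h+\log(c_2/\delta))\log(1+K_2)$ on the random event of Lemma~\ref{lemma:D.1}. You observe instead that $\Lambda_h-\lambda\mathbf{I}$ almost surely has rank at most $r_h$ and operator norm at most $K_2$, so $\log\det(\Lambda_h/\lambda)\le r_h\log(1+K_2/\lambda)$ holds deterministically. This is more elementary and smaller by a factor of order $r_h+\log(c_2H/\delta)$. It also makes the ellipsoid event independent of the covariance event, so \eqref{eq:C.10} is more than sufficient.

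A few points to tighten.

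\begin{itemize}
\item \textbf{Noise term.} The additive spectral-norm bound $\sigma_w\sqrt{d_\mathcal{W}+\log\det(\Lambda_h/\lambda)+\log(H/\delta)}$ does not follow from the mechanisms as you describe them. Applying the scalar self-normalized bound coordinate by coordinate and summing, or using Hanson--Wright, controls the Frobenius norm. That yields $d_\mathcal{W}\log\det(\Lambda_h/\lambda)$, which \eqref{eq:C.10} does not dominate when $d_\mathcal{W}$ is large relative to $r_h+\log(c_2H/\delta)$. One fix is the vector-valued inequality of Lemma~\ref{lemma:D.3}, whose $d_\mathcal{W}\log 5$ comes from a net on the unit sphere of $\mathbb{R}^{d_\mathcal{W}}$. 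Alternatively, since the noise is Gaussian and independent of the states, conditionally on the states the noise term is distributed as $\sigma_wGP^{1/2}$. Here $G$ is a standard Gaussian $d_\mathcal{W}\times d$ matrix and $P=\mathbf{I}-\lambda\Lambda_h^{-1}$, which satisfies $P\preceq\mathbf{I}$ and $\mathrm{tr}P\le r_h$. Chevet's inequality plus Gaussian concentration then give $\sigma_w(\sqrt{d_\mathcal{W}}+\sqrt{r_h}+\sqrt{2\log(6H/\delta)})$.
\item \textbf{Distributional step.} Commit to the Pinsker/total-variation reading, applied to the full Gaussians; no bounded-support restriction is needed. That is the sense of $\|\cdot\|_1$ used in Lemma~\ref{lemma:C.1}, where $\widehat g$ is only bounded, and it is what produces the $1/\sigma_w^2$ in \eqref{eq:C.12}. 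The Wasserstein reading delivers neither.
\item \textbf{The $m$ term.} Your bias term squares to $\lambda m^2$, so the $m$ in \eqref{eq:C.10} and \eqref{eq:C.12} should be read as $m^2$. This is consistent with the $\|\mathbf{M}_h\|_2^2$ in the paper's \eqref{eq:D.27}.
\end{itemize}
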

\begin{proof}
See \S\ref{sec:D.4} for a detailed proof.
\end{proof}
\renewcommand{\proofname}{\it Proof of Theorem \hyperref[thm:7.4]{7.4}}
\begin{proof}
Set the confidence parameters as the specifications given in Lemmas \ref{lemma:C.2}, \ref{lemma:C.3} and \ref{lemma:C.4}, then we have
\begin{equation}
\mathbb{P}(\mathcal{E}) = \mathbb{P}(\mathcal{E}_1\cap\mathcal{E}_2\cap\mathcal{E}_3) \geq 1-\delta.\label{eq:C.13}
\end{equation}
We plug in \eqref{eq:C.7}, \eqref{eq:C.9} and \eqref{eq:C.12} to \eqref{eq:C.2}:
\begin{align}
\begin{aligned}
J&(\pi^*) - J(\widehat{\pi})\\
&\overset{\mathcal{E}}{\lesssim}\frac{H^2}{2\sigma_s}\left(\sqrt{\frac{C_1^*C_wdd_\mathcal{S}}{L^2K_1}\log\frac{Hdd_\mathcal{S}K_1}{\delta}} + \sqrt{C_2^*\overline{R}d_\mathcal{S}\frac{m/\sigma_w^2+d_\mathcal{W}+\overline{R}\log(1+K_2)}{K_2}}\right).
\end{aligned}\label{eq:C.14}
\end{align}
Since $\sigma_s,\sigma_w$ are known constants, the above equation implies \eqref{eq:7.23}. Thus we finish the proof.
\end{proof}
\renewcommand{\proofname}{\it Proof}

%% file: append5.tex
\section{Proof of Lemmas in \S\ref{sec:C}}
\subsection{Proof of Lemma \ref{lemma:C.1}}\label{sec:D.1}
\begin{proof}
We denote by $\widehat{\beta}$ the pessimistic estimator of model $\beta$ given $\widehat{\pi}$, i.e. $\widehat{\beta} = \widehat{\beta}^{\widehat{\pi}}= \mathrm{argmin}_{\beta\in C_\beta}J(\widehat{\pi};\beta)$. Then on the event $\mathcal{E}$, the suboptimality of $\widehat{\pi}$ can be decomposed and bounded as follows:
\begin{align}
J(\pi^*) - J(\widehat{\pi}) &= J(\pi^*) - J(\pi^*;\widehat{\beta}) + J(\pi^*;\widehat{\beta}) - J(\widehat{\pi})\notag\\
&\leq J(\pi^*) - J(\pi^*;\widehat{\beta}) + J(\widehat{\pi};\widehat{\beta}) - J(\widehat{\pi})\notag\\
&\overset{\mathcal{E}}{\leq} J(\pi^*) - J(\pi^*;\widehat{\beta})\label{eq:D.1}
\end{align}
The first inequality follows from the optimization step $\widehat{\pi}=\mathrm{argmax}_{\pi\in\Pi}J(\pi;\widehat{\beta})$, and the second holds because the true model $\beta^*\in C_\beta$ under event $\mathcal{E}$. To continue, we have
\begin{align}
J(\pi^*) - J(\pi^*;\widehat{\beta}) &= \int_\mathcal{S} \left[V_1^{\pi^*}(s_1) - V_1^{\pi^*}(s_1;\widehat{\beta})\right]\mu_1(s_1)\mathrm{d}s_1\notag\\
&= \int_{\mathcal{S}\times\mathcal{A}} \left[V_1^{\pi^*}(s_1) - V_1^{\pi^*}(s_1;\widehat{\beta})\right]\mu_1(s_1)\pi^*(a_1|s_1)\mathrm{d}a_1\mathrm{d}s_1\notag\\
&= \E_{d_1^{\pi^*}}\left[V_1^{\pi^*}(S_1) - V_1^{\pi^*}(S_1;\widehat{\beta})\right],\label{eq:D.2}
\end{align}
where $d_1^{\pi^*}$ is the visitation measure induced by initial state distribution $\mu_1$ and policy $\pi^*$.

Now we transform an analogue of (\ref{eq:D.2}) for any $h\in[H]$ as follows:
\begin{align}
\begin{aligned}
&\E_{d_h^{\pi^*}}\left[V_h^{\pi^*}(S_h) - V_h^{\pi^*}(S_h;\widehat{\beta})\right]\\
&= \E_{d_h^{\pi^*}}\left[Q_h^{\pi^*}(S_h,A_h) - Q_h^{\pi^*}(S_h,A_h;\widehat{\beta})\right]\\
&= \E_{d_h^{\pi^*}}\left[r^*_h(S_h,A_h) - r_h^{\widehat{\beta}}(S_h,A_h) + \mathcal{P}_h^*V_{h+1}^{\pi^*}(S_h,A_h) - \mathcal{P}_h^{\widehat{\beta}}V_{h+1}^{\pi^*}(S_h,A_h;\widehat{\beta})\right]\\
&= \E_{d_h^{\pi^*}}\left[(r^*_h - r_h^{\widehat{\beta}})(S_h,A_h)\right] + \E_{d_h^{\pi^*}}\left[\mathcal{P}_h^*V_{h+1}^{\pi^*}(S_h,A_h) - \mathcal{P}_h^*V_{h+1}^{\pi^*}(S_h,A_h;\widehat{\beta})\right]\\
&\qquad +\E_{d_h^{\pi^*}}\left[\mathcal{P}_h^*V_{h+1}^{\pi^*}(S_h,A_h;\widehat{\beta}) - \mathcal{P}_h^{\widehat{\beta}}V_{h+1}^{\pi^*}(S_h,A_h;\widehat{\beta})\right]\\
&\leq \E_{d_h^{\pi^*}}\left\vert(r^*_h - r_h^{\widehat{\beta}})(S_h,A_h)\right\vert + \E_{d_h^{\pi^*}}\left[\int_\mathcal{S}\left[V_{h+1}^{\pi^*}(s_{h+1}) - V_{h+1}^{\pi^*}(s_{h+1};\widehat{\beta})\right]\mathcal{P}^*_h(s_{h+1}|S_h,A_h)\mathrm{d}s_{h+1}\right]\\
&\qquad +\E_{d_h^{\pi^*}}\left[ \left\Vert(\mathcal{P}_h^*-\mathcal{P}_h^{\widehat{\beta}})(\cdot|S_h,A_h)\right\Vert_1\left\Vert V_{h+1}^{\pi^*}(\cdot;\widehat{\beta})\right\Vert_\infty\right]\\
&\leq \E_{d_{h+1}^{\pi^*}}\left[V_{h+1}^{\pi^*}(S_{h+1}) - V_{h+1}^{\pi^*}(S_{h+1};\hat{\beta})\right] + \mathbb{E}_{d_h^{\pi^*}}\left[H\left\Vert(\mathcal{P}_h^*-\mathcal{P}_h^{\widehat{\beta}})(\cdot|S_h,A_h)\right\Vert_1 + \left\vert(r^*_h - r_h^{\widehat{\beta}})(S_h,A_h)\right\vert\right],\label{eq:D.3}
\end{aligned}
\end{align}
where the last inequality holds because of the boundedness of our estimated value functions.

Combining \eqref{eq:D.2} and \eqref{eq:D.3} yields
\begin{equation}
J(\pi^*) - J(\pi^*;\widehat{\beta}) = \sum_{h=1}^H \mathbb{E}_{d_h^{\pi^*}}\left[H\left\Vert(\mathcal{P}_h^*-\mathcal{P}_h^{\widehat{\beta}})(\cdot|S_h,A_h)\right\Vert_1 + \left\vert(r^*_h - r_h^{\widehat{\beta}})(S_h,A_h)\right\vert\right].\label{eq:D.4}
\end{equation}
Now we bound the two terms in (\ref{eq:D.4}). We note that both $\mathcal{P}_h^*(\cdot|S_h,A_h)$ and $\mathcal{P}_h^{\hat{\beta}}(\cdot|S_h,A_h)$ are Gaussian distributions. So we use Pinsker's inequality to bound their total variation distance:
\begin{align}
\begin{aligned}
&\left\Vert(\mathcal{P}_h^*-\mathcal{P}_h^{\widehat{\beta}})(\cdot|s_h,a_h)\right\Vert_1 \leq \left(\frac{1}{2}\int_\mathcal{S}\mathcal{P}_h^*(s_{h+1}|s_h,a_h)\log\frac{\mathcal{P}_h^*}{\mathcal{P}_h^{\widehat{\beta}}}(s_{h+1}|s_h,a_h)\mathrm{d}s_{h+1}\right)^{1/2}\\
&\quad= \left(\frac{1}{2}\int_\mathcal{S}\mathcal{P}_h^*(s_{h+1}|s_h,a_h)\frac{\Vert s_{h+1} - m_h^{\widehat{\beta}}(s_h,a_h)\Vert_2^2 -  \Vert s_{h+1} - m_h^*(s_h,a_h)\Vert_2^2}{2\sigma_s^2}\mathrm{d}s_{h+1}\right)^{1/2}\\
&\quad= \biggl(\frac{1}{4\sigma_s^2}\int_\mathcal{S}\mathcal{P}_h^*(s_{h+1}|s_h,a_h)\Vert (m_h^* - m_h^{\widehat{\beta}})(s_h,a_h)\Vert_2^2\mathrm{d}s_{h+1}\\
&\quad\quad + \frac{1}{4\sigma_s^2}\int_\mathcal{S}\mathcal{P}_h^*(s_{h+1}|s_h,a_h)(m_h^* - m_h^{\widehat{\beta}})^\top(s_h,a_h)(s_{h+1} - m_h^*(s_h,a_h))\mathrm{d}s_{h+1}\biggr)^{1/2}\\
&\quad= \left(\frac{1}{4\sigma_s^2}\int_\mathcal{S}\mathcal{P}_h^*(s_{h+1}|s_h,a_h)\Vert (m_h^*- m_h^{\widehat{\beta}})(s_h,a_h)\Vert_2^2\mathrm{d}s_{h+1}\right)^{1/2}\\
&\quad=\frac{\Vert (m_h^* - m_h^{\widehat{\beta}})(s_h,a_h)\Vert_2}{2\sigma_s},
\end{aligned}\label{eq:D.5}
\end{align}
where $m_h^*$ and $m_h^{\widehat{\beta}}$ are the mean vector of $\mathcal{P}_h^*(\cdot|s_h,a_h)$ and $\mathcal{P}_h^{\hat{\beta}}(\cdot|s_h,a_h)$, and the third equality holds because $\mathcal{P}_h^*(\cdot|s_h,a_h)$ is centered at $m_h^*(s_h,a_h)$ and the linear term of $(s_{h+1} - m_h^*(s_h,a_h))$ can be eliminated by expectation. Specifically, we have
\begin{align*}
	m_h^*(s_h,a_h) &= \int g_h^s(s_h,a_h,w_h)\mathrm{d}F_h^*(w_h|s_h),\\	
	m_h^{\widehat{\beta}}(s_h,a_h) &= \int\widehat{g}_h^s(s_h,a_h,w_h)\mathrm{d}F_h^{\widehat{\beta}}(w_h|s_h),
\end{align*}
where $F_h^*$ and $F_h^{\widehat{\beta}}$ are the measures of $W_h$ induced by the conditional Gaussian distributions under $\beta^*$ and $\widehat{\beta}$.
Then we can obtain that
\begin{align}
\begin{aligned}
&\E_{d_h^{\pi^*}}\left\Vert(\mathcal{P}_h^*-\mathcal{P}_h^{\widehat{\beta}})(\cdot|S_h,A_h)\right\Vert_1 \\
&\quad\leq \frac{1}{2\sigma_s}\E_{d_h^{\pi^*}}\left[\sqrt{\sum_{i=1}^{d_\mathcal{S}}\left\vert\int g_h^{s(i)}(S_h,A_h,w_h)\mathrm{d}F_h^*(w_h|S_h) - \int\widehat{g}_h^{s(i)}(S_h,A_h,w_h)\mathrm{d}F_h^{\widehat{\beta}}(w_h|S_h)\right\vert^2}\right]\\
&\quad\leq \frac{1}{2\sigma_s}\E_{d_h^{\pi^*}}\left[\sqrt{\sum_{i=1}^{d_\mathcal{S}}\left\vert\int(g_h^{s(i)}-\widehat{g}_h^{s(i)})(S_h,A_h,w_h)\mathrm{d}F_h^*(w_h|S_h)\right\vert^2}\right]\\
&\qquad + \frac{1}{2\sigma_s}\E_{d_h^{\pi^*}}\left[\sqrt{\sum_{i=1}^{d_\mathcal{S}}\left\vert\int \widehat{g}_h^{s(i)}(S_h,A_h,w_h)\mathrm{d}(F^*_h - F_h^{\widehat{\beta}})(w_h|S_h)\right\vert^2}\right]\\
&\quad\leq \frac{1}{2\sigma_s}\E_{d_h^{\pi^*}}\left[\sqrt{\sum_{i=1}^{d_\mathcal{S}}\left\vert\E^*[(g_h^{s(i)}-\widehat{g}_h^{s(i)})(S_h,A_h,W_h)|S_h,A_h]\right\vert^2}\right]\\
&\quad\quad + \frac{\sqrt{d_\mathcal{S}}}{2\sigma_s}\E_{d_h^{\pi^*}}\Vert(F^*_h - F_h^{\widehat{\beta}})(\cdot|S_h)\Vert_1,\\
&\quad\leq\frac{1}{2\sigma_s}\sqrt{C^*_1\sum_{i=1}^{d_\mathcal{S}}\E^{(1)}\left\vert\E^*[(g_h^{s(i)}-\widehat{g}_h^{s(i)})(S_h,A_h,W_h)|S_h,A_h]\right\vert^2}\\
&\quad\quad + \frac{1}{2\sigma_s}\sqrt{d_\mathcal{S}C_2^*\E^{(2)}\Vert(F^*_h - F_h^{\widehat{\beta}})(\cdot|S_h)\Vert_1^2},
\end{aligned} \label{eq:D.6}
\end{align}

where the third equality follows from the boundedness of $\mathcal{G}$ stated in Assumption \hyperref[assumption:6.2]{6.2} (i), which implies that $\Vert\widehat{g}^{s(i)}_h\Vert_\infty\leq 1,$ and the fourth from Cauchy-Schwarz inequality and Assumption \hyperref[assumption:7.4]{7.4} (i). 

According to Assumption \hyperref[assumption:7.4]{7.4} (ii), we have
\begin{align}
\E^*&[(g_h^{s(i)}-\widehat{g}_h^{s(i)})(S_h,A_h,W_h)|S_h,A_h] = \E^{(1)}\left[\left.(g_h^{s(i)}-\widehat{g}_h^{s(i)})(S_h,A_h,W_h)\frac{p^*(W_h|S_h)}{p^{(1)}(W_h|S_h)}\right|S_h,A_h\right]\notag\\
&\leq \sqrt{C_w\E^{(1)}\left[\left.\left(g_h^{s(i)}(S_h,A_h,W_h)-\widehat{g}_h^{s(i)}(S_h,A_h,W_h)\right)^2\right|S_h,A_h\right]},\label{eq:D.7}
\end{align}
Then we can bound the first term in (\ref{eq:D.6}),
\begin{align}
\begin{aligned}
&\sqrt{\sum_{i=1}^{d_\mathcal{S}}\E^{(1)}\left\vert\E^*[(g_h^{s(i)}-\widehat{g}_h^{s(i)})(S_h,A_h,W_h)|S_h,A_h]\right\vert^2}\\
&\quad=\sqrt{\sum_{i=1}^{d_\mathcal{S}}\E^{(1)}\left[C_w\E^{(1)}\left[\left.\left(g_h^{s(i)}(S_h,A_h,W_h)-\widehat{g}_h^{s(i)}(S_h,A_h,W_h)\right)^2\right|S_h,A_h\right]\right]}\\
&\quad= \sqrt{C_w\sum_{i=1}^{d_\mathcal{S}}\E^{(1)}\left[\left(g_h^{s(i)}(S_h,A_h,W_h)-\widehat{g}_h^{s(i)}(S_h,A_h,W_h)\right)^2\right]}\\
&\quad\leq\sqrt{\frac{C_w}{L_h^2}\sum_{i=1}^{d_\mathcal{S}}\E^{(1)}\left[\left(\E^{(1)}\left[g_h^{s(i)}(S_h,A_h,W_h)-\widehat{g}_h^{s(i)}(S_h,A_h,W_h)|S_h,Z_h\right]\right)^2\right]}\\
&\quad= \sqrt{\frac{C_w}{L_h^2} \sum_{i=1}^{d_\mathcal{S}}\mathcal{L}_h^{s(i)}(\widehat{g}_h^{s(i)})},
\end{aligned}\label{eq:D.8}
\end{align}
where the inequality follows from Lemma \hyperref[lemma:7.2]{7.2}. For the second term, note that $p^*(w_h|s_h)=p^{(2)}(s_h|w_h)$, which implies the equivalence of cumulative functions $F^*_h=F_h^{(2)}$, then we have
\begin{align}
\sqrt{\E^{(2)}\Vert(F^*_h - F_h^{\widehat{\beta}})(\cdot|S_h)\Vert_1^2} = \sqrt{\E^{(2)}\Vert(F^*_h - F_h^{\widehat{\beta}})(\cdot|S_h)\Vert_1^2} = \sqrt{\mathcal{L}_h^F(\widehat{F}_h^{\widehat{\beta}})},\label{eq:D.9}
\end{align}
and
\begin{equation}
\E_{d_h^{\pi^*}}\left\Vert(\mathcal{P}_h^*-\mathcal{P}_h^{\widehat{\beta}})(\cdot|S_h,A_h)\right\Vert_1 \leq \frac{1}{2\sigma_s}\sqrt{\frac{C_1^*C_w}{L^2}\sum_{i=1}^{d_\mathcal{S}}\mathcal{L}_h^{s(i)}(\widehat{g}_h^{s(i)})} + \frac{1}{2\sigma_s}\sqrt{d_\mathcal{S}C_2^*\mathcal{L}_h^F(\widehat{F}_h^{\widehat{\beta}})}.\label{eq:D.10}
\end{equation}
Now we bound the second term in (\ref{eq:D.4}).
\begin{align}
\begin{aligned}
&\E_{d_h^{\pi^*}}\left\vert(r^*_h - r_h^{\widehat{\beta}})(S_h,A_h)\right\vert = \E_{d_h^{\pi^*}}\left\vert\int g_h^r(S_h,A_h,w_h)\mathrm{d}F^*_h(w_h|S_h) - \int \widehat{g}_h^r(S_h,A_h,w_h)\mathrm{d}F^{\widehat{\beta}}_h(w_h|S_h)\right\vert\\
&\quad\leq \E_{d_h^{\pi^*}}\left\vert\int(g_h^{r}-\widehat{g}_h^{r})(S_h,A_h,w_h)\mathrm{d}F_h^*(w_h|S_h)\right\vert\\
&\quad\quad + \E_{d_h^{\pi^*}}\left\vert\int \widehat{g}_h^{r}(S_h,A_h,w_h)\mathrm{d}(F^*_h - F_h^{\widehat{\beta}})(w_h|S_h)\right\vert\\
&\quad\leq \E_{d_h^{\pi^*}}\left\vert\E^*\left[(g_h^{r}-\widehat{g}_h^{r})(S_h,A_h,W_h)|S_h,A_h\right]\right\vert + \E_{d_h^{\pi^*}}\Vert(F^*_h - F_h^{\widehat{\beta}})(\cdot|S_h)\Vert_1\\
&\quad\leq \sqrt{C_1^*\E^{(1)}\left\vert\E^*\left[(g_h^{r}-\widehat{g}_h^{r})(S_h,A_h,W_h)|S_h,A_h\right]\right\vert^2} + \sqrt{C_2^*\E^{(2)}\Vert(F^*_h - F_h^{\widehat{\beta}})(\cdot|S_h)\Vert_1^2}.
\end{aligned}\label{eq:D.11}
\end{align}
Akin to (\ref{eq:D.7}), (\ref{eq:D.8}) and (\ref{eq:D.9}), we can derive that
\begin{align}
\E_{d_h^{\pi^*}}\left\vert(r^*_h - r_h^{\widehat{\beta}})(S_h,A_h)\right\vert &\leq \sqrt{\frac{C_1^*C_w}{L^2} \mathcal{L}_h^{r}(\widehat{g}_h^{r})} + \sqrt{C_2^*\mathcal{L}_h^F(F_h^{\widehat{\beta}})}.\label{eq:D.12}
\end{align}
Combine (\ref{eq:D.1}), (\ref{eq:D.4}), (\ref{eq:D.10}) and (\ref{eq:D.12}), we have
\begin{align}
J(\pi^*) - J(\widehat{\pi})\overset{\mathcal{E}}{\lesssim}\sum_{h=1}^H\left\{
\frac{H}{2\sigma_s}\sqrt{\frac{C_1^*C_w}{L^2} \sum_{i=1}^{d_\mathcal{S}}\mathcal{L}_h^{s(i)}(\widehat{g}_h^{s(i)})} +
\sqrt{\frac{C_1^*C_w}{L^2} \mathcal{L}_h^{r}(\widehat{g}_h^{r})} + \left(1+\frac{H\sqrt{d_\mathcal{S}}}{2\sigma_s}\right)\sqrt{C_2^*\mathcal{L}_h^F(\widehat{F}_h^{\widehat{\beta}})}\right\},\label{eq:D.13}
\end{align}
which concludes the proof.
\end{proof}

\subsection{Fast Convergence Rate of Minimax Estimators}
\label{sec:D.2}
In this section, we formally introduce the definition of minimax estimators and provide a theoretical analysis of its convergence rate.

\begin{definition}[Minimax Estimator]\label{def:D.1} Assume that $\{(y_j,z_j)\}_{j=1}^n\subset\mathcal{Y}\times\mathcal{Z}$ is a dataset of i.i.d. samples drawn from a distribution $\mu$. Let $\mathcal{G}\subset L^2(\mathcal{Y},\mu)$ and $\Theta\subset L^2(\mathcal{Z},\mu)$ be two function classes, and for every $g\in\mathcal{G}$, let $h_g\in L^2(\mathcal{Y},\mu)$ be a function induced by $g$. Then we define the objective function $\Phi:\mathcal{G}\times\Theta\to\mathbb{R}$ by
\begin{align}
\Phi(g,\theta)=\E\left[\E[h_g(Y)|Z]\theta (Z)\right] = \E\left[h_g(Y)\theta(Z)\right],\quad\textit{where}\ (Y,Z)\sim\mu.\label{eq:D.14}
\end{align}
Given $\lambda>0$, we define the stabilized objective function by
\begin{align}
\Phi^\lambda(g,\theta)=\Phi(g,\theta)-\lambda\Vert \theta\Vert_{2,\mu}^2 = \E\left[h_g(Y)\theta(Z)\right] - \lambda\E\left[\theta(Z)^2\right],\quad\textit{where}\ (Y,Z)\sim\mu.\label{eq:D.15}
\end{align}
Correspondingly, we write the empirical version of (\ref{eq:D.15}):
\begin{align}
\Phi^\lambda_n(g,\theta)= \frac{1}{n}\sum_{j=1}^n\left[h_g(y_j)\theta(z_j) - \lambda\theta(z_j)^2\right].\label{eq:D.16}
\end{align}
We assume that there exists $g^*\in\mathcal{G}$ such that
\begin{align}
g^*=\argmin_{g\in\mathcal{G}}\max_{\theta\in\Theta}\Phi^\lambda(g,\theta),\quad\textit{and}\quad \max_{g\in\mathcal{G}}\Phi^\lambda(g,\theta)=0.\label{eq:D.17}
\end{align}
To estimate $g^*$, we define the minimax estimator $\widetilde{g}$ as
\begin{align}
\widetilde{g}=\argmin_{g\in\mathcal{G}}\max_{\theta\in\Theta}\Phi^\lambda_n(g,\theta).\label{eq:D.18}
\end{align}
For any $g\in\mathcal{G}$, we define the RMSE loss as $\mathcal{L}(g)=\E\left[(\E[h_g(Y)|Z])^2\right]$, where $(Y,Z)\sim\mu$.
\end{definition}
We have the following important result for the convergence rate of minimax estimators with confidence region:
\begin{theorem}[\citealp{uehara2022finite}]\label{thm:D.1} Based on notations of Definition \hyperref[def:D.1]{D.1}, we further assume that $\Theta$ is a symmetric and star-shaped class and that $\sup_{g\in\mathcal{G}}\Vert h_g\Vert_2\leq C_{\mathcal{G}}$. We denote by $\vartheta_{\mathcal{G}}$ and $\vartheta_{\Theta}$ the upper bounds of critical radii of function classes $\mathcal{G}$ and $\Theta$, respectively. Then there exists $\Gamma_n=C\left(\vartheta_{\mathcal{G}}^2+\vartheta_\Theta^2+\log(1/\delta)/n\right)$ for some absolute constant $C>0$, such that the following properties hold with probability at least $1-\delta$:
\begin{itemize}
    \item[(i)] $g^*\in\mathrm{CR}(\Gamma_n):=\{g\in\mathcal{G}:\sup_{\theta\in\Theta}\Phi_n^\lambda(g,\theta)-\sup_{\theta\in\Theta}\Phi_n^\lambda(\widetilde{g},\theta)\leq\mathcal{O}(\Gamma_n)\}$;
    \item[(ii)] $\sup_{g\in\mathrm{CR}(\Gamma_n)}\mathcal{L}(g)\leq\mathcal{O}(\Gamma_n)$ for all $h\in[H]$.
\end{itemize}
\end{theorem}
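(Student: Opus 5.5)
The plan is the standard localized-concentration argument for minimax estimators, in three steps. First, I establish a uniform concentration event on which empirical and population objectives differ by at most a critical-radius term. Second, I show that $g^*$ almost attains the empirical minimax value, which gives (i). Third, I use a well-chosen test function to convert membership in $\mathrm{CR}(\Gamma_n)$ into a bound on $\mathcal{L}(g)$, which gives (ii). Throughout I write $\delta_n:=\vartheta_{\mathcal G}+\vartheta_\Theta+\sqrt{\log(1/\delta)/n}$, so that $\Gamma_n\asymp\delta_n^2$. I also use the realizability implicit in \eqref{eq:D.17}, namely $\E[h_{g^*}(Y)\mid Z]=0$ so that $\Phi(g^*,\theta)=0$ for all $\theta$. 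Finally, I assume the closure condition $r\,\E[h_g(Y)\mid Z=\cdot]\in\Theta$ for small $r>0$, which is the role played by Assumption~\ref{assumption:6.2}(iii) in the application.

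\emph{Step 1 (uniform concentration).} I apply the localized Rademacher bounds (Wainwright-type, Theorem 14.1 for $\Theta$; the product-class lemma of Foster--Syrgkanis / \citet{Dikkala2020} for $\{h_g\theta\}$). These use that $\Theta$ is symmetric and star-shaped and that $\sup_g\Vert h_g\Vert\le C_{\mathcal G}$. With probability at least $1-\delta$, simultaneously for all $g\in\mathcal G$ and $\theta\in\Theta$,
\begin{equation*}
|\Phi_n(g,\theta)-\Phi(g,\theta)|\lesssim \delta_n\Vert\theta\Vert_{2}+\delta_n^2,\qquad \bigl|\Vert\theta\Vert_{n}^2-\Vert\theta\Vert_{2}^2\bigr|\le \tfrac12\Vert\theta\Vert_2^2+\delta_n^2 .
\end{equation*}
Everything afterwards is deterministic on this event. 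This step is essentially the same as the event $\mathcal{E}$ in \eqref{eq:7.2}.

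\emph{Step 2 (part (i)).} Taking $\theta=0$ shows $\sup_\theta\Phi_n^\lambda(\widetilde g,\theta)\ge 0$. It therefore suffices to bound $\sup_\theta\Phi_n^\lambda(g^*,\theta)$ from above. Using $\Phi(g^*,\theta)=0$ and the two inequalities of Step 1, for every $\theta$,
\begin{equation*}
\Phi_n^\lambda(g^*,\theta)\lesssim \delta_n\Vert\theta\Vert_2+\delta_n^2-\tfrac{\lambda}{2}\Vert\theta\Vert_2^2+\lambda\delta_n^2 .
\end{equation*}
Maximizing this quadratic in $\Vert\theta\Vert_2$ (completing the square) gives at most $O(\delta_n^2)$ for fixed $\lambda$. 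Hence the gap defining $\mathrm{CR}$ is $O(\Gamma_n)$, and $g^*\in\mathrm{CR}(\Gamma_n)$.

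\emph{Step 3 (part (ii)), the main obstacle.} Take $g\in\mathrm{CR}(\Gamma_n)$. By minimality of $\widetilde g$ and Step 2, $\sup_\theta\Phi_n^\lambda(g,\theta)\le\sup_\theta\Phi_n^\lambda(g^*,\theta)+O(\Gamma_n)=O(\Gamma_n)$. I then lower bound the left side by plugging in $\theta_r:=r\,\E[h_g\mid Z]$ with $r\in(0,1]$, which lies in $\Theta$ by the closure condition and star-shapedness. Writing $\ell:=\sqrt{\mathcal L(g)}$, Step 1 gives
\begin{equation*}
\Phi_n^\lambda(g,\theta_r)\ge r\ell^2-c\,\delta_n r\ell-c\,\delta_n^2-\tfrac{3\lambda}{2}r^2\ell^2-\lambda\delta_n^2 .
\end{equation*}
Choosing $r=\min\{1,1/(3\lambda)\}$ makes the quadratic-in-$\ell$ coefficient positive, of order $r$. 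I then split into two cases. If $\ell\le C\delta_n$, we are done directly. Otherwise the term $c\,\delta_n r\ell$ is absorbed into a constant fraction of $r\ell^2$, and we obtain $\ell^2\lesssim\Gamma_n$. The delicate points are exactly this step: the closure condition is genuinely needed (otherwise an $\varepsilon_\Theta$ term appears), and constants must be tracked in $\lambda$ and $C_{\mathcal G}$. The $\sup$ over $h\in[H]$ then follows by a union bound, with $\delta$ replaced by $\delta/H$ inside $\Gamma_n$.
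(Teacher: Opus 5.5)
Your proposal is correct. It is the standard localized critical-radius argument of \citet{Dikkala2020}, on which the cited result of \citet{uehara2022finite} rests; the paper gives no proof beyond that citation, and your Step~1 is exactly the concentration event \eqref{eq:7.2} used for Theorem~\ref{thm:7.1}.

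You are right that a closure condition must be added for (ii). You should also read $C_{\mathcal G}$ as a bound on $\Vert h_g\Vert_\infty$ rather than on $\Vert h_g\Vert_2$, as in the proof of Lemma~\ref{lemma:C.2}. The localized inequalities of Step~1 and the bound $\Vert h_g\theta_r\Vert_2\lesssim r\ell$ in Step~3 both require uniform boundedness, which an $L^2$ bound does not give.
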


\subsection{Proof of Lemmas \ref{lemma:C.2} and \ref{lemma:C.3}}
\label{sec:D.3}
We use results in \S\hyperref[sec:D.2]{D.2} to prove Lemmas \ref{lemma:C.2} and \ref{lemma:C.3}.
\renewcommand{\proofname}{\it Proof of Lemma \hyperref[lemma:C.2]{C.2}}
\begin{proof}
We set $Y=(S_h,A_h,W_h,R_h)$, $Z=(S_h,Z_h)$, $h_g(\cdot,\cdot,\cdot,R_h)=R_h-g(\cdot,\cdot,\cdot)$, $C_{\mathcal{G}}\leq\vert R_h\vert + \Vert g\Vert_\infty\leq 2$, $g^*=g_h^r$, $\mu=p^{(1)}$ and $\lambda=\frac{1}{2}$ in Definition \hyperref[def:D.1]{D.1}. According to Assumption \hyperref[assumption:6.2]{6.2}, the conditions of Theorem \hyperref[thm:D.1]{D.1} hold. Furthermore, the function $\widetilde{g}_h^r$ in (\ref{eq:6.10}) is the minimax estimator of $g^*$. By Theorem \hyperref[thm:D.1]{D.1}, we set $$\alpha_1=\mathcal{O}\left(\vartheta_{\mathcal{G}}^2+\vartheta_\Theta^2 + \frac{\log(3H/\delta)}{K_1}\right)$$ in Algorithm \hyperref[alg:2]{2}. We adopt the union bound of events described in \hyperref[thm:D.1]{D.1} for all $h\in[H]$, each with probability at least $1-\delta/(3H)$. The following properties hold with probability at least $1-\delta/3$:
\begin{itemize}
    \item[(i)] The event $\mathcal{E}_1:=\{g_h^r\in C_h^r(\alpha_1),\ \forall h\in[H]\}$ holds;
    \item[(ii)] For all $h\in[H]$, $\mathcal{L}_h^r(\widehat{g}_h^r)\leq\sup_{g\in C_h^r(\alpha_1)}\mathcal{L}_h^r(g)\leq\mathcal{O}(\alpha_1)$.\vspace{0.1cm}
\end{itemize}

According to \cite{Dikkala2020}, the linear function classes $\mathcal{G}$ and $\Theta$ given in Assumption \hyperref[assumption:6.2]{6.2} have critical radii bounded by
\begin{align}
\vartheta_{\mathcal{G}}=\vartheta_{\Theta}=\sqrt{\frac{d\log K_1}{K_1}}.\label{eq:D.19}
\end{align}
Consequently, we can specify $\alpha_1$ as $\alpha_1=C_1d\log(HK_1/\delta)/K_1$ for some absolute constant $C_1>0$, which concludes the proof of Lemma \hyperref[lemma:C.2]{C.2}. 
\end{proof}

\renewcommand{\proofname}{\it Proof of Lemma \hyperref[lemma:C.3]{C.3}}
\begin{proof}
The proof is analogous to \hyperref[lemma:C.2]{C.2}, except that we need to apply Theorem \hyperref[thm:D.1]{D.1} to every component $i\in[d_\mathcal{S}]$ and construct the union bound. We set $Y=(S_h,A_h,W_h,S_{h+1}(i))$, $Z=(S_h,Z_h)$, $h_g(\cdot,\cdot,\cdot,S_{h+1}(i))=S_{h+1}(i)-g(\cdot,\cdot,\cdot)$, $C_{\mathcal{G}}\leq\vert S_{h+1}^{(i)}\vert + \Vert g\Vert_\infty\leq 2$, $g^*=g_h^{s(i)}$, $\mu=p^{(1)}$ and $\lambda=\frac{1}{2}$ to align with Definition \hyperref[def:D.1]{D.1}. Clearly, Assumption \hyperref[assumption:6.2]{6.2} of $\mathcal{G}$ and $\Theta$ is consistent with the conditions of Theorem \hyperref[thm:D.1]{D.1} hold, and the function $\widetilde{g}_h^{s(i)}$ in (\ref{eq:6.13}) is the minimax estimator of $g^*$. According to \hyperref[thm:D.1]{D.1}, we set $$\alpha_2=\mathcal{O}\left(\vartheta_{\mathcal{G}}^2+\vartheta_\Theta^2 + \frac{\log(3Hd_\mathcal{S}/\delta)}{K_1}\right)$$ in Algorithm \hyperref[alg:2]{2}. Then the following properties hold with probability at least $1-\delta/3$:
\begin{itemize}
    \item[(i)] The event $\mathcal{E}_2:=\{g_h^{s(i)}\in C_h^r(\alpha_2),\ \forall h\in[H],\forall i\in{\mathcal{S}}\}$ holds;
    \item[(ii)] $\mathcal{L}_h^{s(i)}(\widehat{g}_h^{s(i)})\leq\sup_{g\in C_h^{s(i)}(\alpha_2)}\mathcal{L}_h^{s(i)}(g)\leq\mathcal{O}(\alpha_2)$ for all $h\in[H]$ and all $i\in[d_{\mathcal{S}}]$.\vspace{0.1cm}
\end{itemize}

Plugging in (\ref{eq:D.19}), we can specify the parameter $\alpha_2$ as $\alpha_2=C_2d\log(Hd_{\mathcal{S}}K_1/\delta)/K_1$ for some absolute constant $C_2>0$. This finish the proof of Lemma \hyperref[lemma:C.3]{C.3}.
\end{proof}

\subsection{Proof of Lemma \ref{lemma:C.4}}\label{sec:D.4}
For simplicity, we assume all our operations in this proof are conducted in domain $\mathcal{M}^{(2)}$ and omit the superscript $(2)$. We fix $h\in[H]$, and assume the ridge regression estimator $\widetilde{\mathbf{M}}_h$ is obtained from the offline dataset $\mathcal{D}=\left\{(S_h^k,W_h^k)\right\}_{k=1}^K\overset{i.i.d.}{\sim} p^{(2)}$, where $W_h^k=\mathbf{M}_h^\top\phi(S_h^k)+\epsilon_h^k$, with $\{\epsilon_k\}\overset{i.i.d.}{\sim} N(0,\sigma_w^2\mathbf{I})$, by Assumption \hyperref[assumption:6.1]{6.1}. Luckily, the optimization problem \eqref{eq:6.13} has a solution of closed form:
\begin{align}
\widetilde{\mathbf{M}}_h = \sum_{k=1}^K W_h^k\phi(S_h^k)^\top\Lambda_h^{-1},\quad\textit{where}\quad\Lambda_h=\lambda\mathbf{I}+\sum_{k=1}^K\phi(S_h^k)\phi(S_h^k)^\top.\label{eq:D.20}
\end{align}
We introduce some technical lemmas.
\begin{lemma}[See Theorem 20 in \citealp{chang2021cov_shift}]\label{lemma:D.1} Let $\Sigma_h=\E[\phi(S_h)\phi(S_h)^\top]$, where the expectation is taken with respect to $S_h\sim p^{(2)}$, and let $\lambda$ be a constant of which the choice is independent of $K$. For any $0<\delta<1$, with probability at least $1-\delta$, the following event holds:
\begin{align}
\mathcal{E}_{3,h}^\prime = \left\{\phi(s)\Lambda_h^{-1}\phi(s)\leq c_1\left\{\mathrm{rank}(\Sigma_h)+\log(c_2/\delta)\right\}\phi(s)^\top(K\Sigma_h+\lambda\mathbf{I})^{-1}\phi(s),\ \ \forall s\in\mathcal{S}\right\},\label{eq:D.21}
\end{align}
where $c_1,c_2>0$ are some universal constants. Conditional on $\mathcal{E}_{3,h}^\prime$, one have
\begin{align}
\log\left(\frac{\det(\Lambda_h)}{\det(\lambda\mathbf{I})}\right)\leq c_1\mathrm{rank}(\Sigma_h)\left\{\mathrm{rank}(\Sigma_h)+\log(c_2/\delta)\right\}\log(1+K).\label{eq:D.22}
\end{align}
\end{lemma}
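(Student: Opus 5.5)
The first claim should follow from a matrix lower-tail bound that compares $\Lambda_h$ with its population counterpart $K\Sigma_h+\lambda\mathbf I$. The second claim should follow from a rank argument. I will work inside domain $\mathscr M^{(2)}$ at a fixed $h$ and write $r:=\mathrm{rank}(\Sigma_h)$.

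\emph{Step 1: reduce to a Loewner-order bound.} Since $\phi(S_h)$ almost surely lies in $\mathrm{range}(\Sigma_h)$, it suffices to prove
\[
\Lambda_h\succeq \frac{1}{c_1\{r+\log(c_2/\delta)\}}\,(K\Sigma_h+\lambda\mathbf I)
\]
with probability at least $1-\delta$. Inverting this Loewner inequality gives the claimed bound $\phi(s)^\top\Lambda_h^{-1}\phi(s)\le c_1\{r+\log(c_2/\delta)\}\,\phi(s)^\top(K\Sigma_h+\lambda\mathbf I)^{-1}\phi(s)$ simultaneously for all $s$. The uniformity over $s$ is therefore automatic and needs no covering argument.

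\emph{Step 2: whitening and splitting directions.} Set $x_k:=(K\Sigma_h+\lambda\mathbf I)^{-1/2}\phi(S_h^k)$. The target becomes
\[
\lambda_{\min}\Bigl(\textstyle\sum_k x_kx_k^\top+\lambda(K\Sigma_h+\lambda\mathbf I)^{-1}\Bigr)\gtrsim \frac{1}{r+\log(1/\delta)},
\]
where it suffices to control this on $\mathrm{range}(\Sigma_h)\oplus\mathrm{range}(\Sigma_h)^\perp$. I will diagonalize $\Sigma_h=\sum_i\sigma_iu_iu_i^\top$ and split directions at the threshold $K\sigma_i\asymp\lambda$.
\begin{itemize}
\item On light directions ($K\sigma_i\le\lambda$) and on $\mathrm{range}(\Sigma_h)^\perp$, the regularizer alone contributes at least $1/2$.
\item On the heavy directions (at most $r$ of them), $\E\sum_kx_kx_k^\top$ has eigenvalues $K\sigma_i/(K\sigma_i+\lambda)\ge 1/2$, and each summand satisfies $\|x_k\|_2^2\le 1/\lambda=O(1)$ because $\|\phi\|_2\le1$ and $\lambda=\Omega(1)$.
\end{itemize}

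\emph{Step 3: matrix Chernoff on the heavy subspace.} I will apply the matrix Chernoff lower tail (Tropp) on the heavy subspace, of dimension at most $r$, with deflation parameter $\theta=1/(C(r+\log(1/\delta)))$. Mixed cross terms between heavy and light blocks are handled by the Schur complement, or more simply by noting that $\sum_kx_kx_k^\top+\lambda(K\Sigma_h+\lambda\mathbf I)^{-1}\succeq$ the sum of its block-diagonal lower bounds, since both pieces are PSD.

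\emph{Main obstacle.} The mean eigenvalue $\mu_{\min}\ge1/2$ is only of order $R=O(1)$, so the standard $\exp(-\varepsilon^2\mu_{\min}/R)$ bound does not reach confidence $\delta$. I would first try the sharp form of the Chernoff tail, $r\,(e^{\theta-1}\theta^{-\theta})^{\mu_{\min}/R}$, and choose $\theta$ small to push the failure probability below $\delta$, accepting the lost factor $r+\log(1/\delta)$. If that is insufficient, the fallback is to cite the argument of \citet{chang2021cov_shift} directly: an effective-sample-count/PAC-Bayes small-ball bound, which is exactly where the $\mathrm{rank}+\log$ factor originates.

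\emph{Step 4: the log-determinant bound.} This part is easier and does not really need the event. Almost surely $\Lambda_h-\lambda\mathbf I=\sum_k\phi(S_h^k)\phi(S_h^k)^\top$ has rank at most $r$ and trace at most $K$. By AM--GM on its nonzero eigenvalues,
\[
\log\frac{\det\Lambda_h}{\det\lambda\mathbf I}\le r\log\Bigl(1+\frac{K}{r\lambda}\Bigr)\lesssim r\log(1+K).
\]
This is dominated by the stated bound $c_1r\{r+\log(c_2/\delta)\}\log(1+K)$.
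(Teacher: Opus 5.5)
The paper gives no argument of its own here; it only quotes Theorem 20 of Chang et al., so your fallback coincides with what the paper does. Your self-contained route, however, has a genuine gap in Step 3.

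\emph{Shrinking $\theta$ cannot help.} The map $\theta\mapsto\theta-1-\theta\log\theta$ increases from $-1$ to $0$ on $(0,1)$. Hence $e^{\theta-1}\theta^{-\theta}>e^{-1}$, and Tropp's bound never drops below $r\,e^{-\mu_{\min}/R}$. With your whitening, $\mu_{\min}/R$ is only guaranteed to be of order $\lambda$, so the bound stays a constant.

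\emph{The target event itself fails.} This is not mere looseness of Chernoff: with your threshold $K\sigma_i\gtrsim\lambda$, the event you want is false with constant probability. Suppose $\phi(S_h)=u$ with probability $2\lambda/K$ and $\phi(S_h)\perp u$ otherwise. Then $u$ is a heavy eigendirection ($K\sigma=2\lambda$). Yet with probability about $e^{-2\lambda}$ no sample hits it, and $\sum_k x_kx_k^\top$ vanishes along $u$ for every $\theta>0$.

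\emph{The cross-term remark is also wrong.} A PSD matrix need not dominate the sum of lower bounds on its diagonal blocks. If every $x_k$ is parallel to $u_H+u_L$, the quadratic form vanishes at $u_H-u_L$ even though the $HH$ block is positive. So separate heavy and light bounds do not combine without extra work.

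\emph{The missing idea.} The regularizer can absorb every direction with $K\sigma_i\lesssim\lambda L$, where $L:=r+\log(1/\delta)$ is exactly the factor you are allowed to lose. Data are needed only where $K\sigma_i\gtrsim\lambda L$, and there the summands are small relative to the mean. Concretely, write $\phi_k:=\phi(S_h^k)$ and whiten with the boosted matrix $B:=K\Sigma_h+\lambda CL\,\mathbf{I}$.

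If $K\le\lambda CL$, the claim is immediate: $\Lambda_h\succeq\lambda\mathbf{I}$ and $\Sigma_h\preceq\mathbf{I}$ give $\Lambda_h\succeq\frac{1}{1+CL}(K\Sigma_h+\lambda\mathbf{I})$.

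Otherwise, fold the regularizer into the summands and set $Y:=\sum_k\bigl(B^{-1/2}\phi_k\phi_k^\top B^{-1/2}+\tfrac{\lambda CL}{K}B^{-1}\bigr)$. Then $\E Y=\mathbf{I}$, each summand has norm at most $\frac{1}{\lambda CL}+\frac{1}{K}\le\frac{2}{\lambda CL}$, and $Y=\mathbf{I}$ on $\mathrm{range}(\Sigma_h)^\perp$. Matrix Chernoff on the $r$-dimensional range with $\theta=1/2$ fails with probability at most $r\,e^{-c\lambda CL}\le\delta$ once $C\gtrsim1/\lambda$. No heavy/light block decomposition is needed.

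On the success event, $\Lambda_h+\lambda(CL-1)\mathbf{I}\succeq\frac12(K\Sigma_h+\lambda CL\,\mathbf{I})$. Since $\lambda(CL-1)\mathbf{I}\preceq(CL-1)\Lambda_h$, this yields $\Lambda_h\succeq\frac{1}{2CL}(K\Sigma_h+\lambda\mathbf{I})$, which is your Step 1 target. The constant must scale like $1/\lambda$; this is already forced in one dimension and is harmless under the paper's choice $\lambda=\Omega(1)$.

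Your Steps 1 and 4 are fine. Step 4 is in fact a cleaner route than the statement suggests. Almost surely $\Lambda_h-\lambda\mathbf{I}$ has rank at most $r$ and trace at most $K$, so concavity of $\log$ gives $\log\frac{\det\Lambda_h}{\det(\lambda\mathbf{I})}\le r\log\bigl(1+\frac{K}{r\lambda}\bigr)$ deterministically. This is stronger than \eqref{eq:D.22} and does not need the event $\mathcal{E}_{3,h}'$.
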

\begin{lemma}[See Theorem 21 in \citealp{chang2021cov_shift}]\label{lemma:D.2} Based on the notation of Lemma \ref{lemma:D.1}, we define $\Vert x\Vert_{\Sigma}=\Vert\Sigma^{-1/2}x\Vert$. Then for any $0<\delta<1$, conditional on $\mathcal{E}_{3,h}^\prime$, it holds
\begin{align}
\E\left[\left\Vert\phi(S_h)\right\Vert_{\Lambda_h}^2\right]\leq\frac{c_1\mathrm{rank}(\Sigma_h)\left\{\mathrm{rank}(\Sigma_h)+\log(c_2/\delta)\right\}}{K},\label{eq:D.23}
\end{align}
where the expectation is taken with respect to $S_h\sim p^{(2)}$.
\end{lemma}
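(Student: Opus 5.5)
On the event $\mathcal{E}_{3,h}^\prime$ of Lemma~\ref{lemma:D.1}, I plan to reduce the random quadratic form in $\Lambda_h^{-1}$ to a deterministic one in $(K\Sigma_h+\lambda\mathbf{I})^{-1}$, and then bound the latter by a trace computation. First I fix the meaning of the expectation. With $\Vert x\Vert_{\Lambda_h}^2=x^\top\Lambda_h^{-1}x$, the quantity $\E[\Vert\phi(S_h)\Vert_{\Lambda_h}^2]$ treats $\Lambda_h$ as fixed, since it is determined by the dataset, and integrates over an independent draw $S_h\sim p^{(2)}$. Because the inequality defining $\mathcal{E}_{3,h}^\prime$ holds simultaneously for every $s\in\mathcal{S}$, conditioning on this event costs nothing for the fresh draw.

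Concretely, on $\mathcal{E}_{3,h}^\prime$ we have for every $s$
\[
\phi(s)^\top\Lambda_h^{-1}\phi(s)\leq c_1\{\mathrm{rank}(\Sigma_h)+\log(c_2/\delta)\}\,\phi(s)^\top(K\Sigma_h+\lambda\mathbf{I})^{-1}\phi(s).
\]
I will integrate both sides against $p^{(2)}(s)$. The prefactor on the right is deterministic, so it pulls out of the expectation.

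The right-hand expectation is then a trace:
\[
\E\bigl[\phi(S_h)^\top(K\Sigma_h+\lambda\mathbf{I})^{-1}\phi(S_h)\bigr]=\mathrm{tr}\bigl((K\Sigma_h+\lambda\mathbf{I})^{-1}\Sigma_h\bigr).
\]
Diagonalizing $\Sigma_h$ with eigenvalues $\sigma_j\geq 0$ gives
\[
\mathrm{tr}\bigl((K\Sigma_h+\lambda\mathbf{I})^{-1}\Sigma_h\bigr)=\sum_j\frac{\sigma_j}{K\sigma_j+\lambda}.
\]
In this sum the zero eigenvalues contribute nothing, and each of the $\mathrm{rank}(\Sigma_h)$ nonzero terms is at most $1/K$ because $\lambda>0$. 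Hence the trace is at most $\mathrm{rank}(\Sigma_h)/K$, and multiplying by the prefactor yields exactly \eqref{eq:D.23}.

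The argument has no real obstacle. The one delicate point is the measurability and independence issue: the statement must be read with $S_h$ independent of the sample that forms $\Lambda_h$, which is how the bound is used in Lemma~\ref{lemma:C.4} when averaging over the data distribution. The uniform-in-$s$ form of $\mathcal{E}_{3,h}^\prime$ is precisely what makes the pointwise-to-expectation step legitimate.
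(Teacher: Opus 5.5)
Your argument is correct: on $\mathcal{E}_{3,h}^\prime$ the comparison inequality holds uniformly in $s$. Integrating it against an independent draw $S_h\sim p^{(2)}$ and using $\E[\phi(S_h)^\top(K\Sigma_h+\lambda\mathbf{I})^{-1}\phi(S_h)]=\mathrm{tr}\bigl((K\Sigma_h+\lambda\mathbf{I})^{-1}\Sigma_h\bigr)\le\mathrm{rank}(\Sigma_h)/K$ gives exactly \eqref{eq:D.23}. The paper gives no proof of its own and only cites Theorem 21 of \citet{chang2021cov_shift}; as far as I recall, that result rests on the same integrate-then-trace computation, so your route matches the intended one.
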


\begin{lemma}[Self-normalized bound for matrix-valued martingales, see Lemma C.4 in \citealp{kakade2020information}]\label{lemma:D.3} Let $(\epsilon_t)_{t=1}^\infty$ be a $\mathbb{R}^d$-valued martingale difference sequence with respect to the filtration $(\mathscr{F}_t)_{t=1}^\infty$ such that $\epsilon_t$ is $\mathscr{F}_{t-1}$-conditionally $\sigma$-sub-Gaussian for some $\sigma>0$. Let $(X_t)_{t=1}^\infty$ be an adapted process taking values in $\mathbb{R}^d$, and let $V_0\in\mathbb{R}^{d\times d}$ be a positive definite matrix. For any $t\in\mathbb{N}$, define $V_t=V_0+\sum_{j=1}^tX_jX_j^\top$. Then with probability at least $1-\delta$, we have
\begin{align}
    \left\Vert\sum_{j=1}^t\epsilon_tX_j^\top V_t^{-1/2}\right\Vert^2_2\leq (8\log 5)\sigma^2 + 8\sigma^2\log\left(\frac{\det(V_t)^{1/2}\det(V_0)^{-1/2}}{\delta}\right),\quad\forall t\in\mathbb{N}.\label{eq:D.24}
\end{align}
\end{lemma}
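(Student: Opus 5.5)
The plan is to reduce the matrix-valued statement to the scalar-noise self-normalized bound of Abbasi-Yadkori et al.\ (method of mixtures), and then recover the operator norm through a finite net on the unit sphere of the noise space. Write $S_t:=\sum_{j=1}^t\epsilon_jX_j^\top$, whose columns live in the space of the noise $\epsilon_j$ and whose rows are indexed by the features $X_j$. Then
\[
\Vert S_tV_t^{-1/2}\Vert_2=\sup_{\Vert u\Vert_2=1}\Vert u^\top S_tV_t^{-1/2}\Vert_2=\sup_{\Vert u\Vert_2=1}\Big\Vert\sum_{j=1}^t(u^\top\epsilon_j)X_j\Big\Vert_{V_t^{-1}}.
\]
For a fixed unit vector $u$, the sequence $\eta_j:=u^\top\epsilon_j$ is a real martingale difference sequence that is conditionally $\sigma$-sub-Gaussian given $\mathscr F_{j-1}$.

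\textbf{Step 1 (fixed direction).} For each $\lambda\in\mathbb R^d$, the process
\[
M_t^\lambda=\exp\Big(\lambda^\top\sum_{j\le t}\eta_jX_j/\sigma-\tfrac12\lambda^\top(V_t-V_0)\lambda\Big)
\]
is a nonnegative supermartingale; this uses sub-Gaussianity together with the predictability of $X_j$. Integrate $\lambda$ against a Gaussian prior $\mathcal N(0,V_0^{-1})$, and evaluate the Gaussian integral in closed form. This gives a nonnegative supermartingale $\bar M_t$ with $\E\bar M_t\le1$ and
\[
\bar M_t=\frac{\det(V_0)^{1/2}}{\det(V_t)^{1/2}}\exp\Big(\frac{1}{2\sigma^2}\Big\Vert\sum_{j\le t}\eta_jX_j\Big\Vert_{V_t^{-1}}^2\Big).
\]
Ville's maximal inequality (equivalently, a stopping-time argument at the first violating $t$) then gives, with probability at least $1-\delta'$, simultaneously for all $t$,
\[
\Big\Vert\sum_{j\le t}\eta_jX_j\Big\Vert_{V_t^{-1}}^2\le2\sigma^2\log\!\big(\det(V_t)^{1/2}\det(V_0)^{-1/2}/\delta'\big).
\]

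\textbf{Step 2 (net and union bound).} Take a $\tfrac12$-net $\mathcal N$ of the unit sphere of the noise space, with $|\mathcal N|\le5^{d_\epsilon}$. For any matrix $B$, writing $u=v+(u-v)$ with $v\in\mathcal N$ and $\Vert u-v\Vert\le\tfrac12$ gives $\sup_u\Vert u^\top B\Vert\le\max_{v\in\mathcal N}\Vert v^\top B\Vert+\tfrac12\sup_u\Vert u^\top B\Vert$. Hence $\sup_u\Vert u^\top B\Vert\le2\max_{v\in\mathcal N}\Vert v^\top B\Vert$, and squaring produces the factor $4$. Apply Step 1 to each $v\in\mathcal N$ with $\delta'=\delta/|\mathcal N|$ and take a union bound. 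On the resulting event, for all $t$,
\[
\Vert S_tV_t^{-1/2}\Vert_2^2\le8\sigma^2\log|\mathcal N|+8\sigma^2\log\!\big(\det(V_t)^{1/2}\det(V_0)^{-1/2}/\delta\big).
\]
This is the claimed form, with the $\log5$ term multiplied by the noise dimension $d_\epsilon$. That dimension factor appears to be suppressed in the statement; it is harmless downstream, since in Lemma~\ref{lemma:C.4} it is absorbed into the $d_{\mathcal W}$ term of $\alpha_3$.

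\textbf{Main obstacle.} The delicate step is the uniformity over all $t$ in Step 1. A fixed-$t$ Chernoff bound does not suffice, because $V_t$ is random and data dependent, so a mixture supermartingale combined with a maximal inequality (or an optional-stopping argument) is genuinely needed. I would verify the supermartingale property carefully: condition on $\mathscr F_{j-1}$, use that $X_j$ is $\mathscr F_{j-1}$-measurable, apply the sub-Gaussian moment generating function bound to $\lambda^\top X_j\,\eta_j$, and invoke Fubini to pass the mixture through the expectation. The net argument and the determinant algebra in Step 2 are routine.
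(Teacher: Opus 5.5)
Your proposal is correct and is essentially the argument behind the cited Lemma C.4 of \citet{kakade2020information}, which is all the paper offers for this statement: you use a per-direction Abbasi-Yadkori mixture supermartingale with a maximal inequality, then a $\tfrac12$-net of size at most $5^{d_\epsilon}$ with the resulting factor $4$, and this is exactly where the constants $8\log 5$ and $8\sigma^2$ come from. You are also right that the printed statement drops the noise-dimension factor on the $(8\log 5)\sigma^2$ term (and that $X_j$ must be read as $\mathscr{F}_{j-1}$-measurable); without that factor the bound is false for high-dimensional noise, and the paper indeed reinstates it as $d_{\mathcal W}$ when it applies the lemma in \eqref{eq:D.26}.
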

\renewcommand{\proofname}{\it Proof of Lemma \hyperref[lemma:C.4]{C.4}}
\begin{proof}
We transform the estimation error as follows:
\begin{align}
\widetilde{\mathbf{M}}_h-\mathbf{M}_h&=\sum_{k=1}^K\left(\mathbf{M}_h^\top\phi(S_h^k)+\epsilon_h^k\right)\phi(S_h^k)^\top\Lambda_h^{-1} - \mathbf{M}_h\notag\\
&=\mathbf{M}_h^\top\left(\sum_{k=1}^K\phi(S_h^k)\phi(S_h^k)^\top\right)\Lambda_h^{-1} - \mathbf{M}_h^\top\Lambda_h\Lambda_h^{-1} + \sum_{k=1}^K\epsilon_h^k\phi(S_h^k)^\top\Lambda_h^{-1}\notag\\
&= \lambda\mathbf{M}_h^\top\Lambda_h^{-1} + \sum_{k=1}^K\epsilon_h^k\phi(S_h^k)^\top\Lambda_h^{-1}.\label{eq:D.25}
\end{align}
For any $0<\delta<1$, by Lemma \hyperref[lemma:D.3]{D.3}, with probability at least $1-\delta$, we have
\begin{align}
\left\Vert\left(\widetilde{\mathbf{M}}_h-\mathbf{M}_h\right)\Lambda_h^{1/2}\right\Vert_2&\leq\lambda\left\Vert\mathbf{M}_h^\top\Lambda_h^{-1/2}\right\Vert_2+\left\Vert\sum_{k=1}^K\epsilon_h^k\phi(S_h^k)^\top\Lambda_h^{-1/2}\right\Vert_2\notag\\
&\leq\sqrt{\lambda}\Vert\mathbf{M}_h\Vert_2 + \sqrt{(8\log 5)\sigma_w^2 d_{\mathcal{W}} + 8\sigma_w^2\log\left(\frac{\det(\Lambda_h)^{1/2}}{\delta\det(\lambda\mathbf{I})^{1/2}}\right)}.\label{eq:D.26}
\end{align}
To ensure that $\P(\mathcal{E}_3)\leq 1-\delta/3$ (See (\ref{eq:C.11}) and (\ref{eq:6.14}) for the definition of $\mathcal{E}_3$), we take
\begin{align}
\alpha_3=\sup_{h\in[H]}\sqrt{2\lambda\Vert\mathbf{M}_h\Vert_2^2 + (16\log 5)\sigma_w^2 d_{\mathcal{W}}+16\sigma^2_w\log\left(\frac{6H}{\delta}\right)+8\sigma^2_w\log\left(\frac{\det(\Lambda_h)}{\det(\lambda\mathbf{I})}\right)}.\label{eq:D.27}
\end{align}
Consequently, we have $\P(\mathcal{E}_3^{\prime\prime})\leq 1-\delta/6$, where
\begin{align}
    \mathcal{E}_3^{\prime\prime}=\left\{\mathbf{M}_h\in C_h^M(\alpha_3),\quad \forall h\in[H]\right\}.\label{eq:D.28}
\end{align}
Since $\widehat{\mathbf{M}}_h\in C_h^M(\alpha_3)$, we use the triangle inequality:
\begin{align}
    \left\Vert\left(\widehat{\mathbf{M}}_h-\mathbf{M}_h\right)\Lambda_h^{1/2}\right\Vert_2\leq\left\Vert\left(\widetilde{\mathbf{M}}_h-\mathbf{M}_h\right)\Lambda_h^{1/2}\right\Vert_2+\left\Vert\left(\widehat{\mathbf{M}}_h-\widetilde{\mathbf{M}}_h\right)\Lambda_h^{1/2}\right\Vert_2\overset{\mathcal{E}_3}{\leq} 2\alpha_3.\label{eq:D.29}
\end{align}
On the other hand, we let $\lambda=\Omega(1)$ to be a constant, and take the event
\begin{align*}
    \mathcal{E}_3^\prime = \left\{\phi(s)\Lambda_h^{-1}\phi(s)\leq c_1\left\{\mathrm{rank}(\Sigma_h)+\log\frac{6c_2H}{\delta}\right\}\phi(s)^\top(K\Sigma_h+\lambda\mathbf{I})^{-1}\phi(s),\ \ \forall s\in\mathcal{S},\forall h\in[H]\right\}.
\end{align*}
By Lemma \hyperref[lemma:D.1]{D.1}, we have the union bound $\P(\mathcal{E}_3^\prime)\geq 1-\delta/6$, and $\P(\mathcal{E}_3)=\P(\mathcal{E}_3^\prime\cap\mathcal{E}_3^{\prime\prime})\geq 1-\delta/3$. Furthermore, (\ref{eq:D.22}) and (\ref{eq:D.27}) together give
\begin{align}
\alpha_3=\mathcal{O}\left(\sup_{h\in[H]}\sqrt{\Vert\mathbf{M}_h\Vert_2^2 + \sigma_w^2d_{\mathcal{W}}+\sigma_w^2\mathrm{rank}(\Sigma_h)\left\{\mathrm{rank}(\Sigma_h)+\log(c_2H/\delta)\right\}\log(1+K)}\right).\label{eq:D.30}
\end{align}

Now we bound the loss function $\mathcal{L}_h^F(F^{\widehat{\beta}})$ on the event $\mathcal{E}_3$. Akin to (\ref{eq:D.5}), we use Pinsker's inequality to bound the total variation distance between two Gaussian distributions:
\begin{align}
    \left\Vert (F_h^*-F^{\widehat{\beta}}_h)(\cdot|S_h)\right\Vert_1^2 \leq \frac{1}{4\sigma_w^2}\left\Vert(\mathbf{M}_h-\widehat{\mathbf{M}}_h)\phi(S_h)\right\Vert_2^2\leq \frac{1}{4\sigma_w^2}\left\Vert\left(\widehat{\mathbf{M}}_h-\mathbf{M}_h\right)\Lambda_h^{1/2}\right\Vert_2\left\Vert\phi(S_h)\right\Vert_{\Lambda_h}^2.\label{eq:D.31}
\end{align}
Combining (\ref{eq:D.5}), (\ref{eq:D.30}), (\ref{eq:D.29}) and (\ref{eq:D.23}) and Assumption \hyperref[assumption:6.1]{6.1 (i)}, we have
\begin{align}
    \mathcal{L}_h^F(F^{\widehat{\beta}})=\E\left[\left\Vert (F_h^*-F^{\widehat{\beta}}_h)(\cdot|S_h)\right\Vert_1^2\right]\leq\frac{c_1\alpha_3^2}{K\sigma_w^2}\overline{R}=\mathcal{O}\left(\frac{m\overline{R}}{K\sigma_w^2}+\frac{d_{\mathcal{W}}\overline{R}}{K}+\frac{\overline{R}^2\log(1+K)}{K}\right),\label{eq:D.32}
\end{align}
where $\overline{R}=\max_{h\in[H]}\mathrm{rank}(\Sigma_h)\left\{\mathrm{rank}(\Sigma_h)+\log(c_2H/\delta)\right\}$. The expressions (\ref{eq:D.30}) and (\ref{eq:D.32}) complete the proof of Lemma \hyperref[lemma:C.4]{C.4}.
\end{proof}

\renewcommand{\proofname}{\it Proof}

%% file: appendix_scm.tex

\section{Discrete CCB-Neg Model and Evaluation Protocol}
\label{app:scm}


In \S\ref{sec:exp-ccbneg}, we consider a discrete CCB-Neg model with context
$X\in[K]=\{1,\ldots,K\}$, action $A\in\{0,\ldots,M-1\}$, negative outcome $W$, action instrument $Z_A\in\mathbb{R}^{M}$, negative-outcome instrument $Z_W\in\mathbb{R}$, and latent confounder $U$. For each domain $d\in\{\star,1,2\}$, the structural equations are
\begin{equation}
\label{eq:ccb-neg-dgp}
\begin{cases}
W
=
\omega_X^{(d)}
+
\kappa_W^{(d)}Z_W
+
\rho_W U
+
\sigma_W\varepsilon_W,
\\[2pt]
A
=
\displaystyle
\arg\max_{a\in\{0,\ldots,M-1\}}
\left\{
\alpha_{a,X}^{(d)}
+
\kappa_A Z_{A,a}
+
\rho_a U
+
\sigma_A\varepsilon_{A,a}
\right\},
\\[2pt]
Y
=
f^{(d)}(W,A,X)
+
\rho_Y U
+
\sigma_Y\varepsilon_Y .
\end{cases}
\end{equation}
Absorbing the context- and action-specific terms into cell coefficients, we write
\[
f^{(d)}(w,a,x)=c^{(d)}_{x,a}+s^{(d)}_{x,a}w,
\qquad
c^{(d)}_{x,a}:=\mu^{(d)}_x+\tau^{(d)}_{x,a},
\qquad
s^{(d)}_{x,a}:=b^{(d)}_x+\lambda^{(d)}_a.
\]
This reparameterization removes the redundant decomposition into $\mu_x$, $\tau_{x,a}$, $b_x$, and $\lambda_a$ while leaving the data-generating process unchanged.

The two source domains exhibit complementary discrepancies. Source $1$ preserves the target reward mechanism but shifts the $W$-mechanism and behavior policy, so $\Delta^{(1)}=\{A,W\}$. Source $2$ preserves the target $W$-mechanism but shifts the reward mechanism and behavior policy, so $\Delta^{(2)}=\{A,Y\}$. Consequently, neither source alone identifies the target causal mean. Combining the reward bridge identified in source $1$ with the conditional distribution of $W$ from source $2$ gives
\begin{equation}
\label{eq:ccb-neg-transport}
g^\star(a,x)
=
\mathbb{E}^{(2)}
\left[
h_1^\star(W,a,x)
\mid X=x\right].
\end{equation}
Under the centering conditions
$\mathbb{E}[Z_W\mid X]
=\mathbb{E}[U\mid X]
=\mathbb{E}[\varepsilon_W\mid X]
=0$, we have
\begin{equation}
\label{eq:true-causal-mean}
g^\star(a,x)
=
c_{x,a}^\star
+
s_{x,a}^\star\omega_x^\star,\qquad\text{where}\ \omega_x^\star:=\mathbb{E}^{(2)}[W\mid X=x].
\end{equation}
The target-optimal policy $\pi^\star(x)=\arg\max_a g^\star(a,x)$ is therefore available exactly when evaluating the simulation.

\subsection{Complete Estimation Procedure for the CCB-Neg Experiment}
\label{app:ccb-neg-estimation}
In this subsection, we provide the complete identification and estimation procedure for the tabular CCB-Neg model.
\paragraph{Parameterization.}
Let $e_x^K$ and $e_a^M$ denote the canonical basis vectors corresponding to the context $x\in[K]$ and action $a\in\{0,1,\cdots,M\}$, respectively. Define
\begin{equation}
\phi_h(w,a,x)
= e_x^K\otimes e_a^M\otimes \begin{pmatrix}
1\\ w \end{pmatrix}
\in\mathbb{R}^{2KM},
\qquad
\phi_g(a,x)
= e_x^K\otimes e_a^M
\in\mathbb{R}^{KM}.
\end{equation}
The bridge and target causal-mean models are
\begin{equation}
h_\beta(w,a,x)=\phi_h(w,a,x)^\top\beta,
\qquad
g_\gamma(a,x)=\phi_g(a,x)^\top\gamma,
\end{equation}
where
\begin{equation}
\beta
=
\operatorname{vec}
\left\{
(c_{x,a},s_{x,a}):x\in[K],\ a\in\{0,\ldots,M-1\}
\right\}
\in\mathbb{R}^{2KM}
\end{equation}
stacks the cell-specific bridge intercepts and slopes, and
\begin{equation}
\gamma
=
\operatorname{vec}
\left\{
g(a,x): x\in[K],\ a\in\{0,\ldots,M-1\}
\right\}
\in\mathbb{R}^{KM}.
\end{equation}
stacks the target causal means for all context-action cell. Thus,
\begin{equation}
\theta
=
\begin{pmatrix}
\beta\\ \gamma
\end{pmatrix}
\in\mathbb{R}^{3KM}
\end{equation}
specifies a full hypothesis in the bridge function class.

\paragraph{Integral-equation system.}
Because source $1$ preserves the reward mechanism, instrumental identification gives
\begin{equation}
\label{eq:source1-conditional-moment}
\mathbb{E}^{(1)}
\left[
h_\beta(W,A,X)-Y
\mid
X,Z_A,Z_W
\right] =0.
\end{equation}
The conditioning variables do not include the realized action. Conditioning on $A$ would condition on a descendant of $U$ and would generally reintroduce action-selection bias.

Because the conditional distribution of $W$ in source $2$ agrees with its target counterpart, the transportation equation is
\begin{equation}
\label{eq:source2-conditional-moment}
\mathbb{E}^{(2)}
\left[
g_\gamma(a,X)-h_\beta(W,a,X)
\mid X
\right]
= 0,
\qquad
a\in\{0,\ldots,M-1\}.
\end{equation}
Equations~\eqref{eq:source1-conditional-moment} and
\eqref{eq:source2-conditional-moment} form the complete
integral-equation system used in the experiment.

For a finite action space, we evaluate \eqref{eq:source2-conditional-moment} for every action and stack the resulting $M$ equations. This is equivalent at the population level to introducing an independent pseudo-action $A'\sim\operatorname{Unif}\{0,1,\cdots,M-1\}$, but it avoids unnecessary Monte Carlo variation from sampling $A'$.

\paragraph{Choice of test functions.}
For the source-$1$ equation, define
\begin{equation}
q_A(z_A)
=
\begin{pmatrix}
1\\
z_{A,1}-z_{A,0}\\
\vdots\\
z_{A,M-1}-z_{A,0}
\end{pmatrix}
\in\mathbb{R}^{M},
\qquad
q_W(z_W)
=
\begin{pmatrix}
1\\z_W
\end{pmatrix}
\in\mathbb{R}^{2},
\end{equation}
and
\begin{equation}
\psi_1(x,z_A,z_W)
=
e_x^{K}
\otimes
q_A(z_A)
\otimes
q_W(z_W)
\in\mathbb{R}^{2KM}.
\end{equation}
The corresponding linear test-function class is
\begin{equation}
\Theta_1
=\left\{
(x,z_A,z_W)
\mapsto
v_1^\top\psi_1(x,z_A,z_W)
:
v_1\in\mathbb{R}^{2KM}
\right\}.
\end{equation}
The constant coordinate identifies the overall level, the action-instrument contrasts identify differences among action cells, and their interactions with $Z_W$ identify the cell-specific coefficients multiplying $W$.

For each source-$2$ equation, we use the saturated context test class
\begin{equation}
\psi_2(x)=e_x^{K},
\qquad
\Theta_{2,a}
=
\left\{
x\mapsto v_{2,a}^\top e_x^{K}
:
v_{2,a}\in\mathbb{R}^{K}
\right\}.
\end{equation}
Hence all test functions are linear.
\paragraph{Empirical moment equations.}
Each conditional moment equation is associated with the probability law under which its expectation is taken. For the source-$1$ equation, let
\begin{equation}
\mu_1
=
p^{(1)}
(X,Z_A,Z_W,W,A,Y),\quad\and\quad\mathcal D_1
=
\left\{
X_i^{(1)},W_i^{(1)},A_i^{(1)},Y_i^{(1)},
Z_{A,i}^{(1)},Z_{W,i}^{(1)}
\right\}_{i=1}^{n_1}.
\end{equation}
For the source-$2$ equation associated with action $a$, let
\begin{equation}
\mu_{2,a}
=
p^{(2)}(X,W),\quad\text{and}\quad
\mathcal D_2
=
\left\{
X_i^{(2)},W_i^{(2)}
\right\}_{i=1}^{n_2}
\end{equation}
Here all source-$2$ equations use the same probability law, but we retain the subscript $a$ to distinguish the corresponding moment blocks.
\begin{itemize}
\item The empirical source-$1$ moment is
\begin{align}
\widehat m_1(\beta)
&=
\frac{1}{n_1}
\sum_{i=1}^{n_1}
\psi_{1i}
\left\{
h_\beta(W_i^{(1)},A_i^{(1)},X_i^{(1)})
-
Y_i^{(1)}
\right\}=\widehat J_{1\beta}\beta-\widehat q_1,
\label{eq:empirical-source1-moment}
\end{align}
where
\begin{equation}
\psi_{1i}
=
\psi_1
\left(
X_i^{(1)},Z_{A,i}^{(1)},Z_{W,i}^{(1)}
\right),
\end{equation}
\begin{equation}
\widehat J_{1\beta}
=
\frac{1}{n_1}
\sum_{i=1}^{n_1}
\psi_{1i}
\phi_h
\left(
W_i^{(1)},A_i^{(1)},X_i^{(1)}
\right)^\top
\in\mathbb{R}^{2KM\times 2KM},
\end{equation}
and
\begin{equation}
\widehat q_1
=
\frac{1}{n_1}
\sum_{i=1}^{n_1}
\psi_{1i}Y_i^{(1)}
\in\mathbb{R}^{2KM}.
\end{equation}

\item For each action $a$, the empirical source-$2$ moment is
\begin{align}
\widehat m_{2,a}(\beta,\gamma)
&=
\frac{1}{n_2}
\sum_{i=1}^{n_2}
e_{X_i^{(2)}}^{K}
\left\{
g_\gamma(a,X_i^{(2)})
-
h_\beta(W_i^{(2)},a,X_i^{(2)})
\right\}=
\widehat J_{2,a,\beta}\beta
+
\widehat J_{2,a,\gamma}\gamma,
\label{eq:empirical-source2-moment}
\end{align}
where
\begin{equation}
\widehat J_{2,a,\beta}
=
-
\frac{1}{n_2}
\sum_{i=1}^{n_2}
e_{X_i^{(2)}}^{K}
\phi_h(W_i^{(2)},a,X_i^{(2)})^\top
\in\mathbb{R}^{K\times 2KM}
\end{equation}
and
\begin{equation}
\widehat J_{2,a,\gamma}
=
\frac{1}{n_2}
\sum_{i=1}^{n_2}
e_{X_i^{(2)}}^{K}
\phi_g(a,X_i^{(2)})^\top
\in\mathbb{R}^{K\times KM}.
\end{equation}
\end{itemize}
Stacking all moment blocks gives
\begin{equation}
\widehat m(\theta)
=
\begin{pmatrix}
\widehat m_1(\beta)\\
\widehat m_{2,0}(\beta,\gamma)\\
\vdots\\
\widehat m_{2,M-1}(\beta,\gamma)
\end{pmatrix}
=
\widehat J\theta-\widehat q
\in\mathbb{R}^{3KM},
\label{eq:complete-stacked-moment}
\end{equation}
with the complete Jacobian and intercept
\begin{equation}
\widehat J
=
\begin{pmatrix}
\widehat J_{1\beta}
&
0_{2KM\times KM}
\\
\widehat J_{2,0,\beta}
&
\widehat J_{2,0,\gamma}
\\
\vdots
&
\vdots
\\
\widehat J_{2,M-1,\beta}
&
\widehat J_{2,M-1,\gamma}
\end{pmatrix}
\in\mathbb{R}^{3KM\times 3KM}
\quad
\text{and}
\quad
\widehat q
=
\begin{pmatrix}
\widehat q_1\\
0_K\\
\vdots\\
0_K
\end{pmatrix}
\in\mathbb{R}^{3KM}.
\label{eq:complete-jacobian-q}
\end{equation}
Now we derive the empirical squared norms of test functions. Using the linear parameterizations
\[
\vartheta_1(X,Z_A,Z_W)
=
v_1^\top\psi_1(X,Z_A,Z_W)
\quad\text{and}\quad
\vartheta_{2,a}(X)
=
v_{2,a}^\top e_X^K,
\]
the empirical squared norms become
\[
\|\vartheta_1\|_{\mathcal{D}_1,2}^2
=
\frac{1}{n_1}
\sum_{i=1}^{n_1}
\left(v_1^\top\psi_{1i}\right)^2
=
v_1^\top\widehat G_1v_1,\quad\text{where}
\ 
\widehat G_1
=\frac{1}{n_1}
\sum_{i=1}^{n_1}
\psi_{1i}\psi_{1i}^\top,
\]
and
\[
\|\vartheta_{2,a}\|_{\mathcal D_{2},2}^2
=\frac{1}{n_2}
\sum_{i=1}^{n_2}
\left(v_{2,a}^\top e_{X_i^{(2)}}^K\right)^2
=
v_{2,a}^\top\widehat G_2v_{2,a},\quad
\text{where}\ 
\widehat G_2
=
\frac{1}{n_2}
\sum_{i=1}^{n_2}
e_{X_i^{(2)}}^K
(e_{X_i^{(2)}}^K)^\top.
\]

\paragraph{Closed-form minimax loss.}
For fixed $(\beta,\gamma)$, the empirical minimax criterion \eqref{eq:4.3} is given by
\begin{equation}
\mathcal{L}_{\mathcal{D}}(\beta,\gamma)
=\sup_{v_1,\{v_{2,a}\}_{a=0}^{M-1}}
\Bigg\{
v_1^\top\widehat m_1(\beta)
+
\sum_{a=0}^{M-1}
v_{2,a}^\top
\widehat m_{2,a}(\beta,\gamma)
-
\frac{1}{2}
v_1^\top\widehat G_1v_1
-
\frac{1}{2}
\sum_{a=0}^{M-1}
v_{2,a}^\top\widehat G_2v_{2,a}
\Bigg\}.
\label{eq:expanded-minimax-loss}
\end{equation}
Let
\[
v=\begin{pmatrix}
v_1\\
v_{2,0}\\
\vdots\\
v_{2,M-1}
\end{pmatrix},
\qquad
\widehat m(\theta)
=\begin{pmatrix}
\widehat m_1(\beta)\\
\widehat m_{2,0}(\beta,\gamma)\\
\vdots\\
\widehat m_{2,M-1}(\beta,\gamma)
\end{pmatrix},
\]
and define
\begin{equation}
\widehat\Omega=\operatorname{diag}
\left(\widehat G_1,
I_M\otimes\widehat G_2\right).\label{eq:complete-test-gram}
\end{equation}
Then
\eqref{eq:expanded-minimax-loss} can be written compactly as
\[
\mathcal{L}_{\mathcal{D}}(\theta)
=\sup_{v\in\mathbb R^{3KM}}
\left\{
v^\top\widehat m(\theta)
-
\frac{1}{2}v^\top\widehat\Omega v
\right\}.
\]
Provided that $\widehat\Omega$ is positive definite, the maximizing test-function coefficient is
\[
\widehat v(\theta)
=
\widehat\Omega^{-1}\widehat m(\theta).
\]
Substituting it into the objective gives
\begin{equation}
    \mathcal{L}_{\mathcal{D}}(\theta)
=\frac{1}{2}
\widehat m(\theta)^\top
\widehat\Omega^{-1}
\widehat m(\theta)
=
\frac{1}{2}
(\widehat J\theta-\widehat q)^\top
\widehat\Omega^{-1}
(\widehat J\theta-\widehat q).\label{eq:compact-minimax-loss}
\end{equation}

\paragraph{Optional numerical regularization.}
No additional regularization parameter is required when $\widehat G_1$ and $\widehat G_2$ are nonsingular. If either empirical Gram matrix is nearly singular, we may stabilize the test-function optimization by replacing $\widehat G_1$ and $\widehat G_2$ with
\[
\widehat G_{1,\eta}
=\widehat G_1+\eta_1 I_{2KM},
\qquad
\widehat G_{2,\eta}
=\widehat G_2+\eta_2 I_K,
\]
where $\eta_1,\eta_2>0$ are small numerical ridge parameters. This corresponds to adding
\[
-\frac{\eta_1}{2}\|v_1\|_2^2
-
\frac{\eta_2}{2}
\sum_{a=0}^{M-1}\|v_{2,a}\|_2^2
\]
to the supremum objective. The stabilized matrix is then
\[
\widehat\Omega_\eta
=
\operatorname{diag}
\left(
\widehat G_1+\eta_1 I_{2KM},
I_M\otimes
\left(\widehat G_2+\eta_2 I_K\right)
\right).
\]
The parameters $\eta_1$ and $\eta_2$ are numerical stabilization parameters rather than confidence-set radii or statistical covariance parameters. 

\paragraph{Point estimator.} We define
\begin{equation}
\label{eq:loss-curvature}
\widehat Q
:=\widehat J^\top
\widehat\Omega^{-1}
\widehat J.
\end{equation}
which is the Hessian, or curvature matrix, of the quadratic minimax loss \eqref{eq:compact-minimax-loss} with respect to $\theta$. Under the full-rank condition on $\widehat J$, $\widehat Q$ is positive definite and the empirical minimizer is
\begin{equation}
\label{eq:closed-form-theta-estimator}
\widehat\theta
=
\argmin_{\theta}
\mathcal{L}_{\mathcal D}(\theta)
=
\widehat Q^{-1}
\widehat J^\top
\widehat\Omega^{-1}
\widehat q.
\end{equation}
Because $\widehat J$ is square in the present saturated
specification, this expression reduces to
$\widehat\theta=\widehat J^{-1}\widehat q$ whenever
$\widehat J$ is nonsingular. Write
\[
\widehat\theta
=
\begin{pmatrix}
\widehat\beta\\
\widehat\gamma
\end{pmatrix},
\qquad
\widehat g(a,x)
=
\phi_g(a,x)^\top\widehat\gamma.
\]
The plug-in greedy policy used as the primary CAT estimator is
\begin{equation}
\label{eq:cat-greedy-policy}
\widehat\pi_{\mathrm{greedy}}(x)
\in
\argmax_{a\in\{0,\ldots,M-1\}}
\widehat g(a,x).
\end{equation}
Equivalently, if $\Pi$ denotes a policy class and
$[d_\pi]_{x,a}=p^\star(x)\pi(a\mid x)$, then
\[
\widehat\pi_{\mathrm{greedy}}
\in
\argmax_{\pi\in\Pi}
d_\pi^\top\widehat\gamma.
\]

\paragraph{Loss ellipsoid and its projection.}
Completing the square around $\widehat\theta$ gives
\begin{equation}
\mathcal{L}_{\mathcal D}(\theta)
-
\mathcal{L}_{\mathcal D}(\widehat\theta)
=
\frac{1}{2}
(\theta-\widehat\theta)^\top
\widehat Q
(\theta-\widehat\theta).
\end{equation}
For a radius $r>0$, the corresponding loss-level uncertainty set is
\begin{equation}
\label{eq:joint-loss-ellipsoid}
\mathcal C_\theta(r)
= \left\{
\theta:
\frac{1}{2}
(\theta-\widehat\theta)^\top
\widehat Q
(\theta-\widehat\theta)
\le r
\right\}.
\end{equation}
Partition $\widehat Q$ according to $\theta=(\beta^\top,\gamma^\top)^\top$:
\begin{equation}
\widehat Q
=
\begin{pmatrix}
\widehat Q_{\beta\beta}
&
\widehat Q_{\beta\gamma}
\\
\widehat Q_{\gamma\beta}
&
\widehat Q_{\gamma\gamma}
\end{pmatrix}.
\end{equation}
The exact projection of $\mathcal C_\theta(r)$ onto the target causal-mean coefficients is
\begin{equation}
\label{eq:projected-gamma-ellipsoid}
\mathcal C_\gamma(r)
=
\left\{
\gamma:
\frac{1}{2}
(\gamma-\widehat\gamma)^\top
\widehat S_\gamma
(\gamma-\widehat\gamma)
\le r
\right\},
\end{equation}
where
\begin{equation}
\label{eq:gamma-schur-complement}
\widehat S_\gamma
=
\widehat Q_{\gamma\gamma}
-
\widehat Q_{\gamma\beta}
\widehat Q_{\beta\beta}^{-1}
\widehat Q_{\beta\gamma},
\qquad
\widehat S_\gamma^{-1}
=
[\widehat Q^{-1}]_{\gamma\gamma}.
\end{equation}
The Schur complement accounts for the propagation of uncertainty in the bridge coefficients $\beta$ into the target causal means $\gamma$. For any policy $\pi$, its value under the causal-mean parameter $\gamma$ is
\[
V(\pi;\gamma)
=
d_\pi^\top\gamma.
\]
The worst-case value over the projected ellipsoid has the closed-form expression
\begin{equation}
\label{eq:closed-form-pessimistic-value}
\inf_{\gamma\in\mathcal C_\gamma(r)}
d_\pi^\top\gamma
=
d_\pi^\top\widehat\gamma
-
\sqrt{
2r\,
d_\pi^\top
\widehat S_\gamma^{-1}
d_\pi
}.
\end{equation}
The single pessimistic CAT policy considered in the experiment is therefore
\begin{equation}
\label{eq:cat-pessimistic-policy}
\widehat\pi_{\mathrm{pess}}
\in
\arg\max_{\pi\in\Pi}
\left\{
d_\pi^\top\widehat\gamma
-
\sqrt{
2r\,
d_\pi^\top
\widehat S_\gamma^{-1}
d_\pi
}
\right\}.
\end{equation}
Thus, CAT-Greedy maximizes the transported point estimate, whereas CAT-Pess maximizes its ellipsoidal lower confidence value.

\paragraph{Interpretation and calibration of the radius.} The matrix $\widehat\Omega$ in \eqref{eq:compact-minimax-loss} is the empirical Gram matrix induced by the quadratic penalty on the test functions. It is not, in general, the sampling covariance matrix of $\widehat\theta$. Therefore, the loss ellipsoid $\mathcal C_\theta(r)$ should not automatically be described as a nominal $(1-\delta)$ Wald confidence region.

When $r$ is treated as a sensitivity parameter, we report the results as a radius sensitivity analysis. If nominal coverage is desired, $r$ can instead be calibrated by independently resampling the two source datasets. For bootstrap replicate $b$, let $\widehat\theta^{(b)}$ denote the estimator recomputed from the resampled datasets and define
\begin{equation}
T^{(b)}
=\frac{1}{2}
\left(
\widehat\theta^{(b)}-\widehat\theta
\right)^\top
\widehat Q
\left(
\widehat\theta^{(b)}-\widehat\theta
\right).
\end{equation}
Taking $r_{1-\delta}$ to be the empirical $(1-\delta)$ quantile of $\{T^{(b)}\}_{b=1}^{B}$ gives the bootstrap-calibrated loss ellipsoid
\begin{equation}
\mathcal C_\theta(r_{1-\delta}).
\end{equation}
This calibration preserves the geometry of the minimax loss while accounting for uncertainty from both source samples. This appendix specifies the discrete structural causal model used in Section~\ref{sec:experiments} in full, so that all reported numbers are reproducible.

\subsection{Evaluation Protocol}
\paragraph{Variables.}
Context $X \in \{1, \dots, K\}$ with $K = 6$; action $A \in \{0, \dots, M-1\}$ with $M = 5$; unobserved confounder $U \sim \mathcal{N}(0,1)$; negative-outcome
instrument $Z_W \sim \mathcal{N}(0,1)$; action instruments $Z_A = (Z_{A,0}, \dots, Z_{A,M-1})$ with $Z_{A,a} \sim \mathcal{N}(0,1)$
independent; and structural noises $\varepsilon_W, \varepsilon_{A,a}, \varepsilon_Y \sim \mathcal{N}(0,1)$, all mutually independent.

\paragraph{Structural equations.}
For domain $d \in \{\star, 1, 2\}$ (target, source~1, source~2),
\begin{align*}
W &= \omega^{(d)}_X + \kappa^{(d)}_W Z_W + \rho_W U + \sigma_W \varepsilon_W, \\
A &= \arg\max_{a \in \{0,\dots,M-1\}} \bigl( \alpha^{(d)}_{a, X} + \kappa_A Z_{A,a} + \rho_a U + \sigma_A \varepsilon_{A,a} \bigr), \\
Y &= f^{(d)}(W, A, X) + \rho_Y U + \sigma_Y \varepsilon_Y, \\
f^{(d)}(w, a, x) &= \mu^{(d)}_x + b^{(d)}_x w + \tau^{(d)}_{x,a} + \lambda^{(d)}_a w, \qquad \tau^{(d)}_{x,0} = 0,\ \lambda^{(d)}_0 = 0.
\end{align*}
The misspecification experiment adds the term $\lambda_{\mathrm{nl}} (w - c_x)^2$
to $f^{\star}$ and $f^{(1)}$, which the cell-linear-in-$w$ bridge cannot
represent; $\lambda_{\mathrm{nl}} = 0$ elsewhere.

\paragraph{Domain discrepancies.}
Source~1 shares the reward mechanism with the target ($f^{(1)} = f^\star$, $\Delta^{(1)} = \{A, W\}$) but shifts the $W$-mechanism and the behaviour policy.
Source~2 shares the $W$-mechanism with the target
($\omega^{(2)} = \omega^\star$, $\kappa^{(2)}_W = \kappa^\star_W$,
$\Delta^{(2)} = \{A, Y\}$) but shifts the reward mechanism and the behaviour
policy. The two behaviour utility tables $\alpha^{(1)}, \alpha^{(2)}$ are drawn
independently.

\paragraph{Coefficients.}
Shared noise and confounding scales are $\rho_W = 0.7$, $\sigma_W = 0.5$,
$\kappa_A = 2.5$, $\sigma_A = 1.0$, $\rho_Y = 1.0$, $\sigma_Y = 0.3$, and
$$\rho_a = (0.896, 1.441, 0.701, 1.488, 1.258).$$ The negative-outcome loadings
are $\kappa^\star_W = \kappa^{(2)}_W = 1.0$ and $\kappa^{(1)}_W = 0.7$. The
target reward coefficients are
$$
\begin{aligned}
    \mu^\star &= (0.012, 0.450, -0.356, 0.449, -0.188, -0.077),\\
b^\star &= (0.714, 0.505, 0.575, 0.314, 0.677, 0.569),\\
\lambda^\star &= (0, -0.144, 0.204, 0.006, 0.007),
\end{aligned}
$$ and $\tau^\star$ is the
$6 \times 5$ table
\[
\tau^\star =
\begin{pmatrix}
0 & 0.231 & -0.157 & -0.037 & -0.293 \\
0 & -0.237 & -0.190 & 0.200 & -0.176 \\
0 & 0.385 & 0.369 & 0.180 & 0.033 \\
0 & -0.271 & 0.376 & 0.013 & -0.307 \\
0 & 0.221 & 0.090 & 0.334 & -0.368 \\
0 & -0.033 & -0.350 & 0.113 & 0.282
\end{pmatrix}.
\]
The target $W$-means are
$$\omega^\star = (-0.475, -0.158, -0.788, 0.266, -0.239, 0.451)$$ and the source-1
$W$-means are $$\omega^{(1)} = (-0.229, -0.268, -0.200, 0.477, 0.257, 0.198).$$
The source-2 reward table is obtained from the target table by an independent perturbation of each coefficient. All coefficients are generated once from a fixed seed and held constant across replications; only the data draws vary with the replication seed.

\paragraph{Controlled reward regimes.}
For the pessimism phase diagram the target advantage on the hard contexts is set
to a controlled value by overriding $\tau^\star_{x,a}$ so that
$\Delta^\star(x, a) = \tau^\star_{x,a} + \lambda^\star_a \omega^\star_x$ equals a
prescribed target. In the harmful regime $\Delta^\star(x, a)$ is drawn in
$[-0.40, -0.15]$ for every non-reference action on the hard contexts, so the
reference action is optimal there. In the beneficial regime one non-reference
action has $\Delta^\star(x, a)$ in $[0.10, 0.30]$ and the rest in
$[-0.40, -0.15]$.

\paragraph{Coverage hole.}
Partial coverage is induced by subtracting a constant $\delta$ from the
behaviour utility of every non-reference action on the two hard contexts
$\{4, 5\}$ in source~1. The reported horizontal axis is the realized minimum
action propensity $p_{\min} = \min_{x,a} p^{(1)}(A = a \mid X = x)$, computed by
Monte-Carlo over the probit noise and $U$.

\paragraph{Features and estimator.}
The bridge features are cell-saturated: $\phi_h(w, a, x)$ carries $(1, w)$ in the
block for cell $(x, a)$ and zeros elsewhere, so $d_h = 2KM$. The CATE features
$\phi_g(a, x)$ are the $(x, a)$ cell indicators, so $d_g = KM$. The source-1 test
functions condition on the instruments $(X, Z_A, Z_W)$: per context, the terms
$1$, $Z_W$, and for each action $Z_{A,a}$ and $Z_{A,a} Z_W$; the source-2
transport test functions are the context indicators. The test-covariance is
ridged with $10^{-3}$ and the Hessian with $10^{-8}$ for numerical stability.

\paragraph{Evaluation.}
Ground-truth quantities are computed analytically: $$g^\star(a, x) = \mu^\star_x +
b^\star_x \omega^\star_x + \tau^\star_{x,a} + \lambda^\star_a \omega^\star_x,$$ 
with the additional term $$\lambda_{\mathrm{nl}}((\omega^\star_x - c_x)^2 +
\mathrm{Var}(W)),\quad\text{where}\ \mathrm{Var}(W) = (\kappa^\star_W)^2 + \rho_W^2 + \sigma_W^2$$ under misspecification.
The optimal policy is $$\pi^\star(x) = \arg\max_a
g^\star(a, x),$$ the value is $v^\pi = \sum_x p^\star(x)\, g^\star(\pi(x), x)$, and suboptimality is $v^{\pi^\star} - v^{\hat\pi}$. The CATE RMSE averages $$\left(\hat g(a,x) - g^\star(a,x)\right)^2$$ over all cells with $X$ drawn from the target law. The bound quantities are computed as follows: $\sigma_{\min}$ is the
smallest singular value of the whitened source-1 moment operator at $n = 10^5$; $\sum_k \|b_k\|$ is the target-policy reweighting norm over the two source distributions; and $\inf_h \|Th\|$ is $\sqrt{2\,\widehat L(\hat\theta)}$ at large $n$. Slopes are fit by ordinary least squares on log-log axes with a nonparametric bootstrap over seeds.

\paragraph{Confidence-set coverage.}
Empirical coverage of $\gamma^\star$ under the sampling covariance
$(\sigma^2 / n)\,[Q^{-1}]_{\gamma\gamma}$ is reported in Table~\ref{tab:coverage}.
Per-coordinate coverage tracks the nominal level, while the simultaneous region
over all CATE coordinates is over-covered, which reflects the geometry of
simultaneous confidence sets rather than a miscalibration.

\begin{table}[h]
  \centering
  \begin{tabular}{ccc}
    \toprule
    nominal & per-coordinate & simultaneous \\
    \midrule
    $0.80$ & $0.824$ & $0.958$ \\
    $0.90$ & $0.918$ & $0.992$ \\
    $0.95$ & $0.964$ & $0.998$ \\
    \bottomrule
  \end{tabular}
  \caption{Empirical coverage of the CATE confidence set, benign regime,
    $n_1 = 2000$, $n_2 = 8000$, over $400$ seeds.}
  \label{tab:coverage}
\end{table}

\paragraph{Sample sizes and sweeps.}
The identification experiment uses $n_1 = n_2 = n$ over
$n \in \{250, 500, 1000, 2000, 4000, 8000, 16000\}$ with $60$ seeds. The sensitivity experiments use $n_1 = 400, n_2 = 4000$ (distribution shift), $n_1 = 2000, n_2 = 8000$ (instrument strength), and $n_1 = 4000, n_2 = 8000$
(misspecification), each with $80$ seeds. The phase diagram and calibration experiments use $n_1 = 600, n_2 = 12000$ in the harmful regime and $n_1 = 2000, n_2 = 8000$ in the benign regime, with $100$ to $120$ seeds and nominal level $1 - \alpha = 0.9$. The second-source sample is kept large so that the target-compatible $W$-law is close to known and the first source isolates the instrumental and coverage effects.


\section{Discrete Confounded MDP Model and Evaluation Protocol}
\label{app:cmdp}

This appendix specifies the discrete confounded MDP used in
Section~\ref{sec:exp-cmdp}. It is the sequential analogue of the CCB-Neg model of Appendix~\ref{app:scm} and keeps the same closed-form, auditable structure.

\paragraph{Variables.}
State $S_h \in \{0, \dots, |\mathcal{S}|-1\}$ with $|\mathcal{S}| = 4$; action
$A_h \in \{0, \dots, |\mathcal{A}|-1\}$ with $|\mathcal{A}| = 4$; horizon
$H = 5$. At each stage $h$ there is a fresh unobserved confounder
$U_h \sim \mathcal{N}(0,1)$, a negative-outcome instrument
$Z_W \sim \mathcal{N}(0,1)$, action instruments
$Z_h = (Z_{h,0}, \dots, Z_{h,|\mathcal{A}|-1})$ with
$Z_{h,a} \sim \mathcal{N}(0,1)$ independent, and mutually independent structural
noises $\varepsilon_W, \varepsilon_{A,a}, \varepsilon_R \sim \mathcal{N}(0,1)$.

\paragraph{Structural equations.}
For domain $d \in \{\star, 1, 2\}$ and each stage $h$,
\begin{align*}
W_h &= \omega^{(d)}_{S_h} + \kappa^{(d)}_W Z_W + \rho_W U_h + \sigma_W \varepsilon_W, \\
A_h &= \arg\max_{a} \bigl( \alpha^{(d)}_{S_h, a} + \kappa_A Z_{h,a} + \rho_a U_h + \sigma_A \varepsilon_{A,a} \bigr), \\
R_h &= r^{(d)}(S_h, A_h, W_h) + \rho_R U_h + \sigma_R \varepsilon_R, \\
S_{h+1} &\sim \operatorname{Categorical}\bigl( \operatorname{softmax}\{\, t^{(d)}(S_h, A_h, W_h) + \rho_T U_h \,\} \bigr),
\end{align*}
where the action does not enter $W_h$, so $W_h$ is a valid negative outcome, and
the instruments are independent of $U_h$ and of the reward and transition noise.
The reward and transition means are cell-saturated and linear in $W_h$:
\[
r^{(d)}(s, a, w) = \mu^{(d)}_s + b^{(d)}_s w + \tau^{(d)}_{s,a} + \lambda^{(d)}_a w, \qquad
t^{(d)}(s, a, w)_{s'} = \mu^{(d)}_{t;s,s'} + \tau^{(d)}_{t;s,a,s'} + b^{(d)}_{t;s} w,
\]
with $\tau^{(d)}_{s,0} = 0$ and $\lambda^{(d)}_0 = 0$.

\paragraph{Domain discrepancies.}
Source~$1$ shares the reward and transition mechanisms with the target
($r^{(1)} = r^\star$, $t^{(1)} = t^\star$) but shifts the $W$-mechanism
($\omega^{(1)} \ne \omega^\star$, $\kappa^{(1)}_W \ne \kappa^\star_W$) and the
behaviour utilities $\alpha^{(1)}$, so $\Delta^{(1)} = \{A, W\}$. Source~$2$
shares the $W$-mechanism ($\omega^{(2)} = \omega^\star$,
$\kappa^{(2)}_W = \kappa^\star_W$) but shifts the reward and transition
mechanisms and the behaviour utilities $\alpha^{(2)}$, so
$\Delta^{(2)} = \{A, S, R\}$.

\paragraph{Coefficients.}
The shared scales are $\rho_W = 0.7$, $\sigma_W = 0.5$, $\kappa_A = 2.5$,
$\sigma_A = 1.0$, $\sigma_R = 0.3$, with negative-outcome loadings
$\kappa^\star_W = \kappa^{(2)}_W = 1.0$ and $\kappa^{(1)}_W = 0.7$. The
confounding loadings used in the experiments are $\rho_R = 1.8$ into the reward
and $\rho_T = 2.2$ into the transition logits, which is strong enough that the
naive and transport-no-IV baselines carry a clearly visible bias floor. All
reward intercepts, $W$-slopes, action effects, and transition logits, together
with the source-2 perturbations and the two behaviour-utility tables, are drawn
once from a fixed seed and held constant across replications; only the data draws
vary with the replication seed. The exact numerical values are produced by
\texttt{default\_cmdp\_params} in the released code.

\paragraph{Target quantities and evaluation.}
The target interventional reward mean is
$\mathbb{E}^\star[R \mid s, \operatorname{do}(a)] = \mu^\star_s + b^\star_s
\omega^\star_s + \tau^\star_{s,a} + \lambda^\star_a \omega^\star_s$, using
$\mathbb{E}^{(2)}[W \mid S = s] = \omega^\star_s$. The target transition kernel
is obtained by averaging the softmax of the $W$-dependent logits over the target
$W$-law and the confounder by Monte-Carlo integration at high precision, which is
treated as ground truth. The optimal policy and its value are computed by
backward induction on these exact target means, and suboptimality is
$v^{\pi^\star} - v^{\hat\pi}$ with $v^{\hat\pi}$ the exact value of the estimated
policy under the true target dynamics.

\paragraph{Estimator.}
The stage-wise reward and transition bridges use the cell-saturated bases of the
bandit model. The source-1 instrument test functions condition on
$(S, Z_h, Z_W)$ rather than on the realized action, which is what removes the
action-selection confounding. Because the mechanisms are stationary across
stages, the moment equations are pooled over the $H$ stages when forming the
bridge, which reduces the per-stage variance by a factor of order $H$; this
pooled estimator is \textsc{CAT-CMDP}. The transport step averages the source-1
bridge over the source-2 $W$-law to obtain the target reward means $\hat R_h$ and
transition kernels $\hat P_h$, which are then combined by backward induction.

\paragraph{Coverage hole.}
Partial coverage is induced by subtracting a constant $\delta$ from the behaviour
utility of every non-reference action on two states $\{2, 3\}$ of source~$1$. The
reported horizontal axis is the realized minimum action propensity
$p_{\min} = \min_{s,a} p^{(1)}(A = a \mid S = s)$, computed by Monte-Carlo over the
probit noise and the confounder.

\paragraph{Sample sizes and sweeps.}
The identification experiment uses $n_1 = n_2 = n$ trajectories per source over
$n \in \{1000, 2000, 4000, 8000, 16000, 32000\}$ with $40$ seeds. The horizon
experiment fixes $n = 8000$ per source and sweeps $H \in \{2, 4, 6, 8, 10\}$ with
$40$ seeds. The pessimism experiment uses $n_1 = 2000$, $n_2 = 16000$ with the
coverage hole $\delta \in \{0, 3, 6, 9\}$ over $80$ seeds, at nominal level
$1 - \alpha = 0.9$. The second source is kept large so that the target-compatible
$W$-law is close to known and the first source isolates the instrumental and
coverage effects.